\newtheorem{theorem}{Theorem}
\newtheorem{definition}{Definition}
\newtheorem{lemma}[theorem]{Lemma}
\declaretheoremstyle[
notefont=\bfseries, notebraces={}{},
bodyfont=\normalfont\itshape,
headformat=\NAME \NOTE
]{nopar}
\declaretheorem[style=nopar,name=Theorem]{theorem*}
\declaretheorem[style=nopar,name=Lemma]{lemma*}
\declaretheorem[style=nopar,name=Corollary]{corollary*}
\newcommand{\nn}{\nonumber}
\newcommand{\mb}{\mathbf}
\newcommand{\tb}{\textbf}
\newcommand{\mbb}{\mathbb}
\newcommand{\mc}{\mathcal}
\DeclareRobustCommand\onedot{\futurelet\@let@token\@onedot}
\def\onedot{.}
\def\eg{\emph{e.g}\onedot} 
\def\ie{\emph{i.e}\onedot} 
\def\etc{\emph{etc}\onedot}
\newcommand{\cutsectionup}{\vspace*{-0.05in}}
\newcommand{\cutsectiondown}{\vspace*{-0.05in}}
\newcommand{\cutparagraphup}{\vspace{-7pt}}
\newcommand{\cutparagraphdown}{\vspace{-5pt}}
\newcommand{\cutitemizedown}{\vspace{-0pt}}
\newcommand{\cutproofup}{\vspace{-5pt}}
\newcommand{\cutproofdown}{\vspace{-2pt}}
\newcommand{\cutdefinitiondown}{\vspace{-5pt}}
\newcommand{\biggmid}{\,\bigg|\,}
\def\eqref#1{equation~\ref{#1}}
\def\1{\bm{1}}
\DeclareMathAlphabet{\mathsfit}{\encodingdefault}{\sfdefault}{m}{sl}
\SetMathAlphabet{\mathsfit}{bold}{\encodingdefault}{\sfdefault}{bx}{n}
\DeclareMathOperator*{\argmax}{arg\,max}
\DeclareMathOperator*{\argmin}{arg\,min}
\newcommand{\Bo}[1]{}
\def\gpathsetpi{\mathcal{T}^\pi_{s, s'}}
\def\nr{\mathrm{nr}}
\def\ir{\mathrm{IR}}
\def\pathsetpi{\mathcal{T}^\pi_{s, s', \nr}}
\def\pathsetpiphi{\mathcal{T}^\pi_{\Phi, \nr}}
\def\pathsetoptpiphi{\mathcal{T}^{\pi^*}_{\Phi, \nr}}
\def\distpi{D_{\nr}^\pi(s, s')}
\def\dist{D_{\nr}(s, s')}
\def\pathsethathat{\widehat{\mathcal{T}}_{\hat{s}\rightarrow \hat{s}'}}%
\def\nrstate{\mc{S}^\ir}
\def\nrstatepair{\Phi^{\pi}}
\def\Pispss{\Pi^{\text{SP}}_{s\rightarrow s'}}
\def\Pisp{\Pi^{\text{SP}}}
\def\Piksp{\Pi^{\text{SP}}_k}
\def\Pimsp{\Pi^{\text{SP}}_m}
\def\Piinfsp{\Pi^{\text{SP}}_\infty}
\def\trans{P^{\pi}_{s, s'}}
\def\transshort{P^{*}_{s, s'}}
\def\transopthat{P^{\pi^*}_{\hat{s}, \hat{s}'}}
\def\transshorthat{P^{*}_{\hat{s}, \hat{s}'}}
\def\ksp{$k$-SP\xspace}
\def\csp{C^{\text{SP}}\xspace}
\newcommand\wh[1]{\hstretch{2}{\hat{\hstretch{.5}{#1}}}}
\def\hatcsp{\wh{C}^{\text{SP}}}
\def\laststir{s_{\text{ir}}'}
\def\initstir{s_{\text{ir}}}
\def\rand{\textbf{Random}}
\def\icm{\textbf{ICM}}
\def\eco{\textbf{ECO}}
\def\ppo{\textbf{PPO}}
\def\oracle{\textbf{GT-Grid}{}}
\def\ucb{\textbf{GT-UCB}{}}
\def\sprl{\textbf{SPRL}{}}
\def\atari{\textit{Atari}\xspace}
\def\dmlab{\textit{DeepMind Lab}\xspace}
\def\grid{\textit{MiniGrid}\xspace}
\def\fetch{\textit{Fetch}\xspace}
\def\montezuma{\textit{Montezuma's Revenge}\xspace}
\def\freeway{\textit{Freeway}\xspace}
\def\mspacman{\textit{Ms.Pacman}\xspace}
\def\gravitar{\textit{Gravitar}\xspace}
\def\hero{\textit{HERO}\xspace}
\def\seaquest{\textit{Seaquest}\xspace}
\def\fpnp{\textit{FetchPickAndPlace-v1}\xspace}
\def\fpush{\textit{FetchPush-v1}\xspace}
\def\freach{\textit{FetchReach-v1}\xspace}
\def\fslide{\textit{FetchSlide-v1}\xspace}
\def\fours{\textit{FourRooms-7$\times$7}\xspace}
\def\fourl{\textit{FourRooms-11$\times$11}\xspace}
\def\keys{\textit{KeyDoors-7$\times$7}\xspace}
\def\keyl{\textit{KeyDoors-11$\times$11}\xspace}
\def\dmgs{\textit{GoalSmall}\xspace}
\def\dmgl{\textit{GoalLarge}\xspace}
\def\dmom{\textit{ObjectMany}\xspace}
\def\mbb{\mathbb}
\def\tb{\textbf}
\def\defeq{:=}
\renewcommand{\widehat}{\hat}
\def\eqref#1{equation~\ref{#1}}
\def\1{\bm{1}}
\DeclareMathAlphabet{\mathsfit}{\encodingdefault}{\sfdefault}{m}{sl}
\SetMathAlphabet{\mathsfit}{bold}{\encodingdefault}{\sfdefault}{bx}{n}
\definecolor{darkgreen}{rgb}{0,0.5,0}
\crefname{equation}{Eq.}{}
\icmltitlerunning{Shortest-Path Constrained Reinforcement Learning for Sparse Reward Tasks}
\begin{document}
\twocolumn[
\icmltitle{Shortest-Path Constrained Reinforcement Learning for Sparse Reward Tasks}
\icmlsetsymbol{equal}{*}
\begin{icmlauthorlist}
\icmlauthor{Sungryull Sohn}{equal,umich,lg}
\icmlauthor{Sungtae Lee}{equal,yonsei}
\icmlauthor{Jongwook Choi}{umich}
\icmlauthor{Harm van Seijen}{ms}
\icmlauthor{Mehdi Fatemi}{ms}
\icmlauthor{Honglak Lee}{lg,umich}
\end{icmlauthorlist}
\icmlaffiliation{umich}{University of Michigan}
\icmlaffiliation{yonsei}{Yonsei University}
\icmlaffiliation{ms}{Microsoft Research}
\icmlaffiliation{lg}{LG AI Research}
\icmlcorrespondingauthor{Sungryull Sohn}{srsohn@umich.edu}
\icmlkeywords{Deep Reinforcement Learning, ICML}
\vskip 0.3in
]
\printAffiliationsAndNotice{\icmlEqualContribution}
\begin{abstract}
We propose the $k$-Shortest-Path (\ksp{}) constraint: a novel constraint on the agent's trajectory that improves the sample efficiency in sparse-reward MDPs.
We show that any optimal policy necessarily satisfies the \ksp{} constraint. Notably, the \ksp{} constraint prevents the policy from exploring state-action pairs along the non-\ksp{} trajectories (\eg, going back and forth). However, in practice, excluding state-action pairs may hinder the convergence of RL algorithms. To overcome this, we propose a novel cost function that penalizes the policy violating SP constraint, instead of completely excluding it.
Our numerical experiment in a tabular RL setting demonstrates that the SP constraint can significantly reduce the trajectory space of policy. As a result, our constraint enables more sample efficient learning by suppressing redundant exploration and exploitation.
Our experiments on \grid{}, \dmlab{}, \atari, and \fetch{} show that the proposed method significantly improves proximal policy optimization (PPO) and outperforms existing novelty-seeking exploration methods including count-based exploration even in continuous control tasks, indicating that it improves the sample efficiency by preventing the agent from taking redundant actions. \footnote{The code is available on https://github.com/srsohn/shortest-path-rl}
\end{abstract}
\vspace*{-0pt}
\cutsectionup
\section{Introduction}\label{sec:i}
\cutsectiondown

Recently, deep reinforcement learning (RL) has achieved a large number of breakthroughs in many domains including video games~\citep{mnih2015human, vinyals2019grandmaster}, and board games~\citep{silver2017mastering}.
Nonetheless, a central challenge in reinforcement learning (RL) is the sample efficiency~\citep{kakade2003sample}; it has been shown that the RL algorithm requires a large number of samples for successful learning in MDP with large state and action space. 
Moreover, the success of the RL algorithm heavily hinges on the quality of collected samples; the RL algorithm tends to fail if the collected trajectory does not contain enough evaluative feedback (\eg, sparse or delayed reward). 

To circumvent this challenge, planning-based methods utilize the environment's model to improve or create a policy instead of interacting with the environment. 
Recently, combining the planning method with an efficient path search algorithm, such as Monte-Carlo tree search (MCTS)~\citep{norvig2002modern, coulom2006efficient}, has demonstrated successful results~\citep{guo2016deep, vodopivec2017monte, silver2017mastering}.
However, such tree search methods would require an accurate model of MDP and the complexity of planning may grow intractably large for a complex domain.
Model-based RL methods attempt to learn a model instead of assuming that model is given, but learning an accurate model also requires a large number of samples, which is often even harder to achieve than solving the given task. 
Model-free RL methods can be learned solely from the environment reward, without the need of a (learned) model. However, both value-based and policy-based methods suffer from poor sample efficiency, especially in sparse-reward tasks.
To tackle the sparse reward problem, researchers have proposed to learn an intrinsic bonus function that measures the novelty of the state that agent visits ~\citep{schmidhuber1991adaptive,oudeyer2009intrinsic,pathak2017curiosity,savinov2018episodic,choi2018contingency,burda2018exploration}.
However, when such an intrinsic bonus is added to the reward, it often requires a careful balancing between environment reward and bonus and scheduling of the bonus scale to guarantee the convergence to an optimal solution. 

\begin{figure*}[t]
    \centering
    \vspace*{-4pt}
    \includegraphics[draft=false, width=0.92\linewidth, valign=t]
    {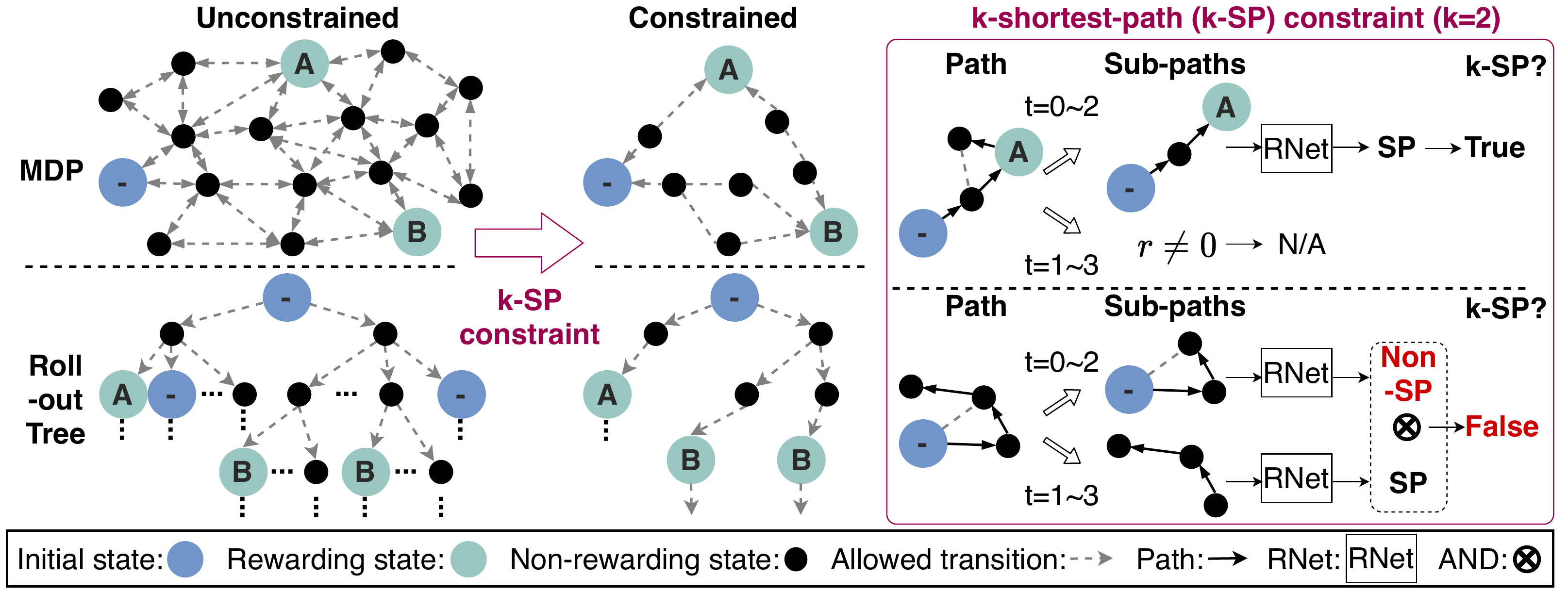}
    \vspace*{-3pt}
    \caption{
        The $k$-SP constraint improves the sample efficiency of %
        RL methods in sparse-reward tasks by pruning out suboptimal trajectories from the trajectory space.
        Intuitively, the $k$-SP constraint means that when a policy rolls out into trajectories, all of the sub-paths of length $k$ is the shortest path (under a distance metric defined in terms of policy, discount factor, and transition probability; see %
        \Cref{sec:method-k-shortest-path}
        for the formal definition). 
        \tb{(Left)} MDP and a rollout tree are given.
        \tb{(Middle)} The paths that satisfy the $k$-SP constraint. The number of admissible trajectories is drastically reduced.
        \tb{(Right)} A path rolled out by a policy satisfies the $k$-SP constraint if all sub-paths of length $k$ are shortest paths and have not received a non-zero reward. We use a reachability network to test if a given (sub-)path is the shortest path (See \Cref{sec:m} for details).
    }
    \label{fig:policy_tree}
    \vspace*{-10pt}
\end{figure*}

To tackle the aforementioned challenge of sample efficiency in sparse reward tasks, we introduce a constrained-RL framework that improves the sample efficiency of any model-free RL algorithm in sparse-reward tasks%
, under the mild assumptions on MDP (see \Cref{appendix:sp-optimal}).
Of note, though our framework will be formulated for policy-based methods,
our final form of the cost function (\Cref{eq:ksp-problem-4} in~\Cref{sec:m}) applies to both policy-based and value-based methods.
We propose a novel $k$-\emph{shortest-path}
(\ksp{}) constraint (\Cref{def:k-shortest-path-constraint}) that improves the sample efficiency of policy learning (See~\Cref{fig:policy_tree}).
The \ksp{} constraint is applied to a trajectory rolled out by a policy; all of its sub-path of length $k$ is required to be a shortest-path under the \emph{$\pi$-distance} metric which we define in~\Cref{sec:shortest-path-general}.
We prove that applying our constraint preserves the optimality for any MDP~(\Cref{thm:ksp-optimal}), except the stochastic and multi-goal MDP which requires additional assumptions.
We relax the hard constraint into a soft cost formulation~\citep{tessler2019reward}, and use a \emph{reachability network}~\citep{savinov2018episodic} (RNet) to efficiently learn the cost function in an off-policy manner.

We summarize our contributions as the following:
\textbf{(1)} %
We propose a novel constraint that can improve the sample efficiency of any model-free RL method in sparse reward tasks.
\textbf{(2)} %
We present several theoretical results including the proof that our proposed constraint preserves the optimal policy of given MDP.
\textbf{(3)} %
We present a numerical result in tabular RL setting to precisely evaluate the effectiveness of the proposed method.
\textbf{(4)} %
We propose a practical way to implement our proposed constraint and demonstrate that it provides a significant improvement on four complex deep RL domains.
\textbf{(5)} %
We demonstrate that our method significantly improves the sample efficiency of PPO, and outperforms existing novelty-seeking methods on four complex domains in sparse reward settings.
\cutsectionup
\section{Preliminaries}\label{sec:pre}
\cutsectiondown

\label{sec:background-mdp}
\paragraph{Markov Decision Process (MDP).}
We model a task as an MDP tuple $\mathcal{M}=(\mc{S},\mc{A},\mc{P},\mc{R},\rho ,\gamma)$, where  $\mc{S}$ is a state set, $\mc{A}$ is an action set, $\mc{P}$ is a transition probability, $\mc{R}$ is a reward function, $\rho$ is an initial state distribution, and $\gamma \in [0,1)$ is a discount factor. 
For each state $s$, the value of a policy $\pi$ is denoted by $V^{\pi}(s)=\mbb{E}^{\pi}{ \left[ \sum_{t} \gamma^t r_t \mid s_0 = s \right] }.$
Then, the goal is to find the optimal policy $\pi^*$ that maximizes the expected return: %
\vspace*{-5pt}
\begin{align}
	\pi^*
	&=\argmax_{\pi} \mbb{E}^{\pi}_{s\sim\rho}{ \Big[ \textstyle\sum_{t} \gamma^t r_t \mid s_0 = s \Big] }\\ 
	&= \argmax_{\pi} \mbb{E}_{s\sim\rho}\left[ V^\pi(s) \right].
\end{align}
\vspace*{-20pt}

\paragraph{Constrained MDP.}
A constrained Markov Decision Process (CMDP) is an MDP with extra constraints that restrict the domain of allowed policies~\citep{altman1999constrained}.
Specifically, CMDP introduces a constraint function $C(\pi)$ that maps a policy to a scalar, and a threshold $\alpha\in\mbb{R}$.
The objective of CMDP is to maximize the expected return $R(\tau) = \sum_t \gamma^t r_t$ of a trajectory $\tau=\{s_0, a_0, r_1, s_1, a_1, r_2, s_2, \ldots \}$ subject to a constraint:
$ %
    \pi^{*} =\textstyle\argmax_{\pi} \mbb{E}_{\tau\sim\pi}\left[ R(\tau) \right],\text{ s.t. }C(\pi) \leq \alpha.
$ %
A popular choice of constraint is based on the transition cost function~\citep{tessler2019reward} $c(s, a, r, s')\in\mbb{R}$ which assigns a scalar-valued cost to each transition. Then the constraint function for a policy $\pi$ is defined as the discounted sum of the cost under the policy:
$ %
C(\pi) = \mbb{E}_{\tau\sim\pi}\left[ \textstyle\sum_t \gamma^t c(s_t, a_t, r_{t+1}, s_{t+1})\right]. 
$ %
In this work, we propose a \emph{shortest-path} constraint, that provably preserves the optimal policy of the original unconstrained MDP, while reducing the trajectory space.
We will use a cost function-based formulation to implement our constraint (see \Cref{sec:s} and~\ref{sec:m}).  %

\section{Formulation: $k$-shortest-path Constraint}\label{sec:s}
\cutsectiondown

We define the $k$-shortest-path (k-SP) constraint to remove redundant transitions (\eg, unnecessarily going back and forth), leading to faster policy learning.
We show two important properties of our constraint:
(1) the optimal policy is preserved, and
(2) the policy search space is reduced.

In this work, we limit our focus to MDPs satisfying $R(s)+\gamma V^*(s) > 0$ for all initial states $s \ s.t \ \rho(s) > 0$ and all rewarding states that optimal policy visits with non-zero probability $s\in \{s| r(s)\neq 0 , p_{\pi^*}(s)>0\}$ where $p_{\pi}(s)$ is a probability of visiting state $s$ with policy $\pi$.
We exploit this mild assumption to prove that our constraint preserves optimality.
Intuitively, we exclude the case when the optimal strategy for the agent is at best choosing a ``lesser of evils'' (\ie, largest but negative value) which often still means a failure.
We note that this is often caused by unnatural reward function design; in principle, we can avoid this by simply offsetting the reward function by a constant $-\lvert \min_{s\in\{s| p_{\pi^*}(s)\}} V^*(s)\rvert$ for every transition, assuming the policy is \textit{proper}%
\footnote{It is an instance of potential-based reward shaping which has optimality guarantee~\citep{ng1999policy}.}.
Goal-conditioned RL~\citep{nachum2018data}, most of the well-known domains such as \textit{Atari}~\citep{bellemare2013arcade}, \dmlab{}~\citep{beattie2016deepmind}, \grid{}~\citep{gym_minigrid}, etc., satisfy this assumption. Also, for general settings with stochastic MDP and multi-goals, we require additional assumptions to prove the optimality guarantee (See~\Cref{appendix:sp-optimal} for details).

\subsection{Shortest-path Policy and Shortest-path Constraint}
\label{sec:shortest-path-general}
\vspace*{-2pt}

Let $\tau$ be a \emph{path} defined by a sequence of states: $\tau = \{s_0,\ldots,s_{\ell(\tau)}\}$,
where $\ell(\tau)$ is the \emph{length} of a path $\tau$ (\ie, $\ell(\tau) = |\tau| - 1$).
We denote the set of all paths from $s$ to $s'$ by $\mathcal{T}_{s,s'}$.
A path $\tau^*$ from $s$ to $s'$ is called a \emph{shortest path} from $s$ to $s'$ if $\ell(\tau^*)$ is minimum,
\ie, $\ell(\tau^*) = \smash{\min_{\tau \in \mathcal{T}_{s,s'}} \ell(\tau)}$.

Now we will define similar concepts (length, shortest path, \etc) \emph{with respect to} a policy.
Intuitively, a policy that rolls out shortest paths (up to some stochasticity)
to a goal state or between any state pairs should be a counterpart.
We consider a set of all admissible paths from $s$ to $s'$ under a policy $\pi$:

\begin{definition}[Path set]\label{def:path-set-pi} %
    $\gpathsetpi=\{\tau \mid s_0=s, s_{\ell(\tau)}=s', p_\pi(\tau)>0, s_t\neq s'\text{ for }\forall t<\ell(\tau) \}$.
That is, %
$\gpathsetpi$ is a set of all paths that policy $\pi$ may roll out
from $s$ and terminate once visiting $s'$.
\end{definition}

If the MDP is a single-goal task, \ie, there exists a unique (rewarding) goal state $s_g \in \mathcal{S}$
such that $s_g$ is a terminal state, and $R(s) > 0$ if and only if $s = s_g$,
any shortest path from an initial state to the goal state is the optimal path with the highest return $R(\tau)$,
and a policy that always rolls out a shortest path from an initial state to the goal state is therefore optimal (see \Cref{lem:g-sp-optimal}).\footnote{
We refer the readers to \Cref{appendix:g-sp-specialcase} for more detailed discussion and proofs for single-goal MDPs.  }
This is because all states except for $s_g$ are non-rewarding states,
but in general MDPs this is not necessarily true.
However, this motivates us to limit the domain of the shortest path to
\emph{non-rewarding states}.
We define \emph{non-rewarding paths} from $s$ to $s'$ as follows:
\begin{definition}[Non-rewarding path set]
\label{def:path-set-pi-nr} %
$\pathsetpi=\{\tau \mid \tau \in\gpathsetpi, r_{t}=0\text{ for  }\forall t<\ell(\tau) \}$.
\end{definition}
\cutdefinitiondown
In words, $\smash{\pathsetpi}$ is a set of all non-rewarding paths from $s$ to $s'$
rolled out by policy $\pi$ (\ie, $\tau\in\smash{\gpathsetpi}$) without any associated reward except the last step (\ie, $r_{t}=0\text{ for  }\forall t<\ell(\tau) $).
Now we are ready to define a notion of length with respect to a policy
and shortest path policy:
\begin{definition}[$\pi$-distance from $s$ to $s'$]
\label{def:ex-path-len}
$\distpi=\log_\gamma \big(
    \mathbb{E}_{\tau \sim \pi :~ \tau \in \pathsetpi} \left[ \gamma^{\ell(\tau)} \right]  
\big)$
\end{definition}
\begin{definition}[Shortest path distance from $s$ to $s'$]\label{def:shortest-dist}
$\dist=\min_{\pi}\distpi$.
\end{definition}
\cutdefinitiondown
We define $\pi$-distance to be the log-mean-exponential of the length $\ell(\tau)$ of non-rewarding paths
$\tau \in \smash{\pathsetpi}$. To be thorough, $pi$-distance is not a "distance" but a quasi-metric since by definition, $pi$-distance is asymmetric. 
When there exists no admissible path from $s$ to $s'$ under policy $\pi$,
the path length is defined to be $\infty$: $\smash{\distpi}=\infty$ if $\smash{\pathsetpi} = \emptyset$.
We note that when both MDP and policy are deterministic, $D^\pi(s, s')$ recovers
the natural definition of path length, $\distpi = \ell(\tau)$.

We call a policy a \emph{shortest-path policy} from $s$ to $s'$ if it rolls out a path with the smallest $\pi$-distance:
\begin{definition}[Shortest path policy from $s$ to $s'$]
\label{def:sp-policy} %
$\pi \in \Pispss=\{\pi \in \Pi  \mid \distpi=\dist \}$.
\end{definition}
\cutdefinitiondown\vspace*{1pt}
Finally, we will define the shortest-path (SP) constraint.
Let $\nrstate=\{s \mid R(s)>0\text{ or }\rho(s)>0 \}$ be the union of all initial and rewarding states, and  $\smash{\nrstatepair=\{(s, s') \mid s, s' \in \nrstate, \rho(s)>0, \pathsetpi\neq\emptyset\}}$ be the subset of $\smash{\nrstate}$ such that agent may roll out.
Then, the SP constraint is applied to the non-rewarding sub-paths between states in $\nrstatepair$: $\pathsetpiphi=\bigcup_{(s, s')\in\nrstatepair}{\pathsetpi}$. 
We note that these definitions are used in the proofs (\Cref{appendix:sp-optimal}).
Now, we define the shortest-path constraint as follows:
\begin{definition}[Shortest-path constraint] \label{def:shortest-path-constraint}
A policy $\pi$ satisfies the shortest-path (SP) constraint if %
$\pi \in \Pisp$, where
$
\Pisp =
\{
    \pi \mid \text{For all~} s, s'\in\pathsetpiphi, 
    \text{it holds~}
    \pi\in\Pispss
\}.
$
\end{definition}
\cutdefinitiondown\vspace*{1pt}

Intuitively, the SP constraint forces a policy to transition between initial and rewarding states via shortest paths. %
The SP constraint would be particularly effective in sparse-reward settings, where the distance between rewarding states is large.

Given these definitions, we can show that an optimal policy indeed satisfies the SP constraint
in a general MDP setting.
In other words, the shortest path constraint should not change optimality:
\begin{theorem}\label{thm:sp-optimal}
For any MDP, an optimal policy $\pi^*$ satisfies the shortest-path constraint: $\pi^*\in\Pisp$.
\end{theorem}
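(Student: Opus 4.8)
The plan is to prove the statement by contradiction: I show that any policy violating the SP constraint is strictly suboptimal, so an optimal $\pi^*$ must lie in $\Pisp$. By \Cref{def:shortest-path-constraint} it suffices to fix an arbitrary relevant pair $(s,s')\in\nrstatepair$ (so $\rho(s)>0$, $s'\in\nrstate$, and $\pathsetpi\neq\emptyset$ under $\pi^*$) and establish $\distpi=\dist$. If this fails for some pair, i.e.\ $\distpi>\dist$, I will exhibit a strictly better policy, contradicting the optimality of $\pi^*$.

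First I would set up a value decomposition of $V^{\pi^*}(s)$ by conditioning on the first visit to $s'$ along a \emph{non-rewarding} prefix. Because $\distpi=\log_\gamma\big(\E_{\tau\sim\pi:\,\tau\in\pathsetpi}[\gamma^{\ell(\tau)}]\big)$ and $\log_\gamma$ is strictly decreasing for $\gamma\in[0,1)$, the discounted ``arrival weight'' of reaching $s'$ equals $\gamma^{\distpi}$ times the probability mass of that event. Collecting terms, the contribution to $V^{\pi^*}(s)$ of trajectories that reach $s'$ first along a non-rewarding prefix factorizes as (arrival weight)$\,\times\,\big(R(s')+\gamma V^{\pi^*}(s')\big)$, since the MDP is Markov and the continuation value from $s'$ does not depend on the route taken to $s'$. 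The crucial input is the standing assumption of \Cref{sec:s}, which guarantees $R(s')+\gamma V^{*}(s')>0$ for every $s'\in\nrstate$ that the optimal policy visits: the continuation factor is strictly positive.

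Next I would perform policy surgery. Let $\hat\pi\in\Pispss$ be a shortest-path policy from $s$ to $s'$, so $D^{\hat\pi}_{\nr}(s,s')=\dist<\distpi$. Define a (history-dependent) policy $\pi'$ that imitates $\hat\pi$ while on the ``route to $s'$'' and switches to $\pi^*$ from the first visit to $s'$ onward, behaving identically to $\pi^*$ on trajectories that never traverse a non-rewarding $s\to s'$ sub-path. Comparing $V^{\pi'}(s)$ with $V^{\pi^*}(s)$ term by term, every contribution is unchanged except the arrival weight for $s'$, which increases from $\gamma^{\distpi}$ to $\gamma^{\dist}>\gamma^{\distpi}$; multiplied by the strictly positive factor $R(s')+\gamma V^{*}(s')$, this yields $V^{\pi'}(s)>V^{\pi^*}(s)$, contradicting optimality. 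Hence no violating pair exists and $\pi^*\in\Pisp$.

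The main obstacle is making the surgery rigorous in the stochastic, multi-goal setting: I must argue that re-routing to $s'$ can be coupled with $\pi^*$ so that it strictly shortens the $s\to s'$ segment (raising $\gamma^{\distpi}$ to $\gamma^{\dist}$) \emph{without} perturbing the probabilities of the competing events---reaching some other rewarding state before $s'$, or never reaching $s'$---and without double-counting when several states of $\nrstate$ interleave along a single rollout. This bookkeeping is precisely where the relaxed $k$-SP optimality-preservation result needs extra assumptions for stochastic and multi-goal MDPs; for the present statement, the positivity of $R(s')+\gamma V^{*}(s')$ together with the monotonicity of $\log_\gamma$ is exactly what converts ``shorter route'' into ``strictly larger value.'' I would therefore isolate that monotonicity-plus-positivity step as the key lemma and reduce the single-goal special case directly to \Cref{lem:g-sp-optimal}.
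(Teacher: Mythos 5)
Your proposal matches the paper's proof essentially step for step: the paper also argues by contradiction, decomposes the value via an option-style Bellman equation over the first rewarding state reached, performs the same policy surgery (splicing a shortest-path policy into $\pi^*$ along the offending non-rewarding segment), and converts the shorter $\pi$-distance into a strictly larger value using the monotonicity of $\gamma^{(\cdot)}$ for $\gamma<1$ together with the positivity assumption $R(s')+\gamma V^{*}(s')>0$. The bookkeeping obstacle you flag at the end is resolved in the paper not by argument but by assumption---the appendix restates the theorem under two ``mild stochasticity'' conditions (a shortest-path policy exists whose first-rewarding-state distribution matches that of $\pi^*$, and $\pi^*$ never revisits a state)---which is precisely the extra-assumption route you anticipated for the stochastic, multi-goal case.
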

\cutproofup
\vspace*{-3pt}
\begin{proof} See Appendix~\ref{appendix:sp-optimal} for the proof. \end{proof}
\cutproofdown

\subsection{Relaxation: $k$-shortest-path Constraint}
\label{sec:method-k-shortest-path}
Implementing the shortest-path constraint is, however, intractable since it requires a distance predictor $\dist$. Note that the distance predictor addresses the optimization problem,
which might be as difficult as solving the given task.
To circumvent this challenge, we consider its more tractable version, namely a \emph{$k$-shortest path} constraint, which reduces the shortest-path problem $\dist$ to a binary decision problem --- is the state $s'$ reachable from $s$ within $k$ steps? --- also known as $k$-reachability~\citep{savinov2018episodic}.
The $k$-shortest path constraint is defined as follows:
\vspace*{-1em}
\begin{definition}[$k$-shortest-path constraint]
\label{def:k-shortest-path-constraint}
A policy $\pi$ satisfies the $k$-shortest-path constraint if %
$\pi \in \Piksp$, where
\begin{align}
\Piksp =
\{
    \pi \mid &\text{For all~} s, s'\in\pathsetpiphi,  \distpi \leq k,\nonumber
    \\
    &\text{it holds~}
    \pi\in\Pispss
\}.\label{def:ksp}
\end{align}
\end{definition}
\vspace*{-0.5em}
Note that the SP constraint (\Cref{def:shortest-path-constraint}) is relaxed by adding a condition $\distpi\leq k$.
In other words, the \ksp{} constraint is imposed only for $s,s'$-path whose length is not greater than $k$.
From \Cref{def:ksp}, we can prove an important property and then \Cref{thm:ksp-optimal} (optimality):
\begin{lemma}\label{lem:ksp-reduce}
For an MDP $\mc{M}$, $\Pimsp\subset\Piksp$ if $k < m$.
\end{lemma}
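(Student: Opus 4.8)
The plan is to prove the inclusion by a straightforward monotonicity-in-the-threshold argument read directly off \Cref{def:k-shortest-path-constraint}, since the only difference between $\Piksp$ and $\Pimsp$ is the cutoff ($k$ versus $m$) on the $\pi$-distance of the pairs on which the shortest-path requirement is imposed. The whole content is that a larger threshold enforces the shortest-path property on a \emph{superset} of pairs, hence defines a \emph{smaller} (more restrictive) policy class.

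First I would fix an arbitrary policy $\pi\in\Pimsp$ and unpack what membership means: $\pi$ satisfies the $m$-SP constraint, i.e.\ for every pair $(s,s')$ arising in $\pathsetpiphi$ with $\distpi\le m$, the policy is shortest-path between them, $\pi\in\Pispss$. The goal is then to verify the analogous statement with $m$ replaced by $k$, which would place $\pi$ in $\Piksp$.

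The main step is the observation that, for this fixed $\pi$, the collection of pairs relevant at threshold $k$ is contained in the collection relevant at threshold $m$. Indeed, if $(s,s')$ is any pair appearing in $\pathsetpiphi$ with $\distpi\le k$, then since $k<m$ we have $\distpi\le k<m$, hence $\distpi\le m$; so the hypothesis $\pi\in\Pimsp$ already guarantees $\pi\in\Pispss$ for exactly those pairs that the $k$-SP condition quantifies over. As this holds for every such pair, $\pi$ satisfies the $k$-SP constraint, i.e.\ $\pi\in\Piksp$, and since $\pi$ was arbitrary we obtain $\Pimsp\subseteq\Piksp$.

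The argument is essentially definitional, so I do not expect a genuine obstacle; the only point requiring care is that $\nrstatepair$, $\pathsetpiphi$, and the distances $\distpi$ are all defined relative to the \emph{same} fixed policy $\pi$, so the containment of pair-sets is an inclusion for that one policy rather than a uniform statement over $\Pi$ — which is precisely what the nested quantifiers in \Cref{def:k-shortest-path-constraint} demand (and note $\distpi$ need not be an integer, but $\distpi\le k<m$ still yields $\distpi\le m$). Finally, to justify that the inclusion is strict, I would exhibit a small MDP together with a policy that routes along shortest paths between all pairs of $\pi$-distance at most $k$ yet takes a detour for some pair whose $\pi$-distance lies in $(k,m]$; such a policy lies in $\Piksp\setminus\Pimsp$, giving $\Pimsp\subsetneq\Piksp$.
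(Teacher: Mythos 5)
Your argument is correct and is essentially the paper's own proof: the paper's one-line justification is precisely the pair-set containment you spell out, namely that $\{(s,s') \mid \distpi\leq k\} \subset \{(s,s') \mid \distpi\leq m\}$ for $k<m$, so any $\pi\in\Pimsp$ already satisfies the shortest-path requirement on every pair the $k$-SP constraint quantifies over. The only difference is your closing remark on strictness, which the paper does not attempt (it reads $\subset$ as non-strict inclusion), and which indeed could not hold uniformly over all MDPs, so that extra step is best omitted.
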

\cutproofup
\begin{proof} It is true since $\{(s,s') \mid \distpi\leq k\} \subset \{(s,s') \mid \distpi\leq m\}$ for $k<m$. \end{proof}
\cutproofdown
\begin{theorem}\label{thm:ksp-optimal}
For an MDP $\mc{M}$ and any $k\in\mbb{R}$, an optimal policy $\pi^*$ is a $k$-shortest-path policy.
\end{theorem}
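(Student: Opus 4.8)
The plan is to deduce this theorem as a short corollary of \Cref{thm:sp-optimal}, since the genuine analytic content---that an optimal policy routes between initial and rewarding states along shortest paths---has already been established there. The observation driving the argument is that the $k$-shortest-path constraint is a \emph{weakening} of the shortest-path constraint: comparing \Cref{def:shortest-path-constraint} with \Cref{def:k-shortest-path-constraint}, both demand $\pi\in\Pispss$, but the former demands it for every pair $s,s'\in\pathsetpiphi$, whereas the latter demands it only for those pairs that additionally satisfy $\distpi\le k$. Equivalently, the SP constraint is the $k$-SP constraint in the limit $k=\infty$, i.e.\ $\Pisp=\Piinfsp$.

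First I would make the inclusion $\Pisp\subseteq\Piksp$ precise. Fix any finite $k$ and take $\pi\in\Pisp$. For every pair $(s,s')\in\pathsetpiphi$ we have $\pi\in\Pispss$; in particular this holds for the subcollection of pairs that also satisfy $\distpi\le k$, which is exactly the quantifier appearing in \Cref{def:k-shortest-path-constraint}. Hence $\pi\in\Piksp$. This is precisely the content of \Cref{lem:ksp-reduce} read with $m=\infty$: enlarging the distance threshold only adds constraints, so the admissible set shrinks monotonically and $\Piinfsp\subseteq\Piksp$. The edge cases are harmless: if $k$ is so small that no pair satisfies $\distpi\le k$, the defining condition of $\Piksp$ is vacuous and $\Piksp=\Pi$, so the inclusion holds trivially.

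Finally I would invoke \Cref{thm:sp-optimal}, which gives $\pi^*\in\Pisp$, and chain the inclusion to conclude $\pi^*\in\Pisp\subseteq\Piksp$ for every $k\in\mbb{R}$; that is, $\pi^*$ is a $k$-shortest-path policy, as claimed. The main obstacle is therefore not located in this proof at all but in \Cref{thm:sp-optimal}; the only points requiring care here are confirming that the $k$-SP quantifier ranges over a \emph{subset} of the SP quantifier---so that we are weakening, not strengthening, the requirement---and correctly accounting for the vacuous-threshold case.
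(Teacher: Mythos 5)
Your proposal is correct and follows essentially the same route as the paper's own proof: invoke \Cref{thm:sp-optimal} to get $\pi^*\in\Pisp$, identify $\Pisp=\Piinfsp$, and chain through the monotone inclusion $\Piinfsp\subset\Piksp$ of \Cref{lem:ksp-reduce}. Your added remarks --- spelling out why the inclusion holds (the $k$-SP quantifier ranges over a subset of pairs) and noting the vacuous case when no pair satisfies $\distpi\le k$ --- are sound refinements of the same argument rather than a different approach.
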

\cutproofup
\begin{proof} 
\Cref{thm:sp-optimal} tells $\pi^* \in \Pisp$.
\Cref{def:ksp} tells $\Pisp = \Piinfsp$ and
\Cref{lem:ksp-reduce} tells $\Piinfsp \subset \Piksp$.
Collectively, we have
$\pi^*   \in     \Pisp  =      \Piinfsp      \subset       \Piksp.$
\end{proof}
\vspace*{-0.8em}
In conclusion, \Cref{thm:ksp-optimal} states that %
the \ksp{} constraint does not change the optimality of policy,
and \Cref{lem:ksp-reduce} states a larger $k$ results in a larger reduction in policy search space.
Thus, it motivates us to apply the \ksp{} constraint in policy search to more efficiently find an optimal policy.
For the numerical experiment on measuring the 
reduction in the policy roll-outs space, please refer to \Cref{sec:analysis}.

\cutsectionup
\section{Shortest-Path Reinforcement Learning (SPRL)}\label{sec:m}
\cutsectiondown
\begin{figure*}[h]
    \centering
    \vspace*{-2pt}
    \subfigure[]{
    \includegraphics[draft=false, height=0.19\linewidth, valign=b]
    {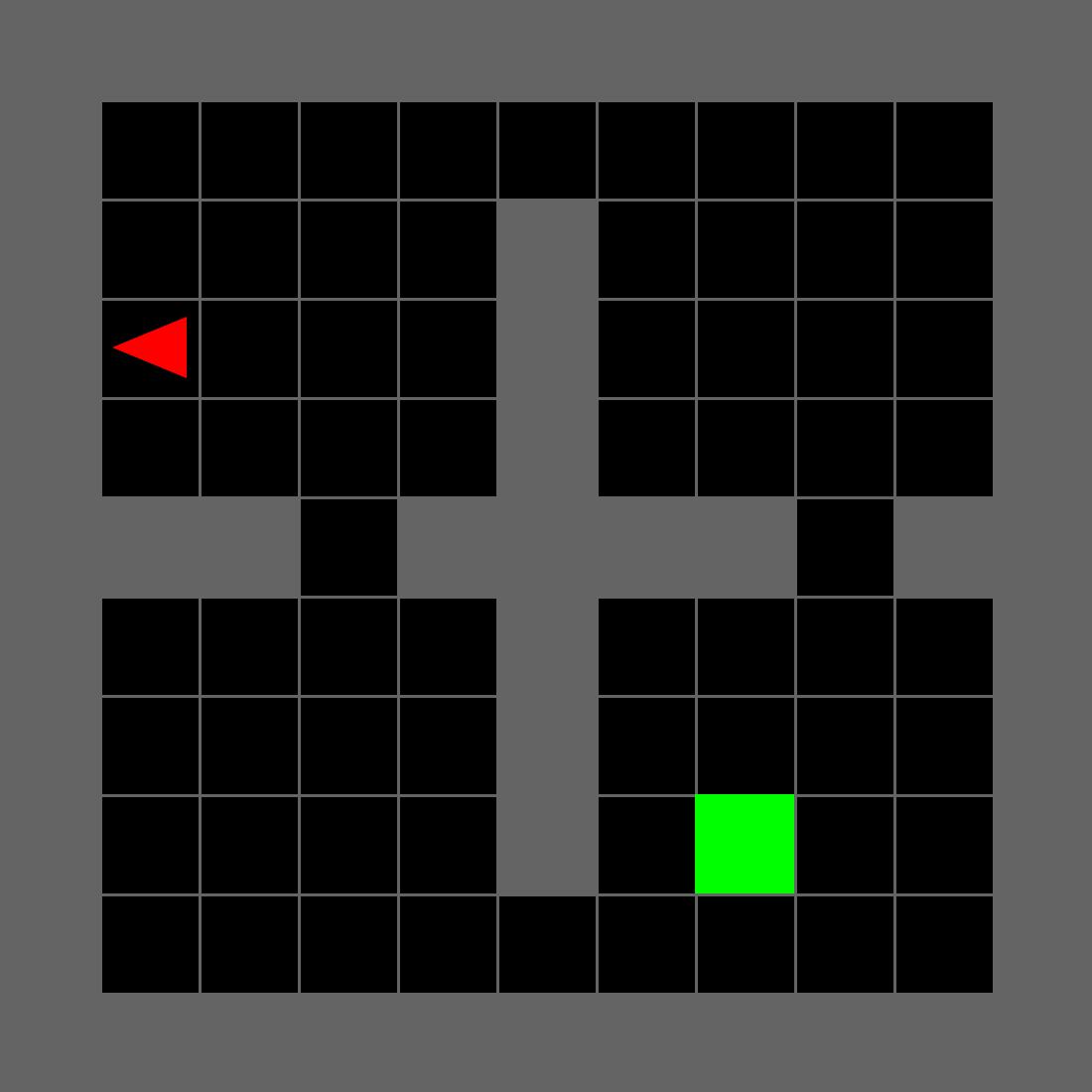}}
    \hspace{+3pt}
    \subfigure[]{
    \includegraphics[draft=false, height=0.19\linewidth, valign=b]{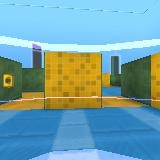}}
    \hspace{+3pt}
    \subfigure[]{
    \includegraphics[draft=false, height=0.19\linewidth, valign=b]{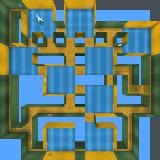}}
    \hspace{+3pt}
    \subfigure[]{
    \includegraphics[draft=false, height=0.19\linewidth, valign=b]{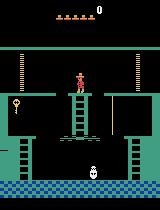}}
    \hspace{+3pt}
    \subfigure[]{
    \includegraphics[draft=false, height=0.19\linewidth, valign=b]{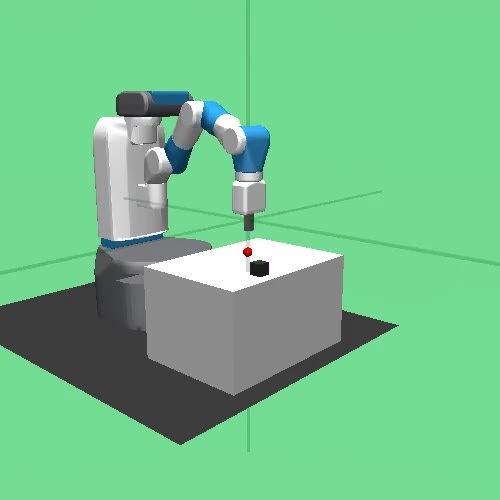}}
    \vspace*{-10pt}
    \caption{
        An example observation of (a) \fourl{}, %
        (b)~\dmgl{} in \dmlab{}, (c) the maze layout (not available to the agent) of \dmgl{}, (d) \montezuma in \atari, and (e) \fpush{} in \fetch{}.
    }
    \label{fig:dmlab_maze_layout}
    \vspace*{-10pt}
\end{figure*}
In~\Cref{sec:s}, we defined our \ksp constraint and proved that it preserves the optimality. In this section, we derive how the \ksp constraint on policy can be estimated by the \ksp cost that can be computed from the agent's on-policy trajectory. The proposed intrinsic cost term can be subtracted from the extrinsic reward and optimized together via any model-free RL method.
\cutparagraphup
\paragraph{$k$-shortest-path Cost.}
The objective of RL with the \ksp{} constraint $\Piksp$ can be written as:
\begin{align}
    \pi^* &= \textstyle\argmax_{\pi} \mbb{E}^{\pi}\left[ R(\tau) \right],\ \  \text{s.t.}\ \ \pi\in\Piksp,\label{eq:ksp-problem}
\end{align}
where 
$\Piksp = \{\pi \mid \forall (s, s'\in\pathsetpiphi),  \distpi \leq k, \text{it holds~}\pi\in\Pispss\}$ (\Cref{def:k-shortest-path-constraint}). 
We want to formulate the constraint $\pi \in \Piksp$ in the form of constrained MDP~(\Cref{sec:background-mdp}),
\ie, as $C(\pi) \le \alpha$.
We begin by re-writing the \ksp{} constraint into a cost-based form: %
\begin{align}
    \hspace*{-5pt}
    &\Piksp = \{\pi \mid \csp_k(\pi) = 0\}, \text{ where }\\ 
    &\csp_k(\pi)=\sum_{(s,s'\in\pathsetpiphi):\distpi\leq k} \mbb{I}\left[ \dist < \distpi \right].
    \label{eq:cost-form1}
\end{align}
Note that $\mbb{I}\left[ \dist < \distpi \right]=0 \leftrightarrow \dist = \distpi$ since $\dist\leq \distpi$ from~\Cref{def:shortest-dist}.
Similar to~\citet{tessler2019reward}, we apply the constraint to the on-policy trajectory $\tau=(s_0, s_1, \ldots)$ with discounting by replacing $(s, s')$ with $(s_t, s_{t+l})$ where $[t, t+l]$ represents each segment of $\tau$ with length $l$:
\begin{align}
\csp_k(\pi)
&\simeq  \mbb{E}_{\tau\sim\pi} \left[ \csp_k(\tau) \right], \\
\csp_k(\tau)
&= \textstyle\sum_{(t, l): t\geq 0, l\leq k}~
    \gamma^t
    \cdot \left( \prod_{j=t}^{t+l-1}{\mathbb{I} \left[
    r_j=0\right]} \right) \nonumber\\
&\qquad \cdot \mathbb{I} \left[ D_\nr(s_t, s_{t+l})< D^\pi_\nr(s_t, s_{t+l})  \right] \label{eq:csp-before}\\
&\leq
    \textstyle\sum_{(t, l): t\geq 0, l\leq k}~
    \gamma^t
    \cdot \left( \prod_{j=t}^{t+l-1}{\mathbb{I} \left[
    r_j=0\right]} \right) \nonumber\\
&\qquad \cdot\mathbb{I} \left[
	    D_\nr(s_t, s_{t+l}) < k
	\right]\label{eq:cmdp-derivation-cont}\\
&\triangleq \hatcsp_k(\pi),
\end{align}
where the Inequality.~(\ref{eq:cmdp-derivation-cont}) holds because $D^\pi_\nr(s_t, s_{t+l})=\log_\gamma\left( \mathbb{E}_{\tau \in \mathcal{T}_{s_t, s_{t+l},\nr}^{\pi} }\left[ \gamma^{|\tau|} \right]\right)<k$ from Jensen's inequality.
Note that it is sufficient to consider only the cases $l = k$
(because for $l < k$, given $D_\nr(s_t, s_{t+k}) < k$, we have $D(s_t, s_{t+l}) \le l < k$).
Then, we simplify $\hatcsp_k(\tau)$ as
\begin{align}
&\hatcsp_k(\tau) %
= \textstyle\sum_{t}
    \gamma^t \mathbb{I} \left[D_\nr(s_t, s_{t+k}) < k \right] \prod_{j=t}^{t+k-1}{\mathbb{I} \left[
    r_j=0\right]}\\
&=\textstyle\sum_{t}
    \gamma^t 
    \mathbb{I} [t \ge k]\mathbb{I} \left[
        D_\nr(s_{t-k}, s_t) < k
    \right] \prod_{j=t-k}^{t-1}{\mathbb{I} \left[
    r_j=0\right]}.
    \label{eq:cmdp-derivation}
\end{align}
Finally, the per-time step cost $c_t$ is given as:
\begin{align}
    c_t = \mathbb{I} [t \ge k]
    \cdot \mathbb{I} \left[
	    D_\nr(s_{t-k}, s_t) < k
	\right]
	\cdot \prod_{j=t-k}^{t-1}{\mathbb{I} \left[
    r_j=0\right]},
	\label{eq:cost}
\end{align}
where $\hatcsp_k(\pi) = \mathbb{E}_{\tau \sim \pi}\left[ \sum_t \gamma^t c_t \right]$.
Note that $\hatcsp_k(\pi)$ is an upper bound of $\csp_k(\pi)$,
which will be minimized by the bound to make as little violation of the shortest-path constraint as possible.
Intuitively speaking, $c_t$ penalizes the agent from taking a non-$k$-shortest path at each step, so minimizing such penalties will make the policy satisfy the $k$-shortest-path constraint. 
In \Cref{eq:cost}, $c_t$ depends on the previous $k$ steps; hence, the resulting CMDP becomes a $(k+1)$-th order MDP. In practice, however, we empirically found that feeding only the current time-step observation to the policy performs better than stacking the previous $k$-steps of observations (See \Cref{appendix:stack} for details). Thus, we did not stack the observation in all the experiments.
We use the Lagrange multiplier method to convert the objective (\ref{eq:ksp-problem}) into an equivalent unconstrained problem as follows:
\begin{align}
	\min_{\lambda>0}\max_{\theta} L(\lambda, \theta) = \min_{\lambda>0}\max_{\theta}\mbb{E}_{\tau\sim\pi_{\theta}}\Big[ \textstyle\sum_t{\gamma^t \left( r_t - \lambda c_t\right)} \Big],   \label{eq:ksp-problem-3}
\end{align}
where $L$ is the Lagrangian, $\theta$ is the parameter of policy $\pi$, and $\lambda> 0$ is the Lagrangian multiplier. While in ordinary Lagrange multiplier method $\lambda$ is unique since Theorem~\ref{thm:ksp-optimal} shows that the shortest-path constraint preserves the optimality, we can control how much weight we will give to the constraint, i.e., we are free to set any $\lambda>0$.

Thus, we simply consider  $\lambda$ as a tunable \emph{positive} hyperparameter, and  simplify the min-max problem~(\ref{eq:ksp-problem-3}) to an RL objective with costs $c_t$ being added:
\vspace*{-3pt}
\begin{align}
	\max_{\theta}\mbb{E}_{\tau\sim\pi_{\theta}}\Big[ \textstyle\sum_t{ \gamma^t \left( r_t - \lambda c_t\right)} \Big].\label{eq:ksp-problem-4}
\end{align}

\begin{figure*}[!h]
    \centering
    \hspace*{-5pt}
    \includegraphics[draft=false, height=7.3\baselineskip, valign=t]{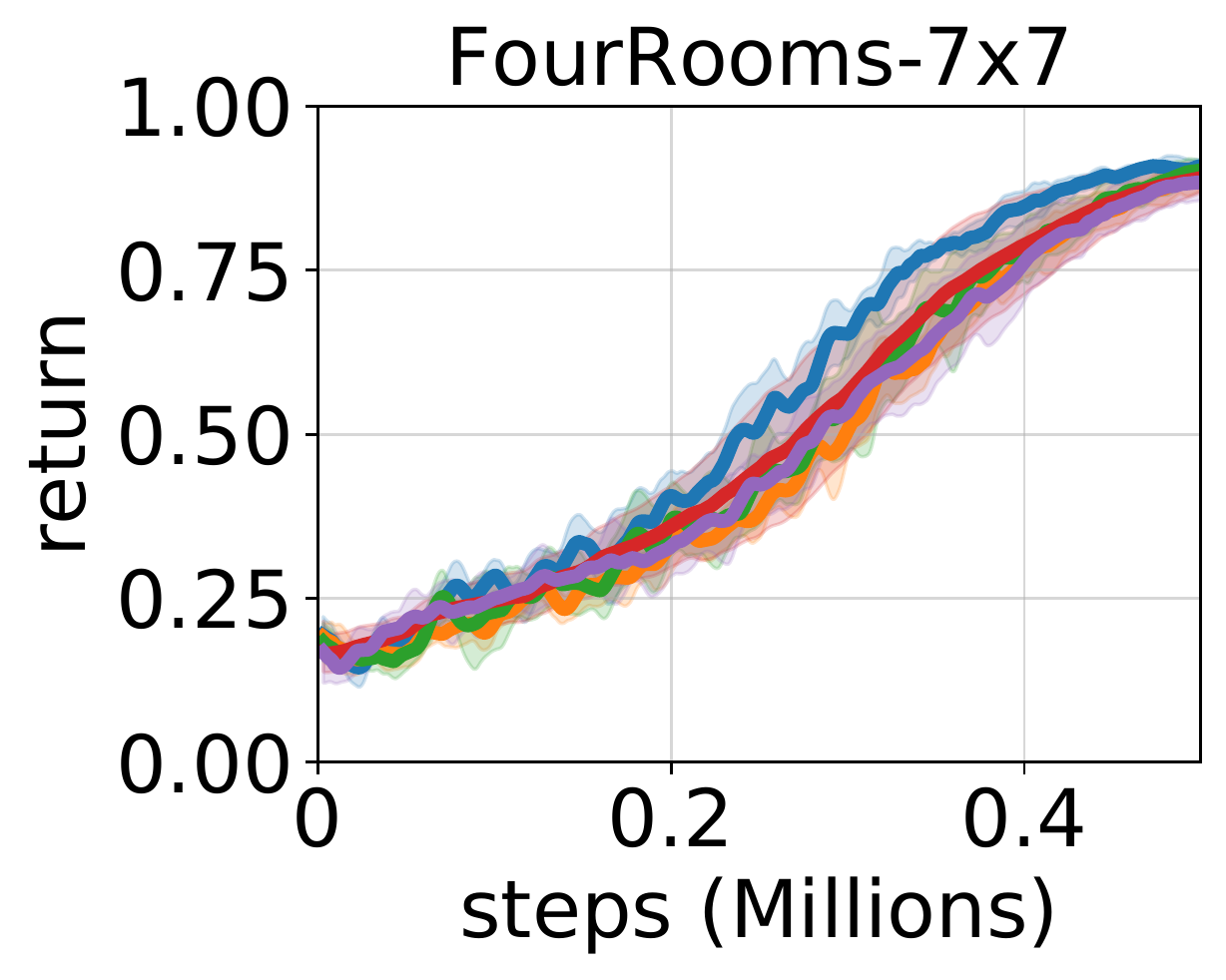}
    \hspace{-6pt}
    \includegraphics[draft=false, height=7.3\baselineskip, valign=t]{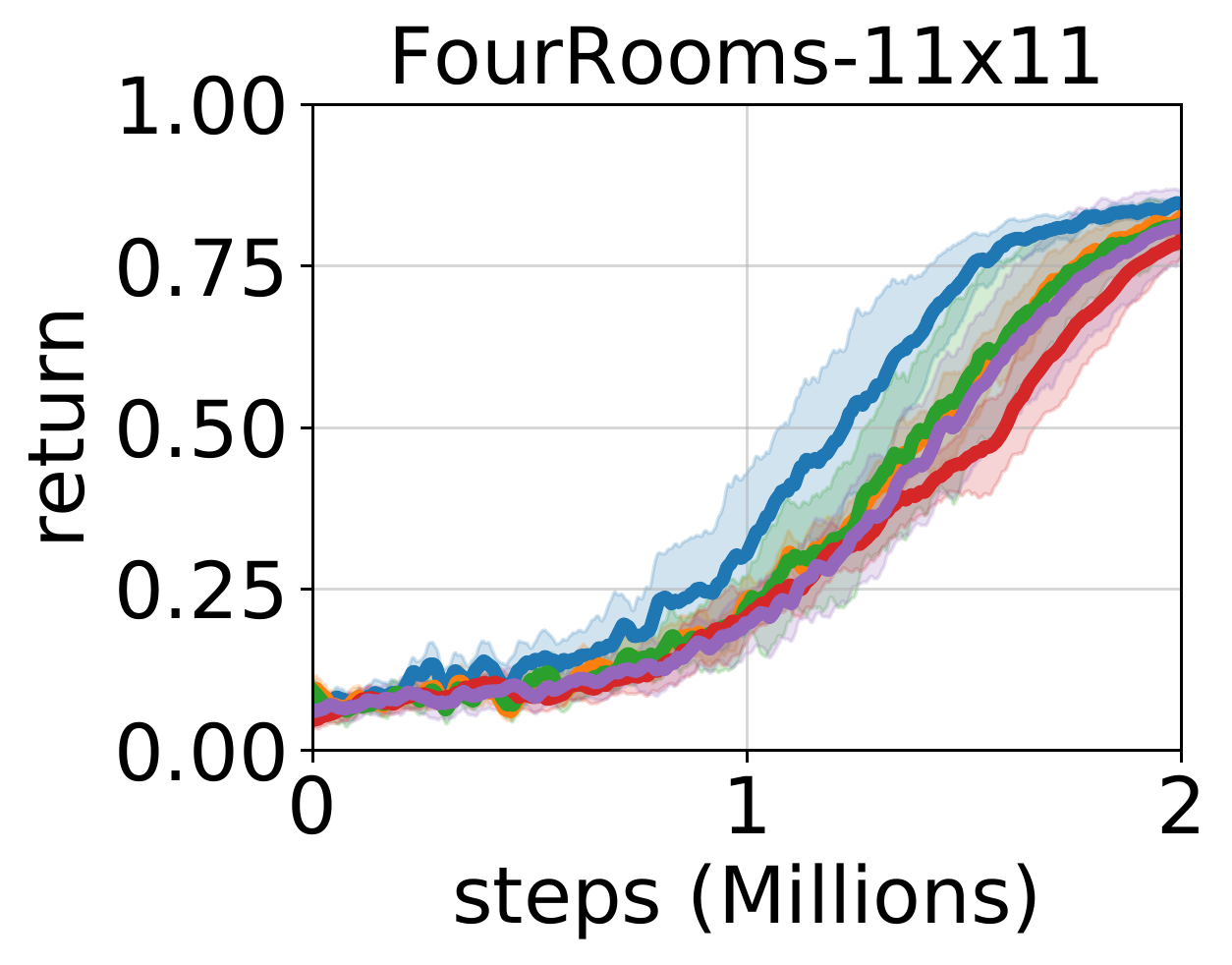}
    \hspace{-6pt}
    \includegraphics[draft=false, height=7.3\baselineskip, valign=t]{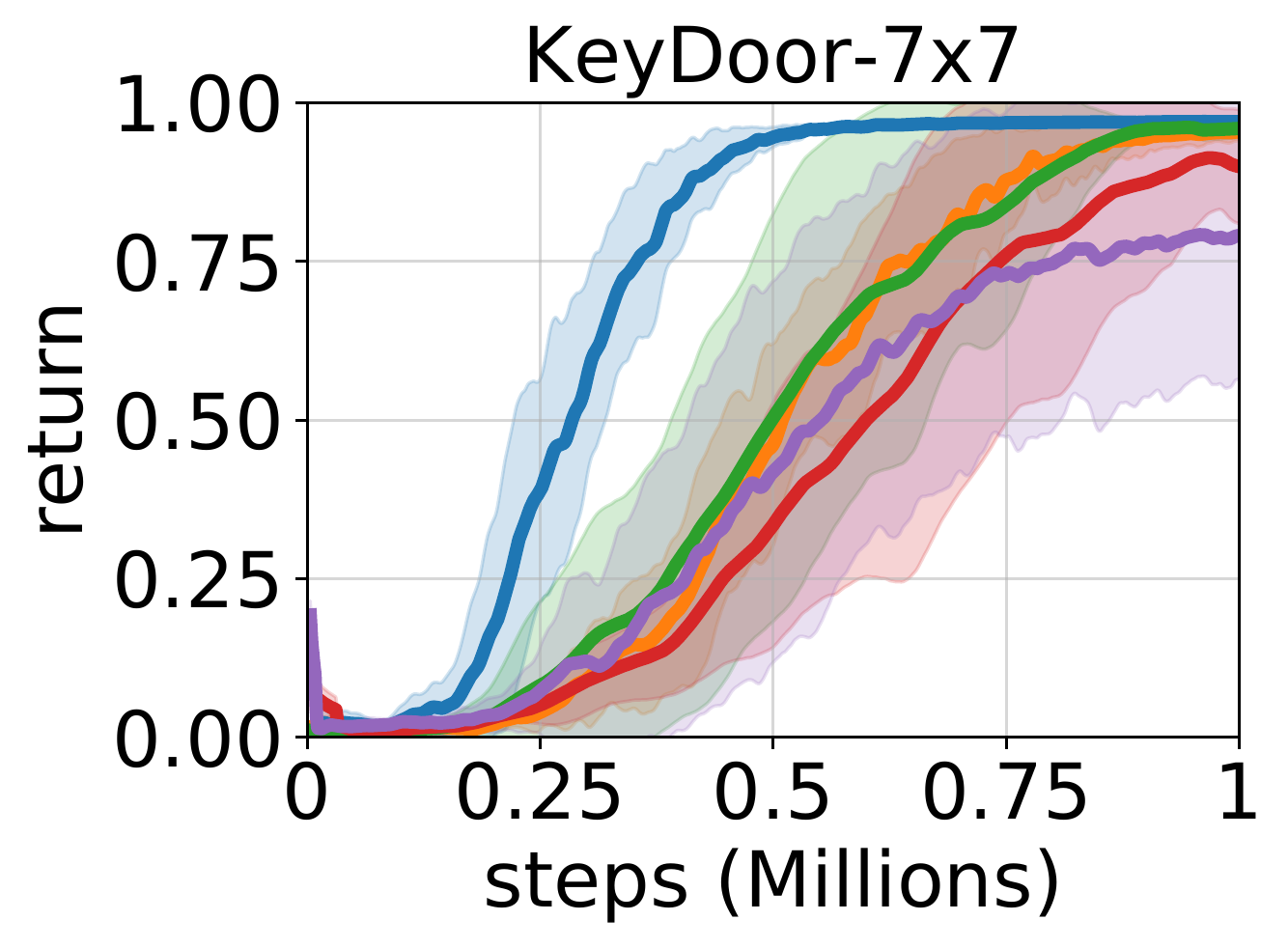}
    \hspace{-6pt}
    \includegraphics[draft=false, height=7.3\baselineskip, valign=t]{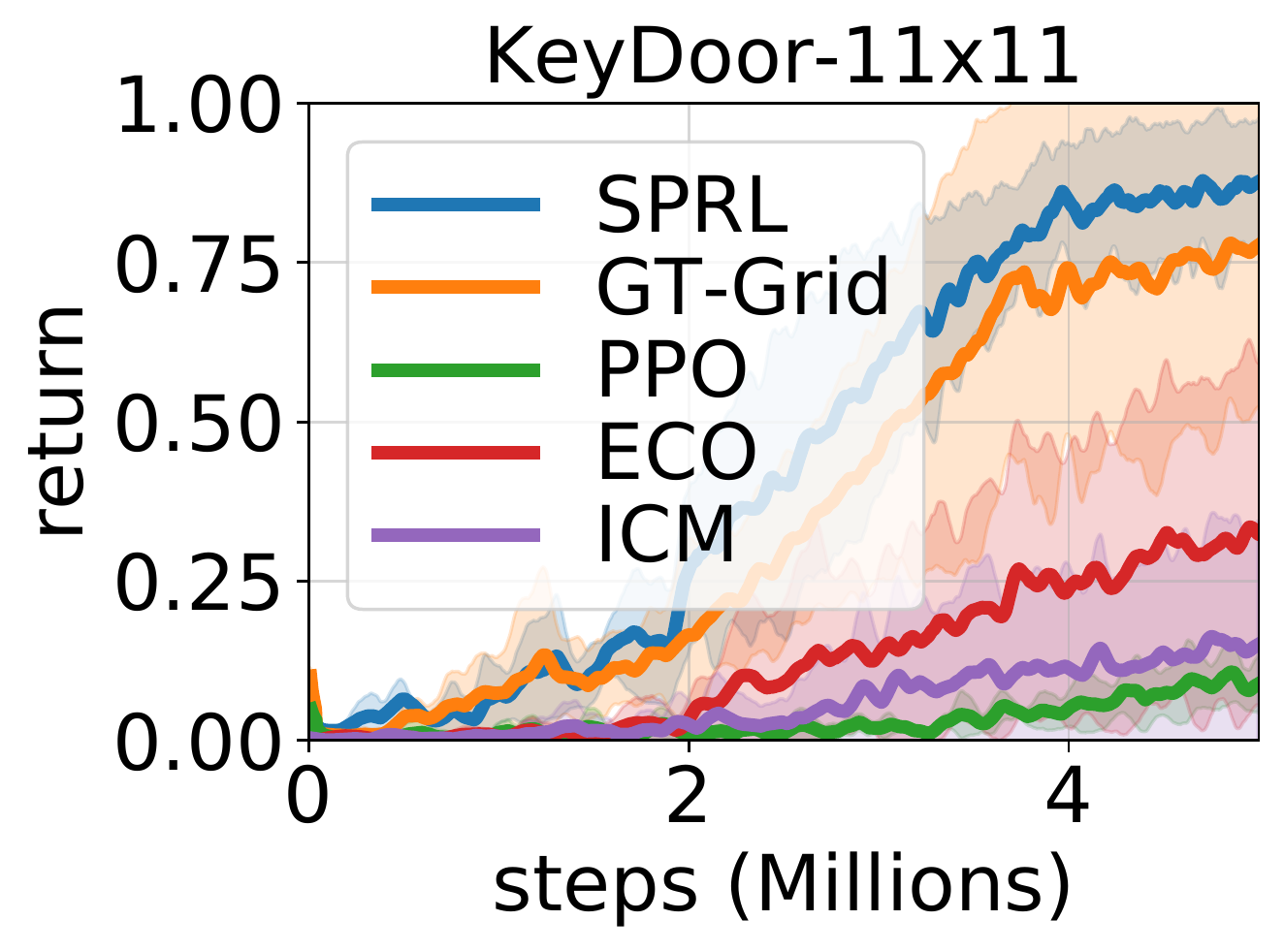}
    \hspace{-5pt}
    \vspace*{-10pt}
    \caption{
        Progress of average episode reward %
        on \grid{} tasks. We report the mean (solid curve) and standard error (shadowed area) of the performance over six random seeds. %
    }
    \label{fig:minigrid}
\vspace*{-5pt}
\end{figure*}
\cutparagraphup
\paragraph{Practical Implementation of the Cost Function.}
We implement the binary distance discriminator $\mbb{I}( D_\nr(s_{t-k}, s_{t}) < k)$ in \Cref{eq:cost} 
using the \emph{$k$-reachability network}~\citep{savinov2018episodic}. The $k$-reachability network $\text{Rnet}_k(s, s')$ is trained to output 1 if the state $s'$ is reachable from the state $s$ with less than or equal to $k$ consecutive actions, and 0 otherwise. %
Formally, we take the functional form:
$
	\text{Rnet}_{k}(s, s') \simeq \mbb{I}\left( D_\nr(s,s') < k+1 \right).
$
We then estimate the cost term $c_t$ using $(k-1)$-reachability network as follows:
\vspace*{-5pt}
\begin{align}
    &c_t= \mbb{I}\left[ D_\nr(s_{t-k},s_{t})< k\right] \mbb{I}\left[t\geq k\right]
    \prod_{j=t-k}^{t-1}{\mathbb{I} \left[
    r_j=0\right]}\\
	 &= \text{Rnet}_{k-1}(s_{t-k}, s_t) 
	  \mbb{I}\left[t\geq k\right]
	  \prod_{j=t-k}^{t-1}{\mathbb{I} \left[
    r_j=0\right]}.\label{eq:cost-2}
\vspace*{-5pt}
\end{align}
Intuitively speaking, if the agent takes a $k$-shortest path, then the distance between $s_{t-k}$ and $s_t$ is $k$, hence $c_t = 0$.
If it is not a $k$-shortest path, $c_t > 0$ since the distance between $s_{t-k}$ and $s_t$ will be less than $k$.
In practice, due to the error in the reachability network, we add a small tolerance $\Delta t\in\mbb{N}$ to ignore outliers.
It leads to an empirical version of the cost as follows:
\vspace*{-10pt}
\begin{align}
   c_t \simeq & \text{Rnet}_{k-1}(s_{t-k-\Delta t}, s_t)\cdot \prod_{j=t-k-\Delta t}^{t-1}{\mathbb{I} \left[
    r_j=0\right]} \nonumber\\
    &\cdot \mbb{I}( t \geq k+\Delta t). 
   \label{eq:cost-tol}
\vspace*{-3pt}
\end{align}
In our experiment, we found that a small tolerance $\Delta t \simeq k/5$ works well in general.
Similar to~\citet{savinov2018episodic}, we used the following contrastive loss for training the reachability network:
\setlength{\abovedisplayskip}{4pt}
\setlength{\belowdisplayskip}{4pt}
\begin{align} %
    \mathcal{L}_{\text{Rnet}} = &- \log\left(\text{Rnet}_{k-1}(s_{\text{anc}}, s_+) \right) \\
    &- \log\left(1 - \text{Rnet}_{k-1}(s_{\text{anc}}, s_-) \right),
    \label{eq:rnet-loss}
\end{align}
where $s_{\text{anc}}, s_+, s_-$ are the anchor, positive, and negative samples, respectively (See \Cref{sec:appendix-reachability-training} for the detail of training).
\cutsectionup
\section{Related Work}\label{sec:r}
\cutsectiondown
\paragraph{Shortest-path Problem and Planning.}
Many early works~\citep{bellman1958routing, ford1956network, bertsekas1991analysis,bertsekas1995neuro} have discussed (stochastic) shortest path problems in the context of MDP. They viewed the shortest path problem as a planning problem and proposed a dynamic programming-based algorithm similar to the value iteration~\citep{sutton2018reinforcement} to solve it.
Our main idea is inspired by (but not based on) this viewpoint. Specifically, our method does not directly solve the shortest path problem via planning; hence, our method does not require a forward model for planning. Our method only exploits the optimality guarantee of the shortest-path under the $\pi$-distance to prune out sub-optimal policies (\ie, non-shortest paths).
\cutparagraphdown

\cutparagraphup
\paragraph{Distance Metric in Goal-conditioned RL.}
In goal-conditioned RL, there has been a recent surge of interest in learning a distance metric in state (or goal) space to construct a high-level MDP graph and perform planning to find a shortest-path to the goal state. ~\citet{huang2019mapping, laskin2020sparse} used the universal value function (UVF)~\citep{schaul2015universal} with a constant step penalty as a distance function. ~\citet{zhang2018composable, laskin2020sparse} used the success rate of transition between nodes as distance and searched for the longest path to find the plan with the highest success rate. SPTM~\citep{savinov2018semi} defined a binary distance based on the reachability network (RNet) to connect nearby nodes in the graph.
However, the proposed distance metrics and methods can be used only for the goal-conditioned task and lack the theoretical guarantee in general MDP, while our theory and framework are applicable to general MDP (see~\Cref{sec:shortest-path-general}).
\cutparagraphdown

\cutparagraphup
\paragraph{Reachability Network.}
The reachability network (RNet) was first proposed by~\citet{savinov2018episodic} as a way to measure the novelty of a state for \emph{exploration}. Intuitively, if the current state is not reachable from previous states in episodic memory, it is considered to be novel.
SPTM~\citep{savinov2018semi} used RNet to predict the local connectivity (\ie, binary distance) between observations in memory for \emph{graph-based planning} in a navigation task.
\citet{zhang2020generating} used the k-adjacency network, which is analogous to the RNet, to improve the subgoal generation of hierarchical reinforcement learning (HRL) by constraining the goal space into adjacent states from the current state.
On the other hand, we use RNet for \emph{constraining the policy} (\ie, removing the sub-optimal policies from policy space). Thus, in ours and the other three compared works, RNet is being employed for fundamentally different purposes.
\cutparagraphup
\paragraph{Sparse Reward Problem.}
One of the most famous approaches to tackle the sparse reward problem in RL is intrinsic motivation~\citep{bellemare2016unifying, pathak2017curiosity, savinov2018episodic, burda2018large}. By adding intrinsic reward, they aim to transform the original sparse reward problem into a dense reward problem.~\citet{bellemare2016unifying} is one of the pioneer works that formulated the intrinsic reward based on the pseudo count of state visitation to measure the state novelty.~\citet{pathak2017curiosity, burda2018large} designed an intrinsic reward using prediction error.~\citet{savinov2018episodic} defined the intrinsic reward based on whether the current state is ``reachable'' from the previously visited states, where the reachability between a pair of states is predicted by a neural network that is trained via temporal contrastive learning.
~\citet{florensa2017reverse} created a curriculum based on the starting positions in training to tackle the sparse reward problem. The positions near the goal are considered easy and the positions far from the goal are considered hard. ~\citet{riedmiller2018learning} formed the sparse reward problem into multiple low-level tasks. After designing an auxiliary reward function and learning a policy for every task, they learned a high-level policy that decides the sequence of low-level policies.~\citet{ecoffet2019go} proposed to learn a policy that can go back to previously visited states, such that the agent can perform a directed exploration around the promising state. Our \sprl{} is not concerned with measuring the state novelty but aims to shrink the policy search space to improve the sample efficiency of RL algorithms. The exploration is promoted as a byproduct of the reduced policy search space.

\cutparagraphdown
\cutparagraphup
\paragraph{More Related Works.}
Please refer to \Cref{appendix:extended-related-works} for further discussions about other related works.%
\section{Experiments}\label{sec:e}
\begin{figure*}[!h]
    \centering
    \vspace*{0pt}
    \hspace*{-10pt}
    \includegraphics[draft=false, width=0.24\linewidth, valign=t]
    {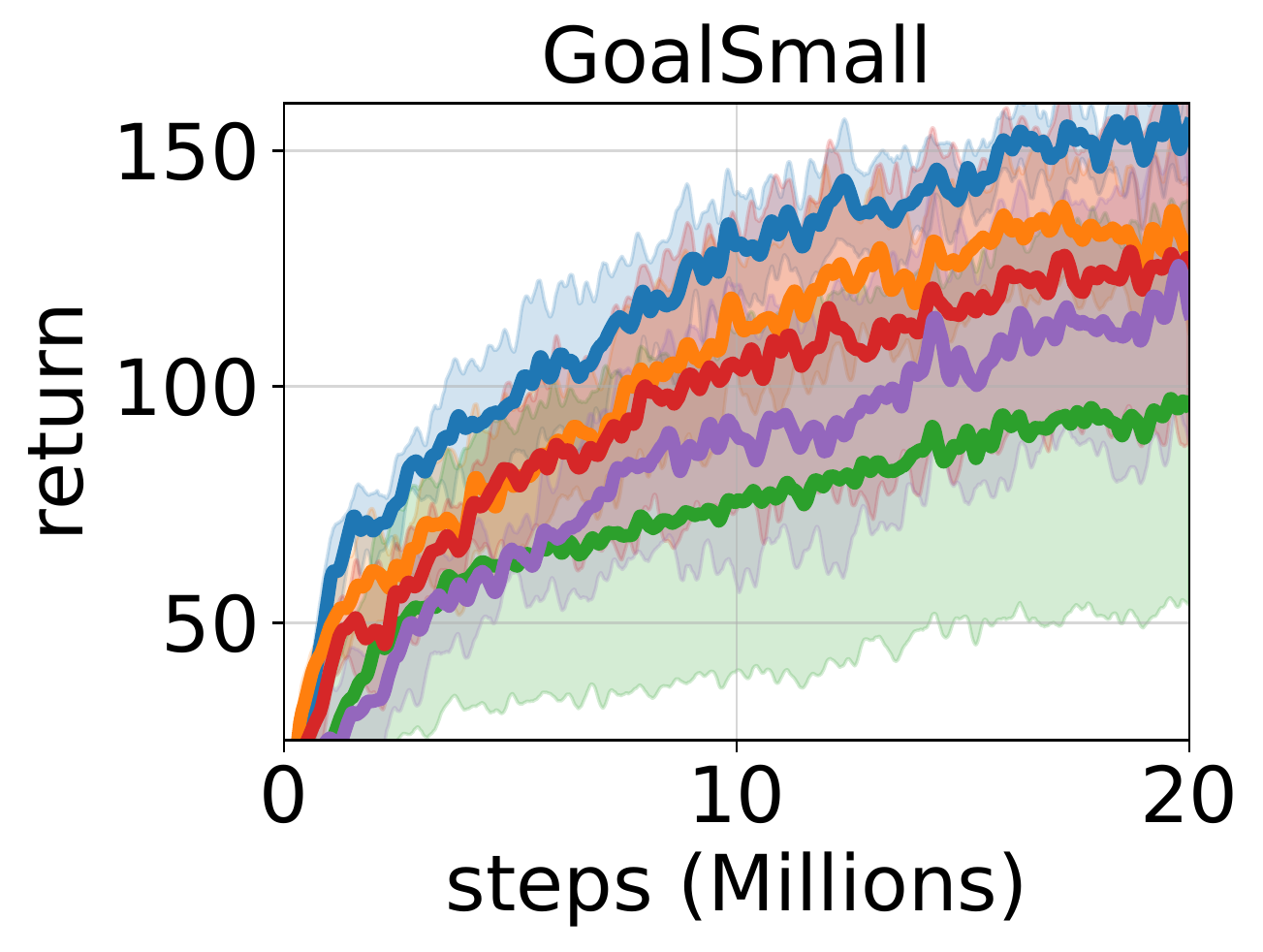}
    \hspace{-5pt}
    \includegraphics[draft=false, width=0.24\linewidth, valign=t]
    {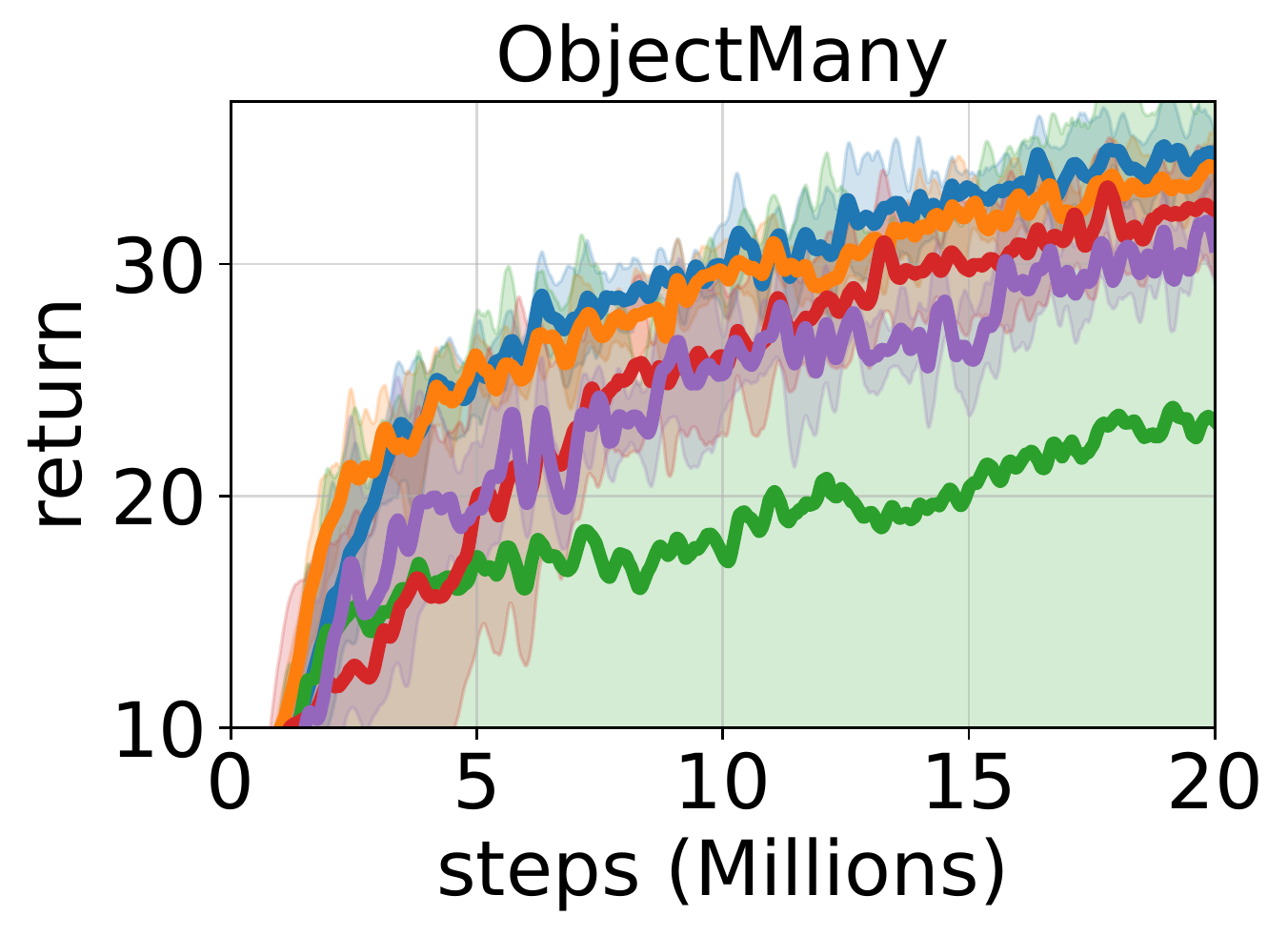}
    \hspace{-5pt}
    \includegraphics[draft=false, width=0.24\linewidth, valign=t]
    {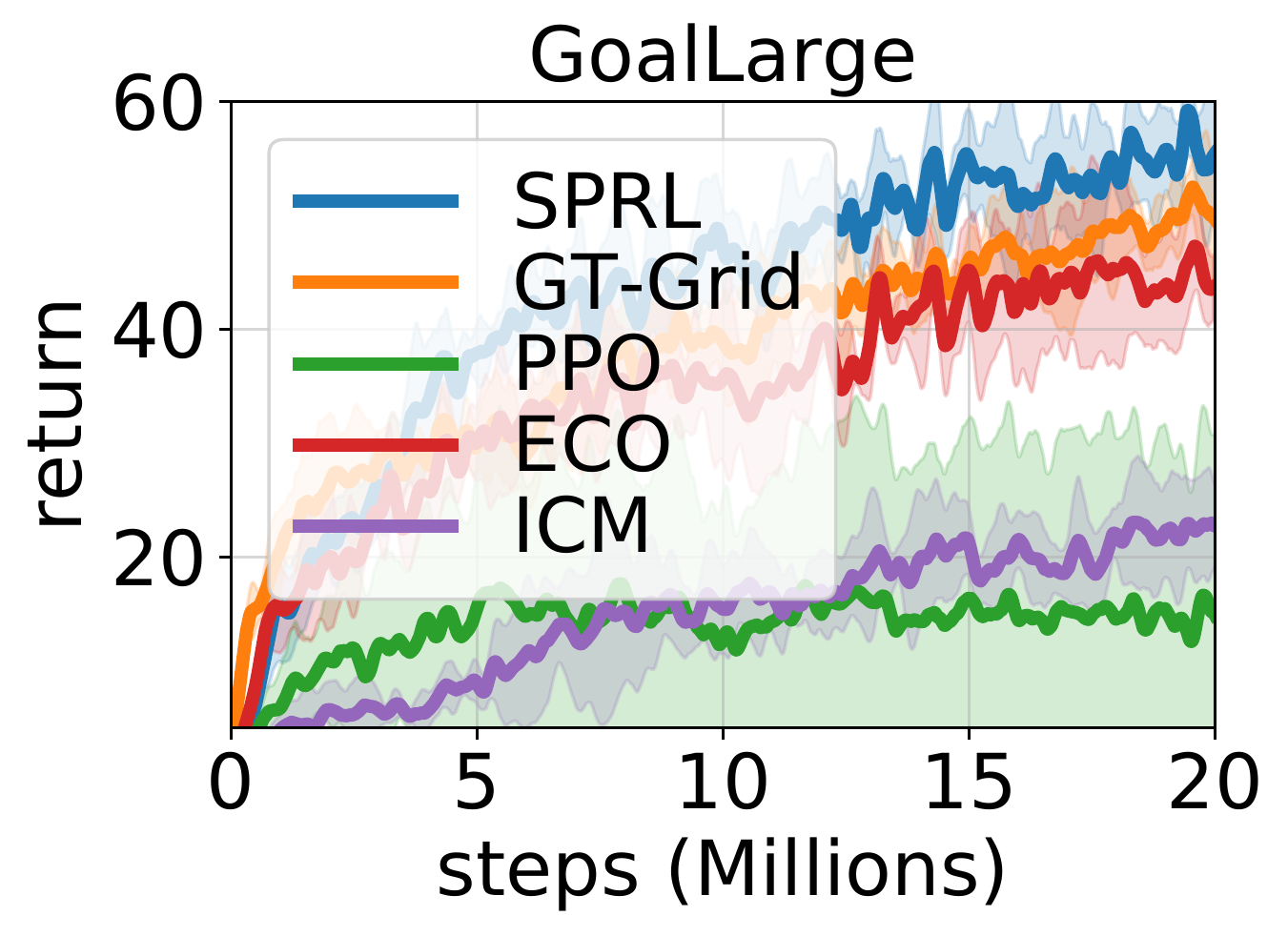}
    \hspace{-10pt}
    \vspace*{-10pt}
    \caption{
        Progress of average episode reward %
        on \dmlab{} tasks. We report the mean (solid curve) and standard error (shadowed area) of the performance over four random seeds. %
    }
    \label{fig:dmlab}
\vspace*{-5pt}
\end{figure*}
\begin{figure*}[!h]
    \centering
    \vspace*{0pt}
    \hspace*{-1pt}
    \includegraphics[draft=false, height=7.2\baselineskip, valign=t]{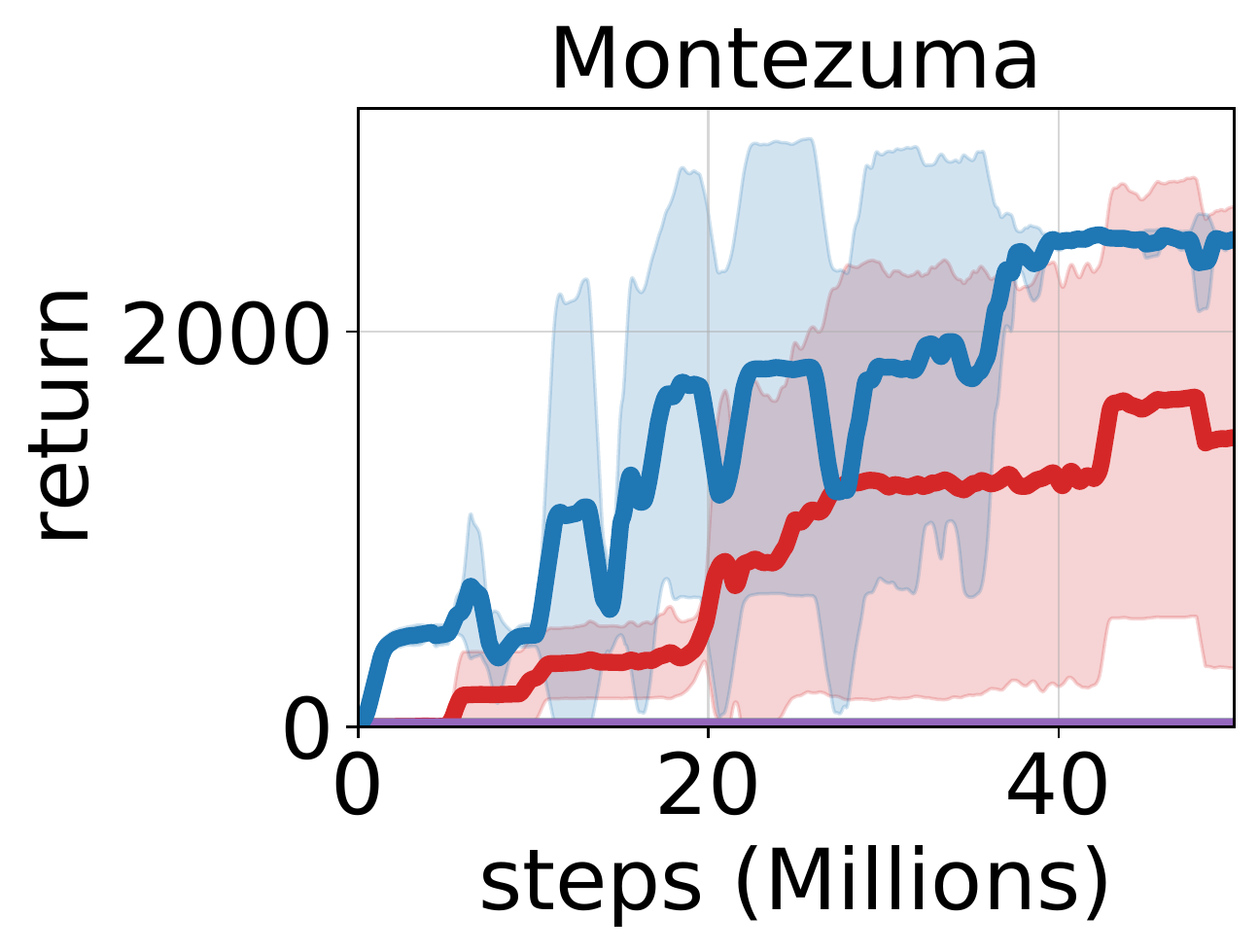}
    \hspace{-2pt}
    \includegraphics[draft=false, height=7.2\baselineskip, valign=t]{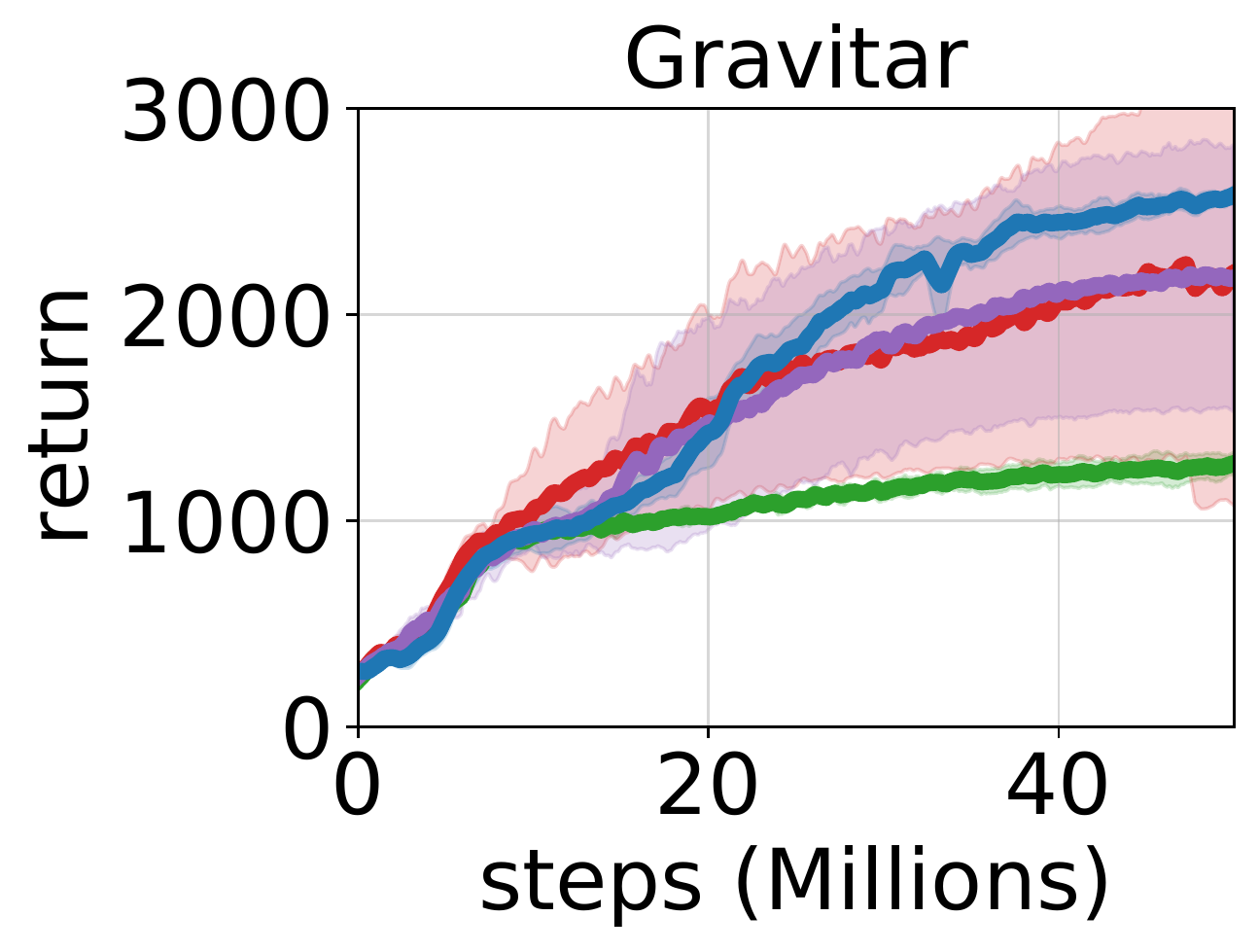}
    \hspace{-2pt}
    \includegraphics[draft=false, height=7.2\baselineskip, valign=t]{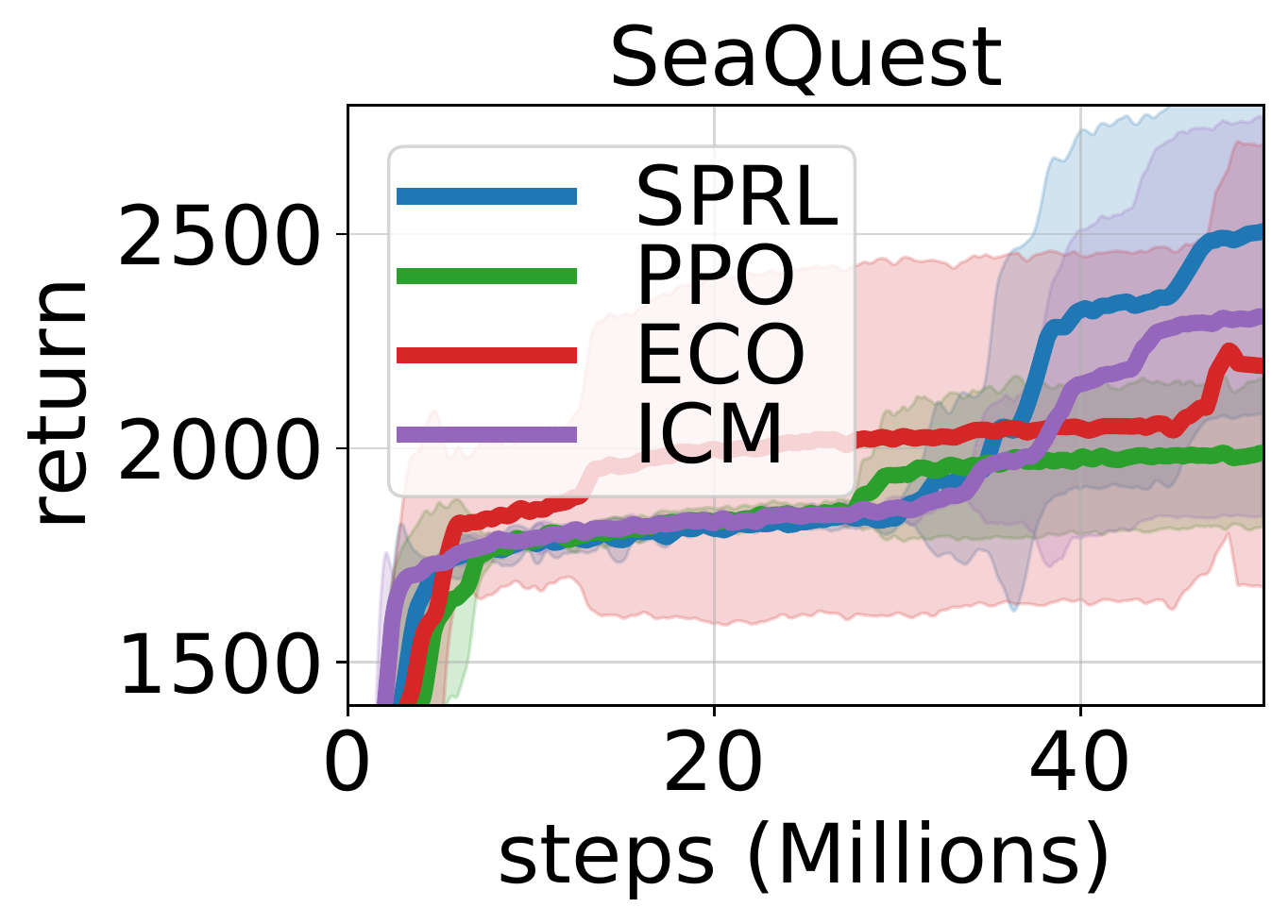}
    \\
    \hspace*{-1pt}
    \includegraphics[draft=false, height=7.2\baselineskip, valign=t]{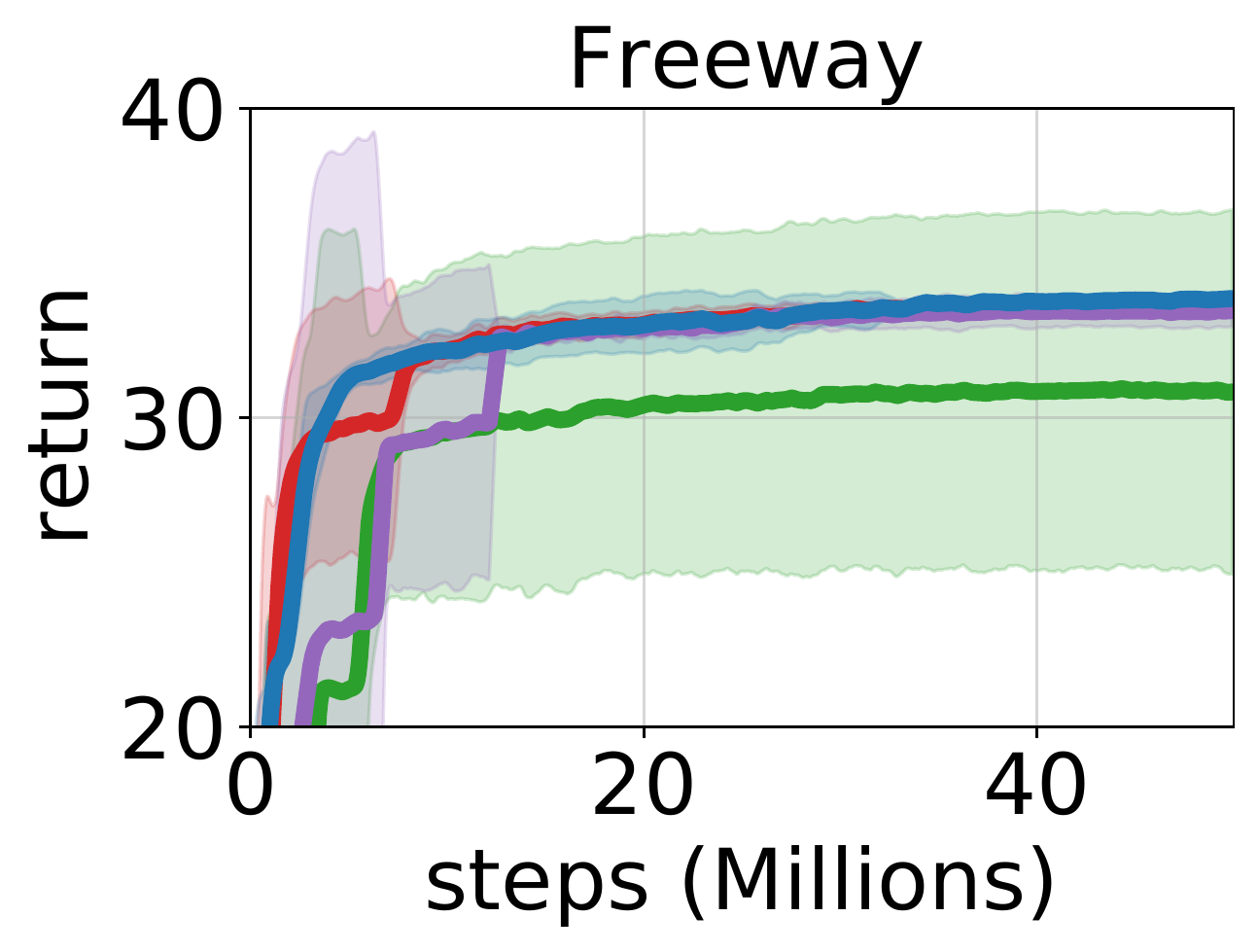}
    \hspace{-2pt}
    \includegraphics[draft=false, height=7.2\baselineskip, valign=t]{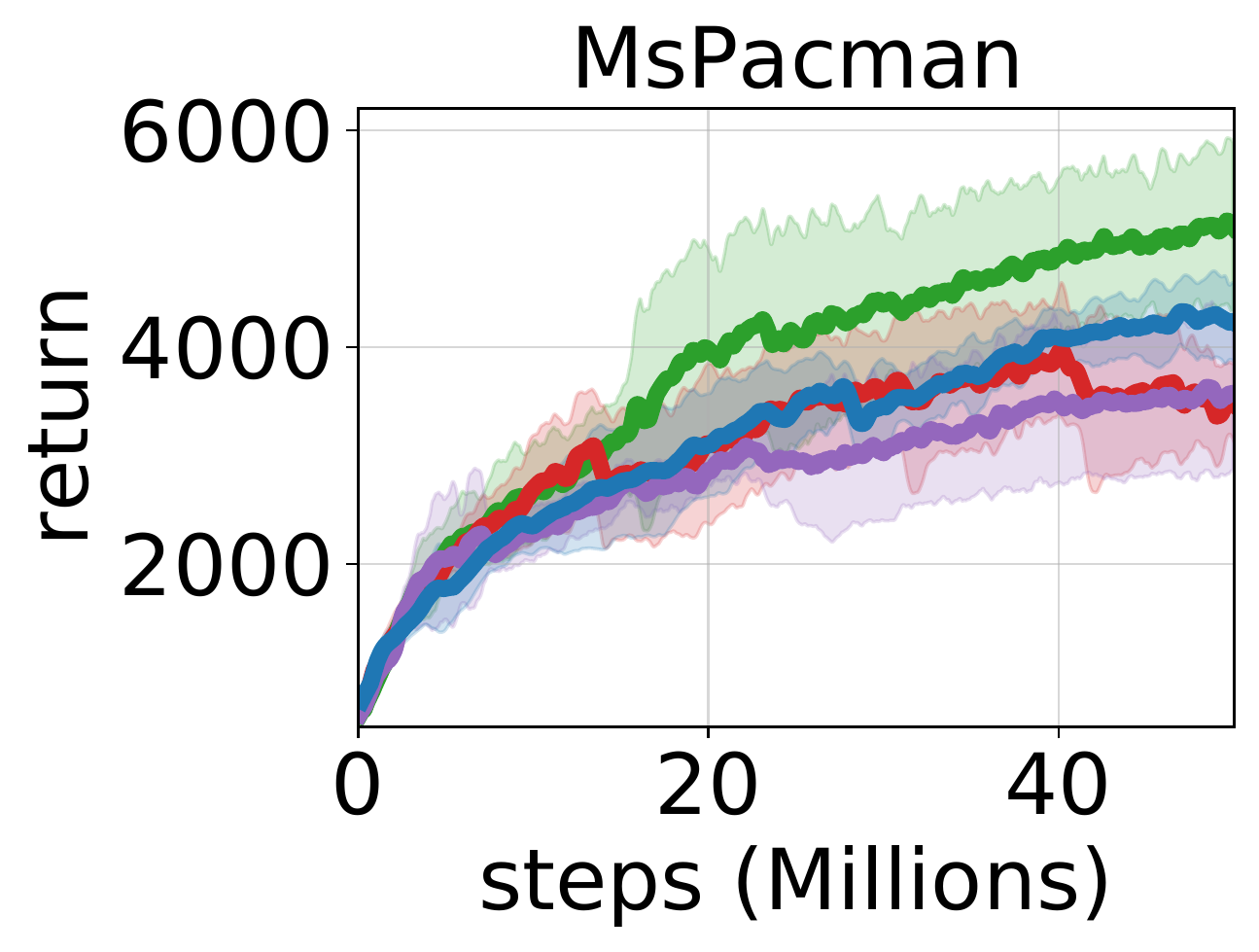}
    \hspace{-2pt}
    \includegraphics[draft=false, height=7.2\baselineskip, valign=t]{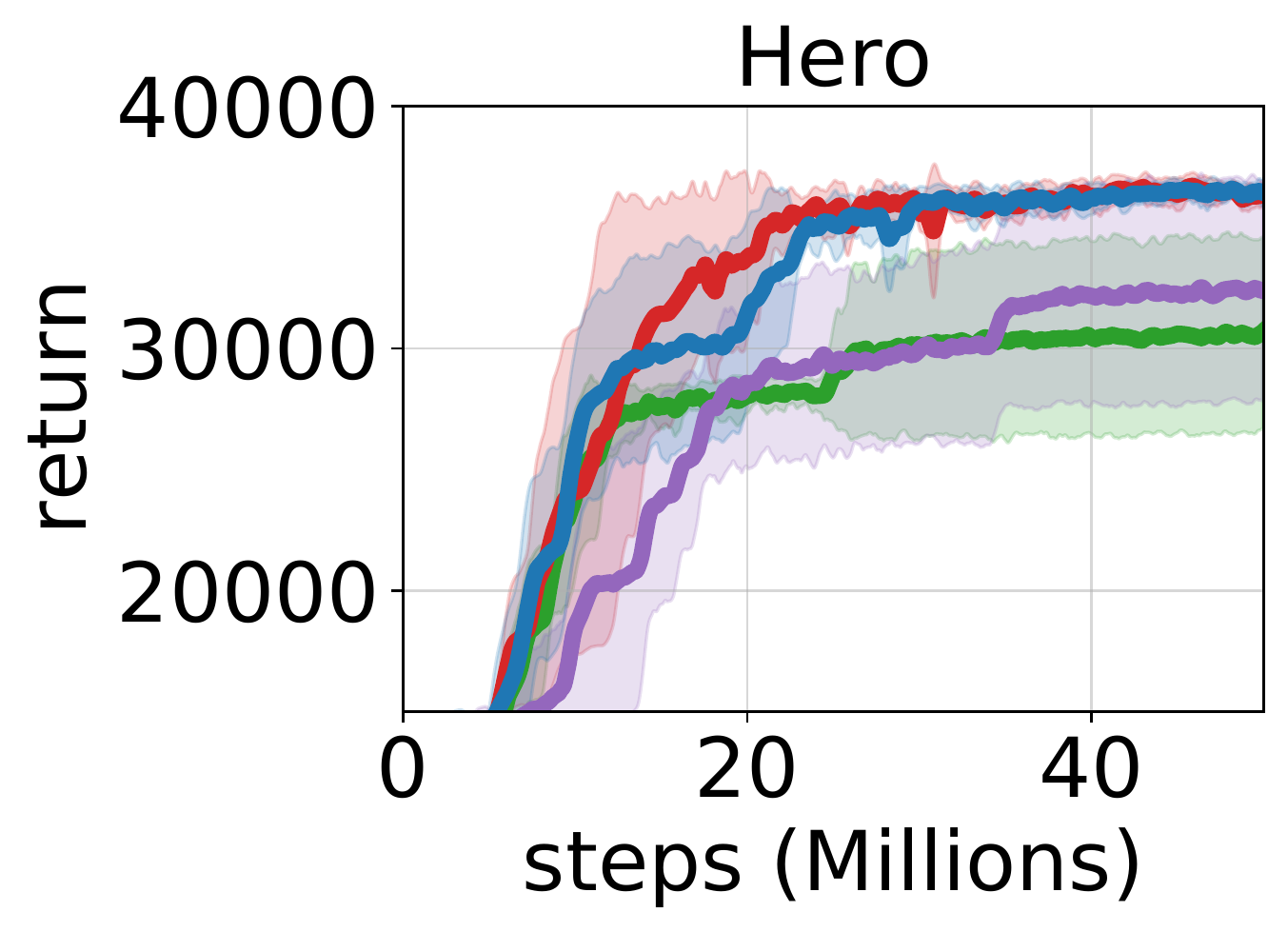}
    \vspace*{-10pt}
    \caption{
        Progress of average episode reward %
        on \atari tasks. We report the mean (solid curve) and standard error (shadowed area) of the performance over four random seeds. The performances of PPO and ICM in \montezuma are both zero, hence invisible in the figure.
    }
    \label{fig:atari}
\vspace{-10pt}
\end{figure*}
\begin{figure*}[!h]
    \centering
    \vspace*{0pt}
    \hspace*{-1pt}
    \includegraphics[draft=false, height=7.2\baselineskip, valign=t]{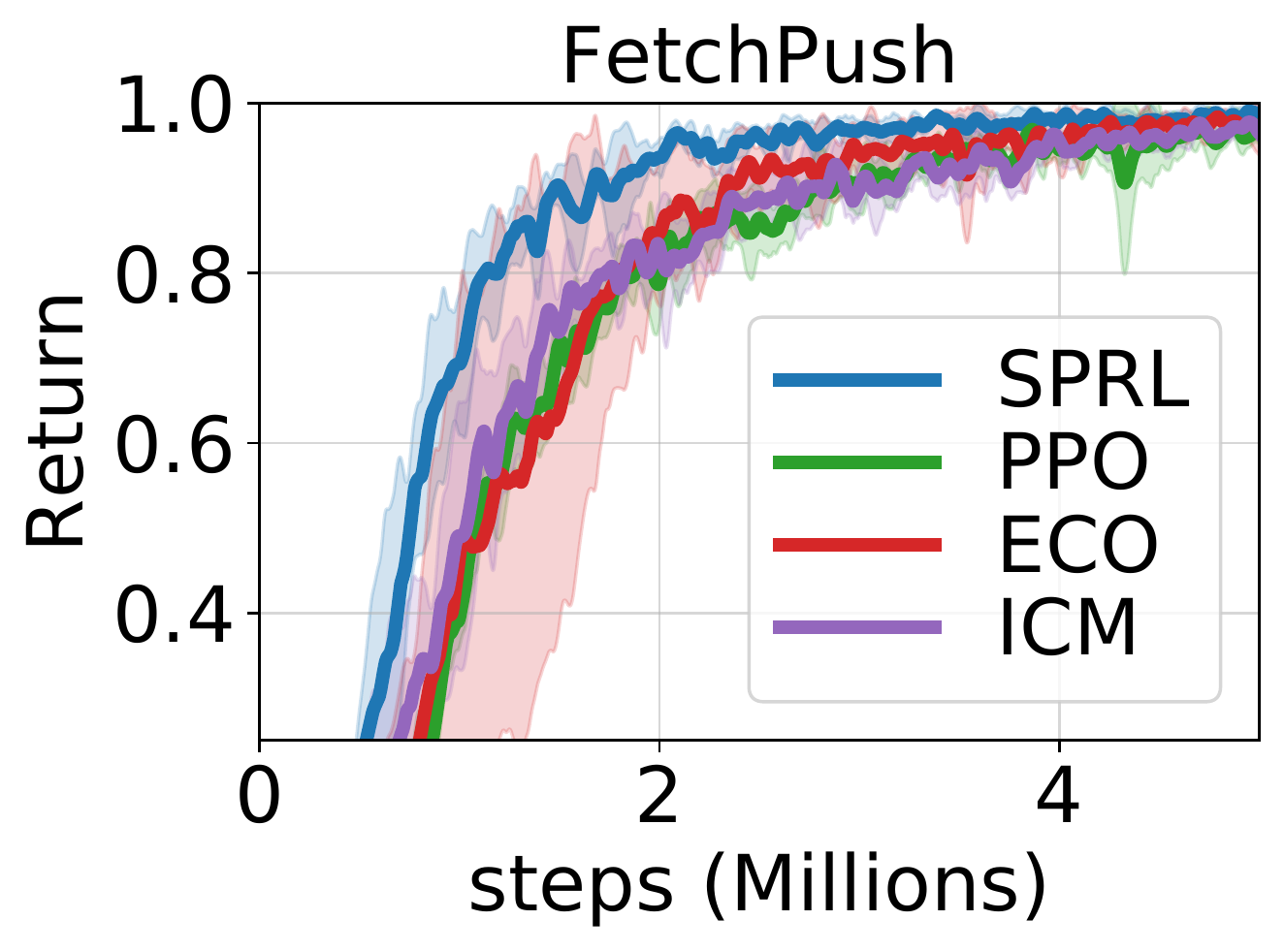}
    \hspace{-2pt}
    \includegraphics[draft=false, height=7.2\baselineskip, valign=t]{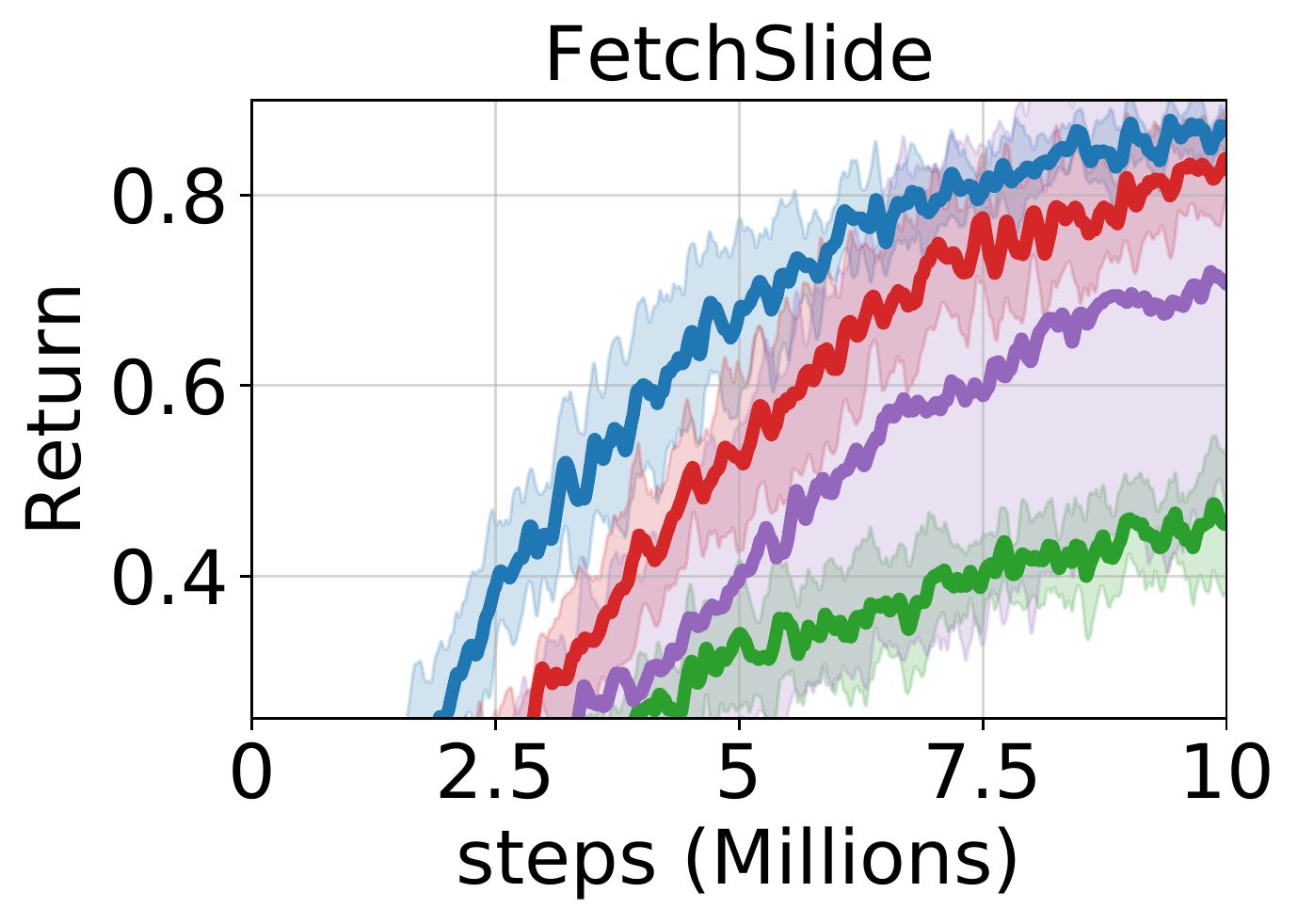}
    \hspace{-2pt}
    \includegraphics[draft=false, height=7.2\baselineskip, valign=t]{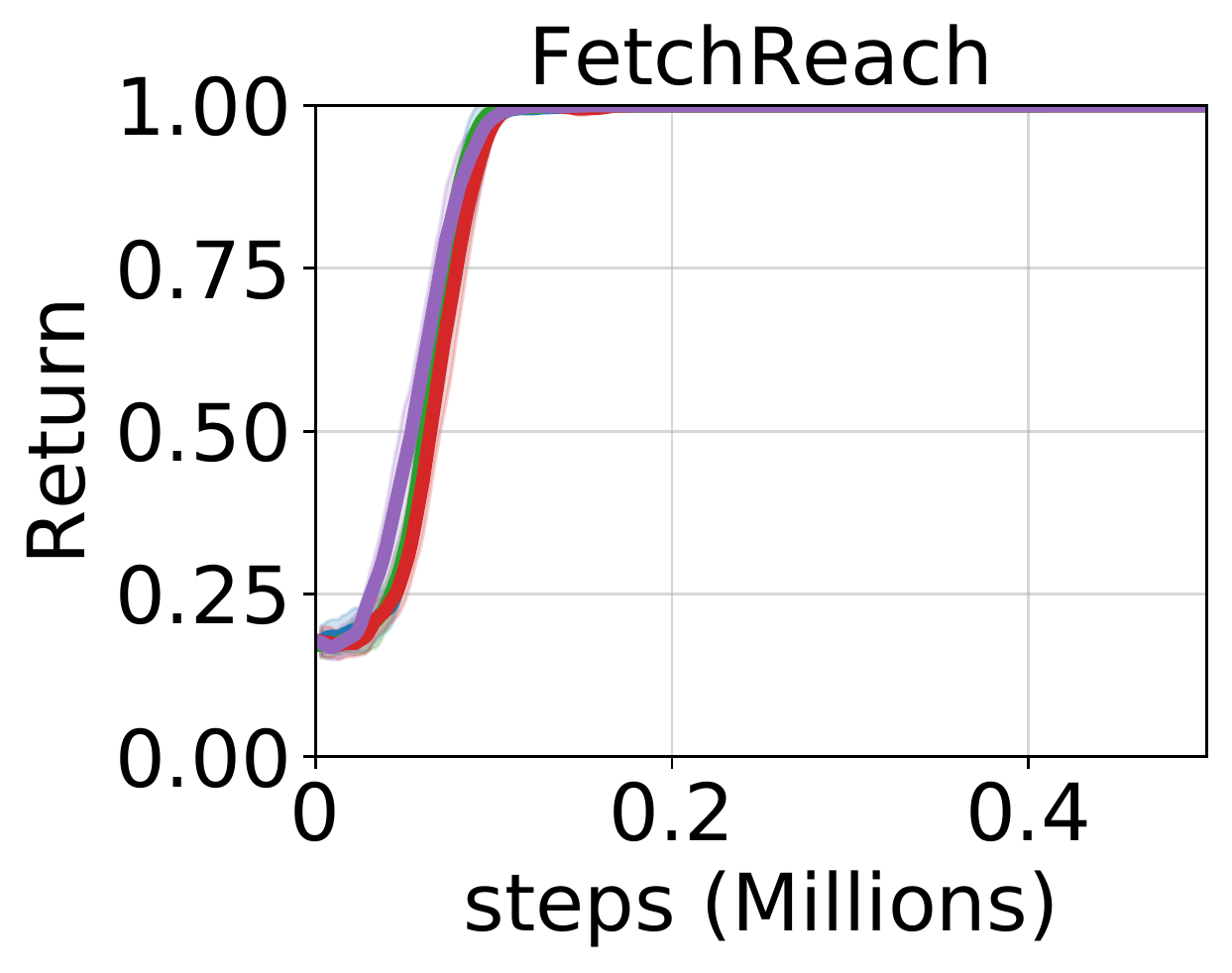}
    \hspace*{-2pt}
    \includegraphics[draft=false, height=7.2\baselineskip, valign=t]{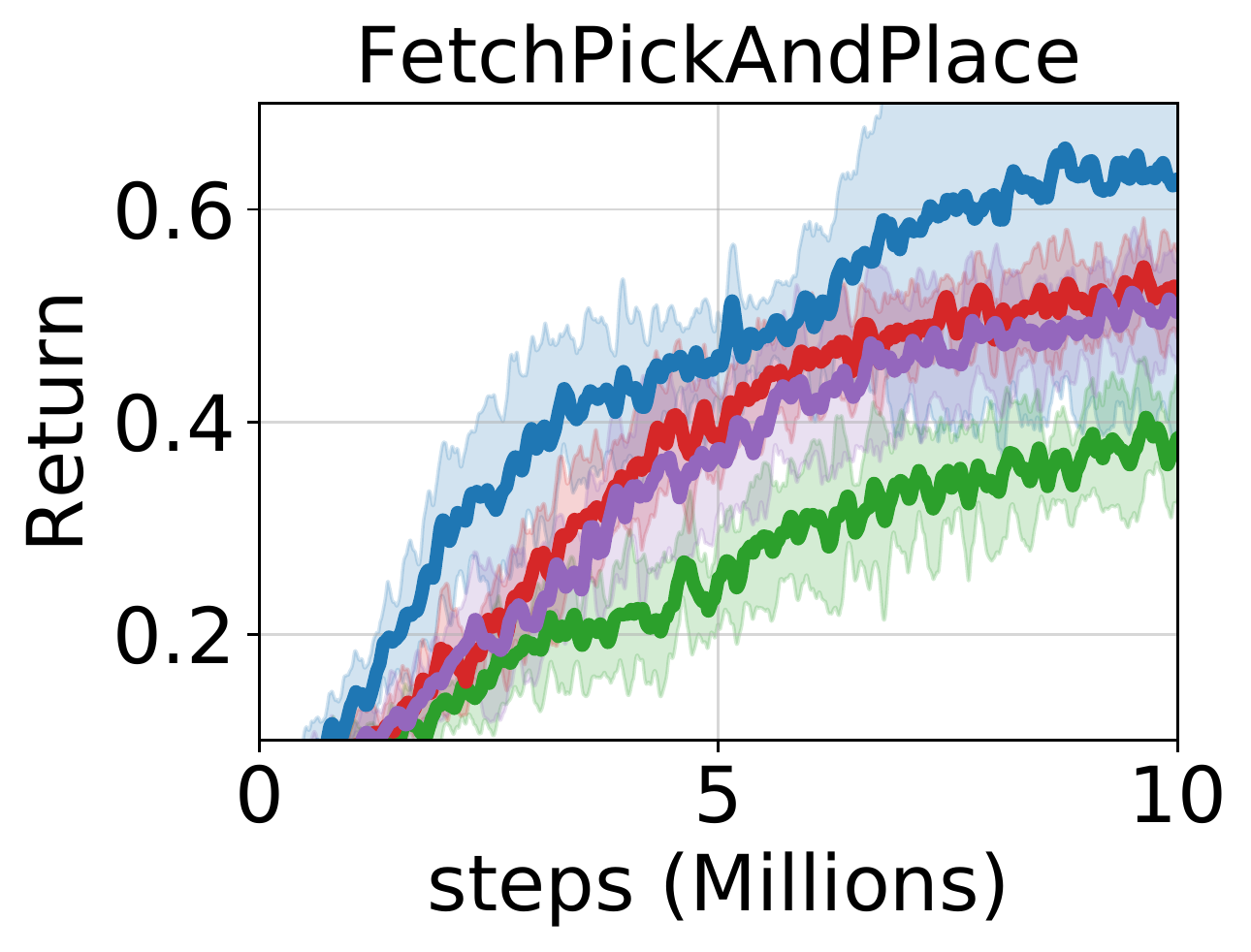}
    \caption{
        Progress of average episode reward on \fetch{} tasks. We report the mean (solid curve) and standard error (shadowed area) of the performance over four random seeds. 
    }
    \label{fig:fetch}
\vspace{-10pt}
\end{figure*}

\subsection{Settings}\label{sec:es}
\paragraph{Environments.}
We evaluate our \sprl{} on four challenging domains: \grid{}~\citep{gym_minigrid}, \dmlab{}~\citep{beattie2016deepmind}, \atari~\citep{bellemare2013arcade}, and \fetch~\citep{plappert2018multi}. 
\grid{} is a 2D grid world environment with challenging features such as pictorial observation,
random initialization of the agent and the goal, complex state and action space where coordinates, directions, and other object statuses (\eg, key-door) are considered.
We conducted experiments on four standard tasks: \fours{}, \fourl{}, \keys{}, and \keyl{}.
\dmlab{} is a 3D environment with a first-person view. Along with the nature of partially-observed MDP, at each episode,
the agent's initial and the goal location are reset randomly with a change of texture, maze structure, and colors.
We conducted experiments on three standard tasks: \dmgs{}, \dmgl{}\footnote{\dmgl{} task corresponds to the \textit{Sparse} task in~\citet{savinov2018episodic}, and our~\Cref{fig:dmlab} reproduces the result reported.} %
, and \dmom{}.
For \atari, among 52 games we chose two sparse-reward tasks (\montezuma, \freeway), one dense-reward task (\mspacman), and three non-navigational tasks (\gravitar, \seaquest, \hero) where the agent receives reward by hitting the enemy by firing a bullet or removing the obstacle by installing a bomb.
\fetch{} is a continuous control environment with a two-fingered gripper. Initial locations of the agent and the goal change every episode. We made two changes in the environment to make it a sparse-reward task.
The agent receives +1 reward if the agent reaches the goal and 0 rewards otherwise. Also, the episode terminates when the agent reaches the goal such that the agent can receive a non-zero reward at most once in an episode.
We conducted experiments on all four tasks: \fpush{}, \fslide{}, \freach{}, \fpnp{}. We refer the readers to \Cref{fig:dmlab_maze_layout} for examples of observations. See \Cref{appendix:minigrid-detail}, \Cref{appendix:dmlab-detail}, \Cref{appendix:atari-detail}, and  \Cref{appendix:fetch-detail} 
for more details of \grid{}, \dmlab{}, \atari, and \fetch{} respectively.

\cutparagraphdown

\cutparagraphup
\paragraph{Baselines.}
We compared our methods with four baselines: \ppo~\citep{schulman2017proximal}, \textit{episodic curiosity} (\eco)~\citep{savinov2018episodic}, \textit{intrinsic curiosity module} (\icm)~\citep{pathak2017curiosity}, and \oracle~\citep{savinov2018episodic}. 
The \ppo{} is used as a baseline RL algorithm for all other agents.
The \eco{} agent is rewarded when it visits a state that is not reachable from the states in episodic memory
within a certain number of actions; thus the novelty is only measured within an episode. Following~\citet{savinov2018episodic}, we trained RNet in an off-policy manner from the agent's experience and used it for our \sprl{} and \eco{} on  \grid{} ~(\Cref{sec:exp-grid}), \dmlab{}~(\Cref{sec:exp-dmlab}), \atari~(\Cref{sec:exp-atari}) and \fetch~(\Cref{sec:exp-fetch}). For the accuracy of the learned RNet on each task, please refer to \Cref{appendix:rnet}.
The \oracle{} agent has access to the agent's ($x$, $y$) coordinates. It uniformly divides the world into 2D grid cells, and the agent is rewarded for visiting a novel grid cell. 
The \icm{} agent learns a forward and inverse dynamics model and uses the prediction error of the forward model to measure the novelty.
We used the publicly available codebase~\citep{savinov2018episodic} %
to obtain the baseline results. 
We used the same hyperparameter for all the tasks for a given domain --- the details are described in the Appendix.
We used the standard domain and tasks for reproducibility.  
\subsection{Results on \grid{}}\label{sec:exp-grid}
\Cref{fig:minigrid} shows the performance of all the methods on the \grid{} domain.
\sprl{} consistently outperforms all the baseline methods over all tasks.
We observe that exploration-based methods (\ie, \eco{}, \icm{}, and \oracle{}) perform similarly to the \ppo{} in the tasks with small state space (\eg, \fours{} and \keys{}).
However, \sprl{} demonstrates a significant performance gain since it improves the exploitation by avoiding sub-optimality caused by taking a non-shortest-path.
\subsection{Results on \dmlab{}}\label{sec:exp-dmlab}
\Cref{fig:dmlab} shows the performance of all the methods on \dmlab{} tasks.
Overall, \sprl{} achieves superior results compared to other methods. 
By the task design, the difficulty of exploration increases in the order of \dmgs{}, \dmom{}, and \dmgl{} tasks, and we observe a coherent trend in the result. 
For harder exploration tasks, the exploration-based methods (\oracle{}, \icm{} and \eco{}) achieve a larger improvement over \ppo{}: \eg, 20\%, 50\%, and 100\% improvement in \dmgs{}, \dmom{}, and \dmgl{}, respectively. 
As shown in~\Cref{lem:ksp-reduce}, \sprl{} is expected to have larger improvement for larger trajectory space and sparser reward settings. %
We can verify this from the result: \sprl{} has the largest improvement in \dmgl{} task, where both the map is largest and the reward is most sparse. Interestingly, \sprl{} even outperforms \oracle{} which simulates the upper-bound performance of novelty-seeking exploration method. This is possible since \sprl{} improves the exploration by suppressing unnecessary explorations, which is different from novelty-seeking methods and also improves the exploitation by reducing the policy search space.
\begin{figure}{r}%
\vspace{0pt}
\centering
  \hspace{-15pt}
  \includegraphics[draft=false, height=0.28\linewidth, valign=c]{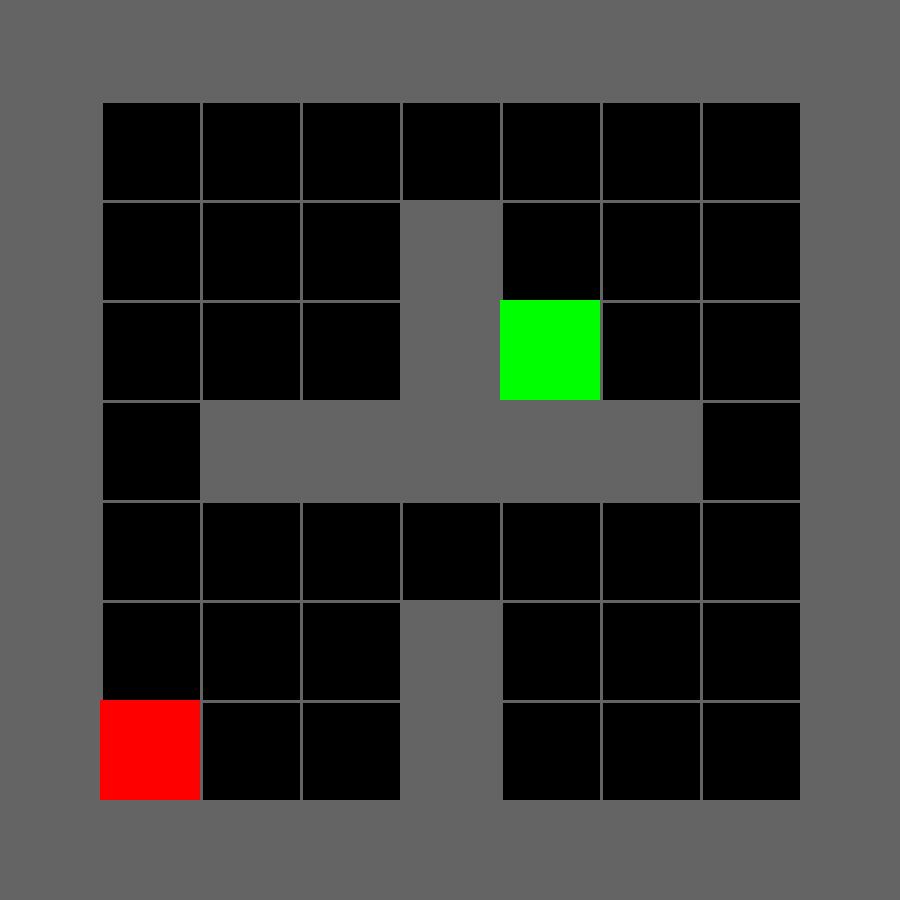}
  \hspace{-2pt}
  \includegraphics[draft=false, height=0.32\linewidth, valign=c]{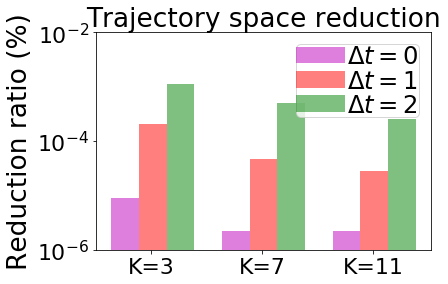}
  \hspace{-18pt}
  \vspace*{-8pt}
  \caption{
        (Left) 7$\times$7 Tabular four-rooms domain with initial agent location ({\color{red}red}) and the goal location ({\color{green}green}). (Right) The trajectory space reduction ratio (\%) before and after constraining the trajectory space for various $k$ and $\Delta t$ with \ksp{} constraint. Even a small $k$ can greatly reduce the trajectory space with a reasonable tolerance $\Delta t$.}
    \label{fig:policy-reduction}
    \vspace*{-17pt}
\end{figure}
\begin{figure*}[!htp]
    \centering
    \vspace*{0pt}
    \includegraphics[draft=false, width=0.16\linewidth, valign=t]
    {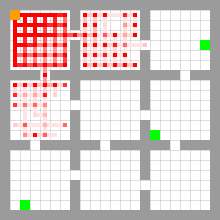}
    \hspace{2pt}
    \includegraphics[draft=false, width=0.16\linewidth, valign=t]
    {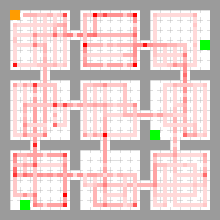}
    \hspace{2pt}
    \includegraphics[draft=false, width=0.16\linewidth, valign=t]
    {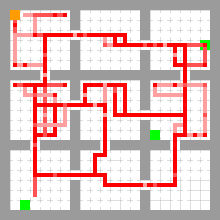}
    \hspace{2pt}
    \includegraphics[draft=false, width=0.16\linewidth, valign=t]
    {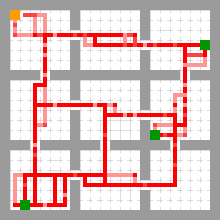}\\[2pt]
    {
    \parbox{0.16\linewidth}{\small\centering (a) \rand{} } \hspace{2pt}
    \parbox{0.16\linewidth}{\small\centering (b) \ucb{} } \hspace{2pt}
    \parbox{0.16\linewidth}{\small\centering (c) \sprl{} } \hspace{2pt}
    \parbox{0.16\linewidth}{\small\centering (d) \sprl{}+Reward }
    }
    \vspace*{-7pt}
    \caption{
        Transition count maps for baselines and \sprl{}:
        (a), (b), and (c) are in a \emph{reward-free} 
        while (d) is in a \emph{reward-aware} setting.
        In reward-free settings (a-c), we show rewarding states in {\color{green}{light green}} only for the visualization, but the agent does not receive rewards from the environment.
        The location of the agent's initial state ({\color{orange}{orange}}) and rewarding states ({\color
        {green!50!black}{dark green}}) are fixed. 
        The episode length is limited to 500 steps.
    }
    \label{fig:qualitative-result}
\vspace*{-10pt}
\end{figure*}

\subsection{Results on \atari}\label{sec:exp-atari}

\label{sec:challenges-atari}
One of the main challenges in \atari is the distribution shift in the state space within a task.
Unlike \grid{} and \dmlab{}, many \atari tasks involve the transition between different rooms in each of which the agent observes a significantly different set of states. 
This induces instability in the RNet training; RNet often overfits to the initial room and performs poorly when the agent navigates to the different rooms. 
To mitigate this problem, we added the weight decay for the RNet training. 
For other technical details, please refer to~\Cref{appendix:atari-rnet-detail}.

\Cref{fig:atari} summarizes the performance of all the methods on \atari tasks. 
\sprl{} outperforms all the baseline methods on five out of six tasks except for \mspacman, which is a dense reward task. We note that other exploration methods, \icm \, and \eco, also perform poorly on this task.
For the sparse reward task, especially in \montezuma, \sprl{} achieves the performance comparable to the SOTA exploration methods such as RND~\citep{burda2018exploration} (1000 score at 50M steps with 32 parallel environments. \sprl \ used 12 parallel environments.) and SOTA exploitation methods such as  SIL~\citep{oh2018self} (2500 score at 50M steps). Lastly, \sprl{} achieves the largest improvement to the \ppo{} in non-navigational tasks (\gravitar, \seaquest, \hero). This verifies that our \ksp{} constraint is not limited to just the geometric path but can be applied to any general trajectory, or a sequence of state transitions, in MDP.

\subsection{Results on \fetch{}}\label{sec:exp-fetch}

\label{sec:modification-fetch}

\Cref{fig:fetch} summarizes the performance of all the methods on \fetch tasks. 
\sprl{} outperforms all the baseline methods on every task except for \freach{}, in which all the compared methods perform similarly well. 
We also found that the performance improvement of the exploration methods such as \sprl{}, \eco{}, and \icm{} over \ppo{} is larger for the tasks with sparser reward; the reward is the densest in \freach{} while the most sparse in \fslide{}.
As suggested by our theory, this result shows that \sprl{} is not restricted to the domains with discrete action space but performs well on the continuous control domain. 
We present the additional results in~\Cref{appendix:rnet} showing that the reachability network, which is the key component of \sprl{}, can be efficiently trained and accurately compute the state proximity in the continuous control domain.

\subsection{Quantitative Analysis on $k$-SP Constraint}
\label{sec:analysis}
\cutsectiondown
In this section, we numerically evaluate the effect of our $k$-shortest path constraint in a tabular-RL setting. Specifically, we study the following questions:
(1) Does the \ksp{} constraint with larger $k$ results in more reduction in trajectory space? (\ie, validation of \Cref{lem:ksp-reduce})
(2) How much reduction in trajectory space does \ksp{} constraint provide with different $k$ and tolerance $\Delta t$?

We implemented a simple tabular 7x7 four-rooms domain where each state maps to a unique $(x, y)$ location of the agent. The agent can take \textit{up, down, left, right} primitive actions to move to the neighboring state, and the episode horizon is set to 14 steps. The goal of the agent is reaching the goal state, which gives +1 reward and terminates the episode. We computed the ground-truth distance between a pair of states to implement the $k$-shortest path constraint. We used the ground-truth distance function instead of the learned RNet to implement the exact SPRL agent.

\Cref{fig:policy-reduction} summarizes the reduction in the trajectory space size. We searched over all possible trajectories of length 14 using breadth-first-search (BFS). Then we counted the number of trajectories satisfying our \ksp{} constraint with varying parameters $k$ and tolerance $\Delta t$ and divided by the total number of trajectories (\ie, $4^{14}=268$M).
The result shows that our \ksp{} constraint drastically reduces the trajectory space even in a simple 2D grid domain; with a very small $k=3$ and no tolerance $\Delta t=2$, we get only $24/268$M size of the original search space. 
As we increase $k$, we can see more reduction in the trajectory space, which is consistent with~\Cref{lem:ksp-reduce}.
Also, increasing the tolerance $\Delta t$ slightly hurts the performance, but still achieves a large reduction (See~\Cref{appendix:ablation} for more analysis on the effect of $k$ and tolerance).

\vspace*{2pt}
\subsection{Qualitative Analysis on \grid{}} %
We qualitatively studied what type of policy is learned with the \ksp{} constraint with the ground-truth RNet in \textit{NineRooms} domain of \grid{}.
\Cref{fig:qualitative-result} (a-c) shows the converged behavior of \sprl{} ($k=15$), the ground-truth count-based exploration~\citep{lai1985asymptotically} agent (\ucb{}), and uniformly random policy (\rand{}) in a reward-free setting.
We counted all the state transitions ($s_t\rightarrow s_{t+1}$) of each agent's roll-out and averaged over 4 random seeds.
\rand{} cannot explore further than the initial few rooms. \ucb{} seeks novel states and visits all the states uniformly. \sprl{} learns to take the longest possible shortest path, which results in a ``straight'' path across the rooms.
Note that this only represents a partial behavior of \sprl{}, since our cost also considers the \emph{existence of non-zero reward} (see~\Cref{eq:cost}). Thus, in (d), we tested \sprl{} while providing only the \emph{existence} of non-zero reward (but not the reward magnitude). \sprl{} learns to take the shortest path between rewarding and initial states that is consistent with the shortest-path definition in~\Cref{def:k-shortest-path-constraint}.

\section{Conclusion}\label{sec:c}
\cutsectiondown

We presented the $k$-shortest-path constraint, which can improve the sample efficiency of any model-free RL method by preventing the agent from taking sub-optimal transitions. We empirically showed that \sprl{} outperforms vanilla RL and strong novelty-seeking exploration baselines on four challenging domains. 
We believe that our framework develops a unique direction for improving the sample efficiency in reinforcement learning; hence, combining our work with other techniques for better sample efficiency will be an  interesting future work that could benefit many practical tasks.

\paragraph{Acknowledgements}
This research is supported in part by NSF IIS \#1453651 and Korea Foundation for Advanced Studies.

\bibliography{icml2021}
\bibliographystyle{icml2021}
\newpage
\onecolumn
\appendix
\begin{center}
{\Large \textbf{ Appendix: %
Shortest-Path Constrained Reinforcement Learning for Sparse Reward Tasks
\\[10pt]
}}
\end{center}
\cutsectionup
\section{More ablation study}\label{appendix:ablation}
\cutsectiondown
\begin{figure}[!h]
    \centering
    \hspace{-10pt}
    \includegraphics[draft=false, width=0.33\linewidth, valign=t]
    {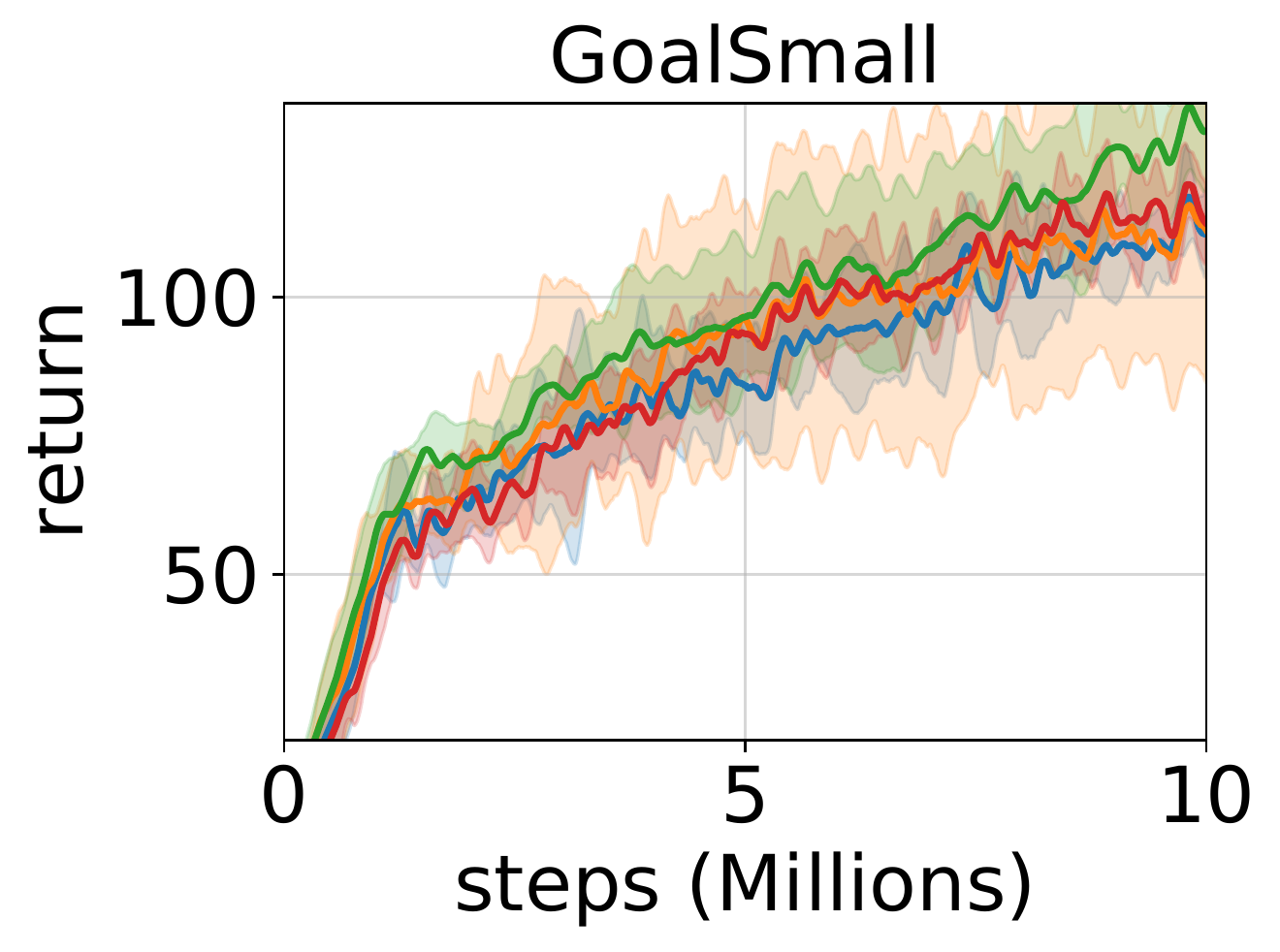}
    \hspace{-8pt}
    \includegraphics[draft=false, width=0.33\linewidth, valign=t]
    {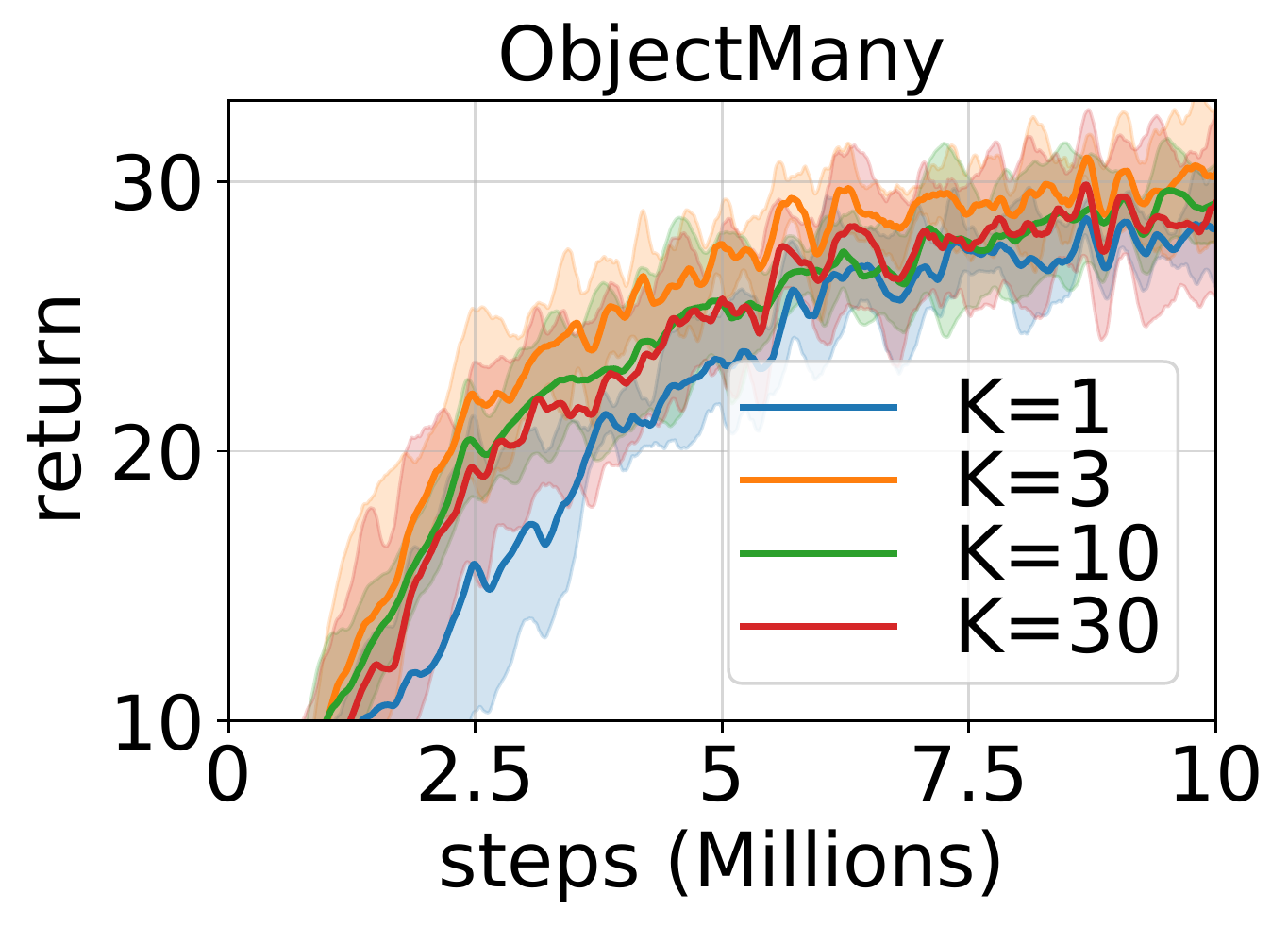}
    \hspace{-8pt}
    \includegraphics[draft=false, width=0.33\linewidth, valign=t]
    {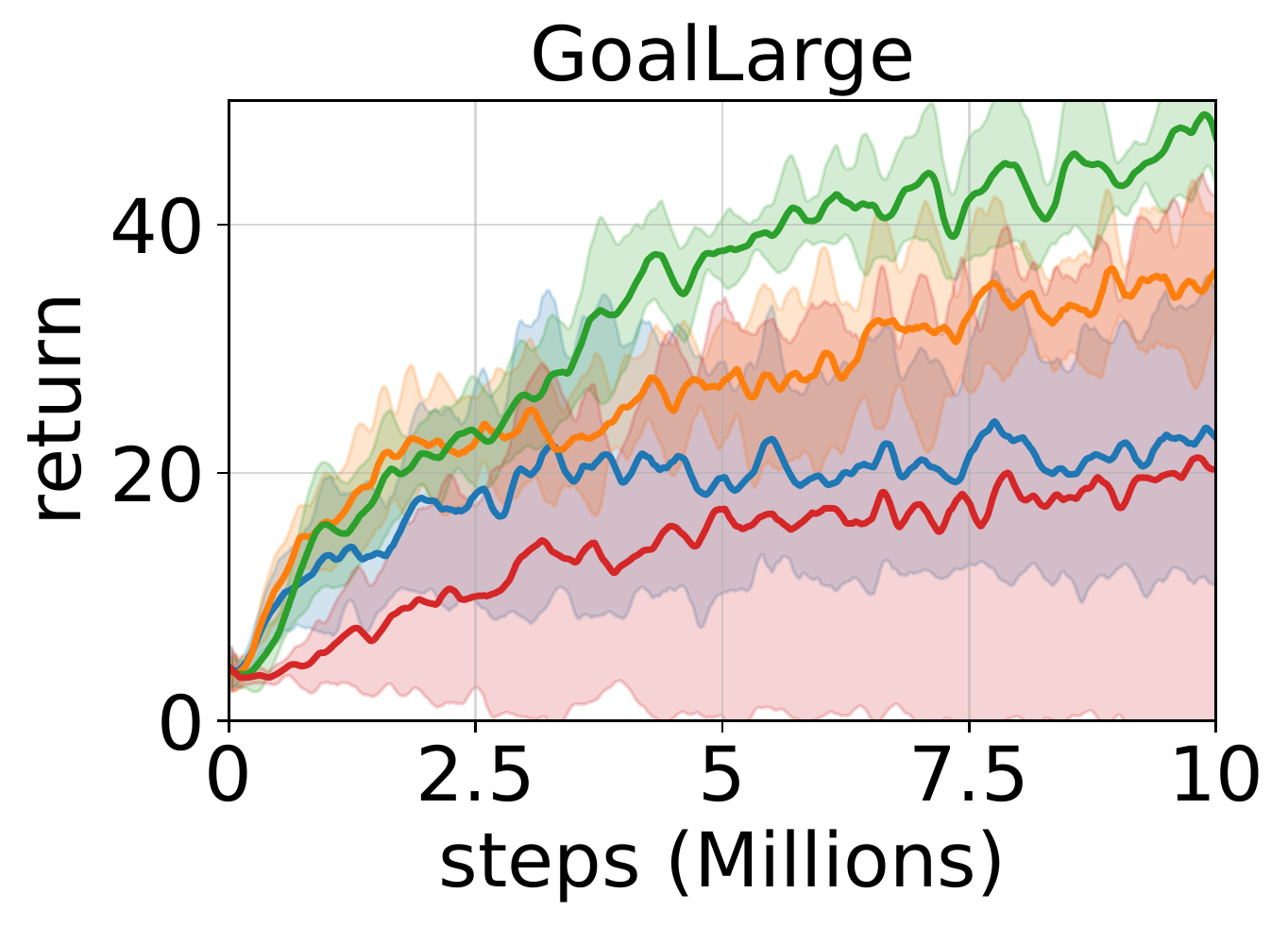}
    \hspace{-10pt}
    \caption{
        Average episode reward of \sprl{} with varying $k=$1, 3, 10, 30 as a function of environment steps for \dmlab{} tasks. Other hyper-parameters are kept same as the best hyper-parameter. The best performance is obtained with $k=10$.
    }
    \label{fig:k}
\end{figure}

\begin{figure}[!h]
    \centering
    \vspace*{-10pt}
    \hspace*{-5pt}
    \includegraphics[draft=false, height=6.7\baselineskip, valign=t]{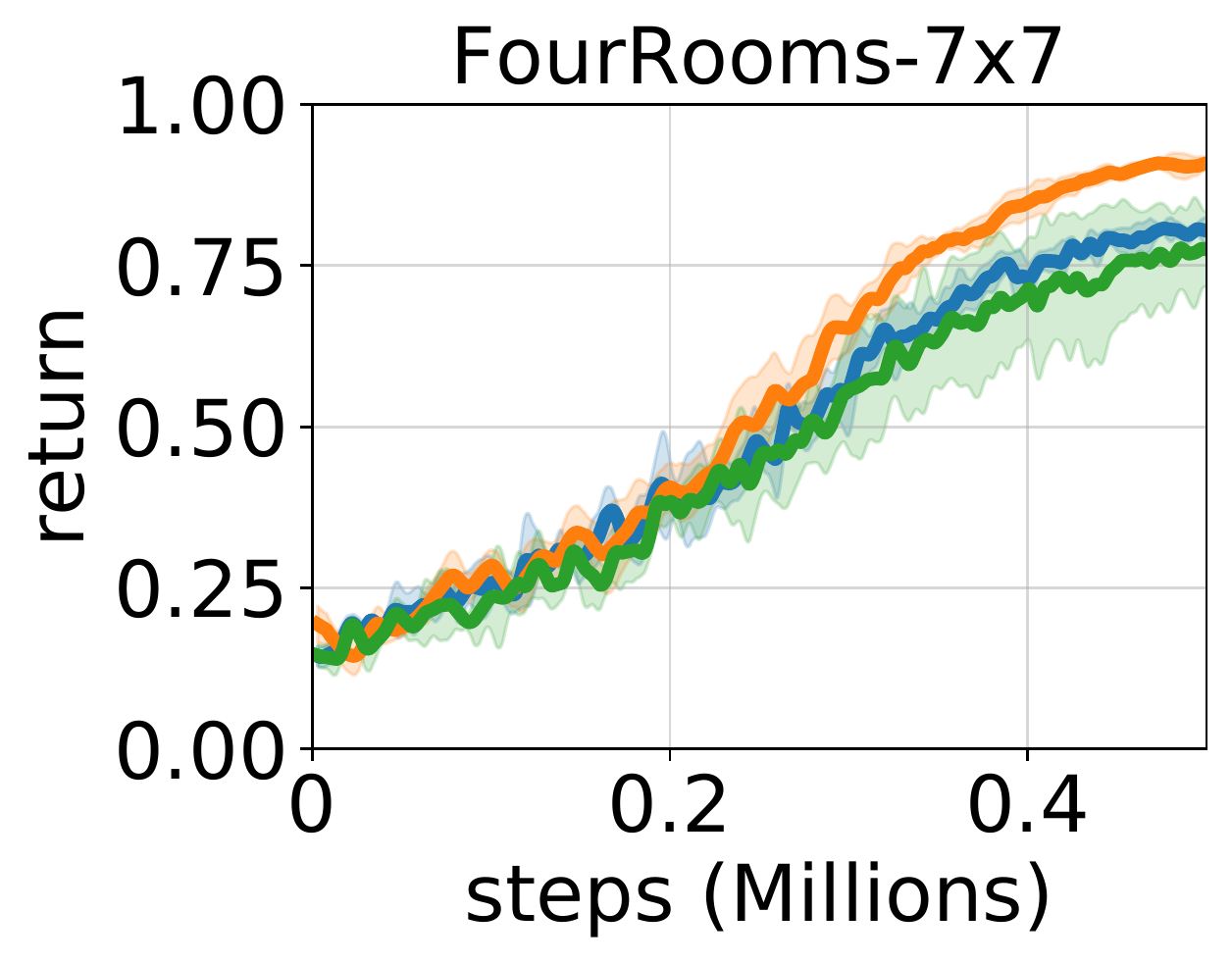}
    \hspace{-6pt}
    \includegraphics[draft=false, height=6.7\baselineskip, valign=t]{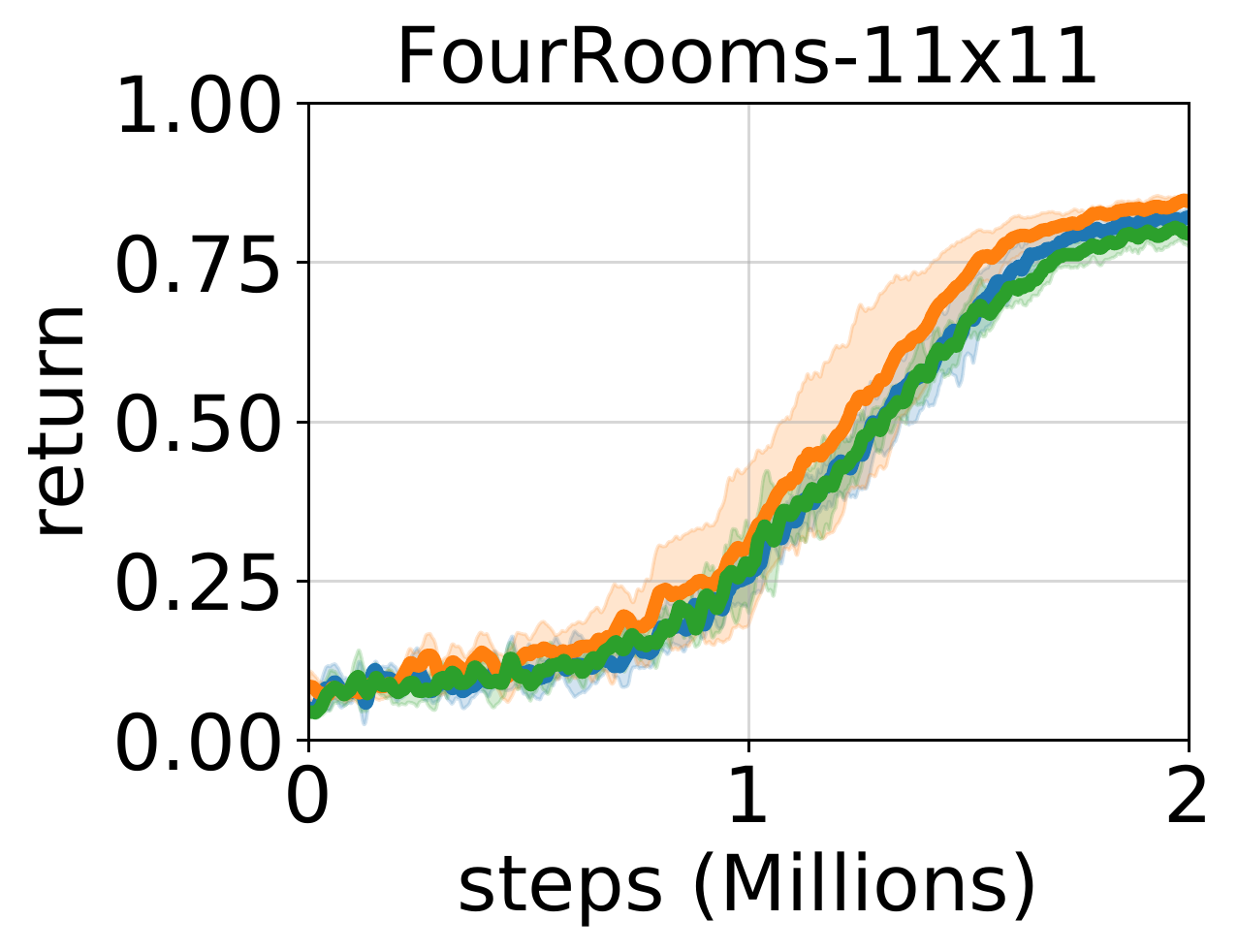}
    \hspace{-6pt}
    \includegraphics[draft=false, height=6.7\baselineskip, valign=t]{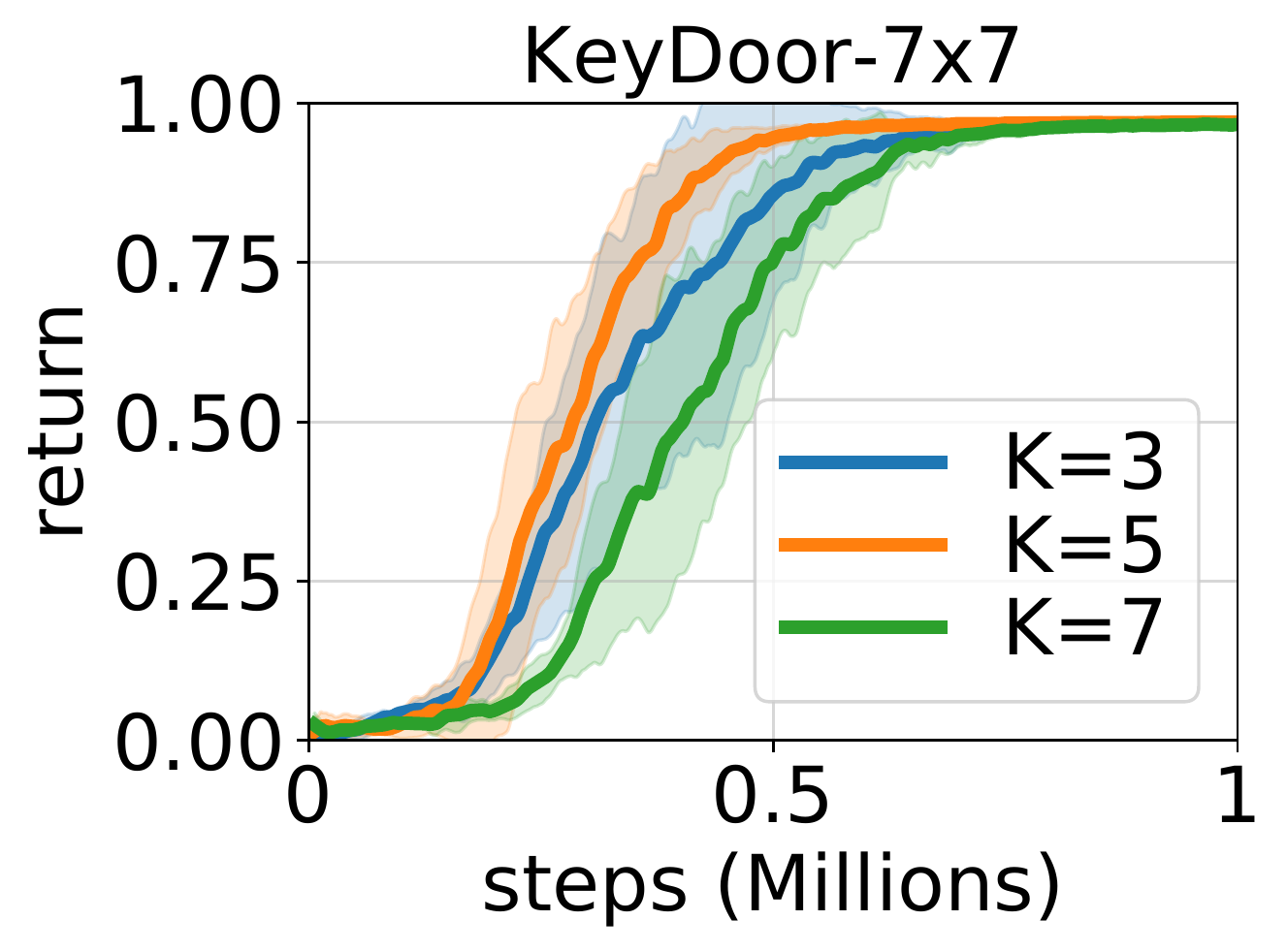}
    \hspace{-6pt}
    \includegraphics[draft=false, height=6.7\baselineskip, valign=t]{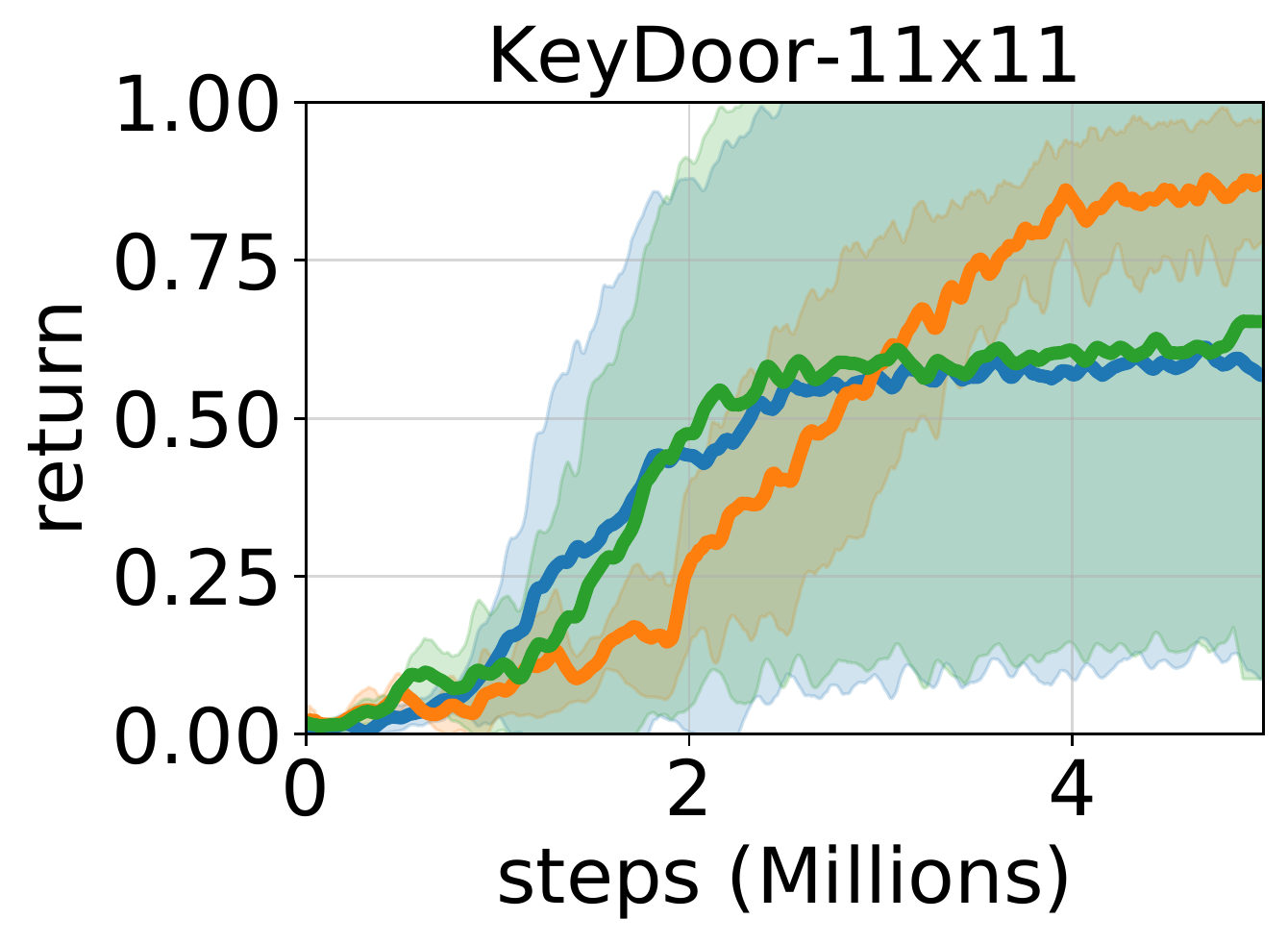}
    \hspace{-5pt}
    \vspace{-5pt}
    \caption{
        Average episode reward of \sprl{} with varying $k=$3, 5, 7 as a function of environment steps for \grid{} tasks. Other hyper-parameters are kept same as the best hyper-parameter. The best performance is obtained with $k=5$.
    }
    \label{fig:minigrid_k}
\end{figure}
\cutparagraphup
\subsection{Effect of $k$}
As proven in \tb{Lemma}~\ref{lem:ksp-reduce} and shown in \Cref{sec:analysis}, the larger $k$, the $k$-shortest constraint promises a larger reduction in policy space, which results in faster learning. However, with our practical implementation of \sprl{} with a learned (imperfect) reachability network, overly large $k$ has a drawback. 
Intuitively speaking, it is harder for policy to satisfy the $k$-shortest constraint, and the supervision signal given by our cost function becomes sparser (\ie, almost always penalized). Figure~\ref{fig:k} and \ref{fig:minigrid_k} shows the performance of \sprl{} on \dmlab{} and \grid{} domains with varying $k$. In both domains, we can see that there exists a ``sweet spot'' that balances between the reduction in policy space and sparsity of the supervision (\eg, $k=10$ for \dmlab{} and $k=5$ for \grid{}). In practice, we performed grid-search over the hyper-parameter $k$.

Training multiple reachability networks with different $k$'s may achieve the advantages of both low and high $k$. But practically, training multiple reachability networks requires high computational cost and more extensive hyperparameter search, which can be intractable. Thus, we instead tried the curriculum learning of $k$; \ie, starting from low $k$ and gradually increasing it up to the target $k$. However, we found that curriculum learning makes the training of the reachability network unstable when $k$ changes which results in lower performance.
\cutsectiondown
\begin{figure}[!h]
    \centering
    \hspace{-10pt}
    \includegraphics[draft=false, width=0.33\linewidth, valign=t]
    {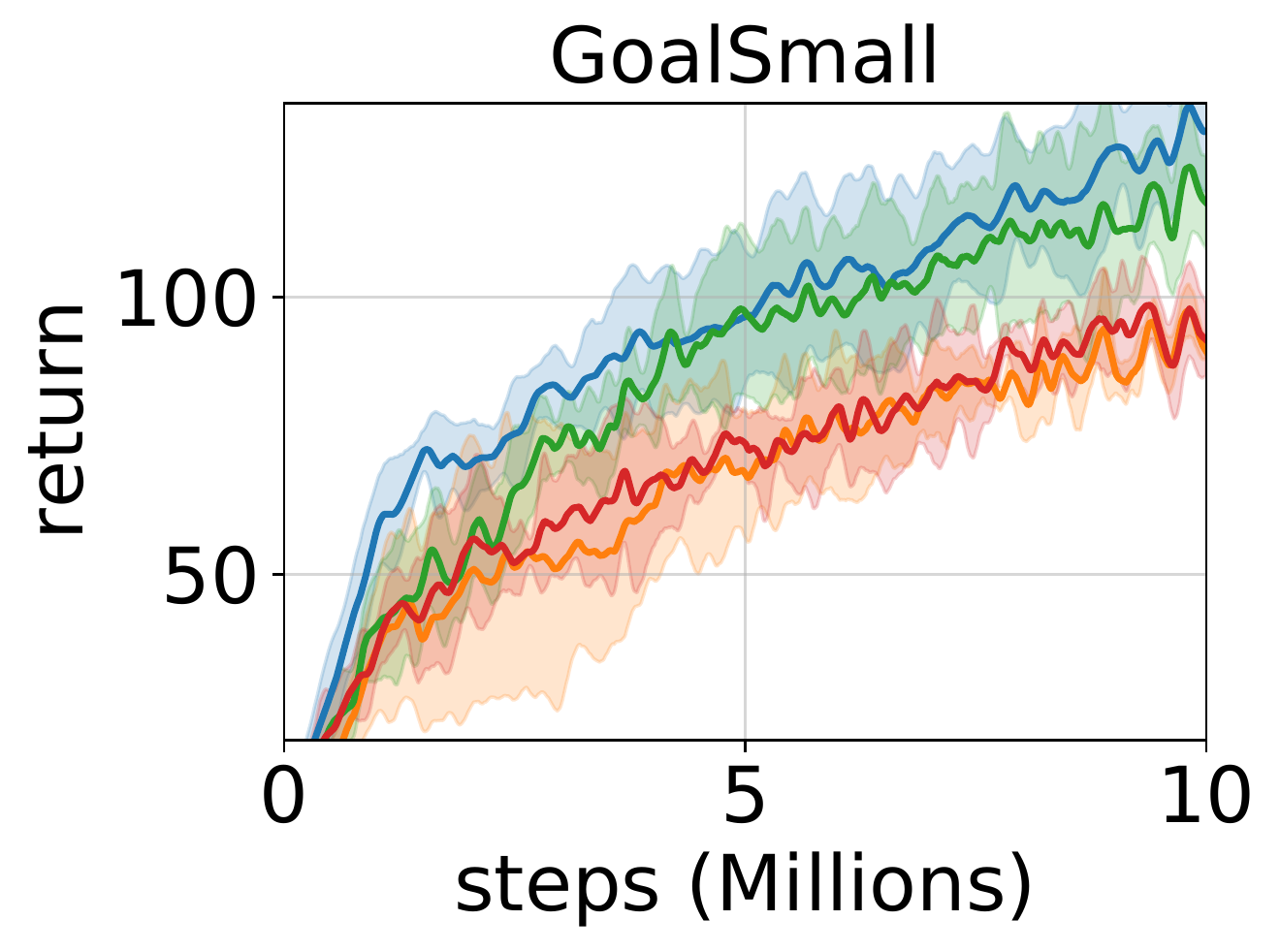}
    \hspace{-8pt}
    \includegraphics[draft=false, width=0.33\linewidth, valign=t]
    {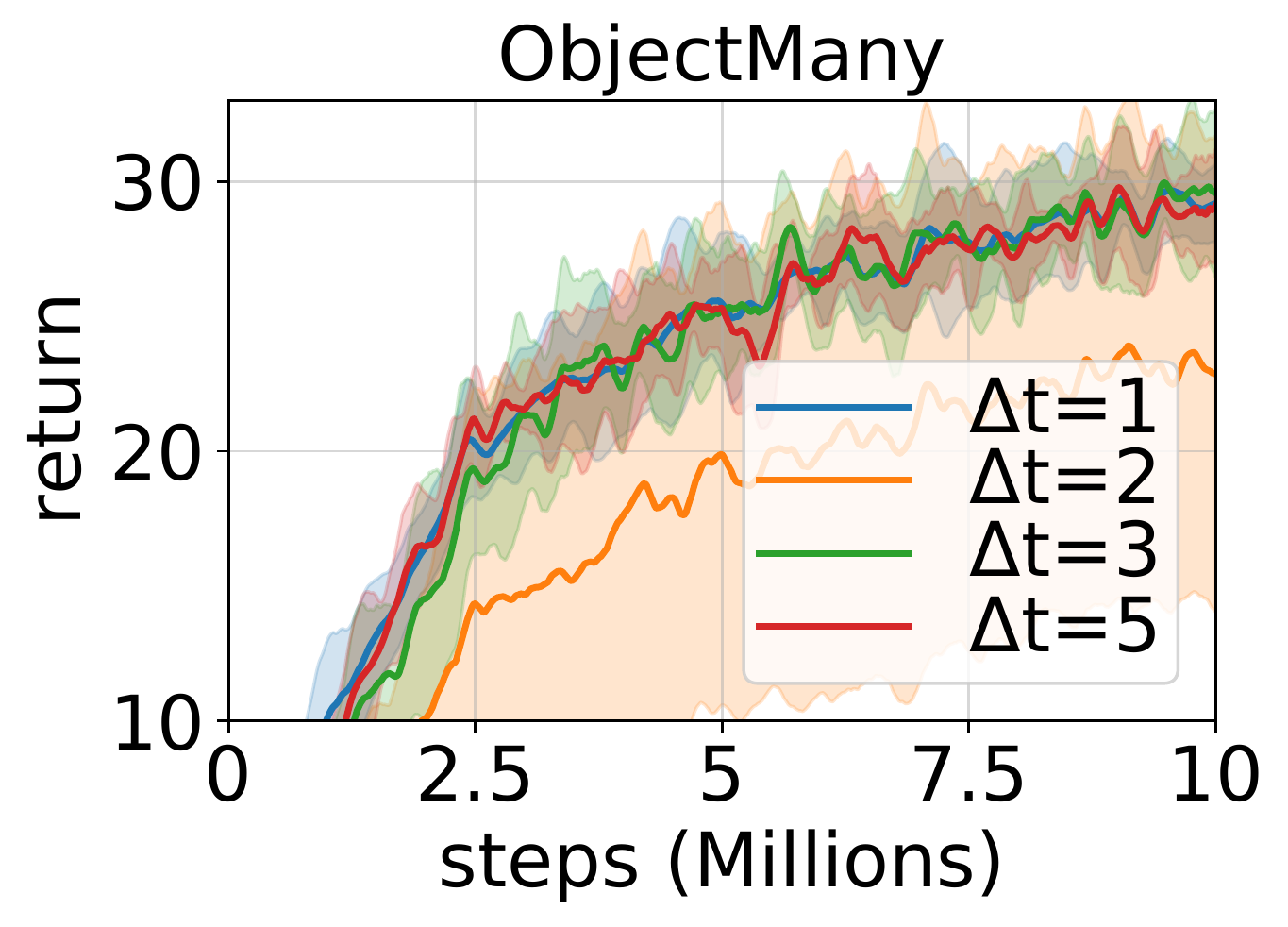}
    \hspace{-8pt}
    \includegraphics[draft=false, width=0.33\linewidth, valign=t]
    {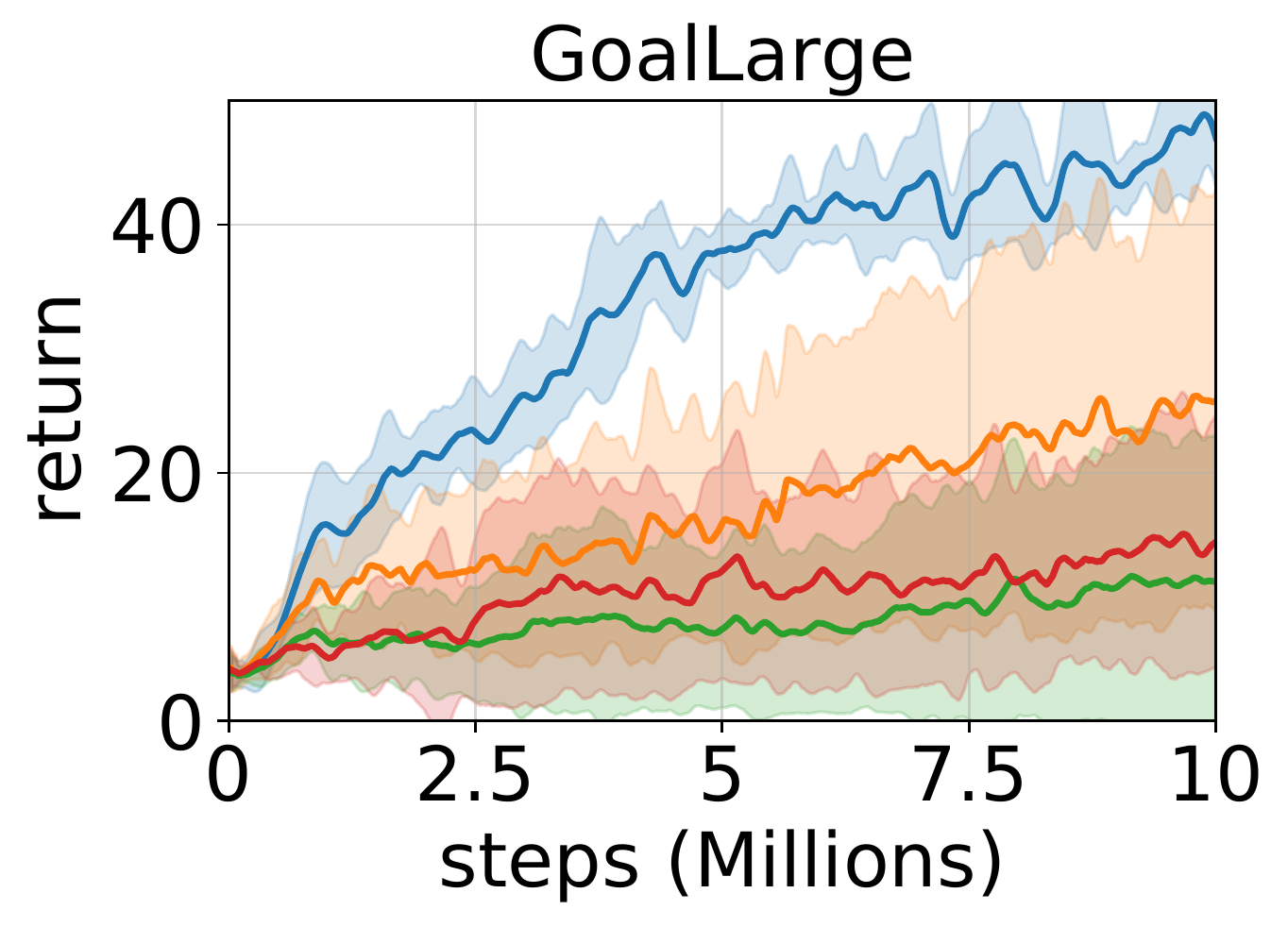}
    \hspace{-10pt}
    \caption{
        Average episode reward of \sprl{} with varying $\Delta t=$1, 2, 3, 5 as a function of environment steps for \dmlab{} tasks. Other hyper-parameters are kept same as the best hyper-parameter. The best performance is obtained with $\Delta t=1$.
    }
    \label{fig:tol}
\end{figure}
\begin{figure}[!h]
    \centering
    \vspace*{0pt}
    \hspace*{-5pt}
    \includegraphics[draft=false, height=6.7\baselineskip, valign=t]{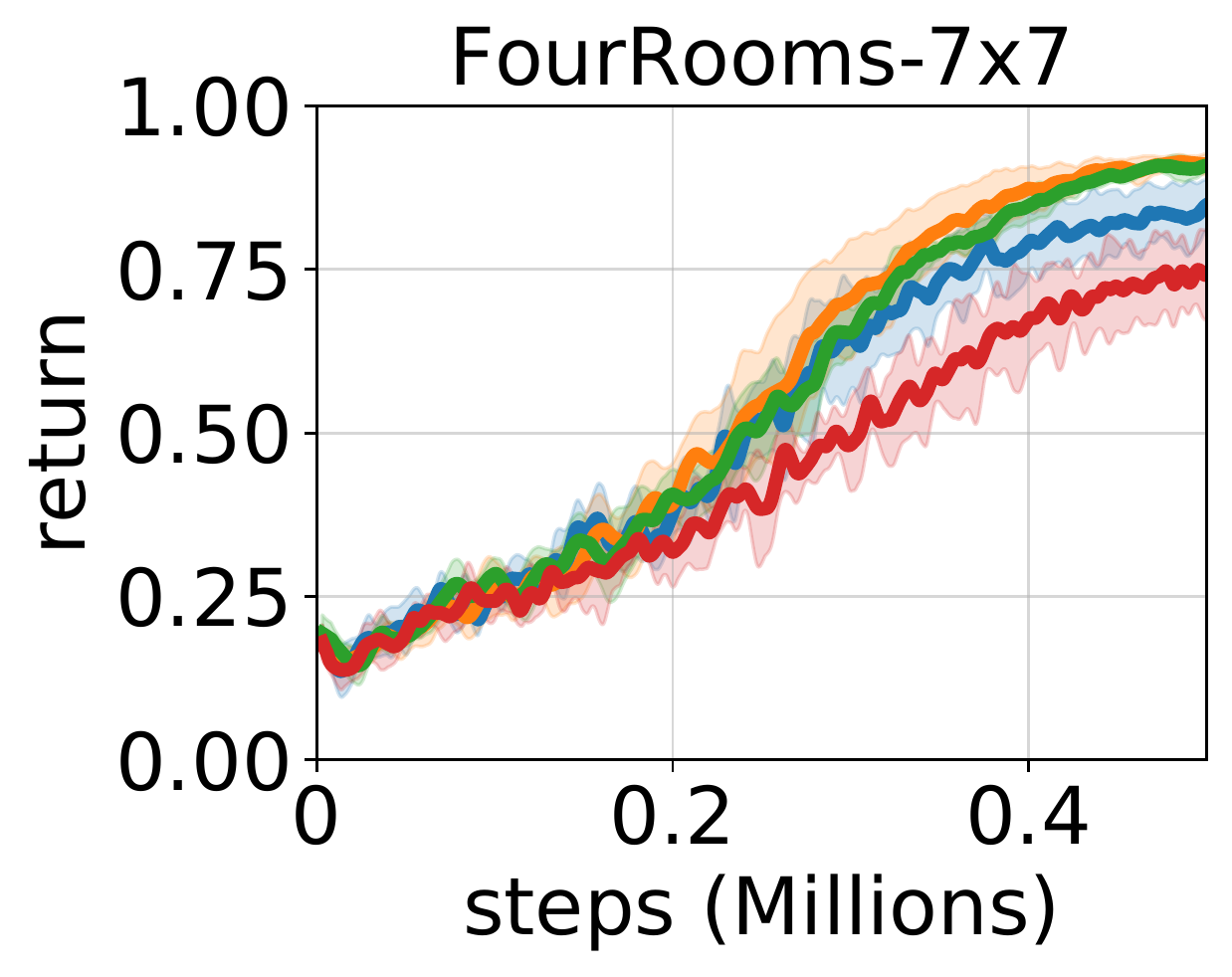}
    \hspace{-6pt}
    \includegraphics[draft=false, height=6.7\baselineskip, valign=t]{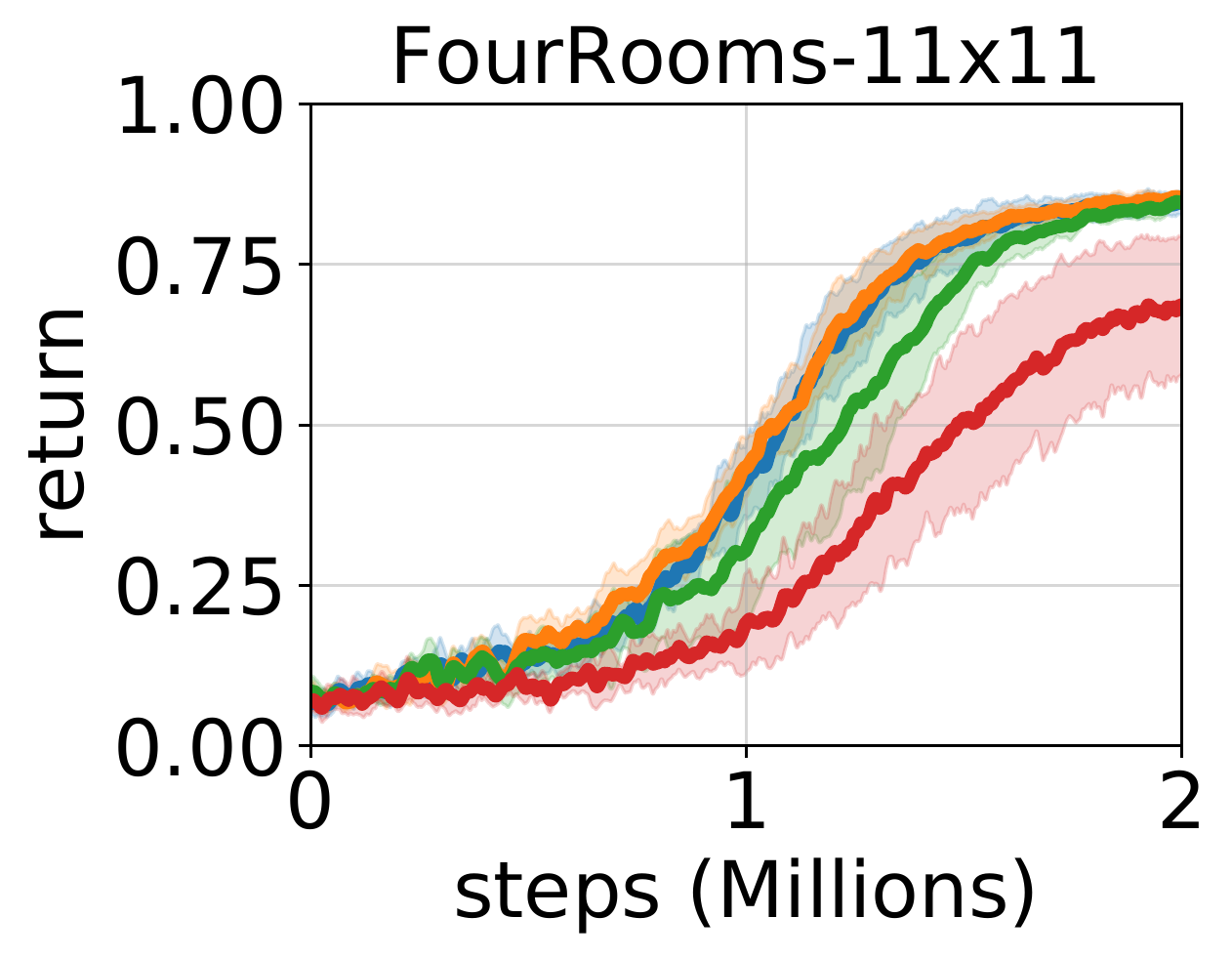}
    \hspace{-6pt}
    \includegraphics[draft=false, height=6.7\baselineskip, valign=t]{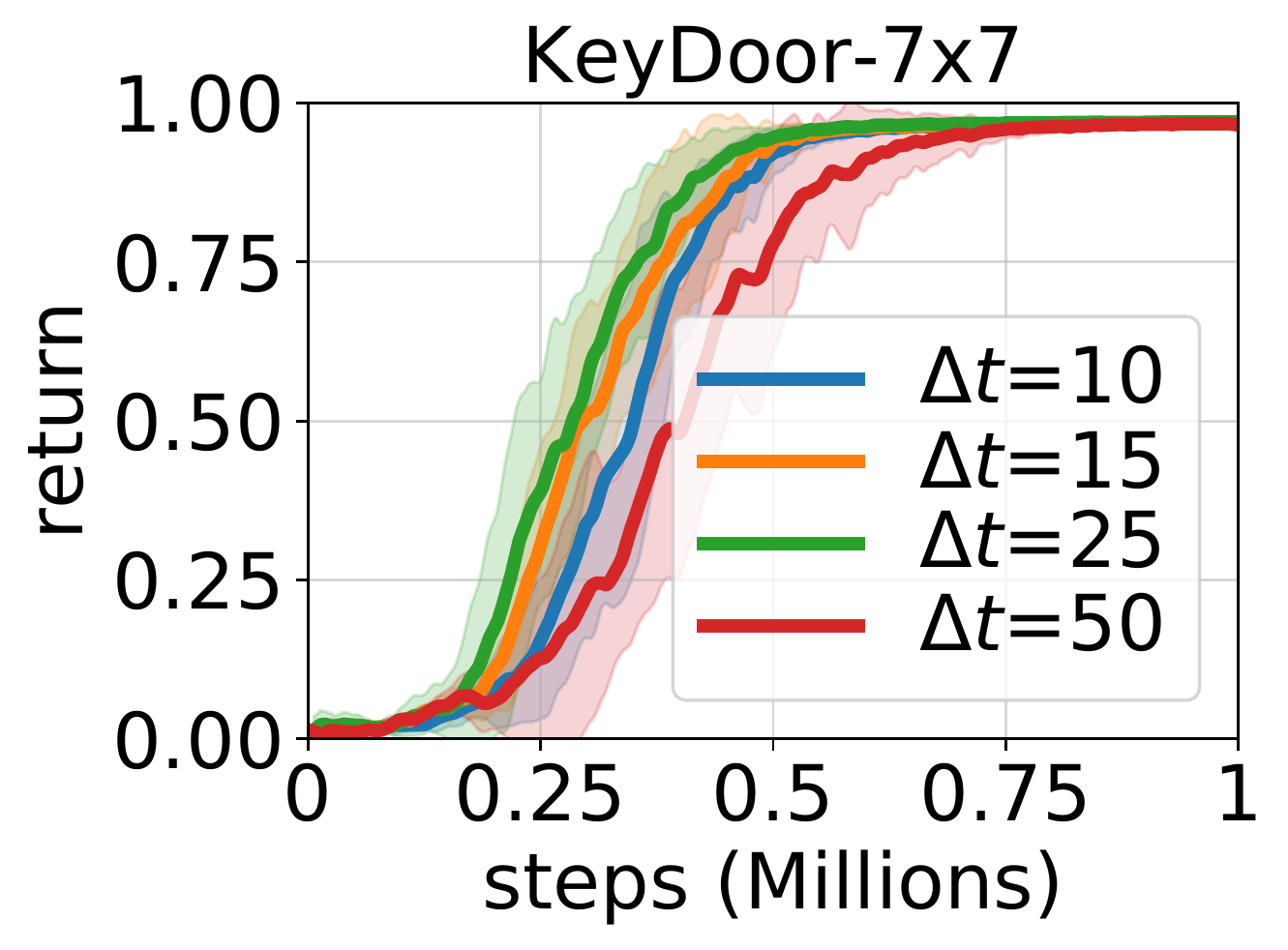}
    \hspace{-6pt}
    \includegraphics[draft=false, height=6.7\baselineskip, valign=t]{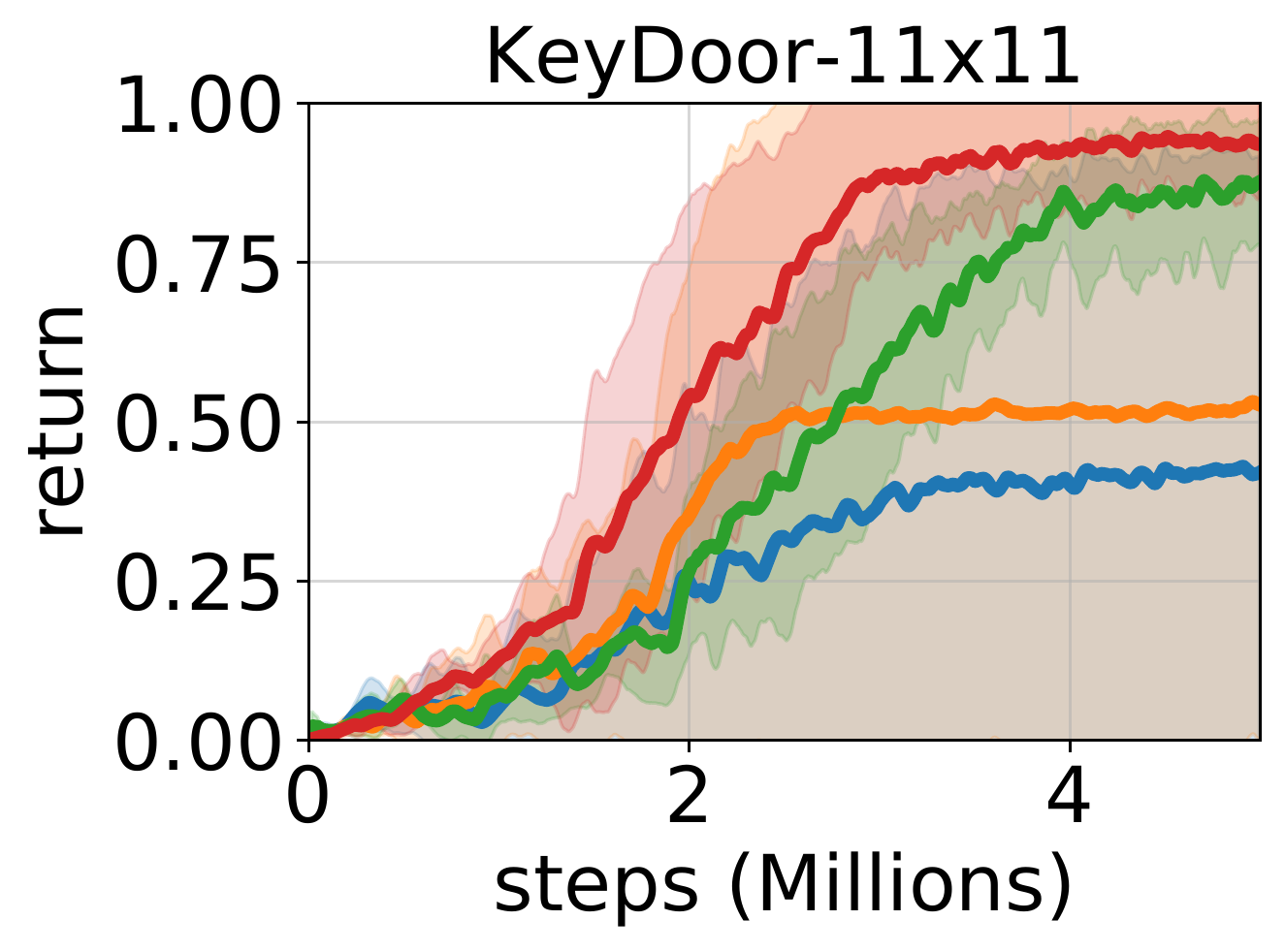}
    \hspace{-5pt}
    \vspace{-5pt}
    \caption{
        Average episode reward of \sprl{} with varying $\Delta t=$10, 15, 25, 50 as a function of environment steps for \grid{} tasks. Other hyper-parameters are kept same as the best hyper-parameter. The best performance is obtained with $\Delta t=25$.
    }
    \label{fig:minigrid_tol}
\end{figure}
\cutparagraphup
\subsection{Effect of tolerance $\Delta t$}
Adding the tolerance $\Delta t$ to our \ksp{} constraint makes it ``softer'' by allowing $\Delta t$-steps of redundancy in transition (See~\Cref{eq:cost-tol}). Intuitively, a small tolerance may improve the stability of RNet by incorporating a possible noise in RNet prediction, but a very large tolerance will make it less effective in removing sub-optimality in transition.
Figure~\ref{fig:tol} and \ref{fig:minigrid_tol} show the performance of \sprl{} on \dmlab{} and \grid{} domains with varying tolerance $\Delta t$. Similar to $k$, we can see that there exists a ``sweet spot'' that balances between the reduction in policy space and stabilization of noisy RNet output (\eg, $\Delta t=25$) in \grid{}.
Note that the best tolerance values for \dmlab{} and \grid{} are vastly different. This is mainly because we used \textit{multiple tolerance sampling} (See~\Cref{appendix:dmlab-detail}) for \dmlab{} but not for \grid{}. Since the multiple tolerance sampling also improves the stability of RNet, larger tolerance has less benefit compared to its disadvantage.

\begin{figure}[!h]
    \centering
    \hspace{-10pt}
    \includegraphics[draft=false, width=0.27\linewidth, valign=t]
    {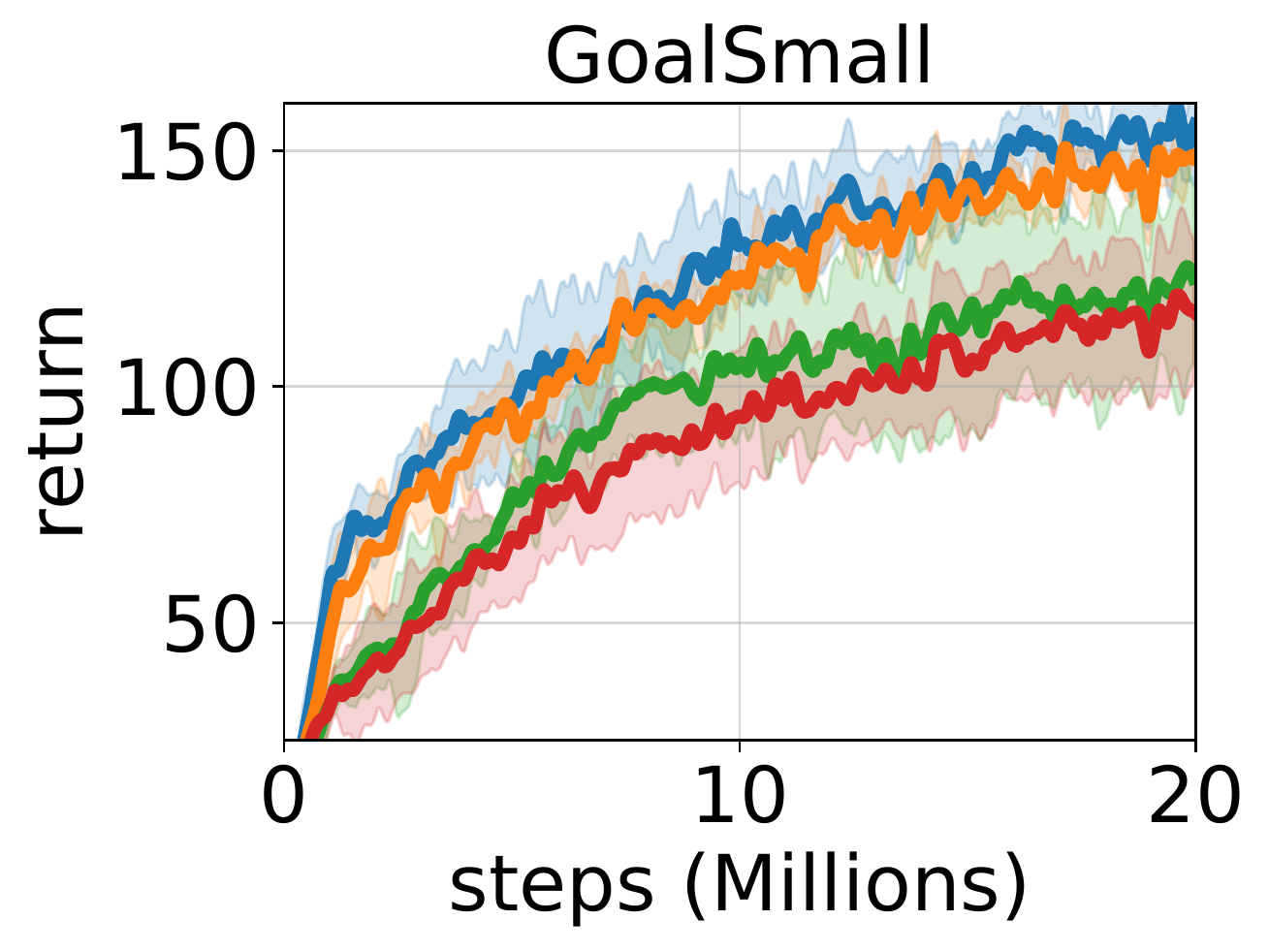}
    \hspace{-8pt}
    \includegraphics[draft=false, width=0.27\linewidth, valign=t]
    {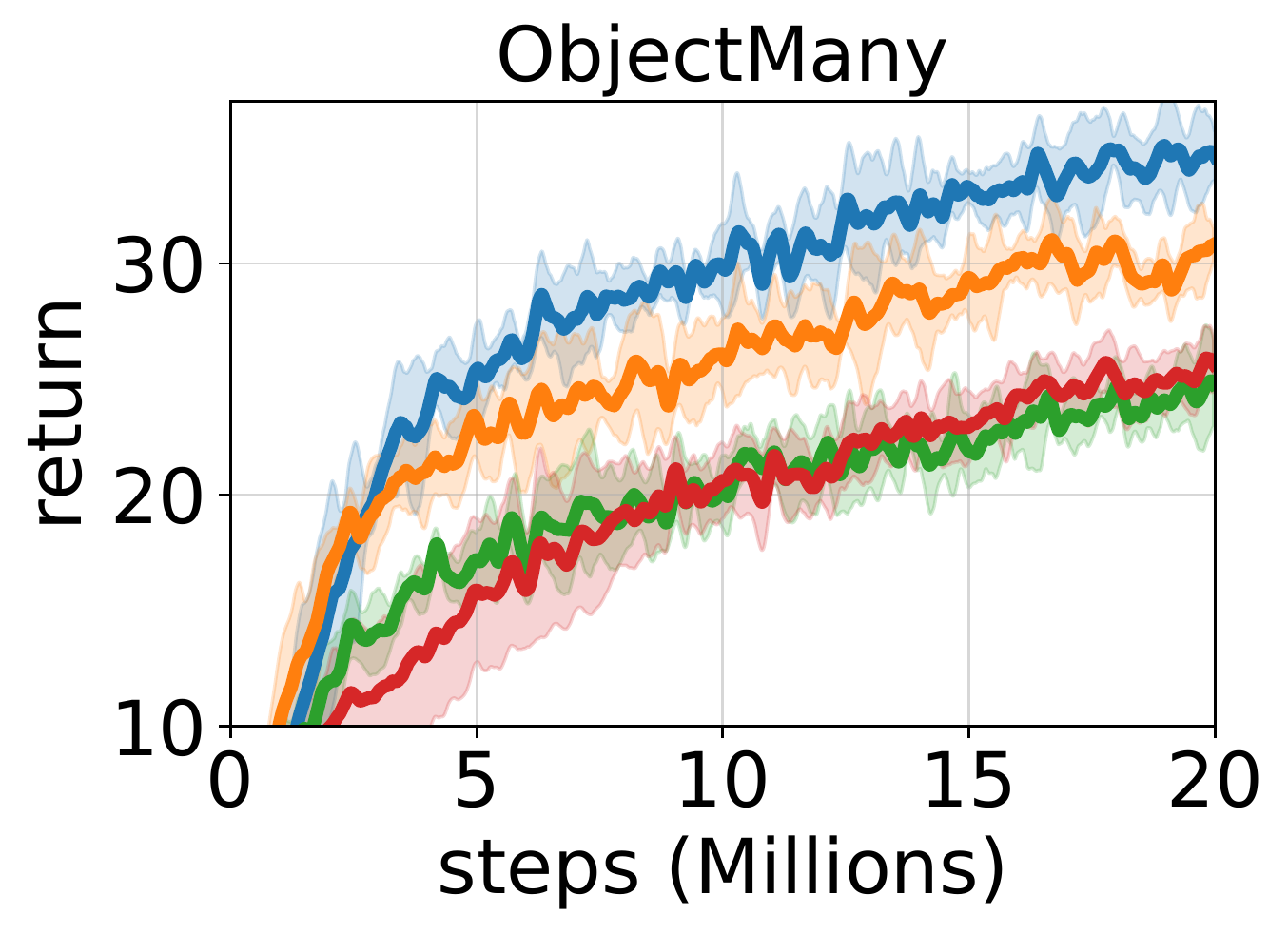}
    \hspace{-8pt}
    \includegraphics[draft=false, width=0.27\linewidth, valign=t]
    {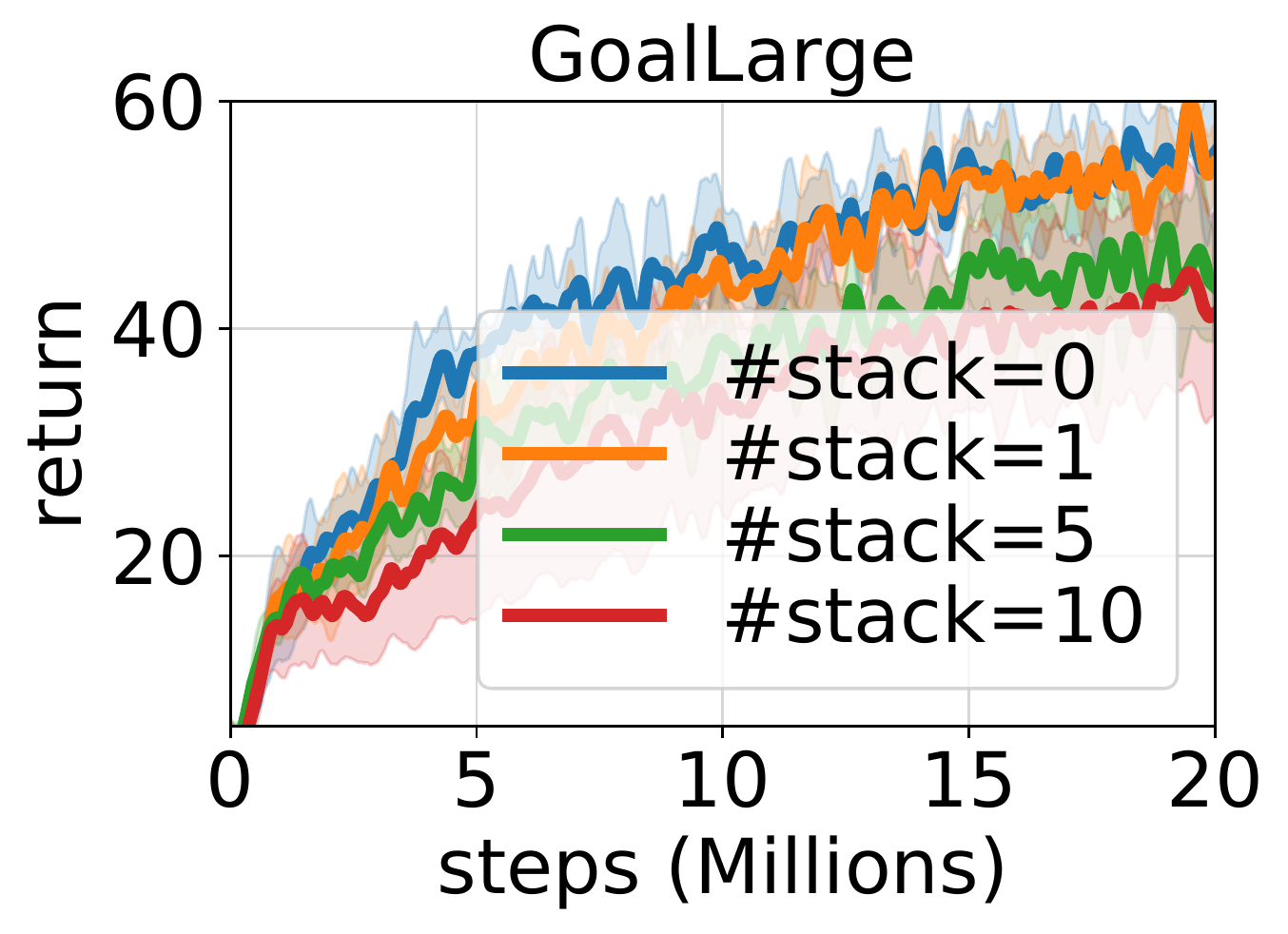}
    \hspace{-10pt}
    \caption{
        Average episode reward of \sprl{} with varying observation stacking dimension of 0, 1, 5, 10 as a function of environment steps for \dmlab{} tasks. Other hyper-parameters are kept same as the best hyper-parameter. The best performance is obtained without stacking (\ie, \#stack=0).
    }
    \label{fig:stack}
\end{figure}

\begin{figure}[!h]
    \centering
    \vspace*{-10pt}
    \hspace*{-5pt}
    \includegraphics[draft=false, height=6.7\baselineskip, valign=t]{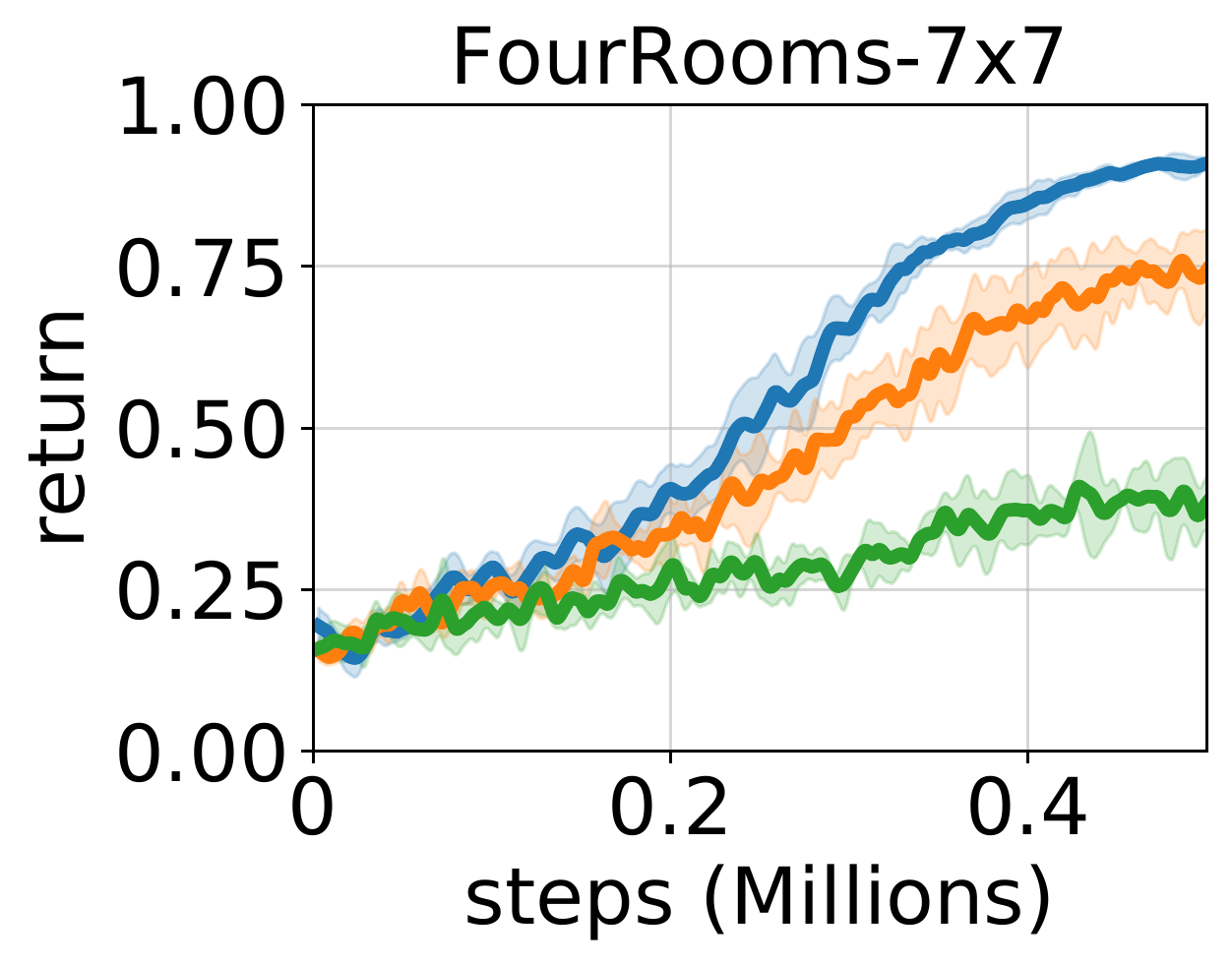}
    \hspace{-6pt}
    \includegraphics[draft=false, height=6.7\baselineskip, valign=t]{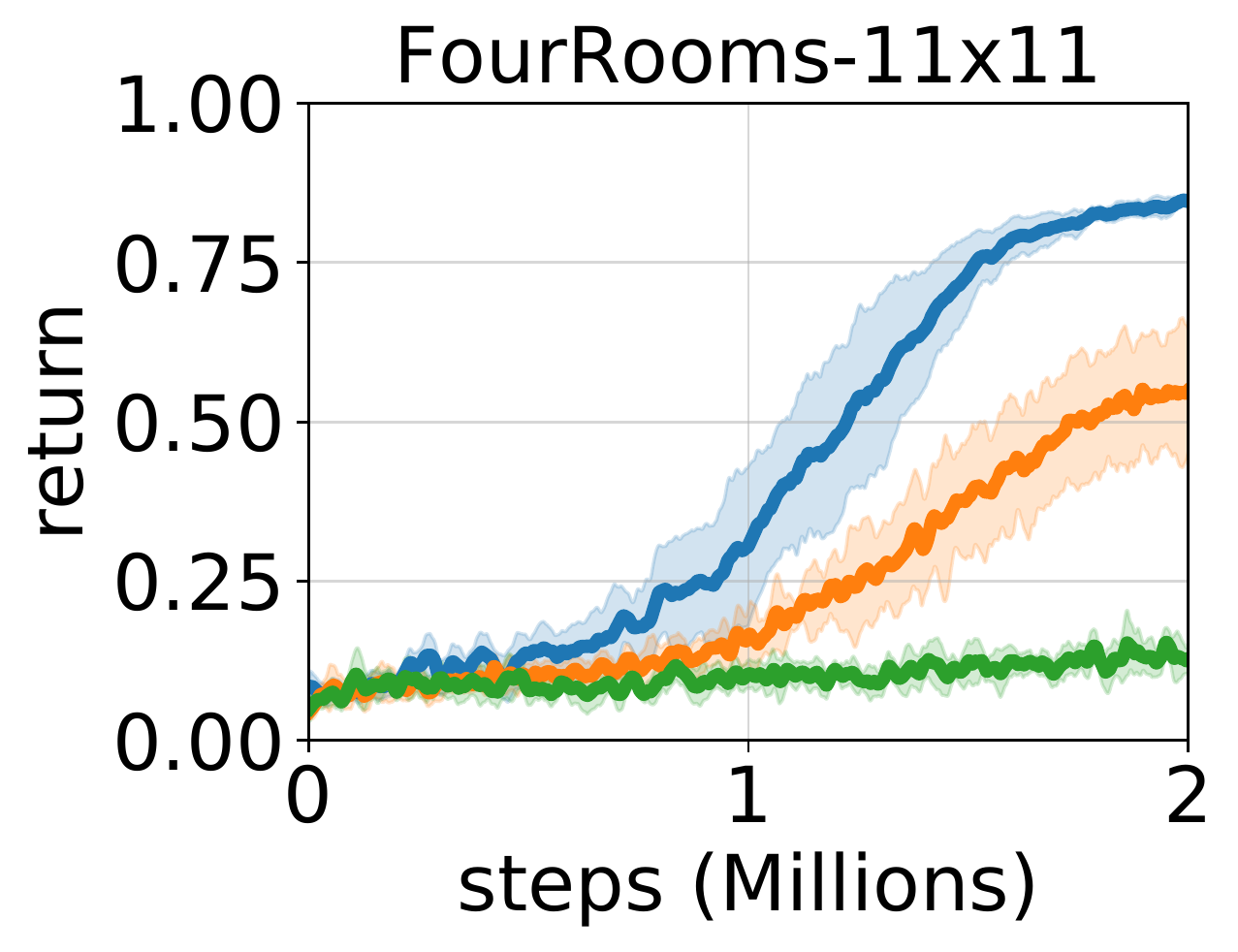}
    \hspace{-6pt}
    \includegraphics[draft=false, height=6.7\baselineskip, valign=t]{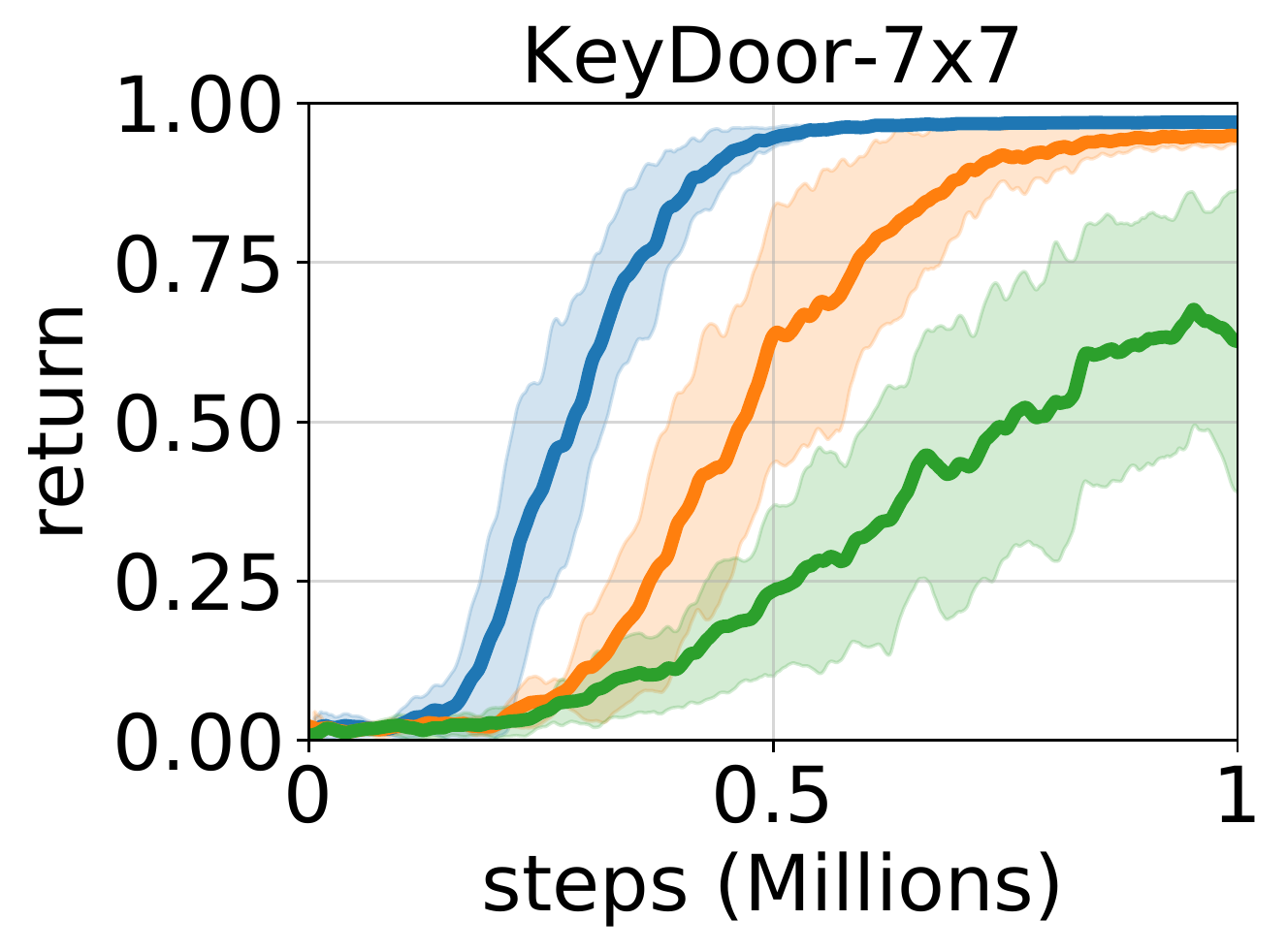}
    \hspace{-6pt}
    \includegraphics[draft=false, height=6.7\baselineskip, valign=t]{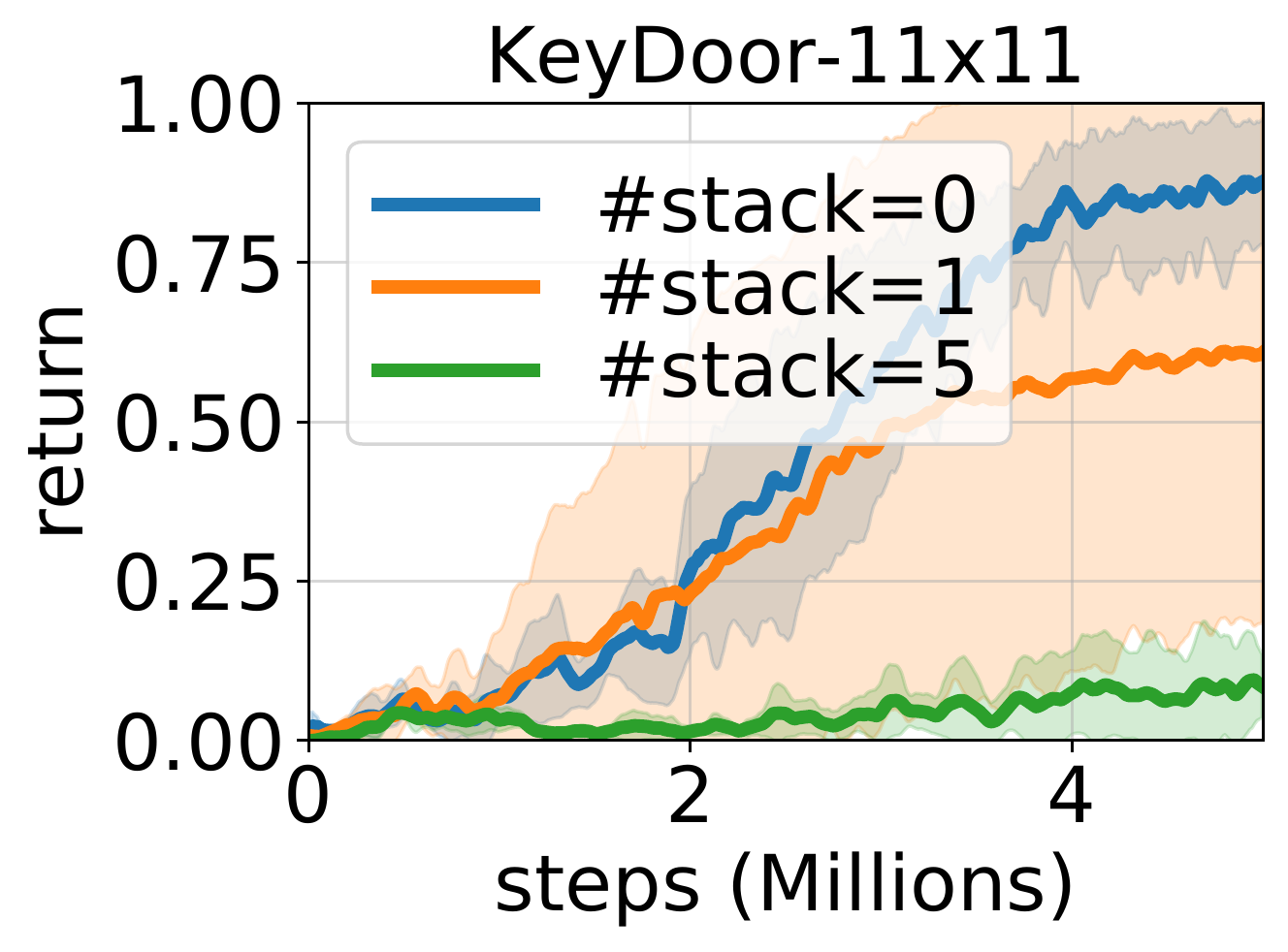}
    \hspace{-5pt}
    \vspace{-5pt}
    \caption{
        Average episode reward of \sprl{} with varying observation stacking dimension of 0, 1, 5 as a function of environment steps for \grid{} tasks. Other hyper-parameters are kept same as the best hyper-parameter. The best performance is obtained without stacking (\ie, \#stack=0)
    }
    \label{fig:minigrid_stack}
\end{figure}

\cutparagraphup
\subsection{Stacking observation}\label{appendix:stack}
The CMDP with \ksp{} constraint becomes the $(k+1)$-th order MDP as shown in~\Cref{eq:cost}. Thus, in theory, the policy should take current state $s_t$ augmented by stacking the $k$ previous states as input: $[s_{t-k}, s_{t-k+1}\ldots, s_t]$, where $[\cdot]$ is a stacking of the pixel observation along the channel (\ie, color) dimension. However, stacking the observation may not lead to the best empirical results in practice.
Figure~\ref{fig:stack} and~\ref{fig:minigrid_stack} show the performance of \sprl{} on \dmlab{} and \grid{} domains with varying stacking dimensions. For stack=$m$, we stacked the observation from $t-m$ to $t$: $[s_{t-m}, s_{t-m+1}, \ldots, s_t]$. We experimented up to $m=k$: up to $m=10$ for \dmlab{} and $m=5$ for \grid{}. The result shows that stacking the observation does not necessarily improve the performance for MDP order greater than 1, which is often observed when the function approximation is used (\eg, \citet{savinov2018episodic}). Thus, we did not augment the observation in all the experiments.

\begin{figure}[!h]
    \centering
    \vspace*{-10pt}
    \hspace*{-5pt}
    \includegraphics[draft=false, height=6.7\baselineskip, valign=t]{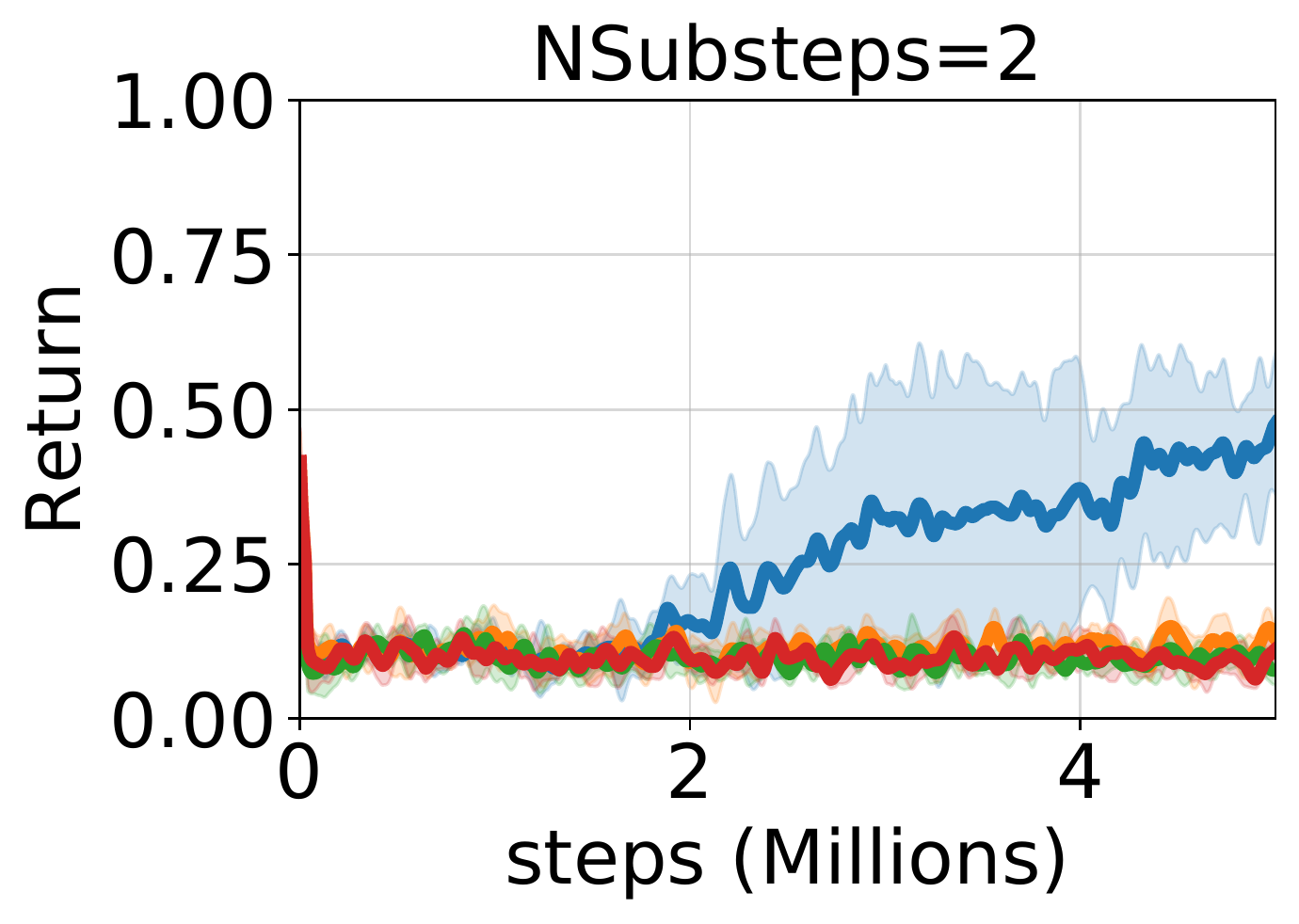}
    \hspace{-6pt}
    \includegraphics[draft=false, height=6.7\baselineskip, valign=t]{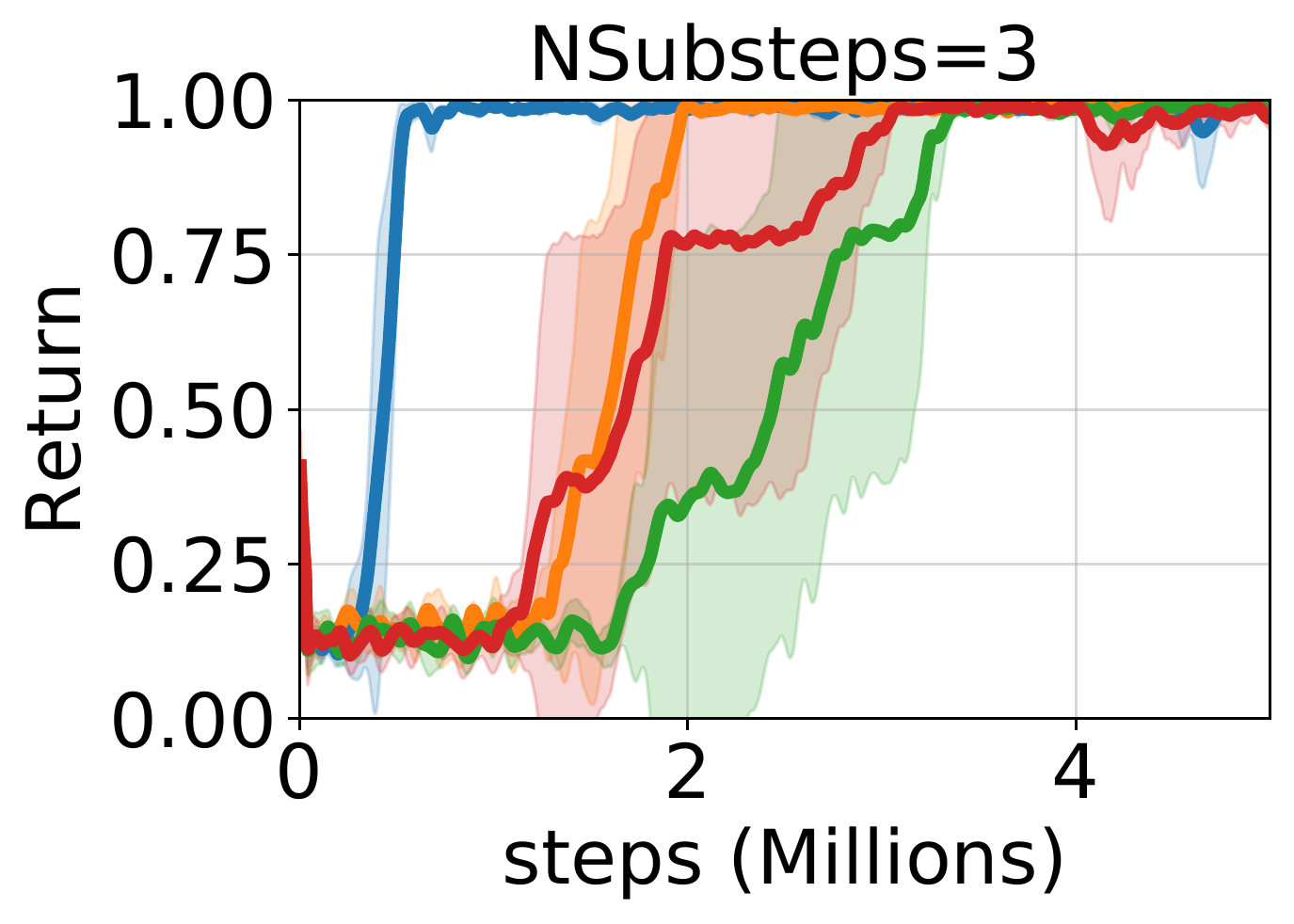}
    \hspace{-6pt}
    \includegraphics[draft=false, height=6.7\baselineskip, valign=t]{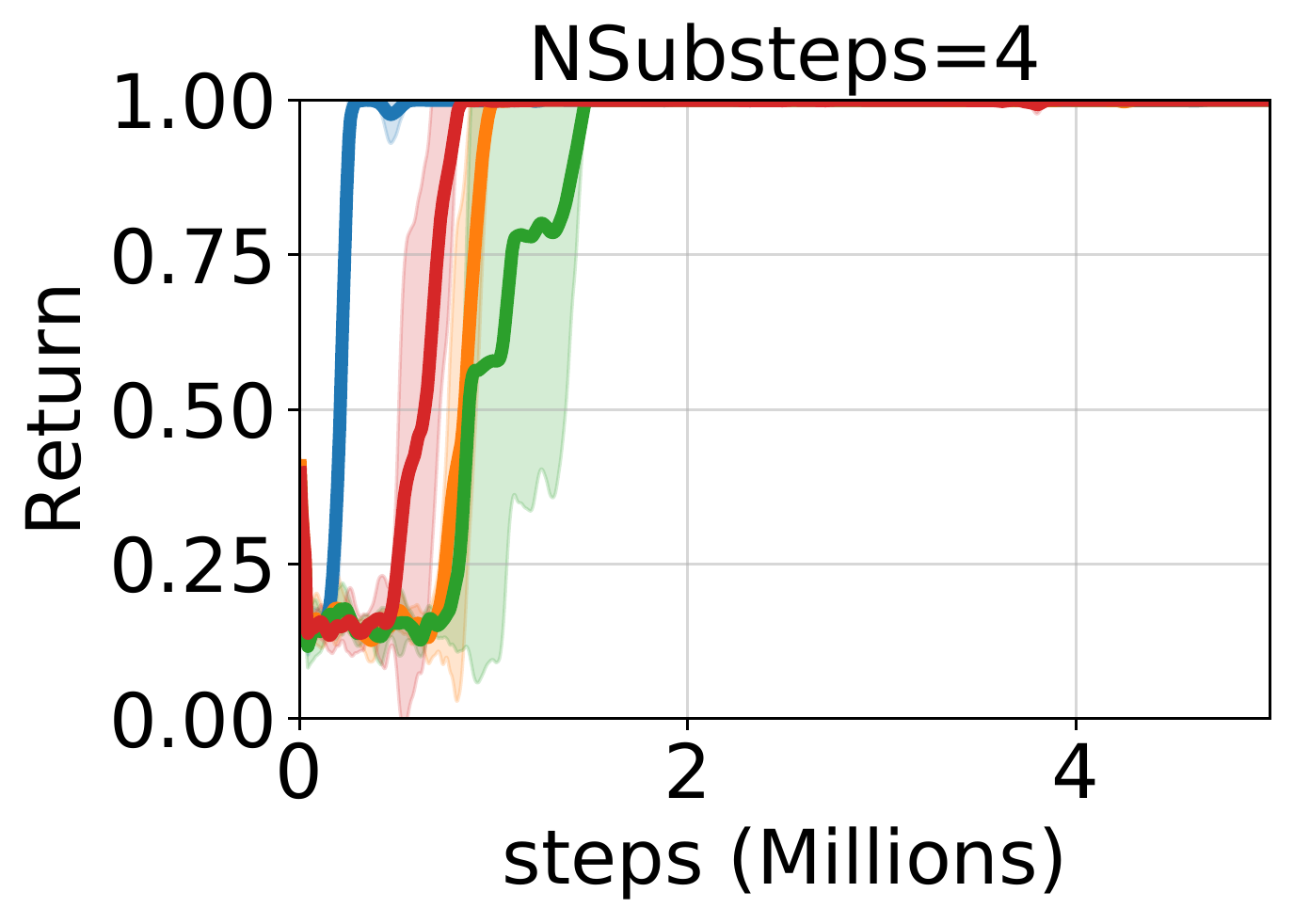}
    \hspace{-6pt}
    \includegraphics[draft=false, height=6.7\baselineskip, valign=t]{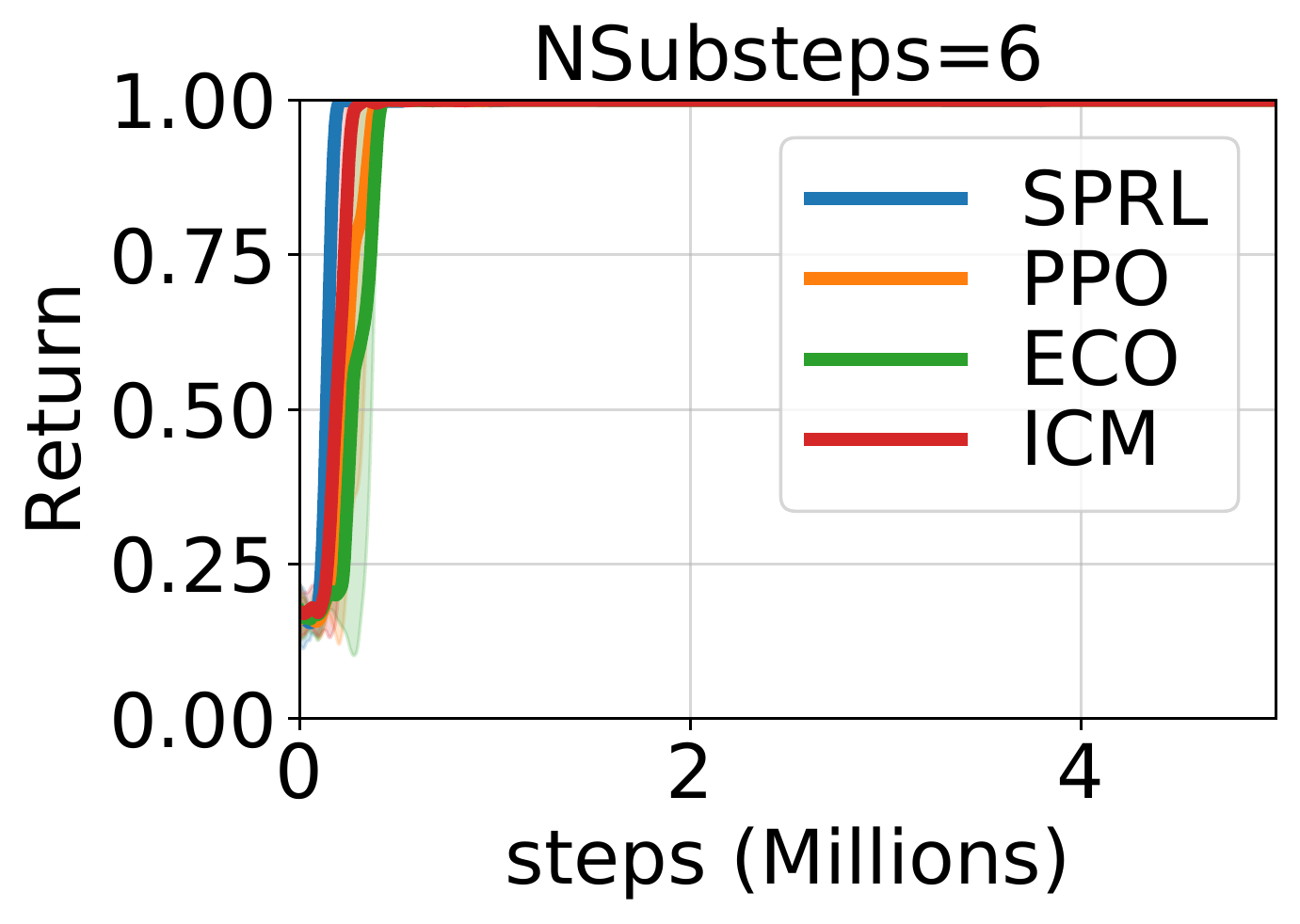}
    \hspace{-5pt}
    \vspace{-5pt}
    \caption{
        Average episode reward of \sprl{} with varying sparsity controlled by substep $N=2, 3, 4, 6$ for \fetch{} tasks. Other hyper-parameters are kept same as the best hyper-parameter. As sparsity grows, the difference between \sprl{} and the baselines increase accordingly.
    }
    \label{fig:fetch_sparsity}
\end{figure}

\cutparagraphup
\subsection{Effect of Sparsity of the reward }\label{appendix:sparsity}
\def\substep{N_\text{substep}\xspace}
We evaluated \sprl{} on \freach with a smaller ``substep'' (\ie, the number of action repetitions) $\substep$ to make the reward sparser, while keeping the episode length $\times$ substep the same to ensure the task is solvable within the episode length.
The default $\substep$ is 20, and we used $\substep=2,3,4,6$.~\Cref{fig:fetch_sparsity} summarizes the experiment result on varying sparsity of the reward. We can see that as the reward becomes sparser, the performance gap between \sprl{} and the baselines becomes larger, which is consistent with our theory.

\begin{figure}[!h]
    \centering
    \vspace*{-5pt}
    \hspace*{-5pt}
    \includegraphics[draft=false, height=6.7\baselineskip, valign=t]{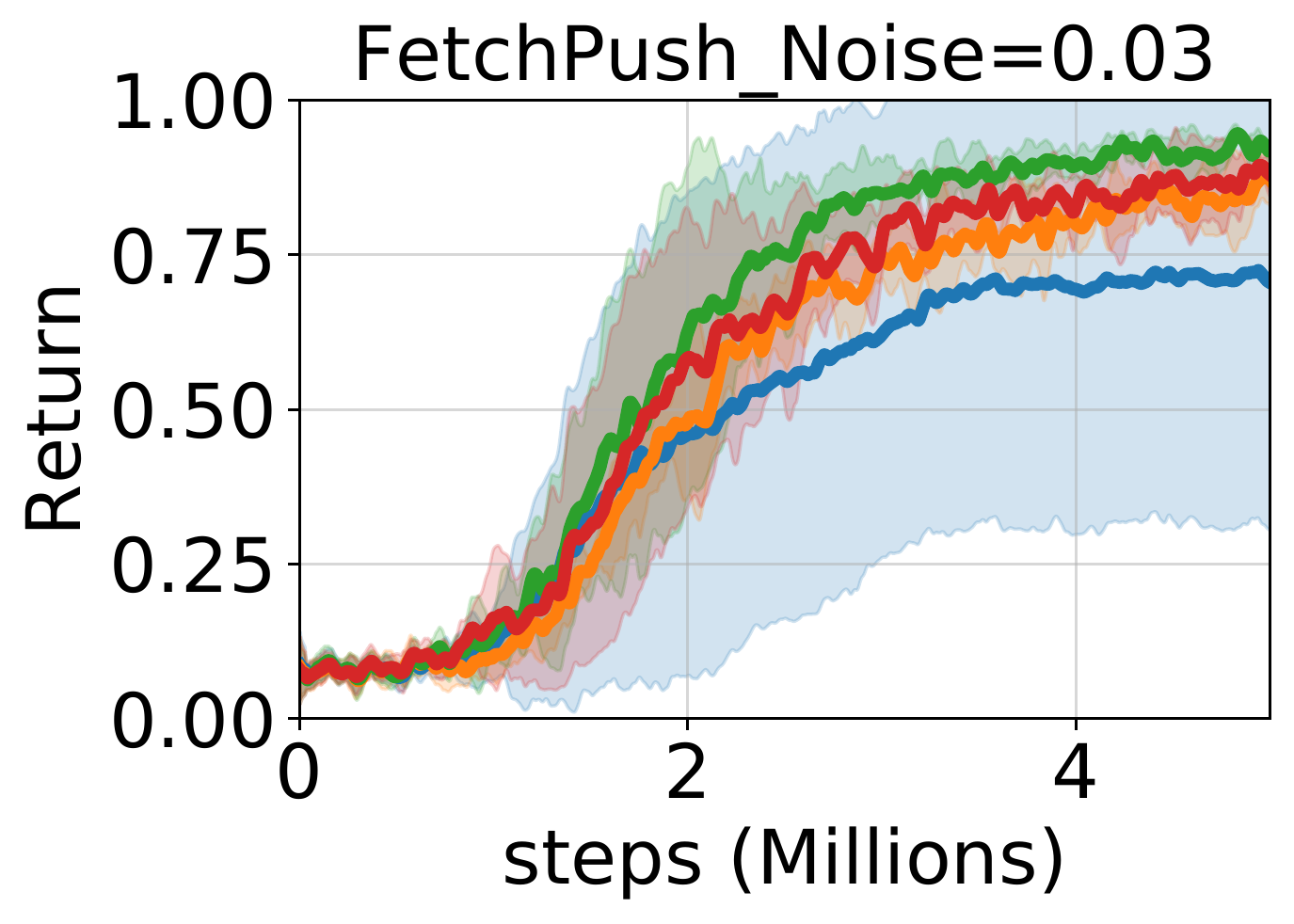}
    \hspace{-6pt}
    \includegraphics[draft=false, height=6.7\baselineskip, valign=t]{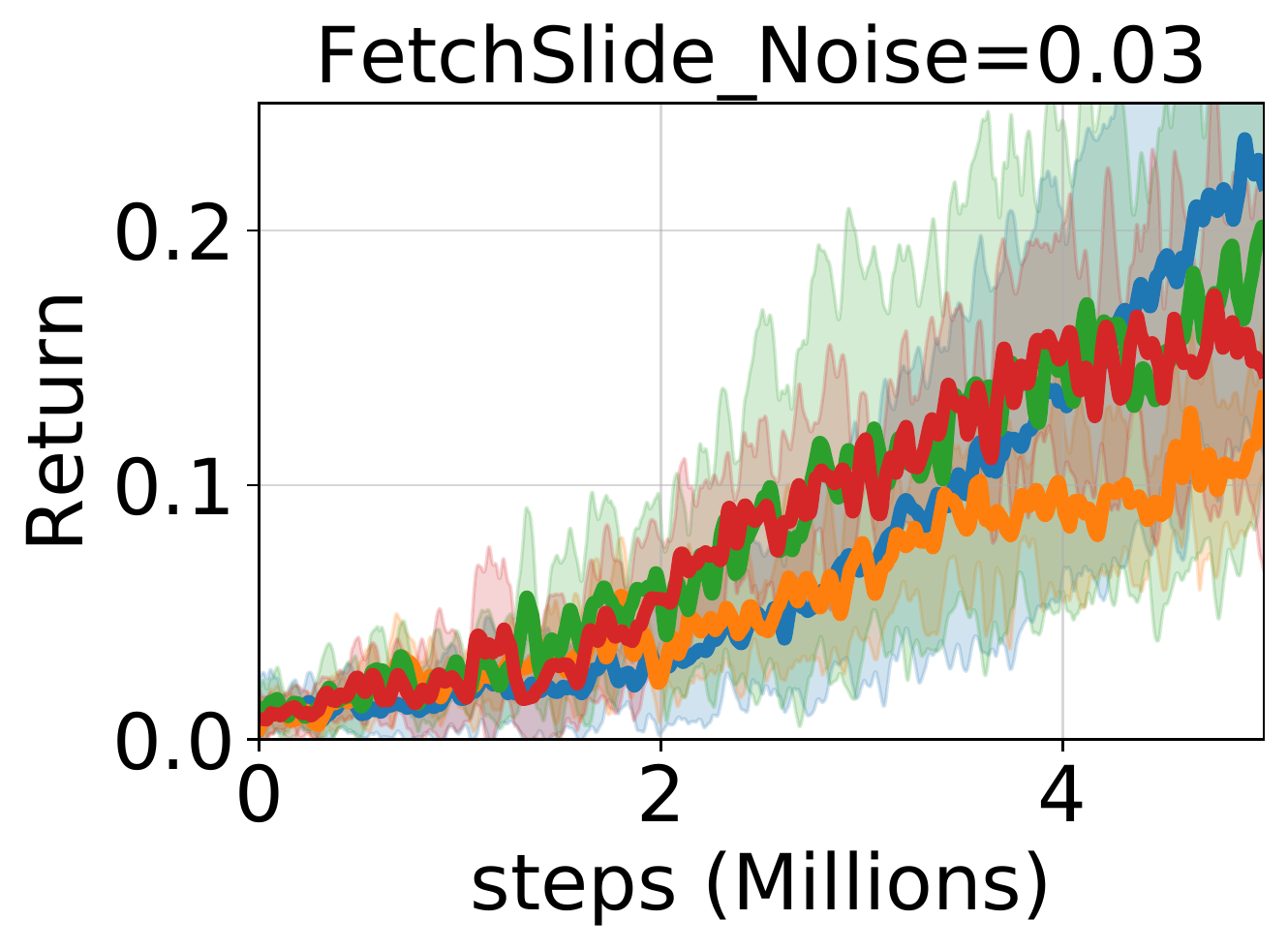}
    \hspace{-6pt}
    \includegraphics[draft=false, height=6.7\baselineskip, valign=t]{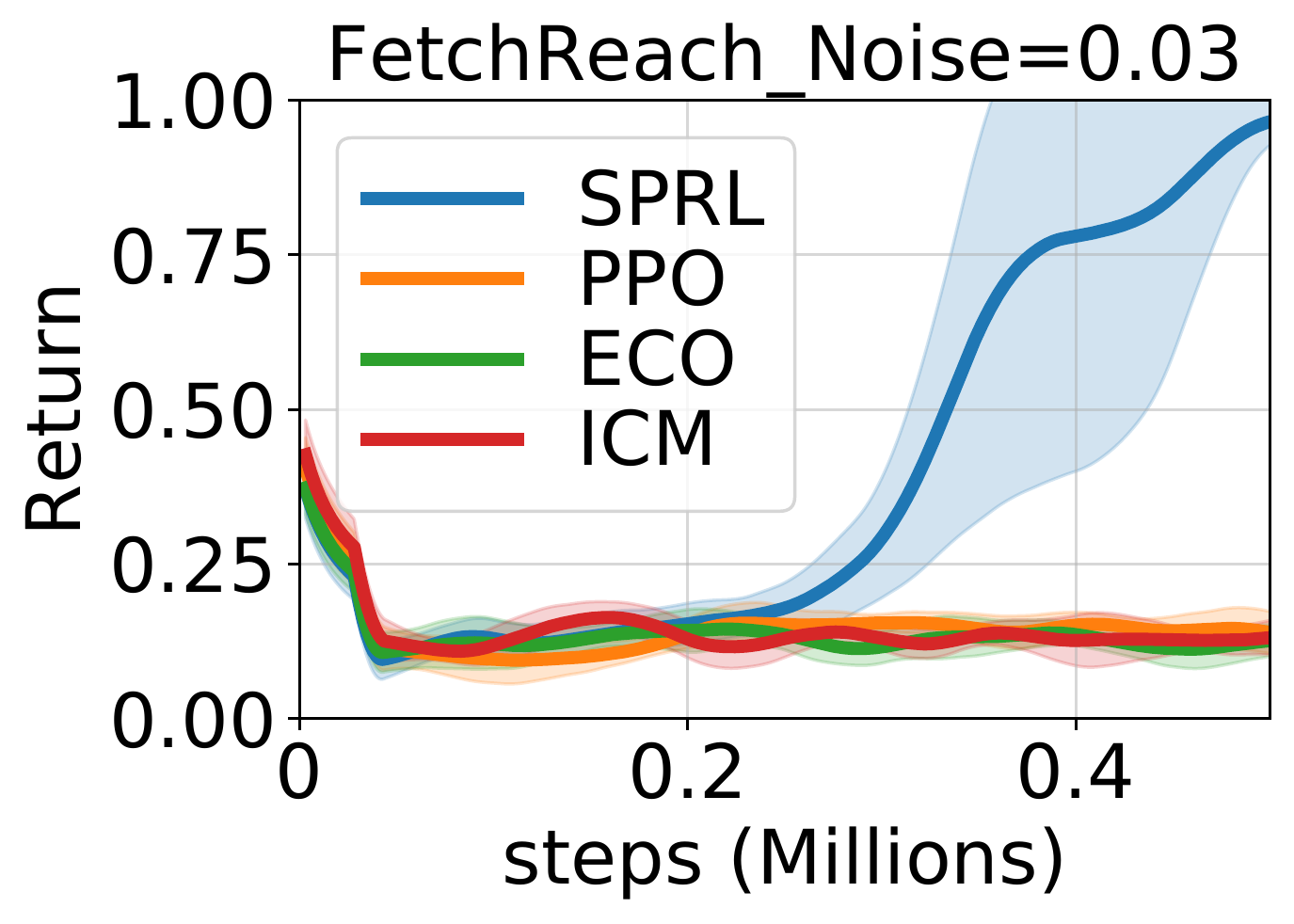}
    \hspace{-6pt}
    \includegraphics[draft=false, height=6.7\baselineskip, valign=t]{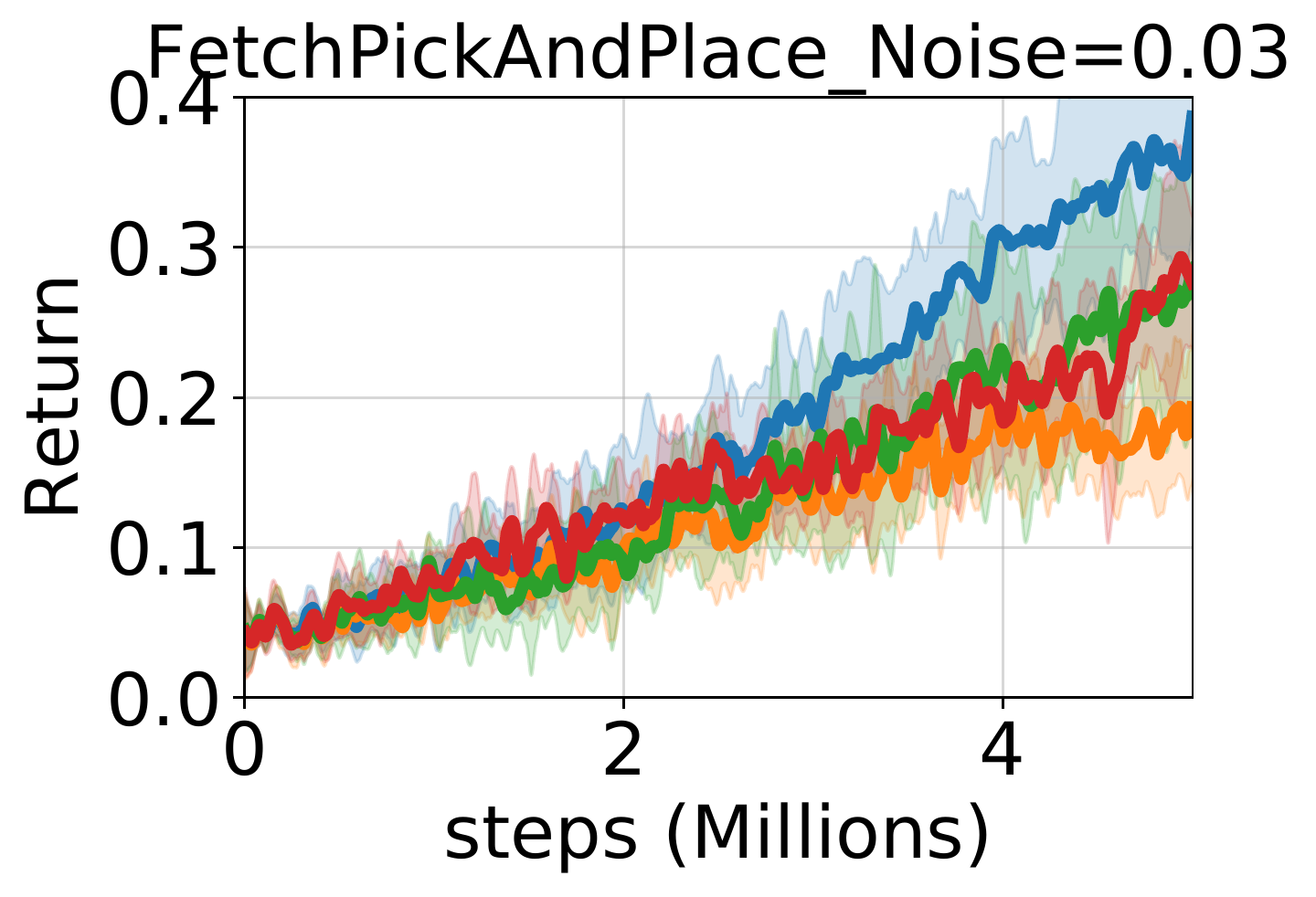}
    \hspace{-5pt}
    \\
    \hspace*{-5pt}
    \includegraphics[draft=false, height=6.7\baselineskip, valign=t]{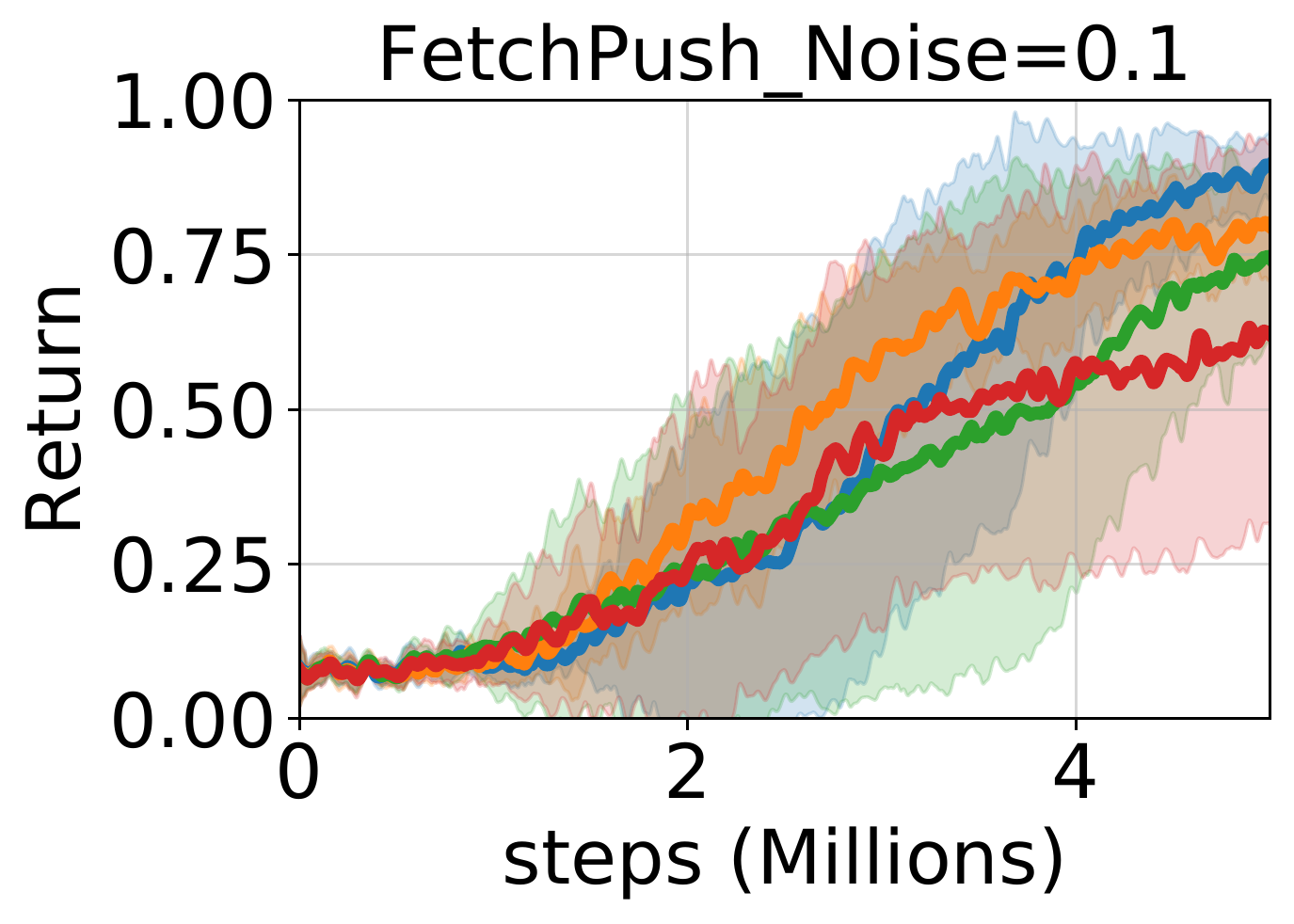}
    \hspace{-6pt}
    \includegraphics[draft=false, height=6.7\baselineskip, valign=t]{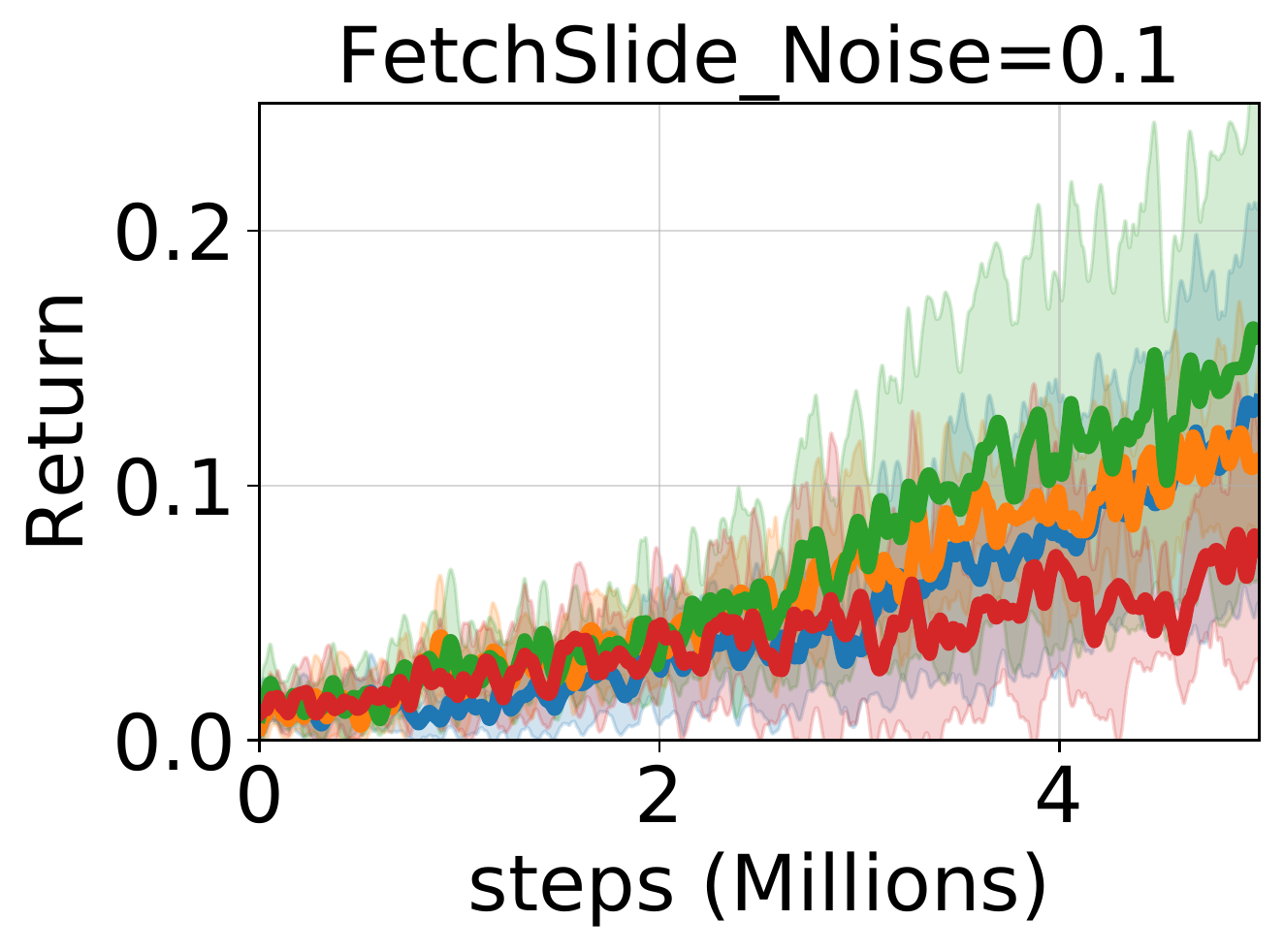}
    \hspace{-6pt}
    \includegraphics[draft=false, height=6.7\baselineskip, valign=t]{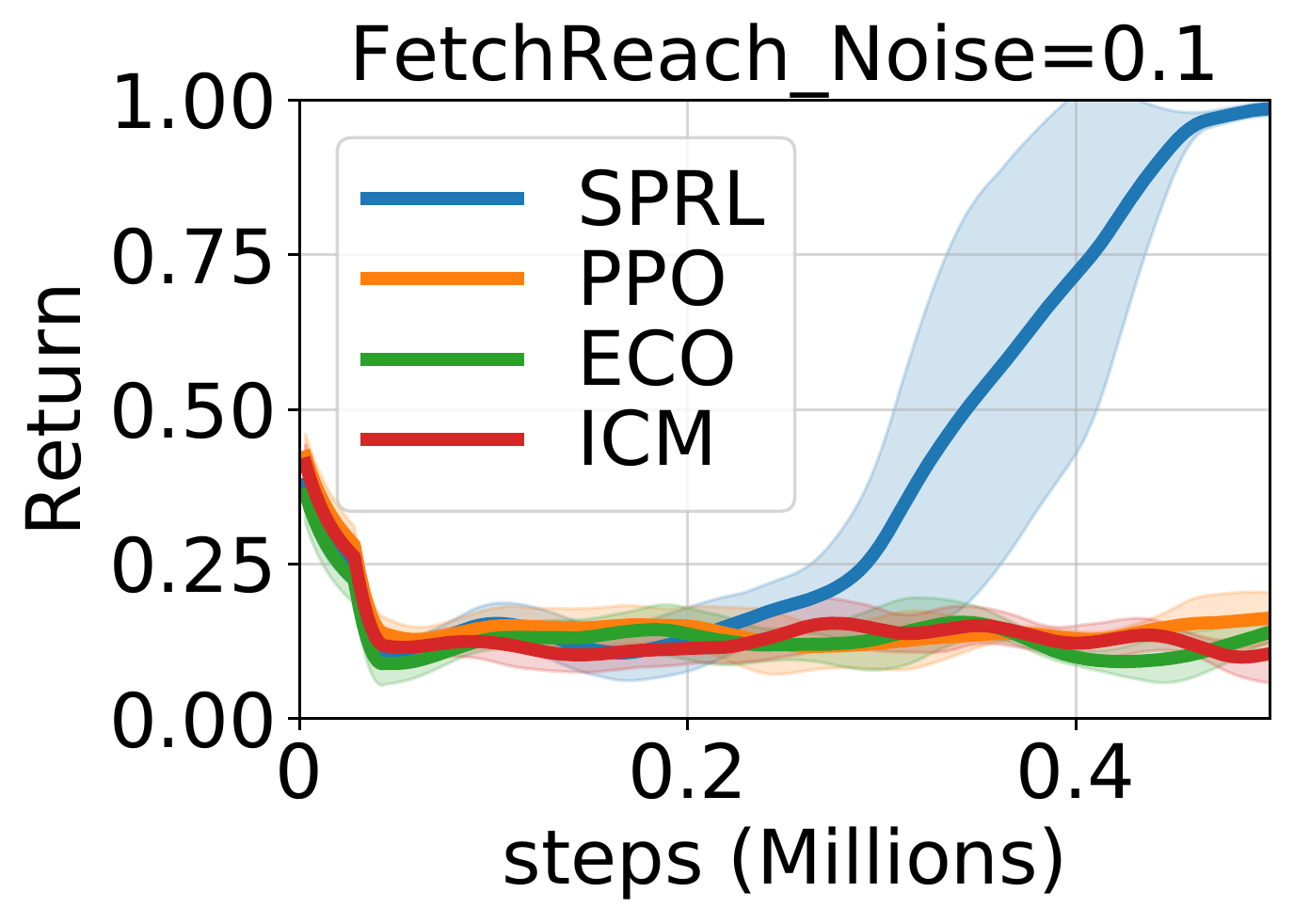}
    \hspace{-6pt}
    \includegraphics[draft=false, height=6.7\baselineskip, valign=t]{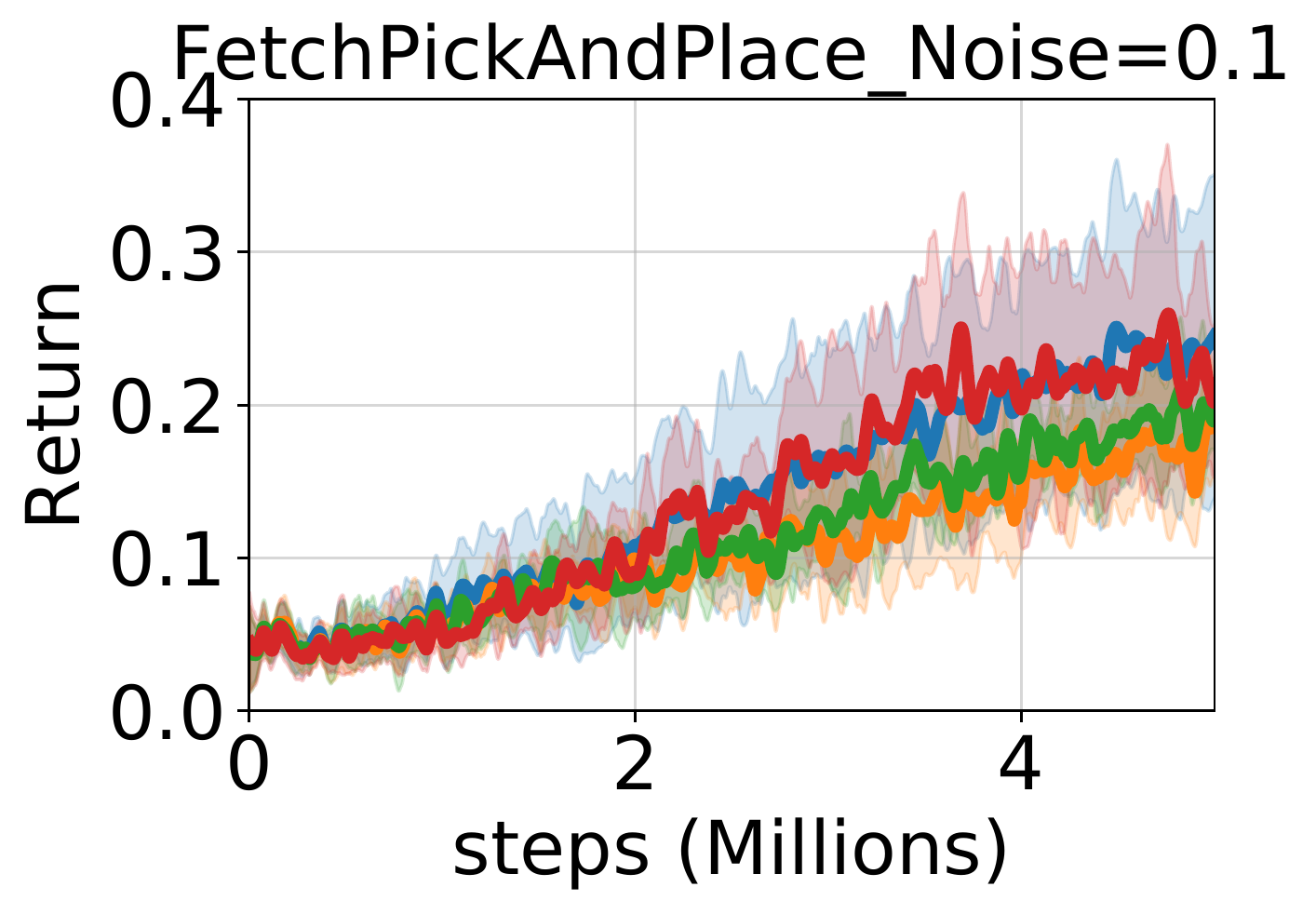}
    \hspace{-5pt}
    \\
    \hspace*{-5pt}
    \includegraphics[draft=false, height=6.7\baselineskip, valign=t]{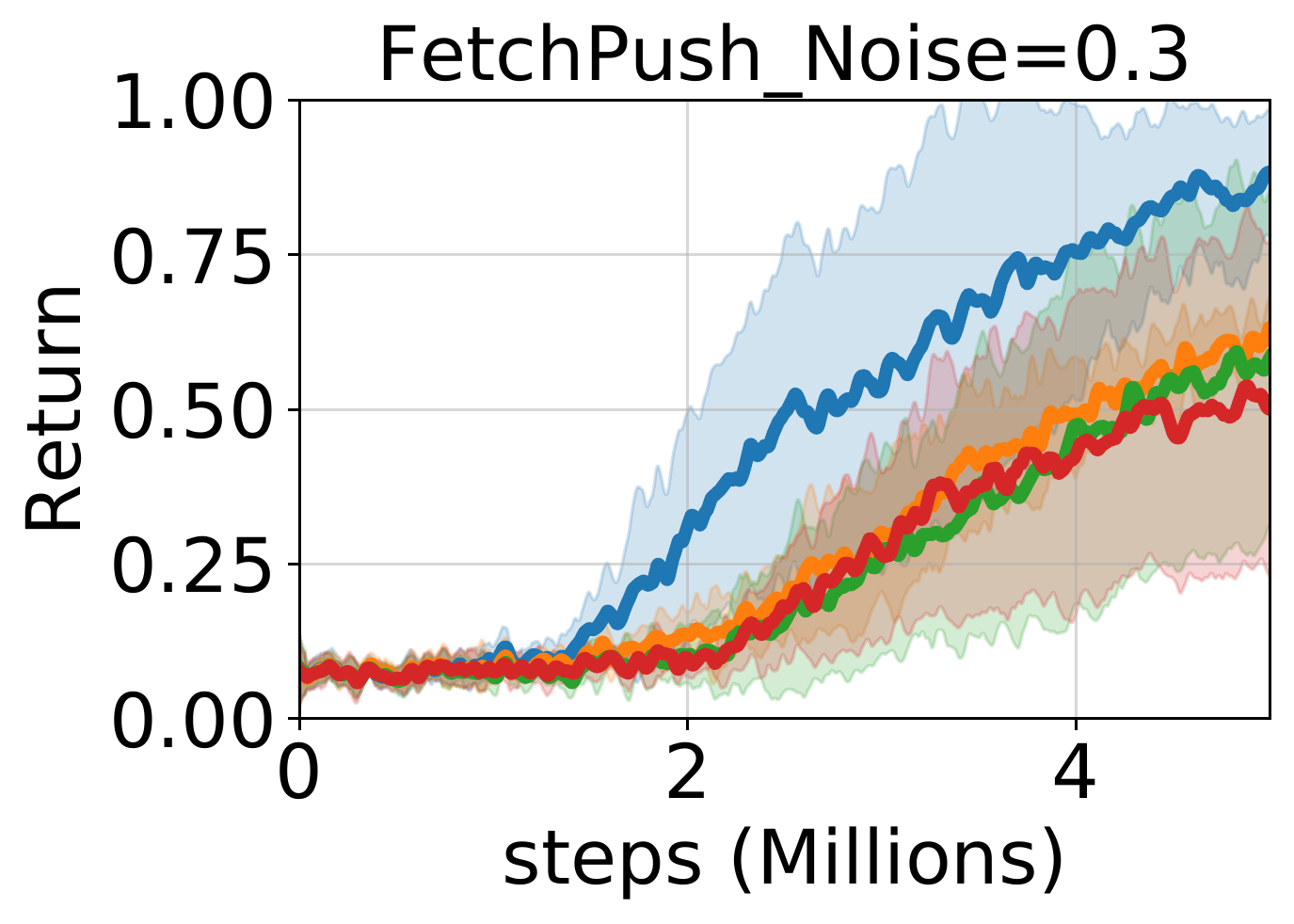}
    \hspace{-6pt}
    \includegraphics[draft=false, height=6.7\baselineskip, valign=t]{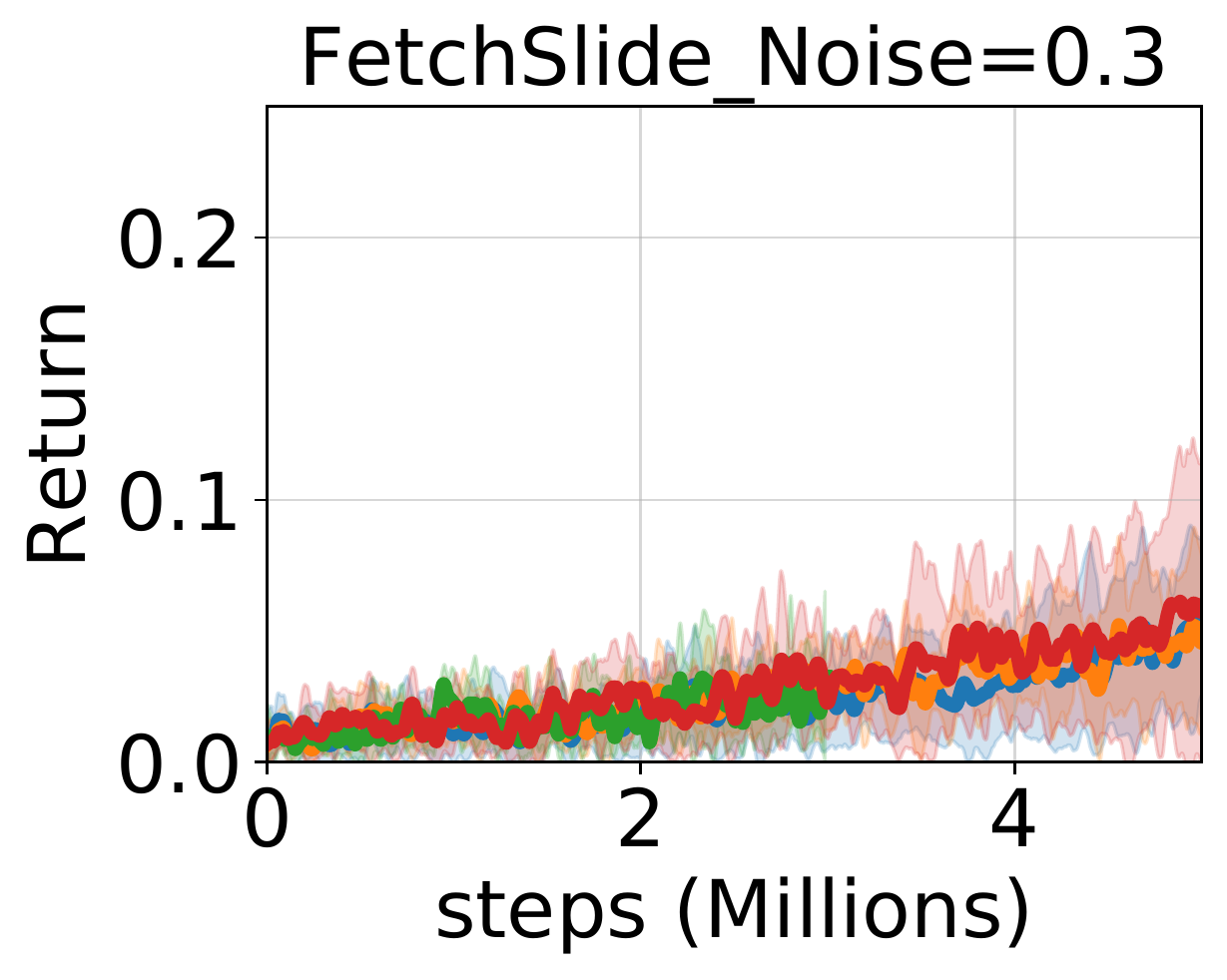}
    \hspace{-6pt}
    \includegraphics[draft=false, height=6.7\baselineskip, valign=t]{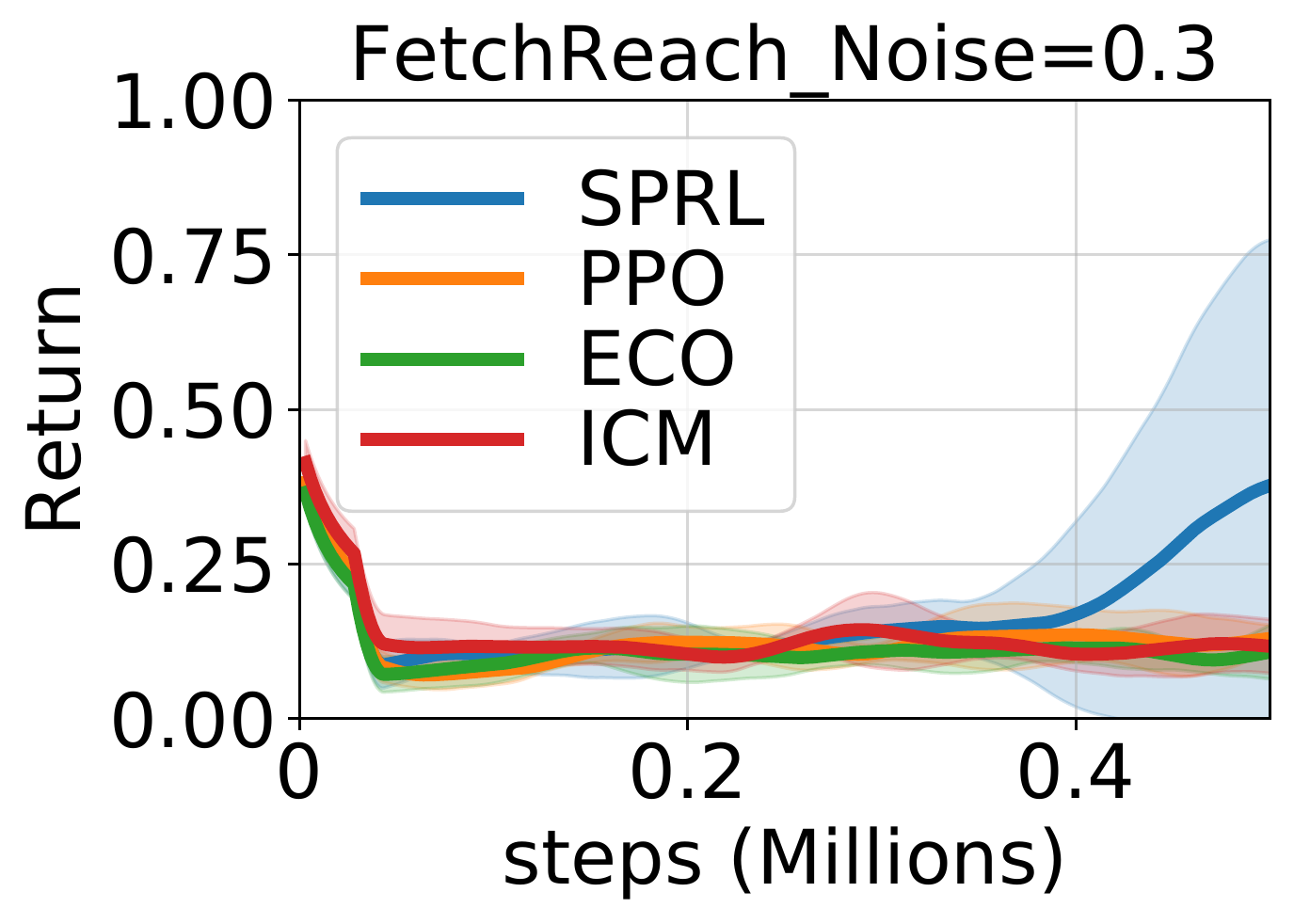}
    \hspace{-6pt}
    \includegraphics[draft=false, height=6.7\baselineskip, valign=t]{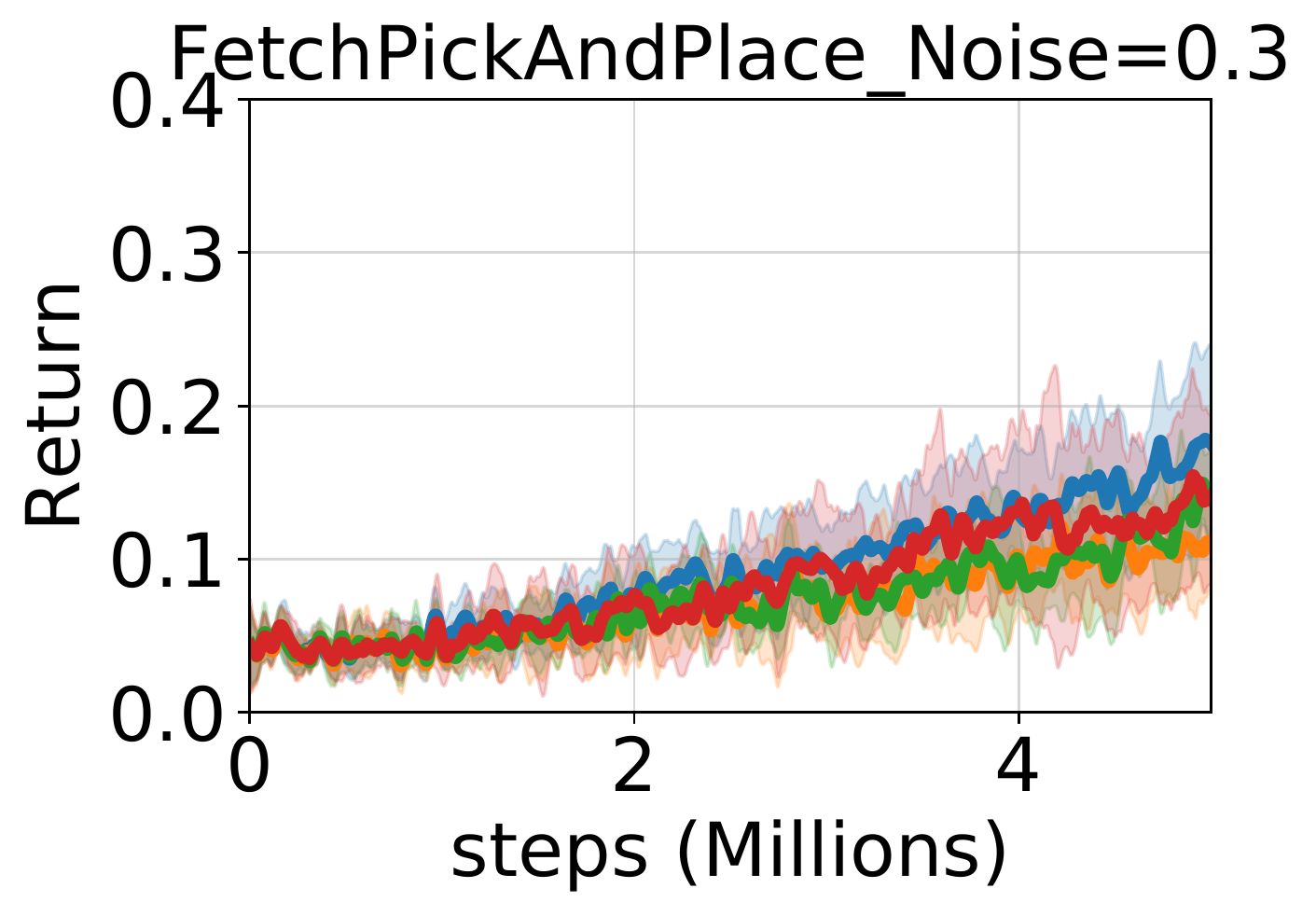}
    \hspace{-5pt}
    \vspace{-5pt}
    \caption{
        Average episode reward of \sprl{} with varying stochasticity using action noise of $\mathcal{N}(0, \sigma^2)$ where $\sigma=0.03, 0.1, 0.3$ for \fetch{} tasks. Other hyper-parameters are kept same as the best hyper-parameter. We observe that the performance is preserved the most for \sprl{} when stochasticity changes.
    }
    \label{fig:fetch_stochasticity}
\end{figure}

\cutparagraphup
\subsection{Effect of stochasticity of the environment}\label{appendix:stochastic}

\Cref{fig:fetch_stochasticity} summarizes the experiment result on varying stochasticity with a sparser version of \fetch{} tasks. We used an additive action noise following $\mathcal{N}(0, \sigma^2)$ with $\sigma=0.03, 0.1, 0.3$ to vary the stochasticity of the environment. Also, we used substep $\substep=3,8,10,10$ for \textit{Reach}, \textit{Push}, \textit{Slide}, \textit{Pick} respectively to make the reward sparser. This change was made to clearly observe the performance difference when stochasticity changes. The performance of \sprl{} does not degrade much as the stochasticity increases compared to the other baselines indicating that SPRL robustly performs well in stochastic environments.

\clearpage
\cutsectionup
\section{Analysis on the reachability network (RNet)}
\label{appendix:rnet}

\begin{figure}[!h]
    \centering
    \includegraphics[draft=false, height=6.5\baselineskip, valign=b]{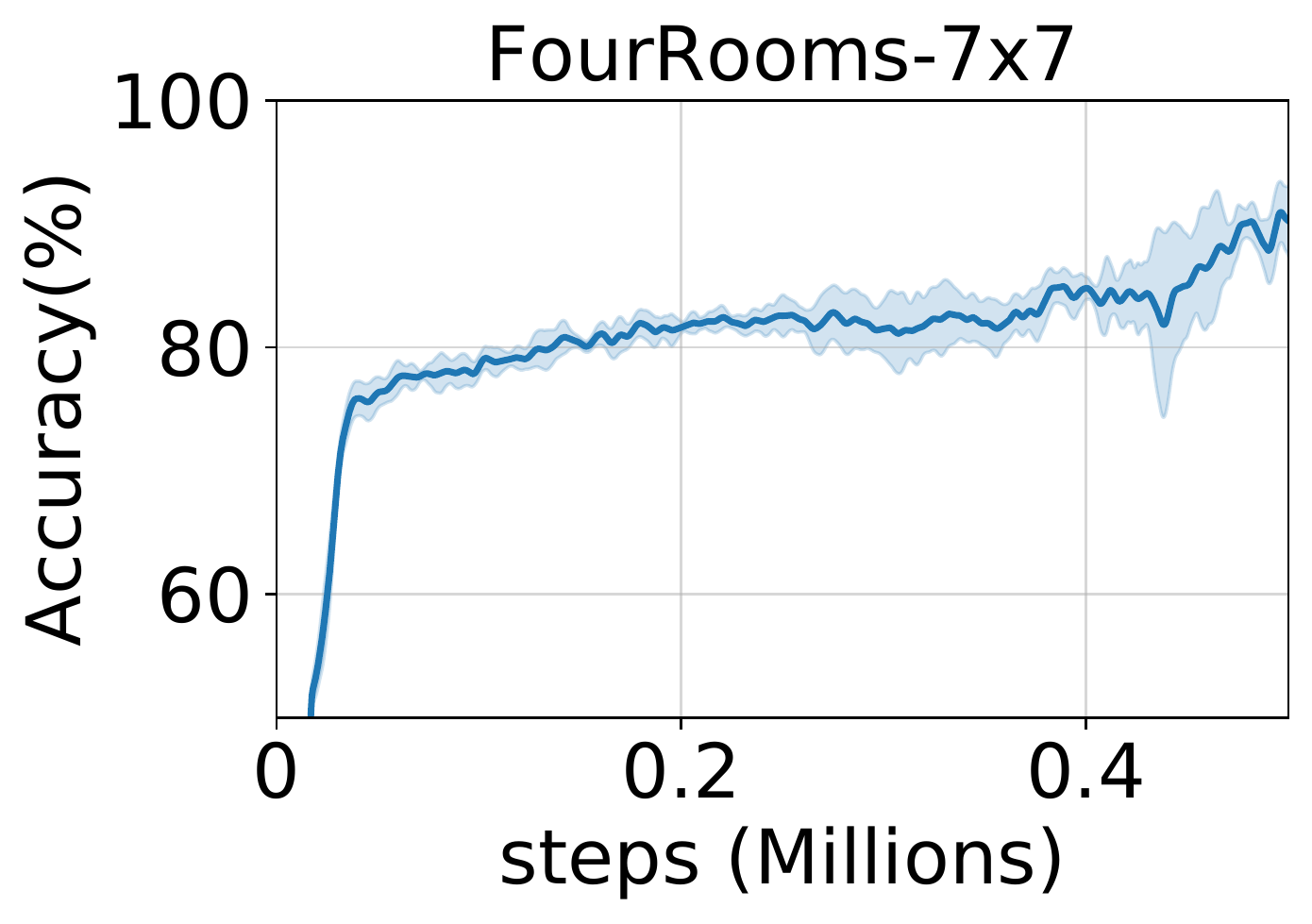}
    \includegraphics[draft=false, height=6.5\baselineskip, valign=b]{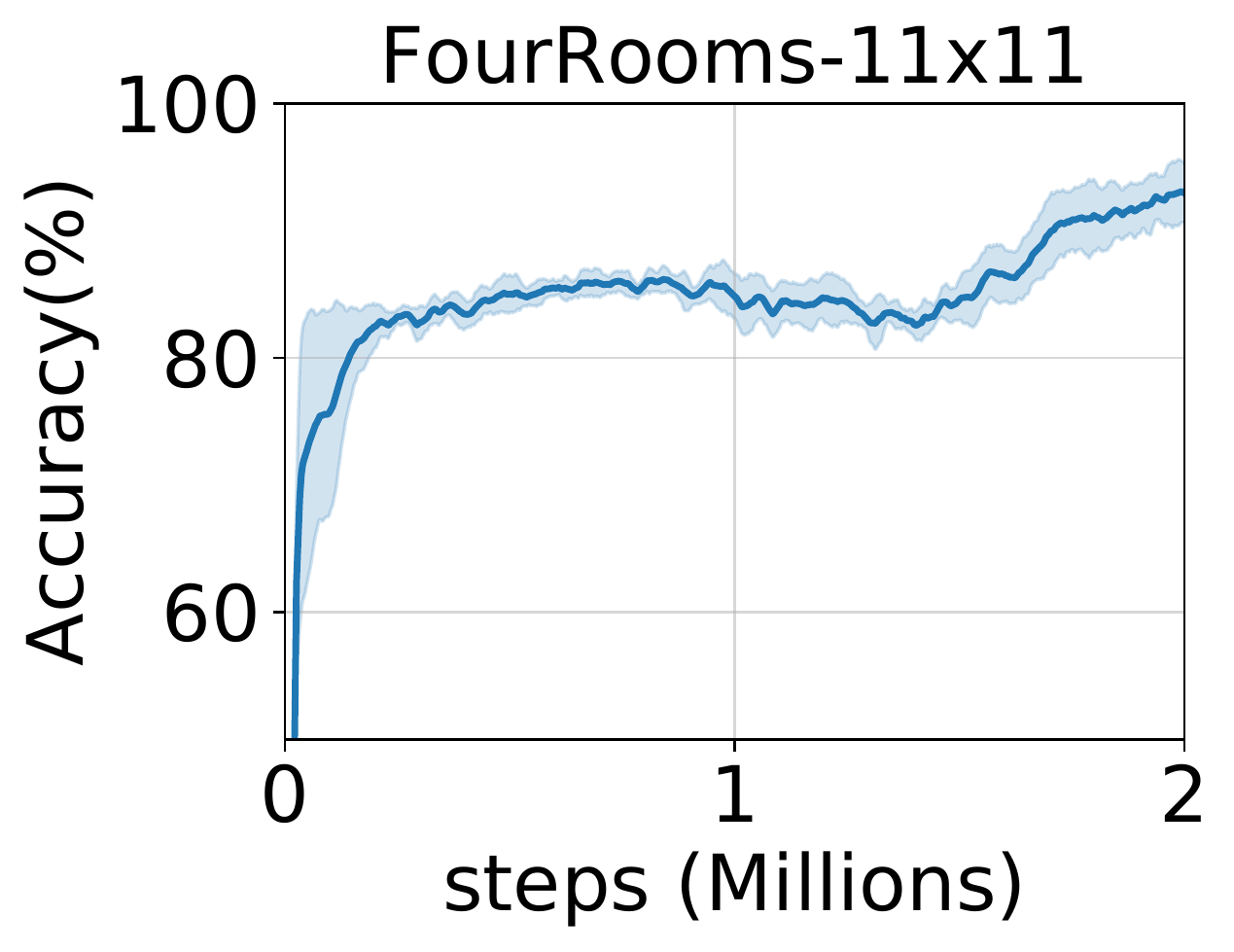}
    \includegraphics[draft=false, height=6.5\baselineskip, valign=b]{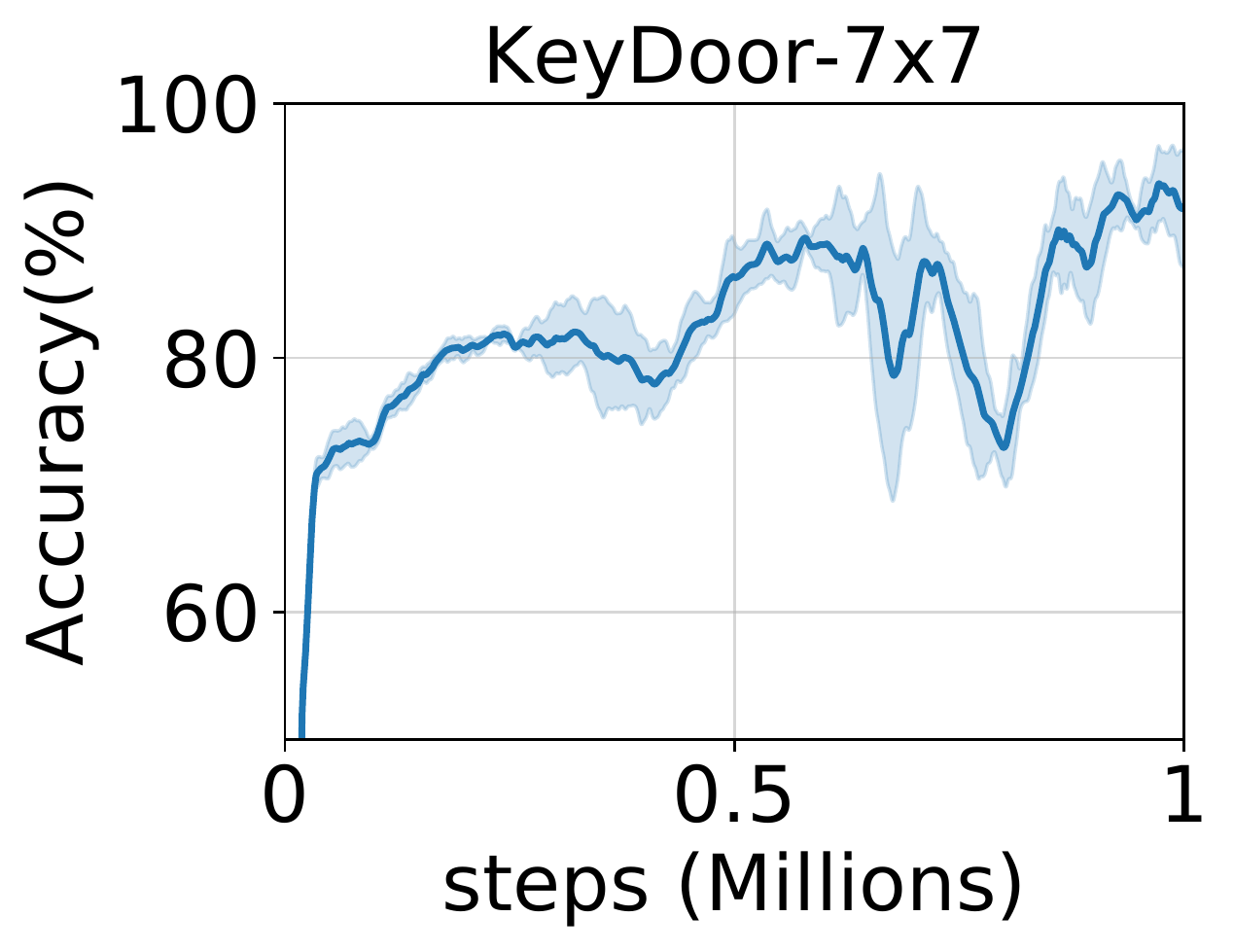}
    \includegraphics[draft=false, height=6.5\baselineskip, valign=b]{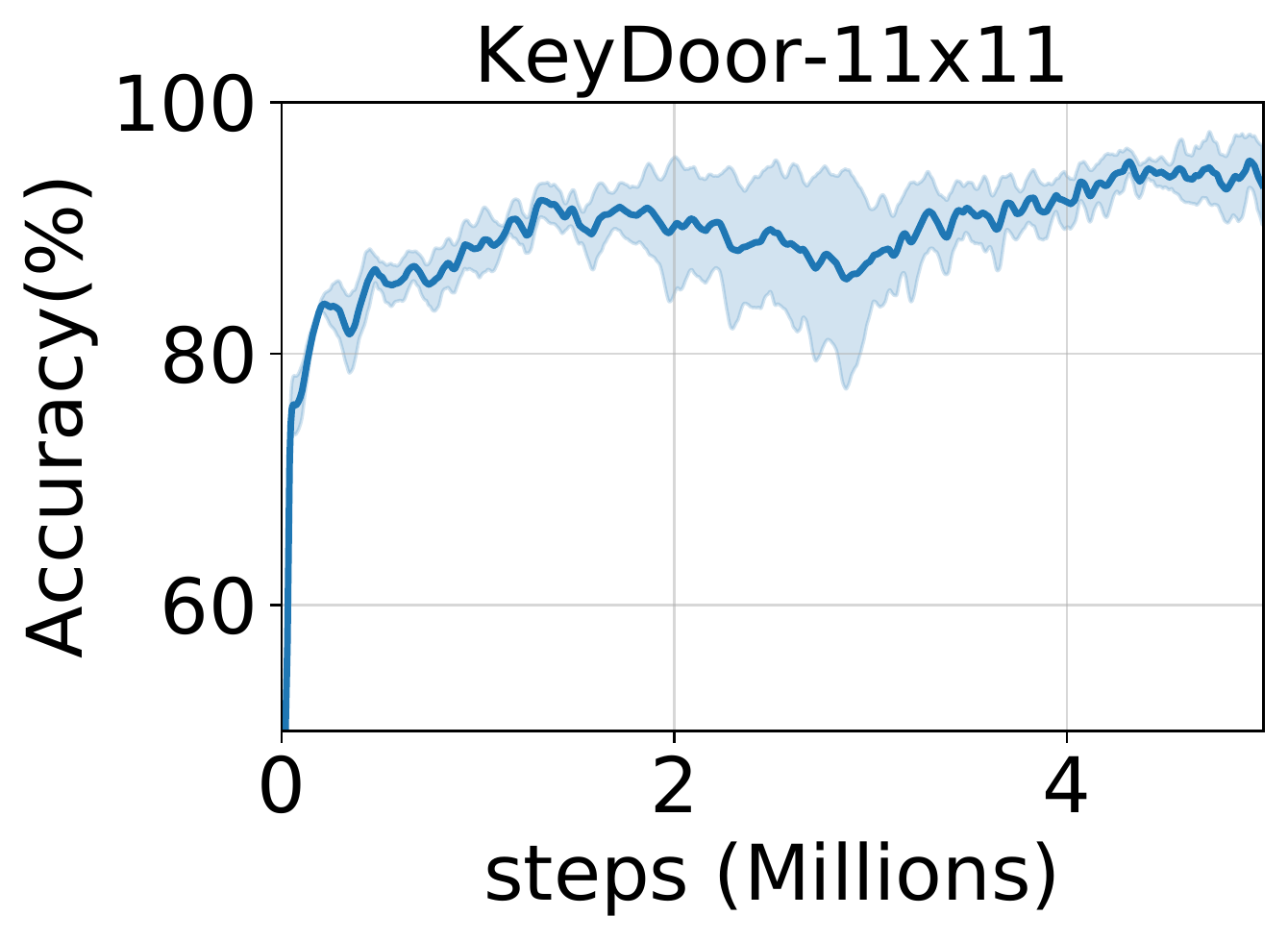}
    {
    \parbox{0.20\linewidth}{\hspace{+50pt} \small\centering (a)}
    \parbox{0.25\linewidth}{\hspace{+50pt} \small\centering (b)}
    \parbox{0.20\linewidth}{\hspace{+25pt} \small\centering (c)}
    \parbox{0.25\linewidth}{\hspace{+30pt} \small\centering (d)}
    }
    \caption{
        The accuracy of the learned reachability network on (a) \fours{} (b) \fourl{}, (c) \keys{} and (d) \keyl{} in \grid{} in terms of environment steps.
    }
    \label{fig:grid_rnet}
\end{figure}
\begin{figure}[!h]
    \centering
    \includegraphics[draft=false, width=0.30\linewidth, valign=t]
    {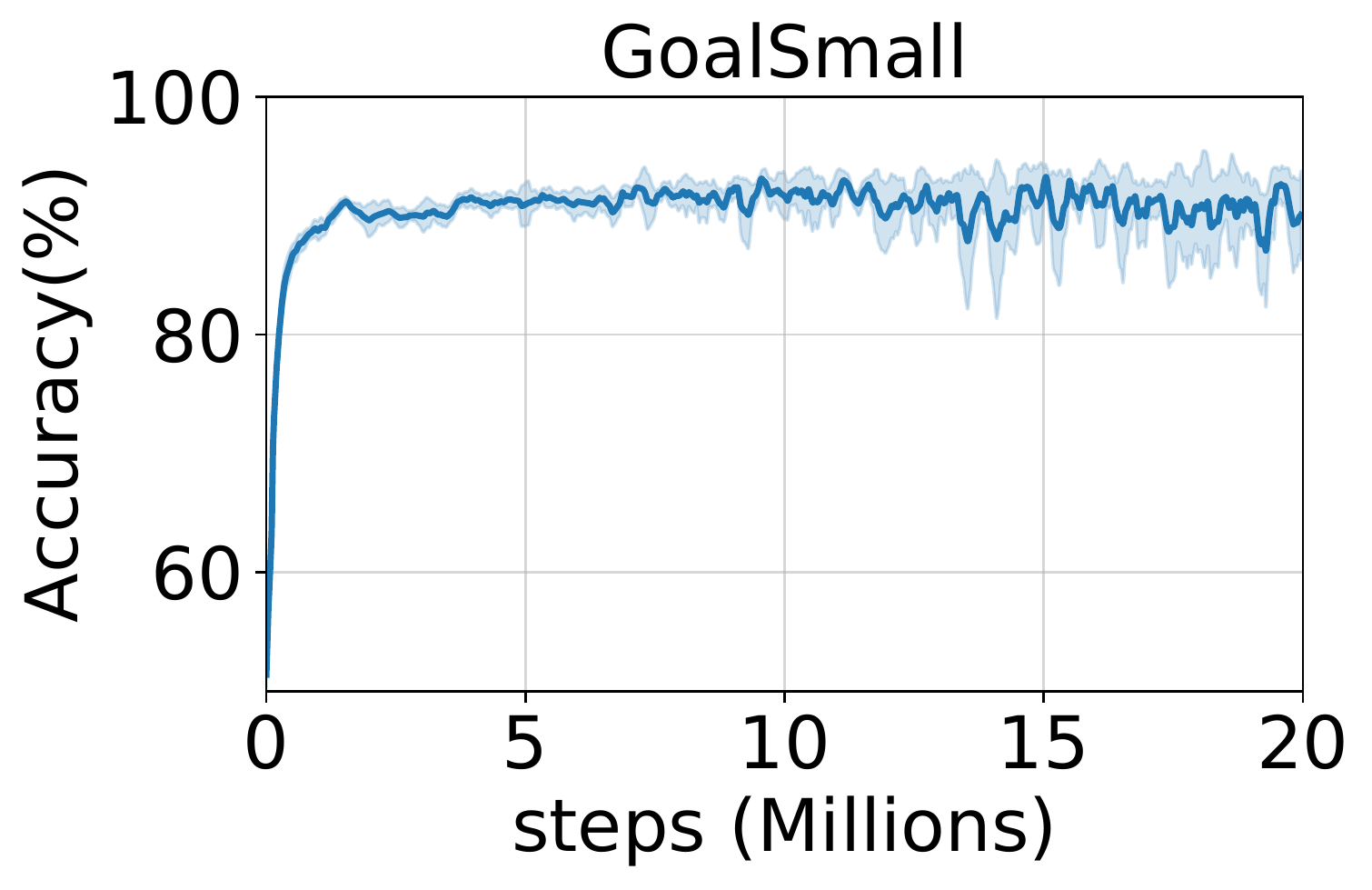}
    \includegraphics[draft=false, width=0.30\linewidth, valign=t]
    {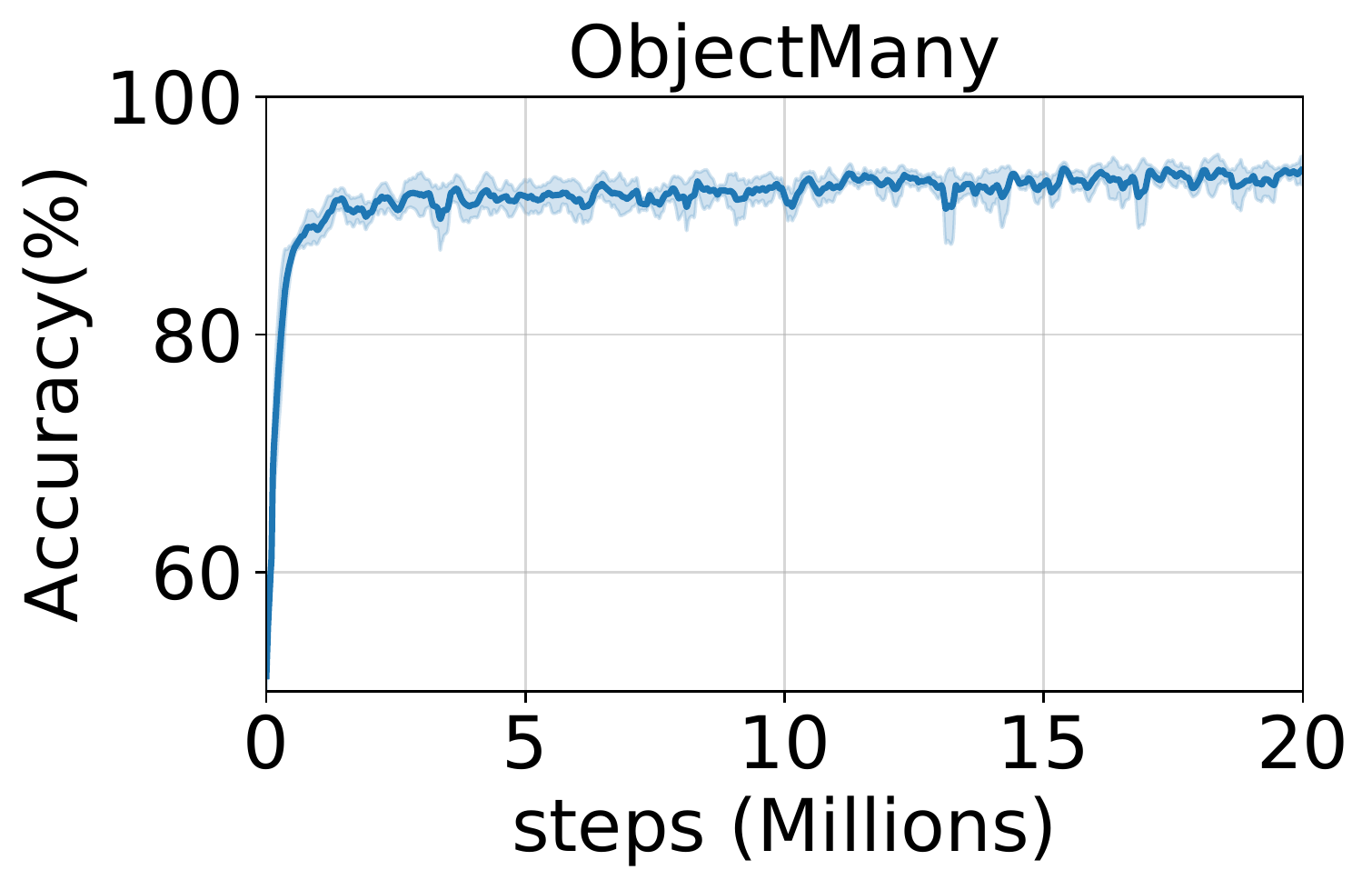}
    \includegraphics[draft=false, width=0.30\linewidth, valign=t]
    {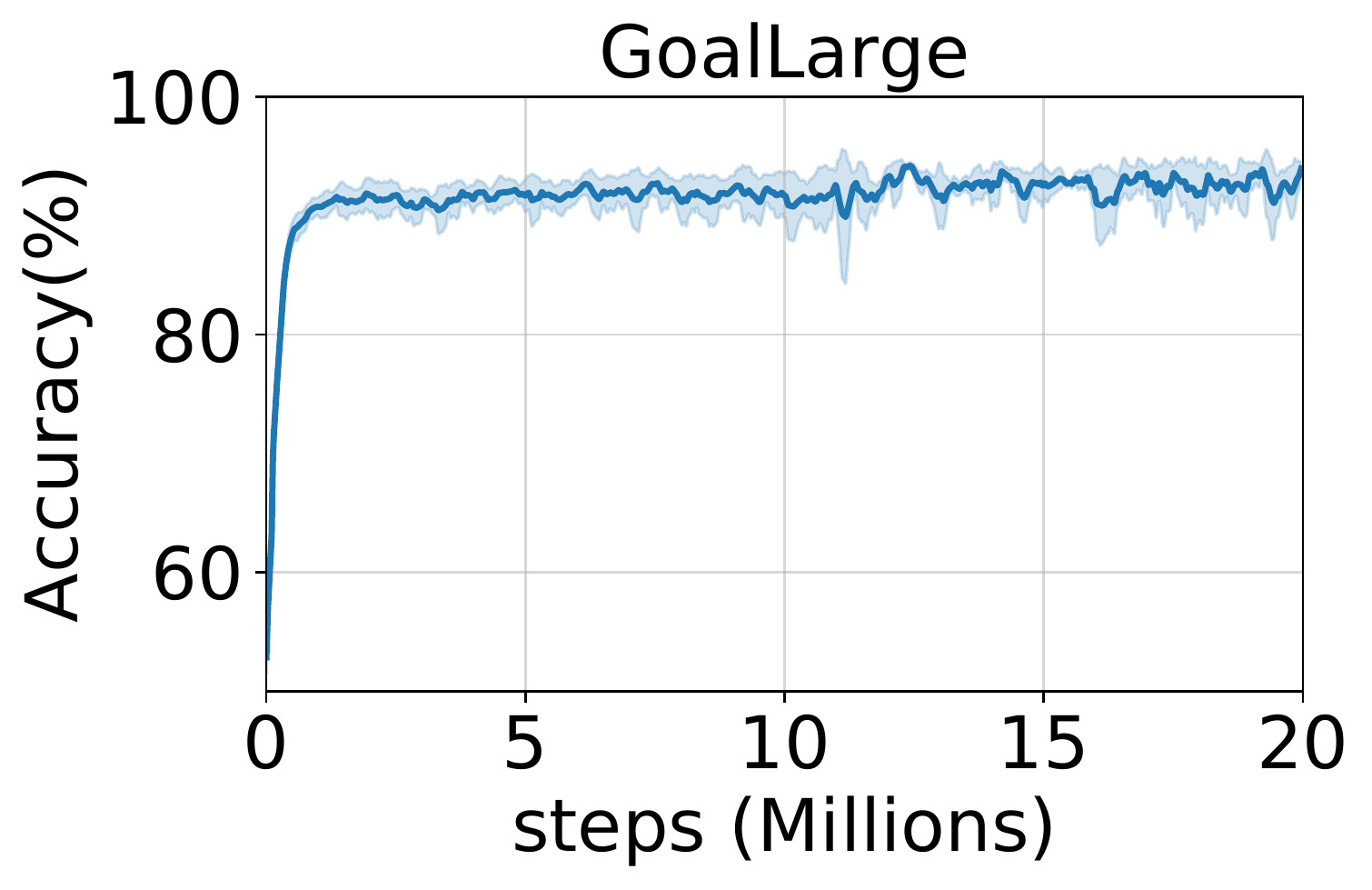}
    {
    \parbox{0.30\linewidth}{\hspace{+20pt} \small\centering (a)}
    \parbox{0.30\linewidth}{\hspace{+20pt}
    \small\centering (b)}
    \parbox{0.30\linewidth}{\hspace{+20pt}
    \small\centering (c)}
    }
    \vspace*{-9pt}
    \caption{
    The accuracy of the learned reachability network on (a) \dmgs{} (b) \dmom{}, and (c) \dmgl{} in \dmlab{} in terms of environment steps.
    }
    \label{fig:dmlab_rnet}
\end{figure}
\begin{figure}[!h]
    \centering
    \includegraphics[draft=false, width=0.25\linewidth, valign=t]    {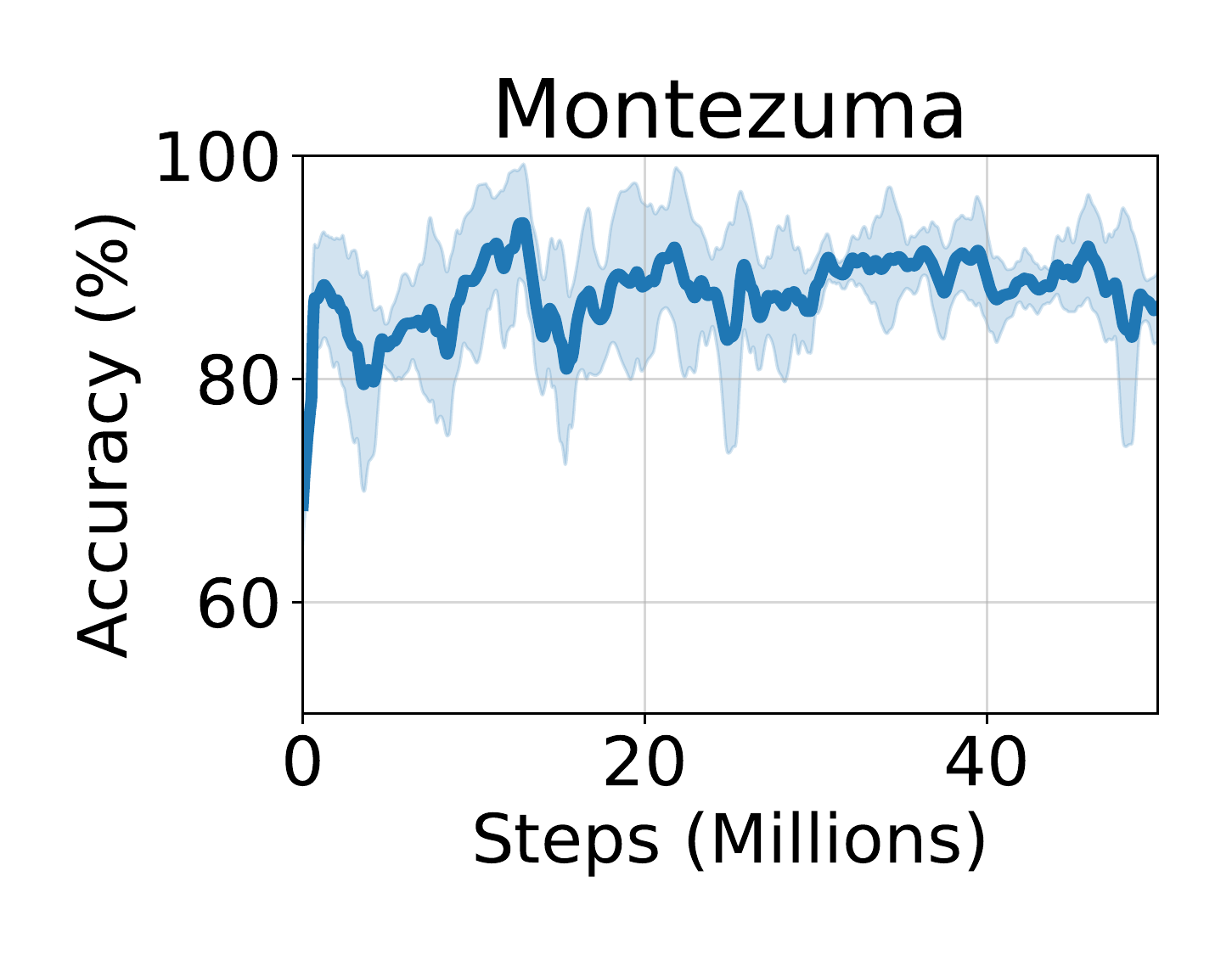}
    \hspace{-2pt}
    \includegraphics[draft=false, width=0.25\linewidth, valign=t]
    {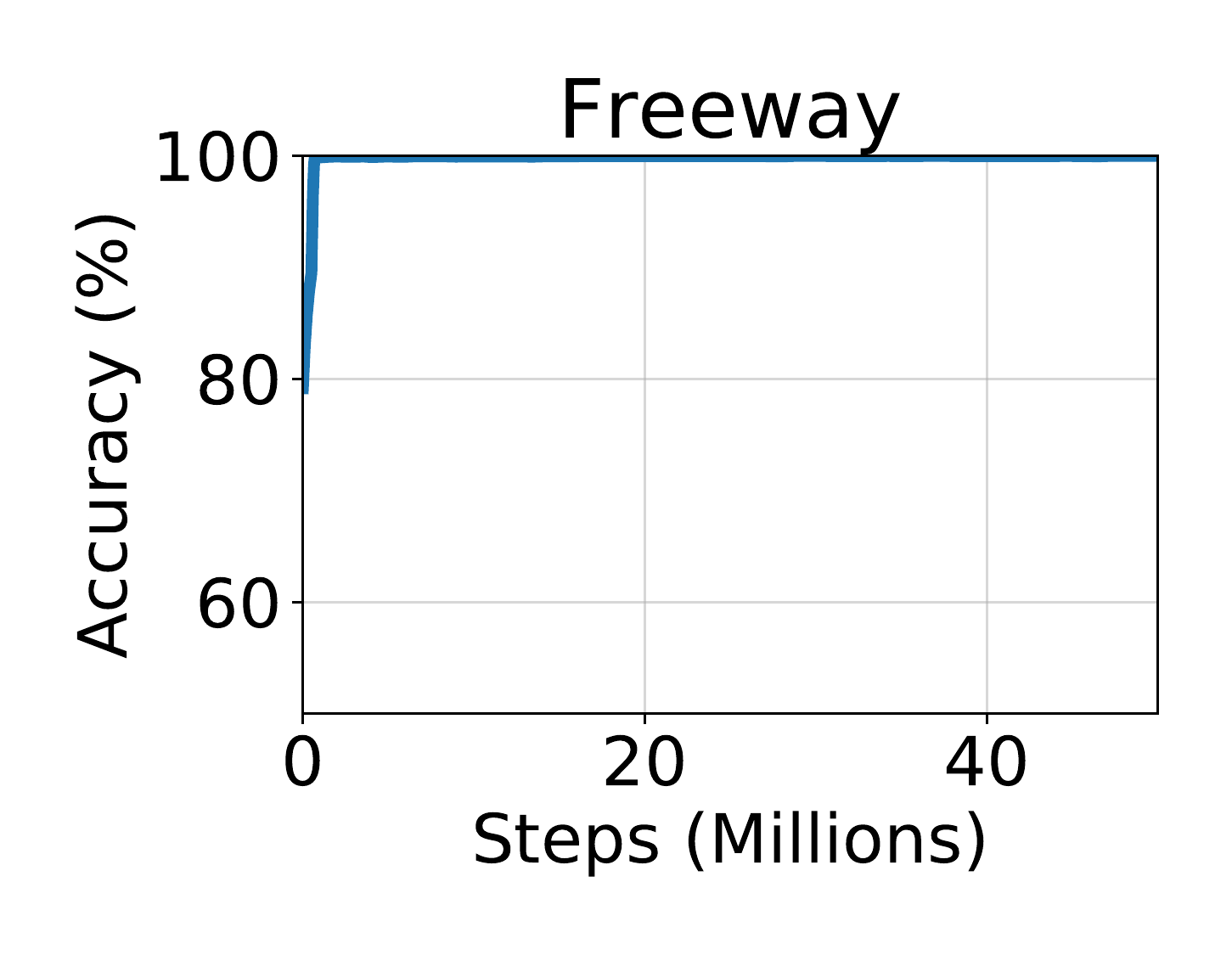}
    \hspace{-2pt}
    \includegraphics[draft=false, width=0.25\linewidth, valign=t]
    {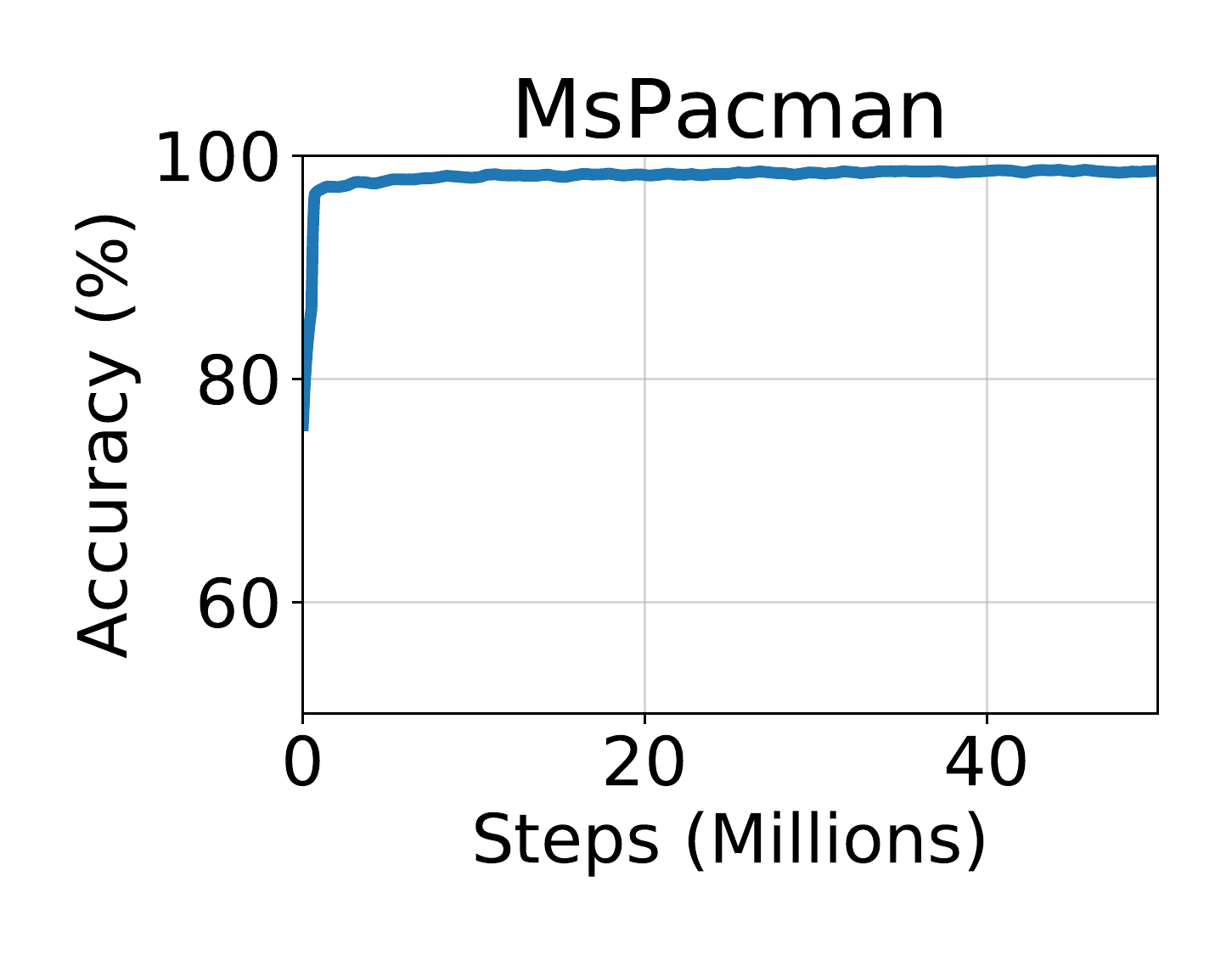}
    \hspace{-2pt}
    {
    \parbox{0.25\linewidth}{\hspace{+20pt} \small\centering (a) \hspace{-2pt}}
    \parbox{0.25\linewidth}{\hspace{+20pt} \small\centering (b) \hspace{-2pt}}
    \parbox{0.25\linewidth}{\hspace{+20pt} \small\centering (c) \hspace{-2pt}}
    }
    \\
    \includegraphics[draft=false, width=0.25\linewidth, valign=t]
    {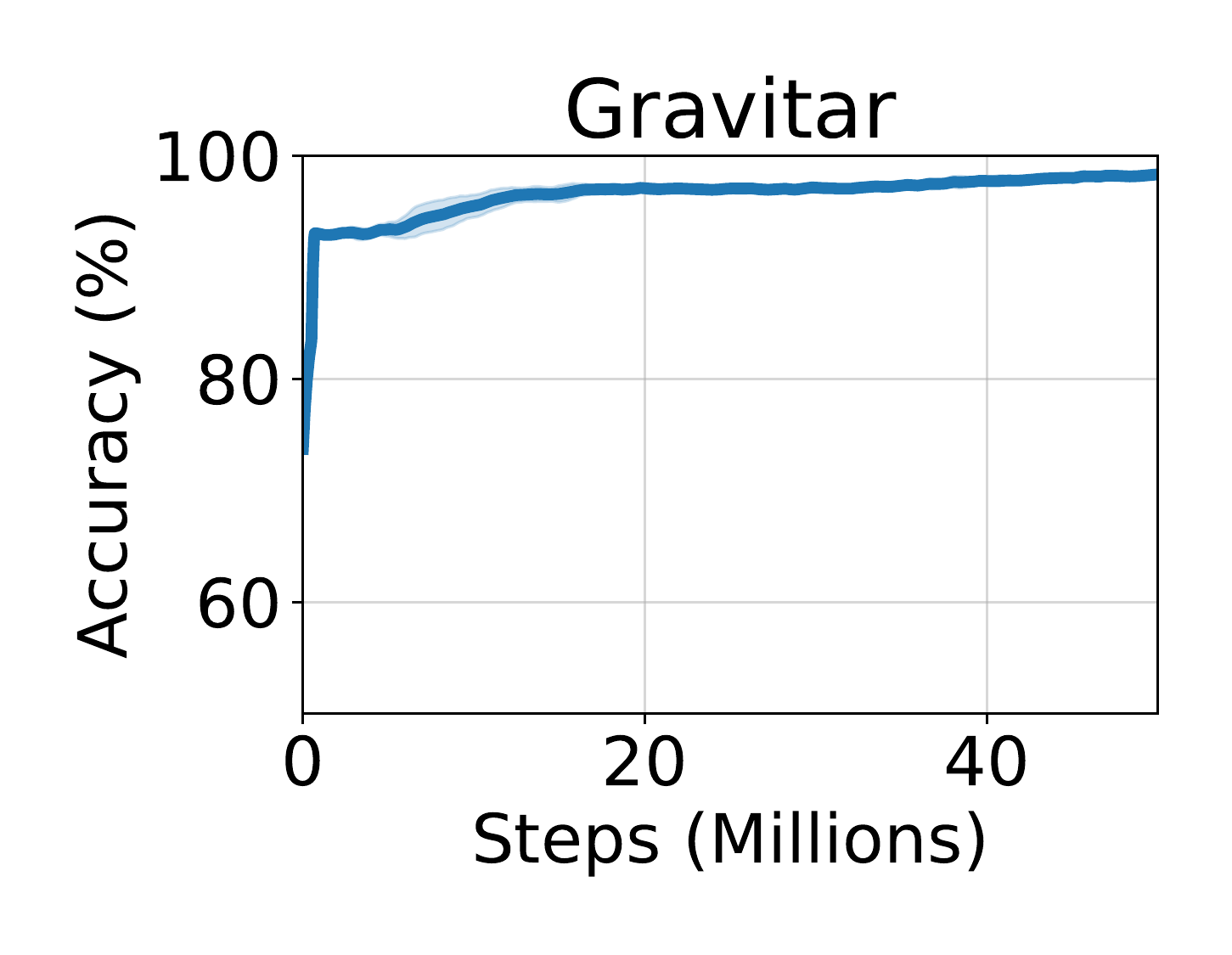}
    \hspace{-2pt}
    \includegraphics[draft=false, width=0.25\linewidth, valign=t]
    {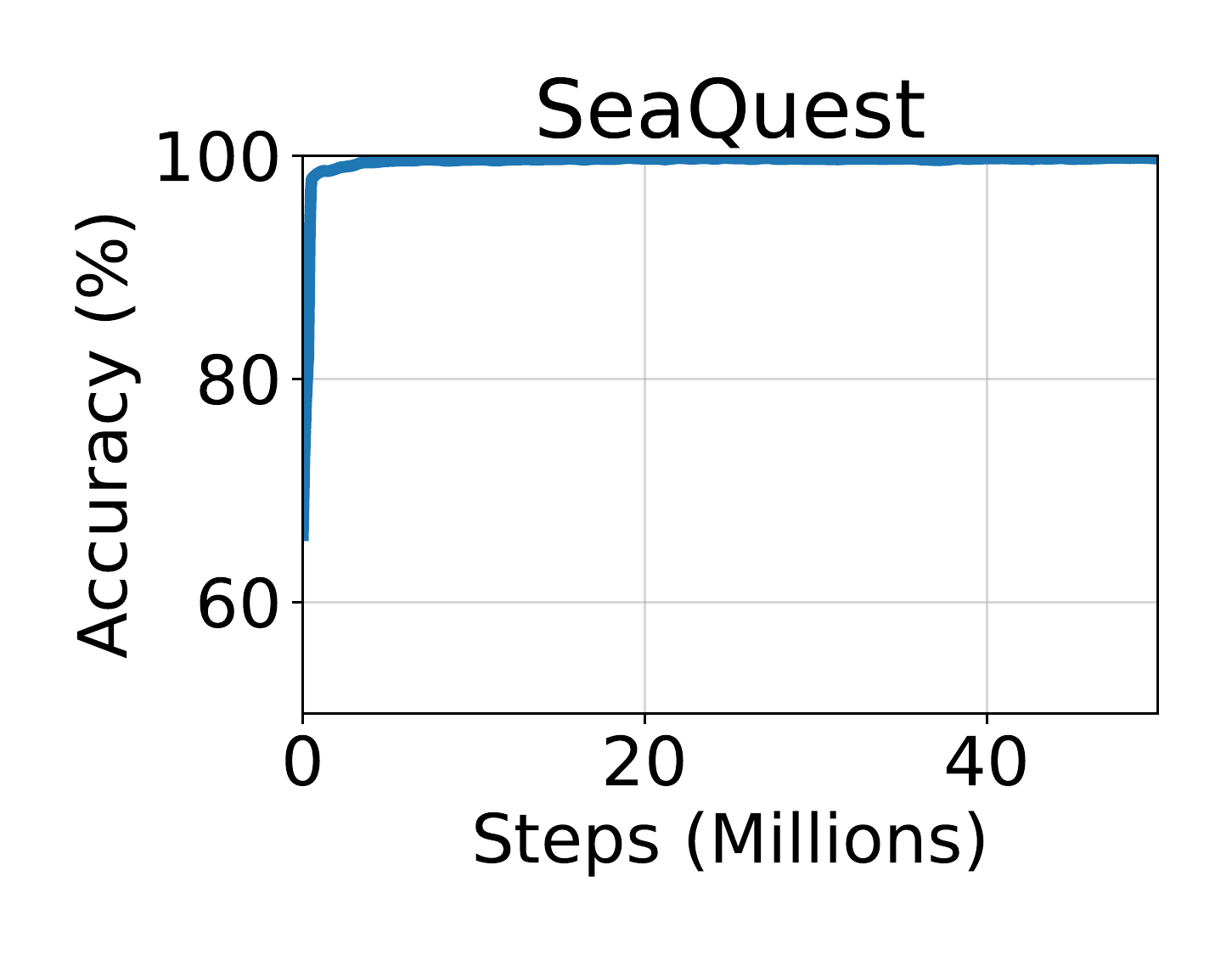}
    \hspace{-2pt}
    \includegraphics[draft=false, width=0.25\linewidth, valign=t]
    {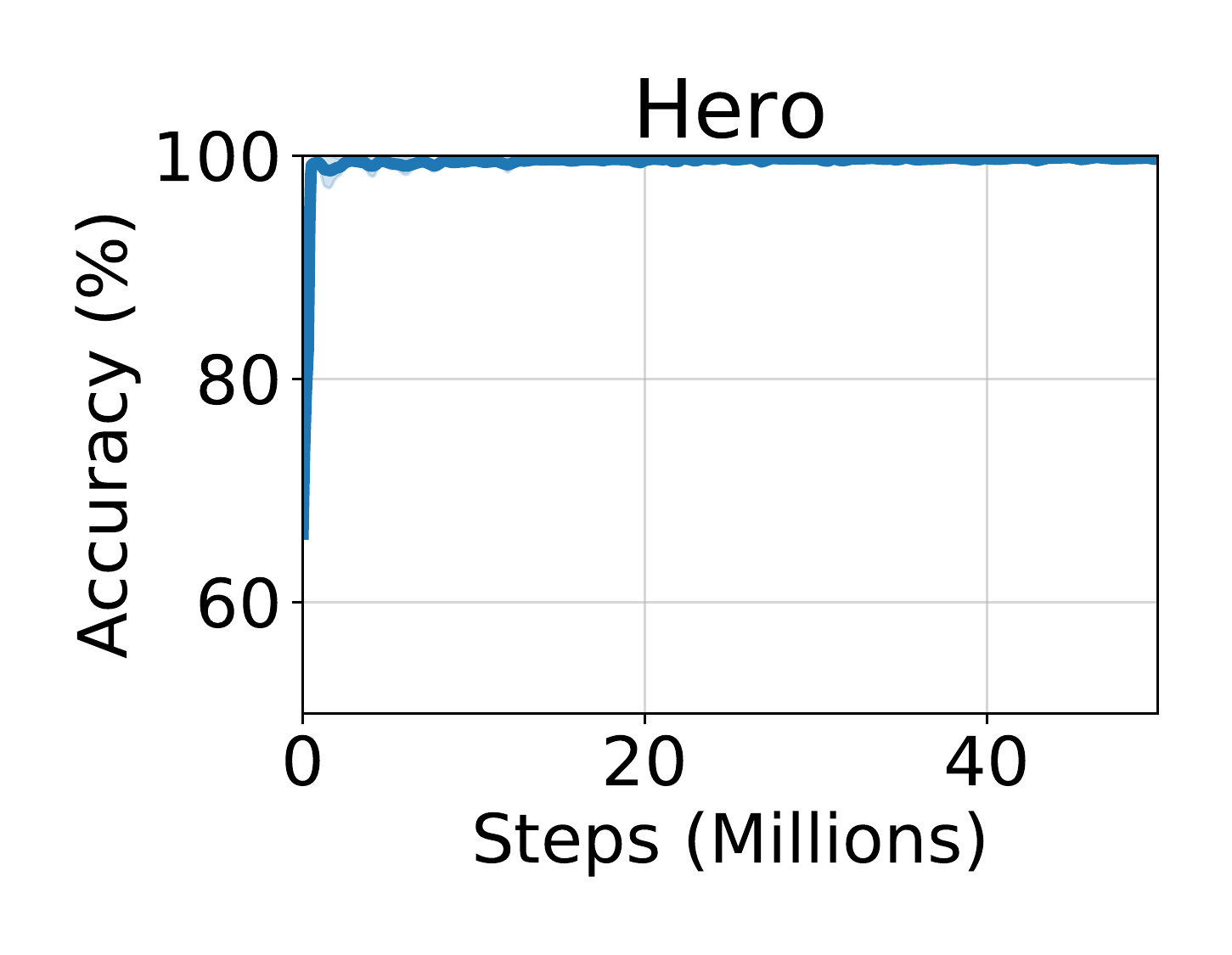}
    \hspace{-2pt}
    {
    \parbox{0.25\linewidth}{\hspace{+30pt} \small\centering (d) \hspace{+4pt}}
    \parbox{0.25\linewidth}{\hspace{+30pt} \small\centering (e) \hspace{+4pt}}
    \parbox{0.25\linewidth}{\hspace{+30pt} \small\centering (f) \hspace{+4pt}}
    }
    \vspace*{-9pt}
    \caption{
    The accuracy of the learned reachability network on (a) \montezuma (b) \freeway, (c) \mspacman, (d) \gravitar, (e) \seaquest, and (f) \hero in \atari in terms of environment steps.
    }
    \label{fig:atari_rnet}
\end{figure}
\begin{figure}[!h]
    \centering
    \includegraphics[draft=false, height=6.5\baselineskip, valign=b]{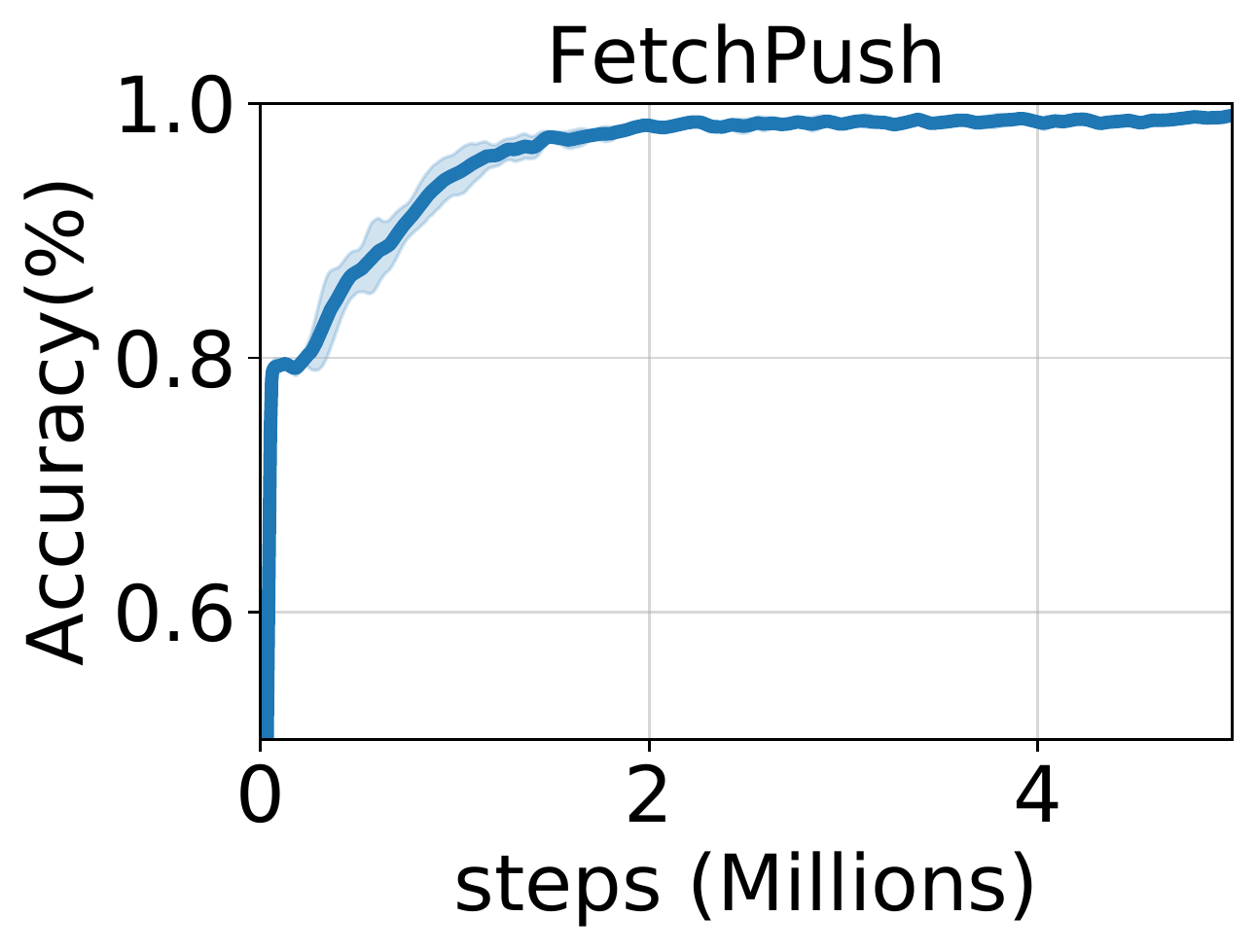}
    \includegraphics[draft=false, height=6.5\baselineskip, valign=b]{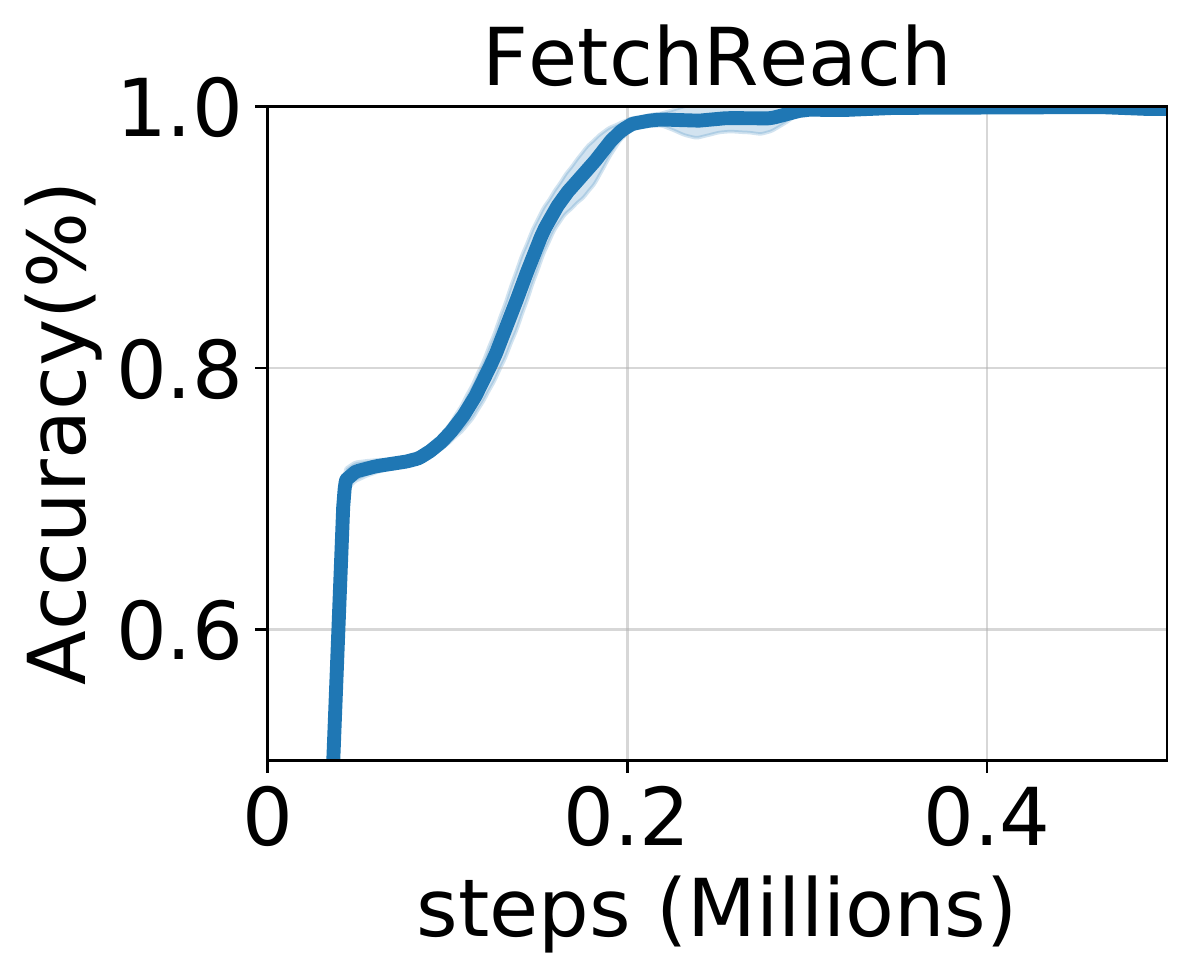}
    \includegraphics[draft=false, height=6.5\baselineskip, valign=b]{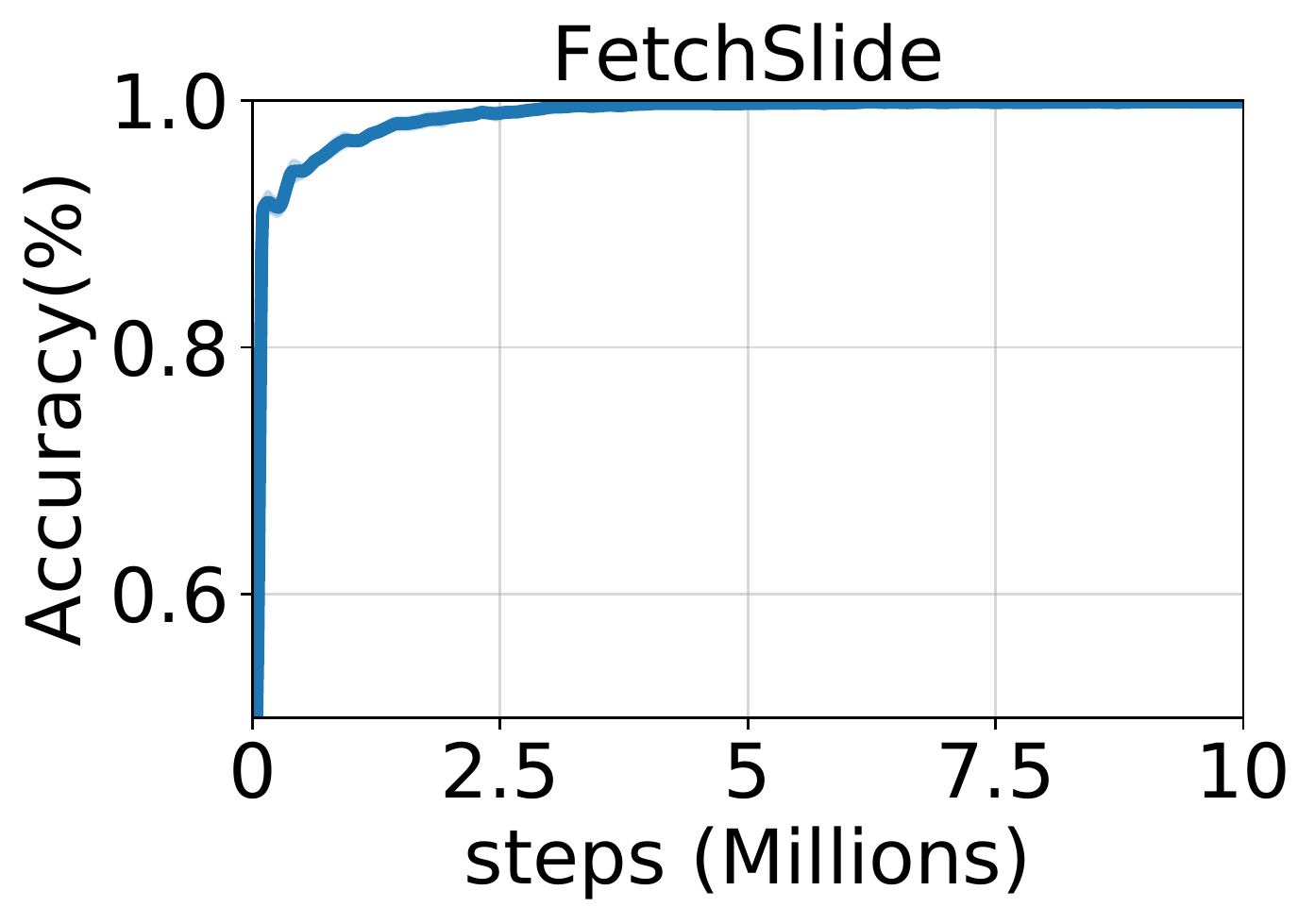}
    \includegraphics[draft=false, height=6.5\baselineskip, valign=b]{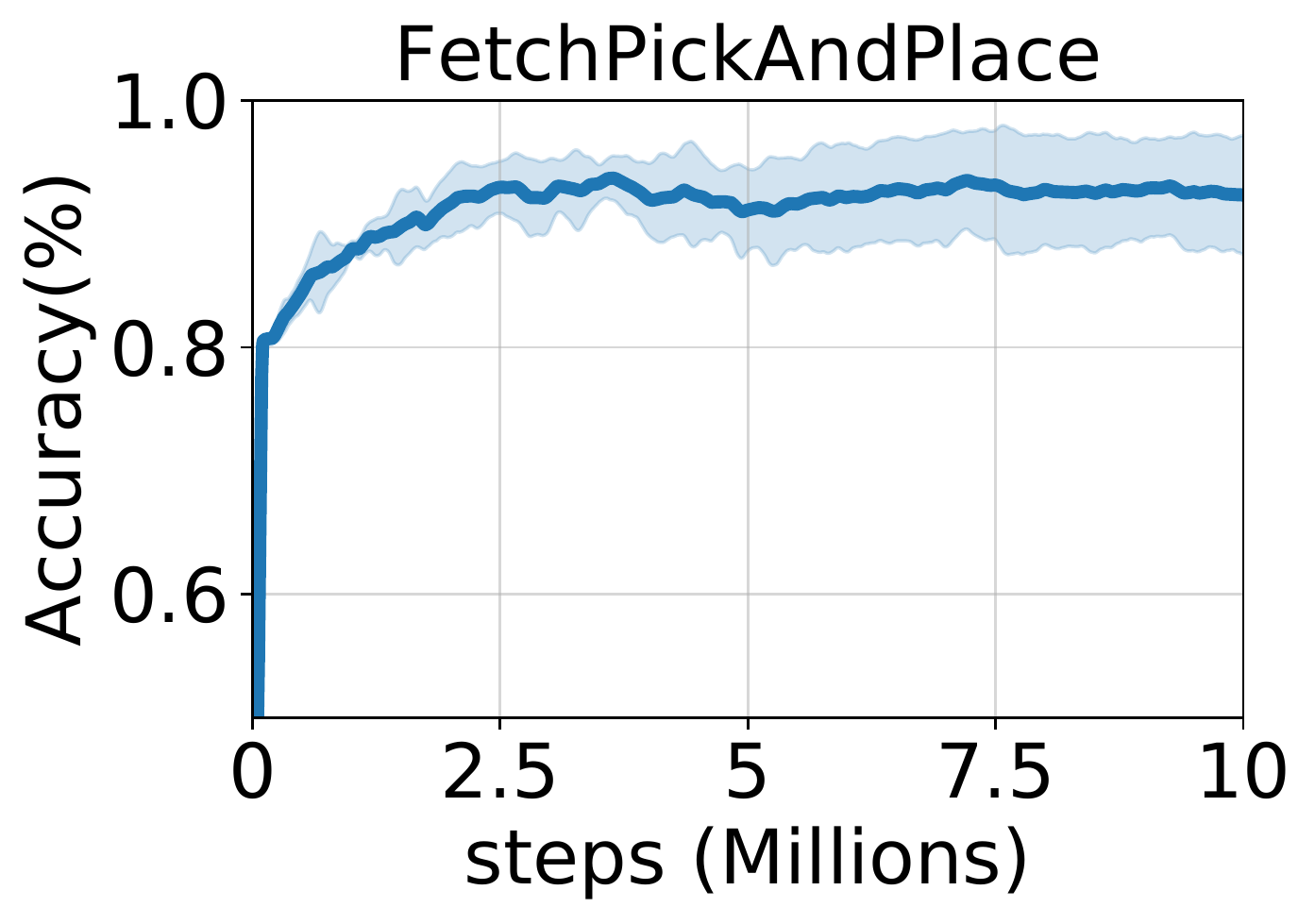}
    {
    \parbox{0.20\linewidth}{\hspace{+50pt} \small\centering (a)}
    \parbox{0.25\linewidth}{\hspace{+50pt} \small\centering (b)}
    \parbox{0.20\linewidth}{\hspace{+25pt} \small\centering (c)}
    \parbox{0.25\linewidth}{\hspace{+30pt} \small\centering (d)}
    }
    \caption{
        The accuracy of the learned reachability network on (a) \fpush{} (b) \freach{}, (c) \fslide{} and (d) \fpnp{} in \fetch{} in terms of environment steps.
    }
    \label{fig:fetch_rnet}
\end{figure}

\cutsectiondown
\subsection{Accuracy of the reachability network}
We measured the accuracy of the reachability network on \grid{}, \dmlab{}, \atari, and \fetch{} in~\Cref{fig:grid_rnet},~\Cref{fig:dmlab_rnet},~\Cref{fig:atari_rnet}, and~\Cref{fig:fetch_rnet}.
The accuracy was measured on the validation set, where we constructed the validation set by sampling 15,000 positive and negative samples respectively from the replay buffer of size 60,000 for \grid{}, \dmlab{}, and 7,500 positive and negative samples respectively from the replay buffer of size 30,000 for \atari, \fetch{}. Specifically, for an anchor $s_t$, we sample the positive sample $s_{t'}$ from $t'\in [t+1,t+k+\Delta t]$, and the negative sample $s_{t''}$ from $t''\in[t+k+\Delta t + \Delta_{-},t+k+\Delta t + 2\Delta_{-}]$.

The RNet reaches an accuracy higher than 80\% in only 0.4M steps in both \grid{} and \dmlab{}. We note that this is quite high considering the unavoidable noise in the negative samples; since the negative samples are sampled based on the temporal distance, not based on the actual reachability, they have a non-zero probability of being reachable, in which case they are in fact the positive samples. For most of the games of \atari, the accuracy of RNet was above 90\% except for \montezuma where the distribution shift in the state space occurs within a task. We note that RNet maintains a high accuracy above 80\% throughout the whole training process due to the additional stabilizing techniques such as weight decay. See~\Cref{appendix:atari-rnet-detail} for more details.
~\Cref{fig:fetch_rnet} summarizes the performance of RNet in \fetch{} tasks. Overall, RNet achieves more than 95\% accuracy in less than 1M steps on all four tasks. Considering that predicting the shortest-path distance is much harder in the continuous action domain than the discrete action domains, this result indicates that RNet can be efficiently trained from the time contrastive learning objective even in challenging continuous action domains.
\subsection{Ablation study: comparison between the learned RNet and the GT-RNet}
In this section, we study the effect of RNet's accuracy on the \sprl{}'s performance. To this end, we implement and compare the ground-truth reachability network by computing the ground-truth distance between a pair of states in \grid{}.

\begin{figure}[!h]
    \centering
    \includegraphics[draft=false, height=6.5\baselineskip, valign=b]{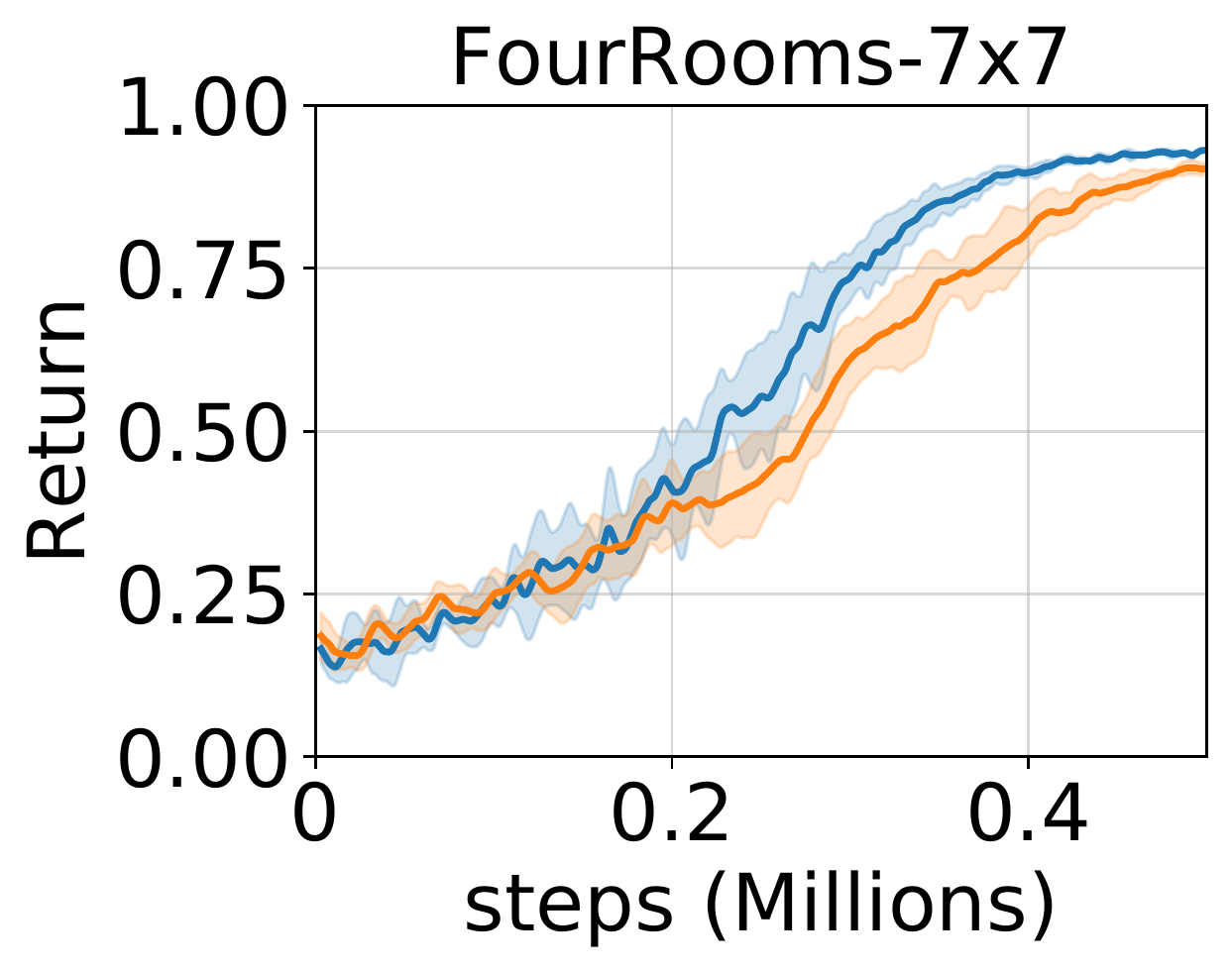}
    \includegraphics[draft=false, height=6.5\baselineskip, valign=b]{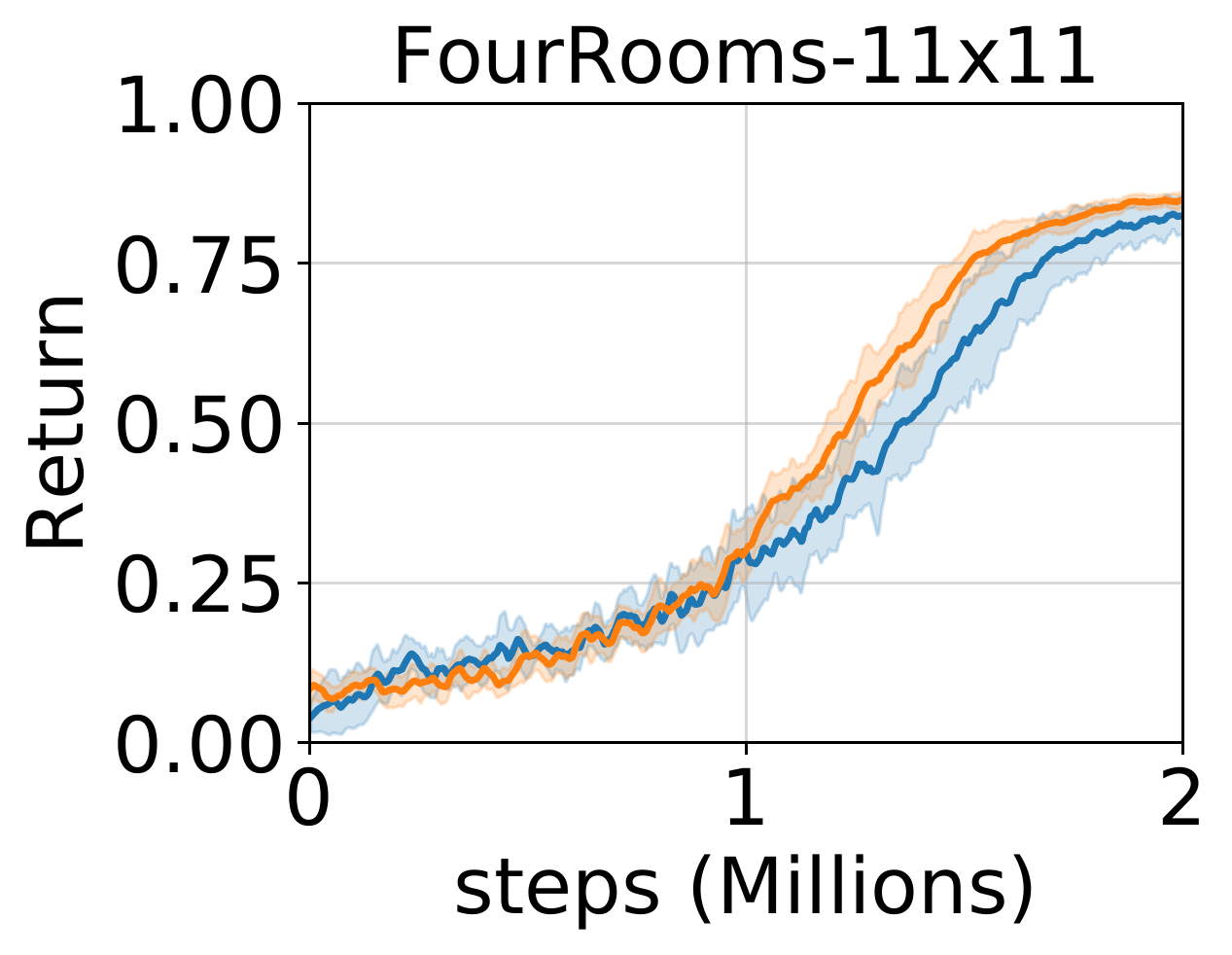}
    \includegraphics[draft=false, height=6.5\baselineskip, valign=b]{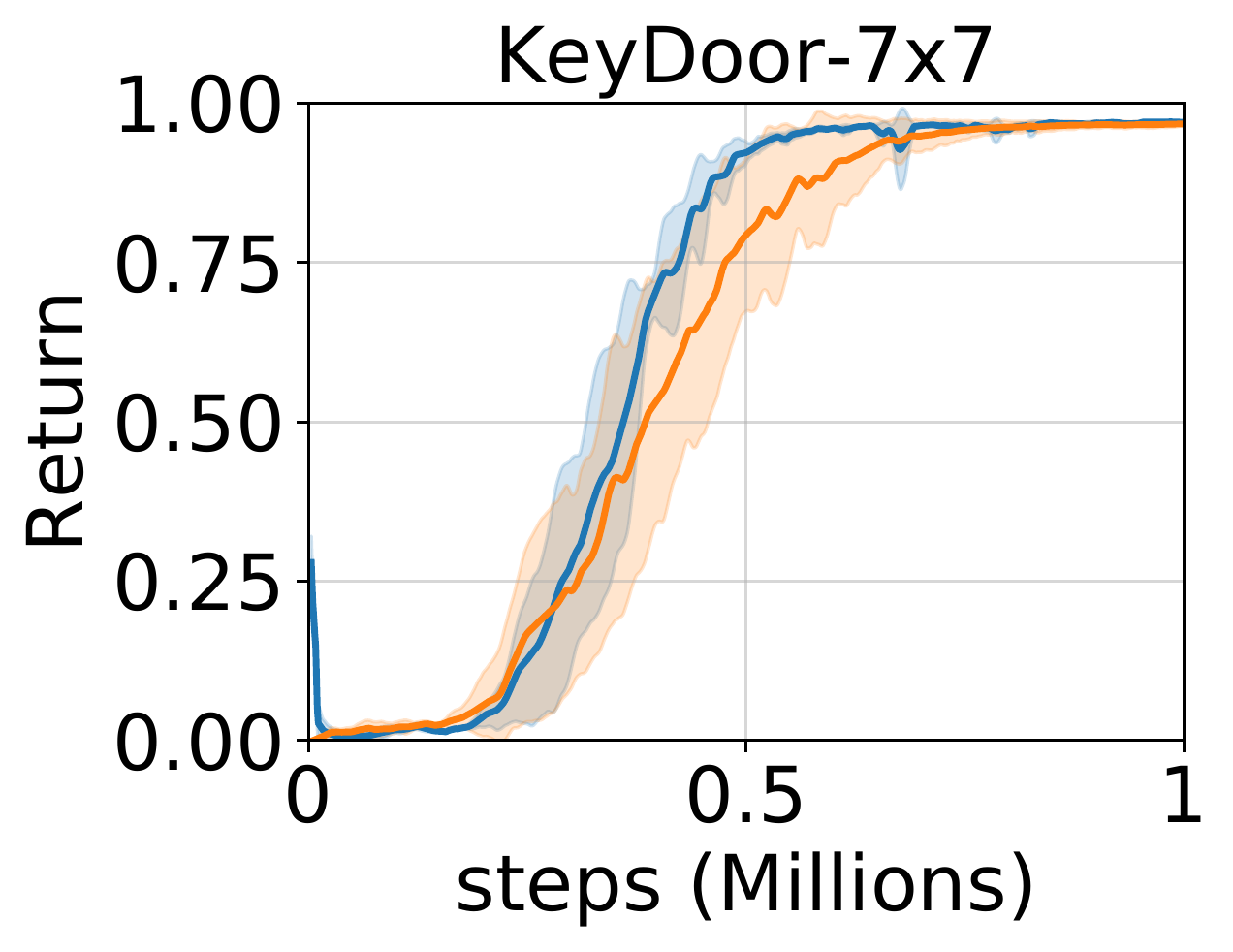}
    \includegraphics[draft=false, height=6.5\baselineskip, valign=b]{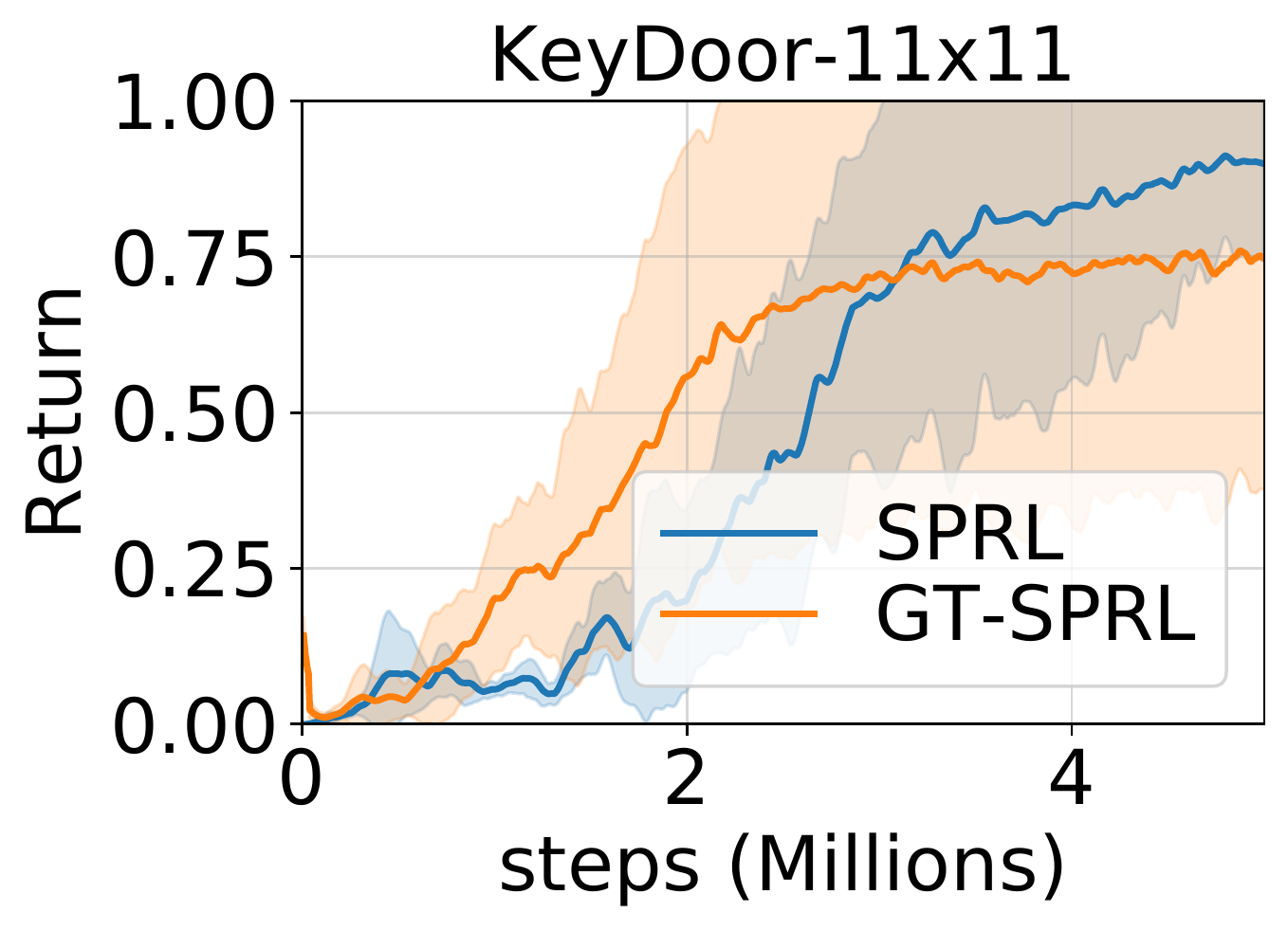}
    {
    \parbox{0.20\linewidth}{\hspace{+60pt} \small\centering (a)}
    \parbox{0.25\linewidth}{\hspace{+40pt} \small\centering (b)}
    \parbox{0.20\linewidth}{\hspace{+25pt} \small\centering (c)}
    \parbox{0.25\linewidth}{\hspace{+20pt} \small\centering (d)}
    }
    \caption{
        The accuracy of the learned reachability network on (a) \fours{} (b) \fourl{}, (c) \keys{} and (d) \keyl{} in \grid{} in terms of environment steps.
    }
    \label{fig:grid_rnet_ablation}
\end{figure}

\paragraph{Ground-truth reachability network}
Ground-truth reachability network was implemented by computing the distance between the two-state inputs, and comparing it with $k$. For the state inputs $s$ and $s'$, we roll out all possible $k$-step trajectories starting from the state $s$ using the ground-truth single-step forward model. If $s'$ is ever visited during the roll-out, the output of the $k$-reachability network is 1 and otherwise, the output is 0.
\paragraph{Result.}
We compared the performance of our \sprl{} with the learned RNet and the ground-truth RNet (\textbf{GT-SPRL}) in~\Cref{fig:grid_rnet_ablation} with the best hyperparameters.
Overall, the performance of \sprl{} and \textbf{GT-SPRL} are similar. This is partly because the learned RNet achieves quite high accuracy in the early stage of learning (see~\Cref{fig:grid_rnet}). Interestingly, we can observe that our SPRL with learned RNet performs better than SPRL with GT-RNet on \fours{} and \keys{}. This is possible since a small noise in RNet output can have a similar effect to the increased tolerance $\Delta t$ on RNet, which makes the resulting cost denser, which may be helpful depending on the tasks and hyperparameters.

\cutsectionup
\section{Qualitative analysis on \ksp cost}

\begin{figure}[!h]
    \centering
    \hspace{-10pt}
    \includegraphics[draft=false, width=0.25\linewidth, valign=t]
    {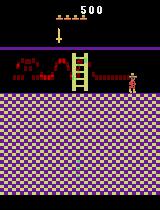}
    \hspace{+10pt}
    \includegraphics[draft=false, width=0.25\linewidth, valign=t]
    {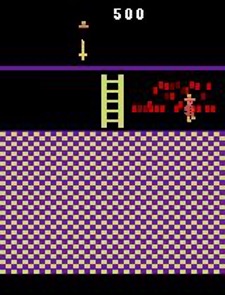}
    \hspace{-10pt}
    \\
    {
    \hspace{-10pt}
    \parbox{0.10\linewidth}{\hspace{-5pt} \small\centering (a)}
    \hspace{+75pt}
    \parbox{0.10\linewidth}{\hspace{+5pt} \small\centering (b)}
    }
    \caption{
    The visualization of \ksp and non-\ksp trajectories in \textit{Room 6} of \montezuma.
    We visualized the agent's trajectories where it receives (a) low \ksp cost (\ie, lowest 25\%) and (b) high \ksp cost (\ie, highest 25\%) for 60 steps. The intensity of the red color shows the count of the visited coordinate of the agent. In high \ksp cost trajectory, agent tends to move back and forth between the ladder and the right corner of room 6 while in low \ksp cost trajectory, agent moves forward without redundancy in the transition. %
    }
    \label{fig:sprl_bonus}
\end{figure}

We visually inspected the agent's trajectory with high and low \ksp cost to see whether \sprl{} can correctly differentiate the shortest-path trajectory.
~\Cref{fig:sprl_bonus} (a) is the visualization of the agent's trajectory with low \ksp cost (\ie, shortest-path) trajectory and~\Cref{fig:sprl_bonus} (b) shows the trajectory with high \ksp cost (\ie, non-shortest path). 
We can observe that when the agent's trajectory has many redundancies (\eg, moving back-and-forth or staying still in the same place), a high \ksp cost is given to the agent, while the agent is less penalized when it takes the shortest path.
\cutsectiondown
\cutsectionup
\newpage
\section{Experiment details of \grid{} domain}\label{appendix:minigrid-detail}
\cutsectiondown
\subsection{Environment}
\grid{} is a 2D grid-world environment with diverse predefined tasks \citep{gym_minigrid}.
It has several challenging features such as pictorial observation, random initialization of the agent and the goal, complex action space, and transition dynamics involving the agent's orientation of movement and changing object status via interaction (e.g., key-door). \paragraph{State Space.}
An observation $\mb{s}_t$ is represented as $H\times W\times C$ tensor, where $H$ and $W$ are the height and width of the map respectively, and $C$ is the features of the objects in the grid. The $(h,w)$-th element of observation tensor is $(type,\;color,\;status)$ of the object and for the coordinate of the agent, the $(h,w)$-th element is $(type,\;0 ,\;direction)$. The map size (\ie, $H\times W$) varies depending on the task; \eg, for \fours task, the map size is $7\times 7$.
\paragraph{Action Space and transition dynamics}
The episode terminates in 100 steps, and the episode may terminate earlier if the agent reaches the goal before 100 steps.
The action space consists of seven discrete actions with the following transitions.
\vspace{-5pt}
\begin{itemize}
    \item \texttt{Turn-Counter-Clockwise}: change the $direction$ counter-clockwise by 90 degree.
    \item \texttt{Turn-Clockwise}: change the $direction$ clockwise by 90 degree.
    \item \texttt{Move-Forward}: move toward $direction$ by 1 step unless blocked by other objects.
    \item \texttt{Pick-up-key}: pickup the key if the key is in front of the agent.
    \item \texttt{Drop-the-key}: drop the key in front of the agent.
    \item \texttt{Open/Close-doors}: open/close the door if the door is in front of the agent.
    \item \texttt{Optional-action}: not used
\end{itemize}
\cutitemizedown

\paragraph{Reward function.}
The reward is given only if the agent reaches the goal location, and the reward magnitude is $1 - 0.9(\text{length of episode} / \text{maximum step for an episode})$. Thus, the agent can maximize the reward by reaching the goal location in the shortest time.

\subsection{Tasks}
In \fours and \fourl, the map structure has four large rooms, and the agent needs to reach the goal. In \keys and \keyl, the agent needs to pick up the key, go to the door, and open the door before reaching the goal location.

\subsection{Architecture and hyper-parameters}
We used a simple CNN architecture similar to~\citep{mnih2015human} for the policy network.
The network consists of \texttt{Conv1(16x2x2-1/SAME)}-\texttt{CReLU}-\texttt{Conv2(8x2x2-1/SAME)}-\texttt{CReLU}-\texttt{Conv3(8x2x2-1/SAME)}-\texttt{CReLU}-\texttt{FC(512)}-\texttt{FC(action-dimension)}, where \texttt{SAME} padding ensures the input and output have the same size (\ie, width and height) and CReLU~\citep{shang2016understanding} is a non-linear activation function applied after each layer. We used Adam~\citep{kingma2014adam} optimizer to optimize the policy network.

For hyper-parameter search, we swept over a set of hyper-parameters specified in~\Cref{table:param-grid}, and chose the best one in terms of the mean AUC over all the tasks, which is also summarized in~\Cref{table:param-grid}.

\begin{table}[]
\centering
\begin{tabular}{l|c|c}
\toprule
\multicolumn{3}{c}{\ppo{}}\\
\midrule
\textbf{Hyperparameters} & \textbf{Sweep range} & \textbf{Final value} \\ [0.5ex]
\midrule
Learning rate           & 0.001, 0.002, 0.003 & 0.003 \\
Entropy                 & 0.003, 0.005, 0.01, 0.02, 0.05 & 0.01 \\
\toprule
\multicolumn{3}{c}{\icm{}}\\
\midrule
\textbf{Hyperparameters} & \textbf{Sweep range} & \textbf{Final value}\\ [0.5ex]
\midrule
Learning rate           & 0.001, 0.002, 0.003 & 0.003 \\
Entropy                 & - & 0.01 \\
Forward/Inverse model loss weight ratio & 0.2, 0.5, 0.8, 1.0 & 0.8 \\
Curiosity module loss weight & 0.03, 0.1, 0.3, 1.0 & 0.3 \\
\icm{} bonus weight & 0.1, 0.3, 1.0, 3.0 & 0.1 \\
\toprule
\multicolumn{3}{c}{\oracle{}}\\
\midrule
\textbf{Hyperparameters} & \textbf{Sweep range} & \textbf{Final value}\\ [0.5ex]
\midrule
Learning rate           & 0.001, 0.002, 0.003 & 0.003 \\
Entropy                 & - & 0.01 \\
\oracle{} bonus weight  & 0.003, 0.01, 0.03, 0.1, 0.3 & 0.01\\
\toprule
\multicolumn{3}{c}{\eco{}}\\
\midrule
\textbf{Hyperparameters} & \textbf{Sweep range} & \textbf{Final value}\\ [0.5ex]
\midrule
Learning rate           & - & 0.003 \\
Entropy                 & - & 0.01 \\
$k$                     & 3, 5, 7 & 3\\
\eco{} bonus weight     & 0.001, 0.002, 0.005, 0.01 & 0.001\\
\toprule
\multicolumn{3}{c}{\sprl{}}\\
\midrule
\textbf{Hyperparameters} & \textbf{Sweep range} & \textbf{Final value}\\ [0.5ex]
\midrule
Learning rate           & 0.003, 0.01 & 0.01 \\
Entropy                 & - & 0.01 \\
$k$                     & 2, 5 & 2\\
Tolerance ($\Delta t$)  & - & 1 \\
Negative bias ($\Delta^-$)     & 10, 20 & 20 \\
Positive bias ($\Delta^+$)     & - & 5 \\
Cost scale ($\lambda$)  & 0.001, 0.002, 0.005 & 0.002 \\
$N_{\Delta t}$ & 30, 60 & 60 \\
\bottomrule
\end{tabular}
\medskip
\caption{The range of hyperparameters sweeped over and the final hyperparameters used in \grid{} domain.}
\label{table:param-grid}
\end{table}

\cutsectionup
\section{Experiment details of \dmlab{} domain}\label{appendix:dmlab-detail}
\cutsectiondown
\subsection{Environment}
\dmlab{} is a 3D-game environment with a first-person view. Along with random initialization of the agent and the goal, complex action space including directional change, random change of texture, color, and maze structure are features that make tasks in \dmlab{} hard to be learned.
\paragraph{State Space.}
A state $\mb{s}_t$ has the dimension of $84\times 84\times 3$. The state is given as a first-person view of the map structure. We resized the $RGB$ image into $84\times 84\times 3$ $RGB$ image and normalized it by dividing the pixel value by 255.
\paragraph{Action Space and transition dynamics}
The episode terminates after the fixed number of steps regardless of the goal being achieved.
The original action space consists of seven discrete actions: \texttt{Move-Forward}, \texttt{Move-Backward}, \texttt{Strafe Left}, \texttt{Strafe Right}, \texttt{Look Left}, \texttt{Look Right}, \texttt{Look Left, and Move-Forward}, \texttt{Look Right and Move-Forward}. In our experiment, we used eight discrete actions with the additional action \texttt{Fire} as in~\citet{higgins2017darla, vezhnevets2017feudal, savinov2018episodic, espeholt2018impala, khetarpal2018attend}.

\subsection{Tasks}
We tested our agent and compared methods on three standard tasks in \dmlab{}: \dmgs{}, \dmgl{}, and \dmom{} which correspond to \texttt{explore\_goal\_locations\_small}, \texttt{explore\_goal\_locations\_large}, and \texttt{explore\_object\_rewards\_many}, respectively.
\dmgs{} and \dmgl{} have a single goal in the maze, but the size of the maze is larger in \dmgl{} than \dmgs{}. The agent and goal locations are randomly set at the beginning of the episode and the episode length is fixed to 1,350 steps for \dmgs{} and 1,800 steps for \dmgl{}. When the agent reaches the goal, it positively rewards the agent and the agent is re-spawned in a random location without terminating the episode, such that the agent can reach the goal multiple times within a single episode. Thus, the agent's goal is to reach the goal location as many times as possible within the episode length.
\dmom{} has multiple objects in the maze, where reaching the object positively rewards the agent and the object disappears. The episode length is fixed to 1,800 steps. The agent's goal is to gather as many objects as possible within the episode length.
\begin{algorithm}[t]
\caption{Sampling the triplet data from an episode for RNet training}\label{alg:sample}
\begin{algorithmic}[1]
\REQUIRE{ Hyperparameters: $k\in\mbb{N}$, Positive bias $\Delta^{+}\in\mbb{N}$, Negative bias $\Delta^{-}\in\mbb{N}$ }
\STATE Initialize $t_{\text{anc}}\leftarrow 0$.
\STATE Initialize $S_{\text{anc}}=\emptyset$, $S_{+}=\emptyset$, $S_{-}=\emptyset$.
\WHILE{$t_{\text{anc}} < T$}
    \STATE $S_{\text{anc}} = S_{\text{anc}} \cup \{s_{t_{\text{anc}}}\}$.
	\STATE $t_{+}=\text{Uniform}(t_{\text{anc}}+1, t_{\text{anc}}+k)$.
	\STATE $t_{-}=\text{Uniform}(t_{\text{anc}}+k+\Delta^{-}, T)$.
	\STATE $S_{+} = S_{+} \cup \{s_{t_{+}}\}$.
	\STATE $S_{-} = S_{-} \cup \{s_{t_{-}}\}$.
	\STATE $t_{\text{anc}}=\text{Uniform}(t_{+}+1, t_{+}+\Delta^{+})$.
\ENDWHILE
\STATE Return $S_{\text{anc}}, S_{+}, S_{-}$
\end{algorithmic}
\end{algorithm}

\subsection{Reachability network Training}
\label{sec:appendix-reachability-training}
Similar to~\citet{savinov2018episodic}, we used the following contrastive loss for training the reachability network:
\begin{align} %
    \mathcal{L}_{\text{Rnet}} = - \log\left(\text{Rnet}_{k-1}(s_{\text{anc}}, s_+) \right) - \log\left(1 - \text{Rnet}_{k-1}(s_{\text{anc}}, s_-) \right),\label{eq:rnet-loss2}
\end{align}
where $s_{\text{anc}}, s_+, s_-$ are the anchor, positive, and negative samples, respectively. The anchor, positive and negative samples are sampled from the same episode, and their time steps are sampled according to~\Cref{alg:sample}.
The RNet is trained in an off-policy manner from the replay buffer with the size of 60K environment steps collecting agent's online experience.
We found that adaptive scheduling of RNet is helpful for faster convergence of RNet. Out of 20M total environment steps, for the first 1M, 1M, and 18M environment steps, we updated RNet every $6K$, $12K$, and $36K$ environment steps, respectively. For all three environments of \dmlab{}, RNet accuracy was $\sim0.9$ after 1M steps.

\paragraph{Multiple tolerance.}
In order to improve the stability of Reachability prediction, we used the statistics over multiple samples rather than using a single-sample estimate as suggested in~\Cref{eq:cost-tol}.
As a choice of sampling method, we simply used multiples of tolerance. In other words, given $s_{t-(k+\Delta{}t)}$ and $s_t$ as inputs for reachability network, we instead used $s_{t-(k+n\Delta{}t)}$ and $s_t$ where $1 \le n \le N_{\Delta t}$, $n \in \mathbb{N}$ and $N_{\Delta t}$ is the number of tolerance samples. We used 90-percentile of $N_{\Delta t}$ outputs of reachability network, Rnet$_{k-1}(s_{t-(k+n\Delta{}t)}, s_t)$, as in \citep{savinov2018episodic} to get the representative of the samples.

\subsection{Architecture and hyper-parameters}
Following~\citep{savinov2018episodic}, we used the same CNN architecture used in~\citep{mnih2015human}. %

For \sprl{}, we used a smaller reachability network (RNet) architecture compared to \eco{} to reduce the training time.
The RNet is based on the siamese architecture with two branches. 
Following~\citep{savinov2018episodic}, \eco{} used Resnet-18~\citep{he2016deep} architecture with 2-2-2-2 residual blocks and 512-dimensional output fully connected layer to implement each branch. For \sprl{}, we used Resnet-12 with 2-2-1 residual blocks and 512-dimensional output fully connected layer to implement each branch.
The RNet takes two-states as inputs, and each state is fed into each branch. The outputs of the two branches are concatenated and forwarded to three\footnote{\citet{savinov2018episodic} used four 512-dimensional fully-connected layers.} 512-dimensional fully-connected layers to produce one-dimensional sigmoid output, which predicts the reachability between two state inputs. We also resized the observation to the same dimension as policy (\ie, $84\times 84\times 3$, which is smaller than the original $120\times 160\times 3$ used in~\citep{savinov2018episodic}).

For all the baselines (\ie, \ppo{}, \eco{}, \icm{}, and \oracle{}), we used the best hyperparameter used in~\citep{savinov2018episodic}. For \sprl{}, we searched over a set of hyperparameters specified in~\Cref{table:param-dmlab}, and chose the best one in terms of the mean AUC over all the tasks, which is also summarized in~\Cref{table:param-dmlab}.
\begin{table}[]
\centering
\begin{tabular}{l|c|c}
\toprule
\textbf{Hyperparameters for \sprl{}} & \textbf{Sweep range} & \textbf{Final value}\\ [0.5ex]
\midrule
Learning rate           & - & 0.0003 \\
Entropy                 & - & 0.004 \\
$k$                     & 3, 10, 30 & 10\\
Tolerance ($\Delta t$)  & 1, 3, 5 & 1 \\
Negative bias ($\Delta^-$)     & 5, 10, 20 & 20 \\
Positive bias ($\Delta^+$)     & - & 5 \\
Cost scale ($\lambda$)  & 0.02, 0.06, 0.2 & 0.06 \\
Optimizer               & - & Adam \\
$N_{\Delta t}$ & - & 200 \\
\bottomrule
\end{tabular}
\medskip
\caption{The range of hyperparameters sweeped over and the final hyperparameters used for our \sprl{} method in \dmlab{} domain.}
\label{table:param-dmlab}
\end{table}

\cutsectionup
\section{Experiment details of \atari domain}\label{appendix:atari-detail}
\cutsectiondown

\subsection{Environment}
\atari is an important and prominent benchmark in deep reinforcement learning with a high-dimensional visual input. One of the main benefits of using \atari as a testbed is that it covers not only navigational tasks but various tasks such as avoiding and destroying enemies by firing a bullet or changing the map structure using bombs as explained in~\citep{bellemare2013arcade}. Because of the diversity of the task \atari is covering, solving \atari shows that the algorithm has a certain degree of generality.
There exists a variety of preprocessing details for \atari. We mostly followed the implementation of OpenAI Baselines~\citep{baselines}. For detailed information, see~\Cref{table:param-atari-env}.

\begin{table}[!ht]
\centering
\begin{tabular}{l|c}
\toprule
\textbf{Parameter} & \textbf{Value}\\ [0.5ex]
\midrule
Image Width           & 84 \\
Image Height          & 84 \\
Grayscaling           & Yes\\
Number of Actions   & 18\\
Action Repetitions  & 4 \\
Frame Stacking     & 4 \\
End of episode when life lost     & No \\
Reward Clipping  & [-1,1] \\
Discount($\gamma$)               & 0.99 \\
Max Episode Length & 10000 \\
Number of parallel workers          & 12 \\
\bottomrule
\end{tabular}
\medskip
\caption{Preprocessing details for \atari}
\label{table:param-atari-env}
\end{table}

\paragraph{State Space.}
A state $\mb{s}_t$ is represented as $84\times 84\times 4$. We stacked 4 consecutive frames achieved by taking the same action 4 times in a row and resized $RGB$ image into $84\times 84\times 1$ gray image and normalized by dividing the pixel value by 255.

\paragraph{Action Space and transition dynamics}
The episode terminates when the agent loses all of the lives given.
The action space consists of eighteen discrete actions as in~\citep{bellemare2013arcade}.

\subsection{Various Tasks of \atari: Navigational and Non-navigational}
We tested our agent and compared methods on navigational and \textbf{non-navigational tasks} in \atari: \montezuma, \freeway, \mspacman, \gravitar, \seaquest, \hero.
\montezuma is a famous game as a hard exploration game in \atari. An agent should pick up the items such as a key to open the door or a knife to destroy the enemy. In \freeway, an agent should cross the road while avoiding the car. \mspacman is a game where an agent should eat the items and avoid the enemy. Three games mentioned until now have a  navigational feature meaning that the agent can move toward a certain coordinate to get the score. However, the three games to be mentioned have a non-navigational feature meaning that the agent should not only get to a certain coordinate but also use specific action to get the score. In \gravitar and \seaquest, an agent should shoot a bullet. In \hero, an agent can install a bomb to break the wall and move forward. By adding \gravitar, \seaquest, and \hero to our testbed, we evaluated how \sprl{} performs in non-navigational tasks.

\subsection{Reachability network Training}\label{appendix:atari-rnet-detail}

We followed the details of reachability network training for \dmlab{} except for 1) replay buffer size and 2) reachability network training frequency. We changed the size of the replay buffer for the reachability network from $60K$ environment steps to $30K$ environment steps. To avoid overfitting of the reachability network, we enlarged the reachability network training frequency from 6K environment steps to 150K environment steps after the initial 1M environment steps.

\paragraph{Stabilizing Reachability Network.}
In some of the games of \atari, within a task exists a distributional shift in the state space that hinders stable reachability network training. Therefore we had to use some techniques to mitigate the instability problem of the reachability network: Weight decay and Label smoothing. We used weight decay with the factor of 0.03 and label smoothing of 0.1. Also instead of using current and future states as inputs, we used current and pixel-level subtraction of current and future states which stabilized the learning.

\subsection{Architecture and hyper-parameters}
For the policy architecture, we used the same CNN architecture used in~\citet{mnih2015human}.
For the reachability network, we used the same architecture used for \dmlab{} except for the input layer. For the input layer, we used $(\mb{s}_t, \mb{s}_t - \mb{s}_{t-k})$ instead of $(\mb{s}_t, \mb{s}_{t-k})$. This change helped the reachability network to avoid suffering from overfitting when the distribution shift in the state space occurs.

For hyper-parameter search, we swept over a set of hyper-parameters specified in~\Cref{table:param-atari} and chose the best one in terms of the mean AUC over all the tasks, which is also summarized in~\Cref{table:param-atari}.

\begin{table}[]
\centering
\begin{tabular}{l|c|c}
\toprule
\multicolumn{3}{c}{\ppo{}}\\
\midrule
\textbf{Hyperparameters} & \textbf{Sweep range} & \textbf{Final value} \\ [0.5ex]
\midrule
Learning rate           & 0.0001, 0.0002, 0.0003, 0.0005 & 0.0005 \\
Entropy                 & 0.001, 0.003, 0.005, 0.01, 0.03 & 0.01 \\
\toprule
\multicolumn{3}{c}{\icm{}}\\
\midrule
\textbf{Hyperparameters} & \textbf{Sweep range} & \textbf{Final value}\\ [0.5ex]
\midrule
Learning rate           & 0.0005 & 0.0005 \\
Entropy                 & - & 0.01 \\
Forward/Inverse model loss weight ratio & 1.0 & 1.0 \\
Curiosity module loss weight & 1.0 & 1.0 \\
\icm{} bonus weight & 0.0001, 0.0003, 0.001, 0.003, 0.01 & 0.01 \\
\toprule
\multicolumn{3}{c}{\eco{}}\\
\midrule
\textbf{Hyperparameters} & \textbf{Sweep range} & \textbf{Final value}\\ [0.5ex]
\midrule
Learning rate           & 0.0001, 0.0003, 0.0005 & 0.0005 \\
Entropy                 & - & 0.01 \\
\eco{} bonus weight     & 0.001, 0.003, 0.01, 0.03, 0.1 & 0.001\\
\toprule
\multicolumn{3}{c}{\sprl{}}\\
\midrule
\textbf{Hyperparameters} & \textbf{Sweep range} & \textbf{Final value}\\ [0.5ex]
\midrule
Learning rate           &  0.0003, 0.0005 & 0.0005 \\
Entropy                 & - & 0.01 \\
\sprl{} cost scale ($\lambda$)  & 0.01, 0.03, 0.05, 0.1 & 0.05 \\
\toprule
\multicolumn{3}{c}{Reachability network (for \eco{} and \sprl{})}\\
\midrule
$k$                     & 5, 8, 12, 15 & 12\\
Tolerance ($\Delta t$)  & - & 1 \\
Negative bias ($\Delta^-$)     & 80, 100, 120 & 80 \\
Positive bias ($\Delta^+$)     & - & 5 \\
$N_{\Delta t}$ & 30, 50, 100, 200, 400 & 200 \\
\bottomrule
\end{tabular}
\medskip
\caption{The range of hyperparameters sweeped over and the final hyperparameters used in \atari domain.}
\label{table:param-atari}
\end{table}

\cutsectionup
\section{Experiment details of \fetch domain}\label{appendix:fetch-detail}
\cutsectiondown

\subsection{Environment}
\fetch{} is a continuous control environment with a two-fingered gripper. By controlling the gripper, specific interaction between the gripper and the object needs to be accomplished for a given task. The agent and the goal locations are randomly initialized at each episode and these features make the environment difficult to solve.

\paragraph{State Space.}
At each time step $t$, the state input $s_t$ is a vector ($16$-dimensional for \freach{} and $31$-dimensional for \fpush{}, \fslide{}, and \fpnp{}) consisting of the location and the velocity of the gripper. When the object exists in the task, the location, rotation, linear and angular velocities of the object are also included in the state $s_t$.

\paragraph{Action Space}
The action space is continuous. Actions are total 4 dimensional: 3 dimensions for gripper movement and 1 dimension for the opening of the gripper. 

\paragraph{Modification in the environment}
We made two modifications in the episode termination and the reward function to make it a sparse-reward task.
The agent receives +1 reward if the agent reaches the goal and 0 rewards otherwise. Also, the episode terminates when the agent reaches the goal such that the agent can receive a non-zero reward at most once in an episode.

\subsection{Tasks}
Four tasks are available in the \fetch{} domain: \fpush{}, \freach{}, \fslide{}, \fpnp{}
\begin{itemize}
  \item \fpush{} : The goal is to push a box to a target location. The gripper can only push in this task since fingers are not controllable.
  \item \freach{} : The goal is to move the gripper to a target location. This is the easiest task in \fetch{} since moving the gripper is a fundamental skill required in all four tasks.
  \item \fslide{} : The goal is to hit the object, let the object slide, and stop at the desired location by friction.
  \item \fpnp{} : The goal is to grasp a box and move the box to the target location.
\end{itemize}

\subsection{Reward Normalization}\label{appendix:fetch-reward-normalization}

We used reward normalization analogous to~\citet{burda2018large}, \ie, ``dividing the rewards by a running estimate of the standard deviation of the sum of discounted rewards''. Reward normalization has been particularly effective in the \fetch{} environment on every algorithm. We leave the analysis of this phenomenon as future work.

\subsection{Architecture and hyper-parameters}
We used a simple MLP architecture for policy network.
The network consists of \texttt{FC(256)}-\texttt{ReLU}-\texttt{FC(256)}-\texttt{ReLU}-\texttt{FC(256)}-\texttt{ReLU}-\texttt{FC(256)}-\texttt{ReLU}-\texttt{FC(action-dimension)}. The reachability network was also a simple MLP network. The network consists of \texttt{FC(512)}-\texttt{BatchNorm}-\texttt{ReLU}-\texttt{FC(512)}-\texttt{BatchNorm}-\texttt{ReLU}-\texttt{FC(512)}-\texttt{BatchNorm}-\texttt{ReLU}-\texttt{FC(512)}-\texttt{BatchNorm}-\texttt{ReLU}-\texttt{FC(1)}-\texttt{Sigmoid}. We used Adam~\citep{kingma2014adam} optimizer to optimize both networks.

For hyper-parameter search, we swept over a set of hyper-parameters specified in~\Cref{table:param-fetch} and chose the best one in terms of the mean AUC over all the tasks, which is also summarized in~\Cref{table:param-fetch}.

\begin{table}[ht]
\centering
\begin{tabular}{l|c|c}
\toprule
\multicolumn{3}{c}{\ppo{}}\\
\midrule
\textbf{Hyperparameters} & \textbf{Sweep range} & \textbf{Final value} \\ [0.5ex]
\midrule
Learning rate           & - & 0.0005 \\
Entropy                 & 0.001, 0.003, 0.01 & 0.01 \\
Action Noise            & - & 0.003 \\
\toprule
\multicolumn{3}{c}{\icm{}}\\
\midrule
\textbf{Hyperparameters} & \textbf{Sweep range} & \textbf{Final value}\\ [0.5ex]
\midrule
Learning rate           & - & 0.0005 \\
Entropy                 & - & 0.001 \\
Action Noise            & - & 0.003 \\
Forward/Inverse model loss weight ratio & - & 1.0 \\
Curiosity module loss weight & - & 1.0 \\
\icm{} bonus weight & 0.0001, 0.0002, 0.0003, 0.0005, 0.001, 0.003, 0.01 & 0.001 \\
\toprule
\multicolumn{3}{c}{\eco{}}\\
\midrule
\textbf{Hyperparameters} & \textbf{Sweep range} & \textbf{Final value}\\ [0.5ex]
\midrule
Learning rate           & - & 0.0005 \\
Entropy                 & - & 0.001 \\
Action Noise            & - & 0.003 \\
\eco{} bonus weight     & 0.0001, 0.0002, 0.0003, 0.0005, 0.001, 0.003, 0.01 & 0.0001 \\
Bias                    & 0.5, 1.0 & 1.0 \\
\toprule
\multicolumn{3}{c}{\sprl{}}\\
\midrule
\textbf{Hyperparameters} & \textbf{Sweep range} & \textbf{Final value}\\ [0.5ex]
\midrule
Learning rate           &  - & 0.0005 \\
Entropy                 & - & 0.001 \\
\sprl{} cost scale ($\lambda$)  & 0.0001, 0.0002, 0.0003, 0.0005, 0.001, 0.003, 0.01 & 0.05 \\
Bias                    & 0.5, 1.0 & 1.0 \\
\toprule
\multicolumn{3}{c}{Reachability network (for \eco{} and \sprl{})}\\
\midrule
$k$                     & - & 10\\
Tolerance ($\Delta t$)  & - & 1 \\
Negative bias ($\Delta^-$)     & 1, 2, 3, 5, 8, 12 & 12 \\
Positive bias ($\Delta^+$)     & - & 5 \\
$N_{\Delta t}$ & - & 20 \\
\bottomrule
\end{tabular}
\medskip
\caption{The range of hyperparameters swept over and the final hyperparameters used in \fetch{} domain.}
\label{table:param-fetch}
\end{table}

\clearpage
\cutsectionup
\section{Option framework-based formulation}
\label{appendix:option-framework}
\cutsectiondown
\subsection{Preliminary: option framework}
Options framework~\citep{sutton1998between} defines options as a generalization of actions to include a temporally extended series of action. Formally, options consist of three components: a policy $\pi: \mathcal{S} \times \mathcal{A} \rightarrow$ $[0,1],$ a termination condition $\beta: \mathcal{S}^{+} \rightarrow[0,1],$ and an initiation set $\mathcal{I} \subseteq \mathcal{S} .$ An option $\langle\mathcal{I}, \pi, \beta\rangle$ is available in state $s$ if and only if $s \in \mathcal{I}$.
If the option is taken, then actions are selected according to $\pi$ until the option terminates stochastically according to $\beta $. 
Then, the option-reward and option-transition models are defined as
\begin{align}
r_{s}^{o} &= \mathbb{E}\left\{r_{t+1}+\gamma r_{t+2}+\cdots+\gamma^{k-1} r_{t+k} \mid E(o, s, t)\right\}\label{eq:opt-rew}\\
P_{s s^{\prime}}^{o}&=\sum_{k=1}^{\infty} p\left(s^{\prime}, k\right) \gamma^{k}\label{eq:opt-trans}
\end{align}
where $t+k$ is the random time at which option $o$ terminates, $E(o, s, t)$ is the event that option $o$ is initiated in state $s$ at time $t$, and $p(s', k)$ is the probability that the option terminates in $s'$ after $k$ steps. 
Using the option models, we can re-write Bellman equation as follows:
\begin{align}
V^\pi(s) &= \mbb{E}\left[
r_{t+1}+\cdots+\gamma^{k-1}r_{t+k}+\gamma^k V^\pi(s_{t+k})
\right],\\
&=\sum_{o\in\mc{O}} Pr[ E(o, s) ] \left[ r_s^{o} + \sum_{s'} P_{ss'}^oV^\pi(s') \right].\label{eq:opt-bellman}
\end{align}
where $t+k$ is the random time at which option $o$ terminates and $E(o, s)$ is the event that option $o$ is initiated in state $s$.

\subsection{Option-based view-point of shortest-path constraint}
In this section, we present an option framework-based viewpoint of our shortest-path (SP) constraint. We will first show that a (sparse-reward) MDP can be represented as a weighted directed graph where nodes are rewarding states, and edges are options. Then, we show that a policy satisfying SP constraint also maximizes the option-transition probability $P_{s s^{\prime}}^{o}$.

For a given MDP $\mathcal{M}=(\mc{S, A, R, P},\rho, \bar{\mc{S}})$, let $\mc{S}^{R}=\{s|R(s)\neq 0\}\subset\mc{S}$ be the set of all rewarding states, where $R(s)$ is the reward function upon arrival to state $s$. In sparse-reward tasks, it is assumed that $|\mc{S}^{R}| << |\mc{S}|$.
Then, we can form a weighted directed graph $G^\pi=(\mc{V}, \mc{E})$ of policy $\pi$ and given MDP. 
The vertex set is defined as $\mc{V}=\mc{S}^{R}\cup\rho_0\cup\bar{\mc{S}}$ where $\mc{S}^{R}$ is rewarding states, $\rho_0$ is the initial states, and $\bar{\mc{S}}$ is the terminal states. 
Similar to the path set in~\Cref{def:path-set-pi-nr}, let $\mc{T}_{s\rightarrow s'}$ denotes a set of paths transitioning from one vertex $s\in \mc{V}$ to another vertex $s'\in \mc{V}$:
\begin{align}
\mc{T}_{s\rightarrow s'}=\{\tau|s_0=s, s_{\ell(\tau)}=s', \{s_t\}_{0<t<\ell(\tau)}\cap \mc{V}=\emptyset  \}.
\end{align}
Then, the edge from a vertex $s\in \mc{V}$ to another vertex $s'\in \mc{V}$ is defined by an (implicit) option tuple: $o(s, s') = (\mc{I}, \pi, \beta)_{(s,s')}$, where $\mc{I}=\{s\}$, $\beta(s)=\mbb{I}(s=s')$, and
\begin{align}
     \pi^{(s, s')}(\tau)= 
     \begin{cases}
        \frac{1}{Z}\pi (\tau)   &   \text{for }\tau\in\mc{T}_{s\rightarrow s'}\\
        0                       &   \text{otherwise}
     \end{cases},
\end{align}
where $Z$ is the partition function to ensure $\int\pi^{(s, s')}(\tau)d\tau=1$. 
Following \Cref{eq:opt-rew}, the option-reward is given as
\begin{align}
  r^{\pi}_{s, s'}
&=\mbb{E}^{\pi^{(s,s')}}\left[ r_{t+1}+\gamma r_{t+2}+\cdots+\gamma^{k-1}r_{t+k} \mid E(o^{(s,s')}, s, t) \right],\\
&=\mbb{E}^{\pi^{(s,s')}}\left[ \gamma^{k-1}r_{t+k} \mid E(o^{(s,s')}, s, t)  \right]\label{eq:opt-rew2},
\end{align}
where $t+k$ is the random time at which option $o(s, s')$ terminates, and $E(o, s, t)$ is the event that option $o(s, s')$ is initiated in state $s$ at time $t$. Note that in the last equality, $r_{t+1}=\cdots=r_{t+k-1}=0$ holds since $\{s_{t+1},\ldots, s_{t+k-1}\} \cap \mc{V}=\emptyset$ from the definition of option policy $\pi^{(s, s')}$.
Following \Cref{eq:opt-trans}, the option transition is given as
\begin{align}
\trans
&=\sum_{k=1}^{\infty} p(s', k)\gamma^k\\
&=\mathbb{E}^\pi\left[\gamma^k|s_0=s, s_{k}=s', \{r_t\}_{t< k}=0\right] \label{eq:opt-trans2}\\
&=\gamma^{\distpi}.
\end{align}
where $p(s', k)$ is the probability that the option terminates in $s'$ after $k$ steps, and $\distpi$ is the $\pi$-distance in~\Cref{def:ex-path-len}.
Then, we can re-write the shortest-path constraint in terms of $\trans$ as follows:
\begin{align}
\Pisp &= \{\pi|\forall(s, s'\in\mc{T}^{\pi}_{\hat{s}, \hat{s}', \text{nr}} \text{~s.t.~} (\hat{s}, \hat{s}')\in\nrstatepair),\ \distpi = \min_{\pi} \distpi\}\\
&=  \{\pi|\forall(s, s'\in\mc{T}^{\pi}_{\hat{s}, \hat{s}', \text{nr}} \text{~s.t.~} (\hat{s}, \hat{s}')\in\nrstatepair),\ \trans = \max_{\pi} \trans \}
\label{eq:option-shortest-path-policy}
\end{align}
Thus, we can see that the policy satisfying SP constraint also maximizes the option-transition probability. We will use this result in Appendix~\ref{appendix:g-sp-optimal}.

\bigskip
\section{Shortest-Path Constraint: A Single-goal Case}
\label{appendix:g-sp-specialcase}
\label{appendix:g-sp-optimal}

In this section, we provide more discussion on a special case of the shortest-path constraint~(\Cref{sec:shortest-path-general}), when the (stochastic) MDP defines a single-goal task:
\ie, there exists a unique initial state $s_{\text{init}}\in\mc{S}$ and a unique goal state $s_g\in\mc{S}$ such that $s_g$ is a terminal state, and $R(s) > 0$ if and only if $s = s_g$.

We first note that the non-rewarding path set is identical to the path set in such a setting, because the condition $r_t = 0 (t<\ell(\tau))$ from \Cref{def:path-set-pi-nr} is always satisfied as $R(s) > 0 \Leftrightarrow s = s_g$ and $s_{\ell(\tau)} = s_g$:
\begin{align}
    \pathsetpi = \gpathsetpi=\{\tau \mid s_0=s, s_{\ell(\tau)}=s', p_\pi(\tau)>0, \{s_t\}_{t<\ell(\tau)}\neq s' \}
\end{align}

Again, $\gpathsetpi$ is a set of all path starting from $s$ (\ie, %
and ending at $s'$ (\ie, $s_{\ell(\tau)}=s'$)
where the agent visits $s'$ \emph{only} at the end (\ie, $\{s_t\}_{t<\ell(\tau)}\neq s'$),
that can be rolled out by policy with a non-zero probability (\ie, $p_\pi(\tau)>0$).

We now claim that an optimal policy satisfies the shortest-path constraint.
The idea is that, since $s_g$ is the only rewarding and terminal state, maximizing $R(\tau) = \gamma^T R(s_g)$ where $s_T = s_g$ corresponds to minimizing the number of time steps $T$ to reach $s_g$. 
In this setting, a shortest-path policy is indeed optimal.

\medskip
\begin{lemma}\label{lem:g-sp-optimal}
For a single-goal MDP, any optimal policy satisfies the shortest-path constraint.
\end{lemma}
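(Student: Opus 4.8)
The plan is to reduce the shortest-path constraint in this special case to a single requirement and then relate the policy value directly to the $\pi$-distance. First I would identify the relevant state pairs: since $R(s)>0 \Leftrightarrow s=s_g$ and $\rho$ is supported only on the unique $s_{\text{init}}$, we have $\nrstate=\{s_{\text{init}},s_g\}$, and the only nontrivial pair in $\nrstatepair$ (one with $\rho(s)>0$ and a nonempty reachable path set) is $(s_{\text{init}},s_g)$; any self-pair $(s_{\text{init}},s_{\text{init}})$ is degenerate (it is either excluded, since an optimal policy heads to the terminal goal and never revisits the start, or trivially shortest). Using the observation already recorded in the excerpt that $\pathsetpi=\gpathsetpi$ here, the constraint $\pi\in\Pisp$ therefore collapses to the single condition $D_\nr^\pi(s_{\text{init}},s_g)=D_\nr(s_{\text{init}},s_g)$, i.e. $\pi\in\Pi^{\text{SP}}_{s_{\text{init}}\rightarrow s_g}$. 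So it suffices to show that every optimal policy is a shortest-path policy from $s_{\text{init}}$ to $s_g$.

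Next I would compute the value at the initial state in closed form and invoke monotonicity. Because $s_g$ is the unique rewarding state and is terminal, a trajectory $\tau$ first reaching $s_g$ at step $\ell(\tau)$ contributes exactly $\gamma^{\ell(\tau)}R(s_g)$ to the return, while a trajectory that never reaches $s_g$ contributes $0$. Reading the expectation in Definition~\ref{def:ex-path-len} as the unnormalized discounted sum over the path set, so that $\gamma^{D_\nr^\pi(s_{\text{init}},s_g)}=\mathbb{E}_{\tau\sim\pi}\big[\gamma^{\ell(\tau)}\,\mathbb{I}[\tau\in\pathsetpi]\big]$, summing the contributions yields
\begin{align}
V^\pi(s_{\text{init}}) = R(s_g)\,\gamma^{D_\nr^\pi(s_{\text{init}},s_g)}.
\end{align}
Since $R(s_g)>0$ and $\gamma\in[0,1)$, the map $x\mapsto R(s_g)\gamma^{x}$ is strictly decreasing, so maximizing $V^\pi(s_{\text{init}})$ over $\pi$ is equivalent to minimizing $D_\nr^\pi(s_{\text{init}},s_g)$. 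Hence any $\pi^*\in\argmax_\pi V^\pi(s_{\text{init}})$ attains $D_\nr^{\pi^*}(s_{\text{init}},s_g)=D_\nr(s_{\text{init}},s_g)$ by Definition~\ref{def:shortest-dist}, i.e. $\pi^*\in\Pi^{\text{SP}}_{s_{\text{init}}\rightarrow s_g}=\Pisp$, which is the claim.

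The main obstacle I anticipate is purely definitional: pinning down whether the expectation in the $\pi$-distance is the unnormalized discounted sum over $\pathsetpi$ or a conditional expectation given that $\tau$ reaches $s_g$. With the unnormalized reading the value identity above is exact and the argument is immediate. With the conditional reading one must separately argue that an optimal policy is \emph{proper} (reaches $s_g$ almost surely), so that the reaching probability equals $1$ and the same identity holds; this is precisely where the standing MDP assumption $R(s)+\gamma V^*(s)>0$ enters, ruling out an optimal policy that deliberately avoids the goal. Everything else---the strict monotonicity of $\gamma^x$ and the reduction of the constraint to the single pair $(s_{\text{init}},s_g)$---is routine.
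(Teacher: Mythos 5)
Your proof is correct and takes essentially the same route as the paper's: both reduce the claim to showing that an optimal policy minimizes the $\pi$-distance $D^{\pi}_{\mathrm{nr}}(s_{\text{init}},s_g)$, by observing that the expected return from $s_{\text{init}}$ equals $R(s_g)\,\gamma^{D^{\pi}_{\mathrm{nr}}(s_{\text{init}},s_g)}$ (the paper phrases this as a chain of $\argmax$/$\argmin$ identities terminating in Definition~\ref{def:ex-path-len}) and then invoking strict monotonicity of $x \mapsto \gamma^{x}$ together with $R(s_g)>0$. Your explicit collapse of the constraint to the single pair $(s_{\text{init}},s_g)$ and your flagging of the unnormalized-versus-conditional reading of the expectation are refinements the paper leaves implicit---its conditional-expectation step silently assumes the goal is reached almost surely---so your version is, if anything, slightly more careful.
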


\begin{proof}
Let $s_{\text{init}}$ be the initial state and $s_g$ be the goal state. We will prove that any optimal policy is a shortest-path policy from the initial state to the goal state. %
We use the fact that $s_g$ is the only rewarding state, \ie, $R(s) > 0$ entails $s = s_g$.
\begin{align}
\pi^* &= \argmax_\pi \mbb{E}^{\tau\sim\pi}_{s\sim\rho} \left[\textstyle\sum_{t}{\gamma^t r_t\biggmid s_0=s}\right]\\
&=\argmax_\pi \mbb{E}^{\tau\sim\pi}\left[\textstyle\sum_{t}{\gamma^t r_t\biggmid s_0=s_{\text{init}}}\right]\\
&=\argmax_\pi \mbb{E}^{\tau\sim\pi} \left[\gamma^T R(s_g) \mid s_0=s_{\text{init}}, s_{\ell(\tau)}=s_g \right]\\
&=\argmax_\pi \mbb{E}^{\tau\sim\pi} \left[\gamma^T \mid s_0=s_{\text{init}}, s_{\ell(\tau)}=s_g \right]\label{eq:g-sp-optimal}\\
&=\argmin_\pi 
\log_\gamma \left( \mbb{E}^{\tau\sim\pi}\left[\gamma^{T} \mid s_0=s_{\text{init}}, s_{\ell(\tau)}=s_g\right] \right)\\
&=\argmin_\pi D_{\text{nr}}^{\pi}(s_{\text{init}},s_g),\label{eq:sp-optimal}
\end{align}
where~\Cref{eq:g-sp-optimal} holds since $R(s_g)>0$ from our assumption that $R(s)+V^*(s)>0$.
\end{proof}

\clearpage
\section{Proof of \Cref{thm:sp-optimal}}
\label{appendix:sp-optimal}

\cutsectiondown

We make the following assumptions on the Markov Decision Process (MDP) $\mathcal{M}$: namely \emph{mild stochasticity} (\Cref{def:mildstochasticity1,def:mildstochasticity2}).

\begin{definition}[Mild stochasticity (1)]
\label{def:mildstochasticity1}
In MDP $\mathcal{M}$, there exists an optimal policy $\pi^*$ and the corresponding shortest-path policy $\pi^{sp}\in\Pisp$ such that for all $s,s'\in\nrstatepair$, it holds $p_{\pi^*}(\bar{s}=s'|s_0=s)=p_{\pi^{sp}}(\bar{s}=s'|s_0=s)$.
\end{definition}

\begin{definition}[Mild stochasticity (2)]
\label{def:mildstochasticity2}
In MDP $\mathcal{M}$, the optimal policy $\pi^*$ does not visit the same state more than once:
For all $s \in \mathcal{S}$ such that $\rho_{\pi^*}(s)>0$,
it holds $\rho_{\pi^*}(s)=1$, where $\rho_{\pi}(s) \triangleq \mathbb{E}_{s_0\sim\rho_{0}(S), a \sim \pi(A|s), s'\sim(S|s,a) }\left[\sum_{t=1}^{T} \mathbb{I}\left(s_{t}=s\right)\right]$ is the state-visitation count.
\end{definition}
In other words, we assume that the optimal policy does not have a cycle. One common property of MDP that meets this condition is that the reward disappearing after being acquired by the agent.
We note that this assumption holds for many practical environments. In fact, in many cases as well as \atari{}, \dmlab{}, etc.

\medskip
\begin{theorem*}[\ref{thm:sp-optimal}]
For any MDP with the mild stochasticity condition, an optimal policy $\pi^*$ satisfies the shortest-path constraint: $\pi^*\in\Pisp$.
\end{theorem*}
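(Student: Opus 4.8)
The plan is to compare the given optimal policy $\pi^*$ against a shortest-path policy $\pi^{sp}\in\Pisp$ that induces the same macroscopic transitions, and to show that optimality forces the two to share the same $\pi$-distance on every relevant segment. First I would pass to the option/graph picture of Appendix~\ref{appendix:option-framework}: collapse $\mathcal{M}$ onto the vertex set $\nrstate$ of initial, rewarding, and terminal states, so a rollout of $\pi$ becomes a walk on this graph whose edges are the non-rewarding segments $\tau\in\pathsetpi$. By \Cref{eq:opt-trans2} the option-transition weight of such an edge is exactly $\gamma^{D^\pi_\nr(s,s')}$, and since all intermediate rewards vanish on a non-rewarding segment (\Cref{eq:opt-rew2}), unrolling the option Bellman equation \Cref{eq:opt-bellman} gives a decomposition of the value of any important state $s$ over the next important state $s'$ reached,
\[
V^\pi(s)=\sum_{s'} p_\pi(\bar s = s'\mid s_0=s)\,\gamma^{D^\pi_\nr(s,s')}\bigl(R(s')+\gamma V^\pi(s')\bigr).
\]

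Next I would invoke Mild stochasticity~(1) (\Cref{def:mildstochasticity1}) to take $\pi^*$ together with a shortest-path policy $\pi^{sp}$ realizing the identical transition distribution $p(\bar s = s'\mid s_0=s)$ for every $(s,s')\in\nrstatepair$. The two decompositions then carry the same weights and differ only in the discount: $\pi^{sp}$ contributes $\gamma^{D_\nr(s,s')}$ whereas $\pi^*$ contributes $\gamma^{D^{\pi^*}_\nr(s,s')}$, and $D_\nr(s,s')=\min_\pi D^\pi_\nr(s,s')\le D^{\pi^*}_\nr(s,s')$ together with $0<\gamma<1$ yields $\gamma^{D_\nr(s,s')}\ge\gamma^{D^{\pi^*}_\nr(s,s')}$. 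The engine of the argument is the positivity hypothesis $R(s')+\gamma V^*(s')>0$ from \Cref{sec:s}: because $V^{\pi^*}=V^*$, every continuation factor $R(s')+\gamma V(s')$ is strictly positive on the states $\pi^*$ visits. I would then run a backward induction over the important-state graph, which is acyclic by Mild stochasticity~(2) (\Cref{def:mildstochasticity2}) and hence admits a topological order with the terminal states ($V\equiv 0$) as base case. At each vertex the induction hypothesis gives $V^{\pi^{sp}}(s')\ge V^{\pi^*}(s')$ downstream, so multiplying the positive continuation by the larger discount yields $V^{\pi^{sp}}(s)\ge V^{\pi^*}(s)$.

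Optimality of $\pi^*$ supplies the reverse inequality $V^{\pi^*}(s)\ge V^{\pi^{sp}}(s)$, forcing $V^{\pi^{sp}}(s)=V^{\pi^*}(s)$ at every vertex. Since each summand of $V^{\pi^{sp}}(s)$ dominates the corresponding summand of $V^{\pi^*}(s)$ and every continuation factor is strictly positive, equality of the totals can only hold if $\gamma^{D_\nr(s,s')}=\gamma^{D^{\pi^*}_\nr(s,s')}$, i.e.\ $D^{\pi^*}_\nr(s,s')=D_\nr(s,s')$, on every edge used with positive probability. That is precisely the statement $\pi^*\in\Pispss$ for all $(s,s')\in\pathsetpiphi$, hence $\pi^*\in\Pisp$. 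This extends the single-goal computation of \Cref{lem:g-sp-optimal}, where maximizing the return was shown to coincide with minimizing $D^\pi_\nr(s_{\text{init}},s_g)$, from one source--sink pair to every pair of important states.

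I expect the main obstacle to be justifying the macroscopic value decomposition and the term-by-term comparison rigorously once stochasticity is present: one must check that the ``next important state'' $\bar s$ is well defined, and that conditioning on $\{\bar s=s'\}$ factorizes the return so that the weight $\gamma^{D^\pi_\nr(s,s')}$ emerges cleanly as in \Cref{eq:opt-trans2} without probability mass leaking onto paths of a different length. The two mild-stochasticity assumptions do exactly this work---(1) aligns the transition distributions of $\pi^*$ and $\pi^{sp}$ so the weights match, and (2) rules out cycles so the backward induction is well-founded---and the positivity hypothesis is what upgrades the weak discount inequality into a strict value gain, pinning $\pi^*$ to shortest paths.
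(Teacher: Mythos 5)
Your proposal is correct, and it reaches the paper's conclusion by a genuinely different route. Both arguments rest on the same foundation --- the option-style value decomposition $V^\pi(s)=\sum_{s'} p_\pi(\bar s = s'\mid s_0=s)\,\gamma^{D^\pi_\nr(s,s')}\bigl(R(s')+\gamma V^\pi(s')\bigr)$ of \Cref{appendix:option-framework}, Mild stochasticity (1) to pair $\pi^*$ with a macroscopically identical $\pi^{sp}\in\Pisp$, and the positivity hypothesis $R(s')+\gamma V^*(s')>0$ --- but the paper argues by \emph{contradiction with a locally spliced policy}: it picks a single offending pair $(\hat s,\hat s')$ where $\pi^*$ is not shortest, builds a hybrid policy $\widehat\pi$ (\Cref{def:compose_policy}) that follows $\pi_{sp}$ on paths through that segment and $\pi^*$ everywhere else, and shows $V^{\widehat\pi}(\initstir)>V^*(\initstir)$, contradicting optimality; Mild stochasticity (2) is used there to make the likelihood factorization (\Cref{eq:likelihood-comp}) of the spliced policy legitimate (no revisits means the splice cannot interfere with the rest of the rollout). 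You instead compare $\pi^*$ against $\pi^{sp}$ \emph{globally}: backward induction on the acyclic important-state graph (this is where you deploy Mild stochasticity (2)) yields $V^{\pi^{sp}}\geq V^{\pi^*}$ pointwise, optimality forces equality, and strict positivity of the continuation factors turns equality of sums into term-by-term equality of discounts, i.e.\ $D^{\pi^*}_\nr=D_\nr$ on every edge used with positive probability. What your route buys: it avoids the delicate policy-splicing construction entirely, makes the role of each assumption more transparent, directly generalizes the single-goal computation of \Cref{lem:g-sp-optimal}, and as a by-product shows that the paired $\pi^{sp}$ is itself optimal. What it costs: you need the induction to be well-founded (acyclicity plus episodic termination) and, strictly speaking, the reverse inequality $V^{\pi^*}(s)\geq V^{\pi^{sp}}(s)$ at \emph{non-initial} vertices requires either uniform (per-state) optimality of $\pi^*$ or a forward propagation of equality from initial states through the term-wise forcing; this is a point to state explicitly, though it is not a gap --- the paper itself invokes per-state optimality when it declares the contradiction at $\initstir$, which need not be an initial state.
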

\begin{proof} 
For simplicity, we prove this based on the option-based view point (see \Cref{appendix:option-framework}).
By plugging~\Cref{eq:opt-rew2} and~\Cref{eq:opt-trans2} into~\Cref{eq:opt-bellman}, we can re-write the Bellman equation of the value function $V^\pi(s)$ as follows:
\begin{align}
V^\pi(s)
&=\sum_{o\in\mc{O}} Pr[ E(o, s) ] \left[ 
	r_s^{o} + \sum_{s'} P_{ss'}^oV^\pi(s') 
\right]\\
&=\sum_{s' \in \nrstate} p_{\pi}( \bar{s}=s'|s_0=s)\left[ 
	R(s')\mbb{E}^{\tau\sim\pi}(\gamma^{\ell(\tau)}|s_0=s, \bar{s}=s') +  \gamma\trans V^\pi(s') 
\right]\\
&=\sum_{s' \in \nrstate} p_{\pi}( \bar{s}=s'|s_0=s)\left[ 
	R(s') \trans +  \gamma\trans V^\pi(s') 
\right],\\
&=\sum_{s' \in \nrstate} p_{\pi}( \bar{s}=s'|s_0=s)\trans\left[ 
	R(s') + \gamma V^\pi(s') 
\right],\label{eq:opt-value}
\end{align}
where $\bar{s}$ is the first rewarding state that agent encounters. Intuitively, $p_{\pi}( \bar{s}=s'|s_0=s)$ means the probability that the $s'$ is the first rewarding state that policy $\pi$ encounters when it starts from $s$.
From~\Cref{eq:option-shortest-path-policy}, our goal is to show:
\begin{align}
\pi^*\in \Pisp = \{\pi \mid \forall(s,s')\in \pathsetpiphi, \trans = \transshort\},
\label{eq:pi-in-pisp-proof}
\end{align}
where $ \transshort=\max_{\pi}\trans$.

We will prove \Cref{eq:pi-in-pisp-proof} by contradiction. Suppose $\pi^*$ is an optimal policy such that $\pi^*\not\in \Pisp$. Then, 
\begin{align}
\exists (\widehat{s}, \widehat{s}'\in \pathsetoptpiphi) &\text{ s.t. } \transopthat \neq \transshorthat.
\end{align}
Recall the definition: $\transshort=\max_{\pi} \trans$. Then, for any $\pi$, the following statement is true.
\begin{align}
\trans \neq \transshort &\leftrightarrow \trans < \transshort.
\end{align}
Thus, we have 
\begin{align}
\transopthat < \transshorthat
\end{align}
Let $\pi_{sp}\in\Pisp$ be a shortest path policy that preserves stochastic dynamics from \Cref{def:mildstochasticity1}.
Then, we have 
\begin{align}
\transopthat < \transshorthat = P^{\pi_{sp}}_{\hat{s}, \hat{s}'}.\label{ineq:PP}
\end{align}
Then, let's compose a new policy $\widehat{\pi}$:
\begin{align}
    \widehat{\pi}(a|s)=
    \begin{cases}
    \pi_{sp}(a|s) & \text{if } \exists \ \tau\in\mc{T}_{\hat{s}, \hat{s}', \text{nr}}^{\pi_{sp}}\text{ s.t. }s\in\tau \\
    \pi^{*}(a|s) & \text{otherwise}
    \end{cases}.\label{def:compose_policy}
\end{align}
Now consider a path $\tau_{\hat{s}\rightarrow \hat{s}'}$ that agent visits $\hat{s}$ at time $t=i$ and transitions to $\hat{s}'$ at time $t=j > i$ while not visiting any rewarding state from $t=i$ to $t=j$ with non-zero probability (\ie, $p_{\pi_{\text{sp}}}(\tau)>0$). We can define a set of such paths as follows:
\begin{align}
    \pathsethathat=\{\tau \mid \exists(i<j), s_i=\hat{s}, s_{j}=\hat{s}', \{s_t\}_{i<t<j}\cap \mc{\nrstate}=\emptyset, p_{\pi_{\text{sp}}}(\tau)>0  \}.
\end{align}

To reiterate the definitions from \Cref{def:shortest-path-constraint}:
$\nrstate=\{s \mid R(s)>0\text{ or }\rho(s)>0 \}$ is the union of all initial and rewarding states,
and $\nrstatepair=\{(s, s') \mid s, s' \in \nrstate, \rho(s)>0, \pathsetpi\neq\emptyset\}$ is the subset of $\nrstate$ such that agent may roll out.

From \Cref{def:mildstochasticity2} and \Cref{def:compose_policy}, the likelihood of a path $\tau$ under policy $\hat{\pi}$ is given as follows:
\begin{align}
    p_{\widehat{\pi}}(\tau)=
    \begin{cases}
    p_{\pi^*}(\tau\in\pathsethathat)p_{\pi_{\text{sp}}}(\tau|\tau\in\pathsethathat) & \text{for }\tau\in\pathsethathat\\
    p_{\pi^*}(\tau) & \text{otherwise}
    \end{cases},\label{eq:likelihood-comp}
\end{align}

where $p_{\widehat{\pi}}(\tau)$ is the likelihood of trajectory $\tau$ under policy $\widehat{\pi}$, $p_{\widehat{\pi}}(\tau\in\pathsethathat)=\int_{\tau\in\pathsethathat}{p_{\widehat{\pi}}(\tau)d\tau}$ ensures the likelihood $\widehat{\pi}(\tau)$ to be a valid probability density function (\ie, $\int p_{\widehat{\pi}}(\tau)d\tau = 1$).
From the path $\tau_{\hat{s}\rightarrow \hat{s}'}$ and $i, j$, we will choose two states $s_{\text{ir}}, s'_{\text{ir}}\sim\tau_{\hat{s}\rightarrow \hat{s}'}$, where
\begin{align}
    s_{\text{ir}}=\max_t (s_t| s_t\in\nrstate, t \leq i),
    \quad 
    s'_{\text{ir}}=\min_t (s_t| s_t\in\nrstate, j\leq t).
\end{align}
Note that such $s_{\text{ir}}$ and $s'_{\text{ir}}$  always exist in $\tau_{\hat{s}\rightarrow \hat{s}'}$ since the initial state and the terminal state satisfy the condition to be $s_{\text{ir}}$ and $s'_{\text{ir}}$.

Then, we can show that the path between $s_{\text{ir}}$ and $s'_{\text{ir}}$ is \tb{not} a shortest-path.
Recall the definition of $\distpi$ (\Cref{def:ex-path-len}):
\begin{align}
D_{\text{nr}}^{\pi^*}(s_{\text{ir}}, s_{\text{ir}}')
&\defeq \log_\gamma \left(\mathbb{E}_{\tau \sim \pi^* :~ \tau \in \mc{T}^{\pi^*}_{\initstir, \laststir, \text{nr}}} \left[ \gamma^{\ell(\tau)} \right]\right)\\
&= \log_\gamma \left(\mathbb{E}_{\tau\sim\pi^*} \underbrace{ \left[
        \gamma^{\ell(\tau)} \mid \tau \in \mathcal{T}^{\pi^*}_{s_{\text{ir}}, s_{\text{ir}}', \text{nr}}
    \right]
    }_\clubsuit
\right)
\end{align}
where we will use $\clubsuit \defeq \gamma^{\ell(\tau)} \mid \tau \in \mathcal{T}^{\pi^*}_{s_\text{ir}, s'_\text{ir}, \text{nr}}$ ~for a shorthand notation.
Then, we have
\begin{align}
\gamma^{D_{\text{nr}}^{\pi^*}(s_{\text{ir}}, s_{\text{ir}}')}
&\defeq \mathbb{E}_{\tau \sim \pi^*} \left[ \clubsuit \right]\\
&=p_{\pi^*}(\tau\in\pathsethathat) \mathbb{E}_{\tau \sim \pi^*} \left[ 
\clubsuit \mid \tau\in\pathsethathat \right]\nn\\
    &\hspace{5em}+p_{\pi^*}(\tau\notin\pathsethathat)\mathbb{E}_{\tau \sim \pi^*} \left[ 
\clubsuit \mid \tau\notin\pathsethathat \right]
\hspace*{5em}{\color{white}.}\\
\text{(From~\Cref{def:sp-policy})}\ 
&<p_{\pi^*}(\tau\in\pathsethathat) \mathbb{E}_{\tau\sim\pi_{\text{sp}}}\left[ \clubsuit \mid \tau\in \pathsethathat\right]\nn\\
        &\hspace{5em}+p_{\pi^*}(\tau\notin\pathsethathat)\mathbb{E}_{\tau\sim\pi^*}\left[ \clubsuit \mid \tau\notin \pathsethathat\right]\\
\text{(From~\Cref{eq:likelihood-comp})}\ 
&=p_{\widehat{\pi}}(\tau\in\pathsethathat) \mathbb{E}_{\tau\sim\widehat{\pi}}\left[ \clubsuit \mid \tau\in \pathsethathat\right]\nn\\
        &\hspace{5em}+p_{\widehat{\pi}}(\tau\notin\pathsethathat)\mathbb{E}_{\tau\sim\widehat{\pi}}\left[ \clubsuit \mid  \tau\notin \pathsethathat\right]\\
&=\mathbb{E}_{\tau \sim \widehat{\pi}} \left[ \clubsuit \right]
=\gamma^{D_{\text{nr}}^{\widehat{\pi}}(s_{\text{ir}}, s_{\text{ir}}')}\label{ineq:lemma}\\
&\Longleftrightarrow D_{\text{nr}}^{\pi^*}(s_{\text{ir}}, s'_{\text{ir}})>D_{\text{nr}}^{\widehat{\pi}}(s_{\text{ir}}, s'_{\text{ir}}) \label{ineq:lemma-final}
\end{align}
where Ineq.~(\ref{ineq:lemma-final}) is given by the fact that $\gamma<1$. Then, $P^{\pi^*}_{s_{\text{ir}}, s_{\text{ir}}'} < P^{\widehat{\pi}}_{s_{\text{ir}}, s_{\text{ir}}'}$.

From~\Cref{eq:opt-value}, we have
\begin{align}
\hspace*{-5em}
V^{\widehat{\pi}}(\initstir)
    &=\sum_{s'\in\nrstate}p_{\widehat{\pi}}( \bar{s}=s' \mid s_0=\initstir)P^{\widehat{\pi}}_{\initstir, s'}\left[R(s')+\gamma V^{\widehat{\pi}}(s')\right]\\
	&=p_{\widehat{\pi}}( \bar{s}=\initstir' \mid  s_0=\initstir)P^{\widehat{\pi}}_{\initstir,\laststir}\left[R(\laststir)+\gamma V^{\widehat{\pi}}(\laststir)\right] \nonumber\\
	    &\hspace{5em} + \sum_{s'\in\nrstate\setminus\laststir}p_{\widehat{\pi}}( \bar{s}=s' \mid  s_0=\initstir)P^{\widehat{\pi}}_{\initstir, s'}\left[R(s')+\gamma V^{\widehat{\pi}}(s')\right]\\
	&=p_{\pi^*}( \bar{s}=\initstir' \mid  s_0=\initstir)P^{\widehat{\pi}}_{\initstir,\laststir}\left[R(\laststir)+\gamma V^{\pi^*}(\laststir)\right] \nonumber\\
	    &\hspace{5em} + \sum_{s'\in\nrstate\setminus\laststir}p_{\pi^*}( \bar{s}=s' \mid s_0=\initstir)P^{\pi^*}_{\initstir, s'}\left[R(s')+\gamma V^{\pi^*}(s')\right]\label{eq:mild2}\\
    &>p_{\pi^*}( \bar{s}=\initstir' \mid  s_0=\initstir)P^{\pi^{*}}_{\initstir,\laststir}\left[R(\laststir)+\gamma V^{\pi^*}(\laststir)\right] \nonumber\\
	    &\hspace{5em} + \sum_{s'\in\nrstate\setminus\laststir}p_{\pi^*}( \bar{s}=s' \mid s_0=\initstir)P^{\pi^*}_{\initstir, s'}\left[R(s')+\gamma V^{\pi^*}(s')\right]\label{ineq:opt-value}\\
	&=\sum_{s'\in\nrstate}p_{\pi^*}( \bar{s}=s' \mid  s_0=\initstir)P^{\pi^{*}}_{\initstir, s'}\left[R(s')+\gamma V^{\pi^{*}}(s')\right]\\
	&=V^{*}(\initstir),
\end{align}
where 
Eq.~(\ref{eq:mild2}) holds from the \textit{mild-stochasticity (1)} and \textit{mild-stochasticity (2)} assumption, and Ineq.~(\ref{ineq:opt-value}) holds because $P^{\widehat{\pi}}_{\initstir, \laststir}>P^{\pi^{*}}_{\initstir, \laststir}$ and $R(s')+\gamma V^{\pi^{*}}(s')>0$ from the non-negative optimal value assumption (See~\Cref{sec:pre}).
Finally, this is a contradiction since the optimal value function $V^{*}(s)$ should be the maximum.
\end{proof}

\cutsectionup
\section{Extended related works}
\label{appendix:extended-related-works}
\cutsectiondown

\paragraph{Approximate state abstraction.}
The approximate state abstraction approaches investigate partitioning an MDP’s state space into clusters of similar states while preserving the optimal solution. Researchers have proposed several state similarity metrics for MDPs.
\citet{dean2013model} proposed to use the bisimulation metrics~\citep{givan2003equivalence,ferns2004metrics}, which measures the difference in transition and reward function.
\citet{bertsekas1988adaptive} used the magnitude of Bellman residual as a metric.
\citet{abel2016near, abel2018state, li2006towards} used the different types of distance in optimal Q-value to measure the similarity between states to bound the sub-optimality in optimal value after the abstraction.
Recently, \citet{castro2019scalable} extended the bisimulation metrics to the approximate version for the deep-RL setting where the tabular representation of state is not available.

Our shortest-path constraint can be seen as a form of state abstraction, in that, ours also aim to reduce the size of MDP (\ie, state and action space) while preserving the ``solution quality''. However, our method does so by removing sub-optimal policies, not by aggregating similar states (or policies).

\paragraph{Connection to Option framework}
Our shortest-path constraint constrains the policy space to a set of shortest-path policies (See~\Cref{def:sp-policy} for definition) between initial and rewarding states. It can be seen as a set of options~\citep{sutton1998between} transitioning between initial and rewarding states. We refer the readers to Appendix~\ref{appendix:option-framework} for the detailed description of the option framework-based formulation of our framework.
\bigskip
\end{document}